\documentclass{article}


\PassOptionsToPackage{numbers, sort&compress}{natbib}

\usepackage[final]{neurips_2025}




\usepackage[utf8]{inputenc} 
\usepackage[T1]{fontenc}    
\usepackage{hyperref}       
\usepackage{url}            
\usepackage{booktabs}       
\usepackage{amsfonts}       
\usepackage{nicefrac}       
\usepackage{microtype}      
\usepackage{xcolor}         

\usepackage{amsmath,amsfonts,bm}









\def\eqref#1{equation~\ref{#1}}
\def\Eqref#1{Equation~\ref{#1}}








\def\1{\bm{1}}










\DeclareMathAlphabet{\mathsfit}{\encodingdefault}{\sfdefault}{m}{sl}
\SetMathAlphabet{\mathsfit}{bold}{\encodingdefault}{\sfdefault}{bx}{n}













\DeclareMathOperator{\sign}{sign}

\DeclareMathOperator{\relu}{ReLU}

\def\vec{{\operatorname{vec}}}
\usepackage{graphicx}
\usepackage{subcaption}
\usepackage{amsmath}
\usepackage{amssymb}
\usepackage{mathtools}
\usepackage{amsthm}
\usepackage[capitalize,noabbrev]{cleveref}

\usepackage[normalem]{ulem} 

\usepackage{relsize} 

\newcommand{\qy}[1]{{#1}}


\theoremstyle{plain}
\newtheorem{theorem}{Theorem}[section]

\newtheorem{lemma}[theorem]{Lemma}
\newtheorem{corollary}[theorem]{Corollary}
\theoremstyle{definition}

\theoremstyle{remark}
\newtheorem{remark}{Remark}

\title{Learning Provably Improves the Convergence of Gradient Descent}

%

\author{%
  Qingyu Song\thanks{Work done as a PhD student at CUHK.}\\
  Xiamen University\\
  China\\
  \texttt{simmonssong96@gmail.com}
  Wei Lin, Hong Xu\\
  CUHK\\
  Hong Kong\\
  \texttt{wlin23@cse.cuhk.edu.hk, hongxu@cuhk.edu.hk} \\
}

\author{%
  \begin{minipage}[t]{0.30\textwidth}
    \centering
    Qingyu Song\thanks{This work was done at The Chinese University of Hong Kong (CUHK) when Qingyu was a PhD candidate.} \\
    Xiamen University \\
    \texttt{simmonssong96@gmail.com}
  \end{minipage}
  \hfill
  \begin{minipage}[t]{0.60\textwidth}
    \centering
    Wei Lin, Hong Xu \\
    The Chinese University of Hong Kong \\
    \texttt{wlin23@cse.cuhk.edu.hk, hongxu@cuhk.edu.hk}
  \end{minipage}
}


\begin{document}

\maketitle

\begin{abstract}
  Learn to Optimize (L2O) trains deep neural network-based solvers for optimization, achieving success in accelerating convex problems and improving non-convex solutions. However, L2O lacks rigorous theoretical backing for its own training convergence, as existing analyses often use unrealistic assumptions---a gap this work highlights empirically. We bridge this gap by proving the training convergence of L2O models that learn Gradient Descent (GD) hyperparameters for quadratic programming, leveraging the Neural Tangent Kernel (NTK) theory. We propose a deterministic initialization strategy to support our theoretical results and promote stable training over extended optimization horizons by mitigating gradient explosion. 
  Our L2O framework demonstrates over 50\% better optimality than GD and superior robustness over state-of-the-art L2O methods on synthetic datasets. 
  The code of our method can be found from \url{https://github.com/NetX-lab/MathL2OProof-Official}.
\end{abstract}

\section{Introduction}
\label{sec:intro}


Learn to optimize (L2O) represents an increasingly influential paradigm for tackling optimization problems~\cite{Chen2022learning}. Numerous studies have demonstrated the efficacy of employing learning-based models to achieve superior performance across a spectrum of optimization tasks. These encompass convex problems, exemplified by LASSO~\cite{chen2018theoretical, liu2019alista, chen2021hyperparameter} and logistic regression~\cite{liu2023towards, song2024towards}, and non-convex scenarios such as MIMO sum-rate maximization~\cite{song2024learning} and network resource allocation~\cite{Shen2021}.


Distinct from black-box approaches~\cite{Sun2018, Zhao2021, Cao2021}, which directly derive solutions to optimization problems from a neural network (NN), the so-called ``white-box'' methodologies are garnering increased attention. This heightened interest stems from their inherent advantages, such as enhanced trustworthiness~\cite{Heaton2023safeguarded} and theoretical guarantees~\cite{song2024towards}. A key characteristic of these white-box strategies is the integration of mechanisms to ensure the ``controllability'' of the generated solutions. For instance, \citet{Lv2017learning} employ a NN to predict the step size for the gradient descent (GD) algorithm, where the inherent structure of GD stabilizes the optimization trajectory. Similarly, \citet{Heaton2023safeguarded} integrate a conventional solver within an L2O framework to act as a safeguard, thereby preventing the learning-based model from producing solutions with extreme violations. This principle of guided or constrained learning has also been extended to the training phase of L2O models~\cite{Yang2023learning}.


Further, ``unrolling'' has emerged as a prominent technique within L2O~\cite{Chen2022learning}, characterized by the strategic replacement of components of conventional optimization algorithms with neural network (NN) blocks~\cite{Hu2020iterative, gregor2010learning, lin2024adaptive}. For instance, \citet{liu2023towards} introduce Math-L2O that imposes architectural constraints on unrolled L2O models by deriving necessary conditions for their convergence. Their analysis revealed that for a L2O model to achieve optimality, its embedded NN must effectively perform a linear combination of input feature vectors, weighted by learnable parameter matrices. Empirical validation demonstrates that the proposed methods exhibit strong generalization capabilities when trained using a coordinate-wise input-to-output strategy. Subsequent research by \citet{song2024towards} further enhance this generalization performance by reducing the magnitude of input features.


Despite these advancements, to the best of our knowledge, a formal demonstration of the convergence for unrolling-based L2O methods in solving general optimization problems remains elusive. While LISTA-CPSS~\cite{chen2018theoretical} establishes convergence for the well-known LISTA framework~\cite{gregor2010learning}, its analysis is based on the assumption that neural network (NN) outputs are confined to a specific subspace, a condition that is often not met in practical implementations. Similarly, while Math-L2O~\cite{liu2023towards} derives necessary conditions for convergence, the mechanisms by which the training process itself can guarantee such convergence are not elucidated. Subsequent analysis by \citet{song2024towards} investigates the inference-time convergence of Math-L2O. However, this work relies on a stringent training assumption, effectively constraining the L2O model to emulate the behavior of a conventional Gradient Descent (GD) algorithm.


This apparent deficiency in comprehensively demonstrating L2O convergence stems from two fundamental, unresolved technical challenges. First, unrolling-based L2O models~\cite{gregor2010learning,liu2019alista,chen2021hyperparameter} represent a specialized class of NN architectures. 
Despite much progress in understanding the training convergence of general neural networks (NNs), notably through the Neural Tangent Kernel (NTK) theory since 2019~\cite{allen2019convergence, du2018gradient, allen2019convergencernn, nguyen2020global, nguyen2021proof, liu2022provable}, a formal proof of training convergence remains conspicuously absent. Such a proof is an essential precursor to establishing the convergence of the L2O model in its primary task of solving optimization problems. 
Second, the precise relationship between the training convergence achieved during the L2O model's training phase (i.e., optimizing the NN parameters) and the convergence of the L2O model when applied to the target optimization problem (i.e., finding the optimal solution) is not well understood. For instance, Math-L2O~\cite{liu2023towards} is designed to learn the step size for an underlying GD algorithm. While the problem-solving efficacy of Math-L2O is naturally evaluated based on the progression of GD iterations, its training convergence is measured in terms of training steps (e.g., epochs). These two notions of convergence: one on model parameter optimization and the other on problem-solving iterations, are largely decoupled and operate on fundamentally different scales.



In this work, we present the first rigorous demonstration that an unrolling framework can achieve theoretical convergence in solving optimization problems. Our analysis focuses on the state-of-the-art (SOTA) Math-L2O framework, wherein a NN functions as a recurrent block, iteratively generating hyperparameters for an underlying optimization algorithm. The solution obtained at each iteration, which utilizes these generated hyperparameters, is then incorporated as an input feature for the subsequent iteration~\cite{liu2023towards}. 
This inherent recurrence imparts RNN-like characteristics to Math-L2O, significantly complicating the analysis of its training convergence. Specifically, the recurrent structure causes the NN to manifest as a high-order polynomial function with respect to (w.r.t.) its input features~\cite{allen2019convergencernn}. This characteristic poses challenges for establishing tight analytical bounds, potentially leading to looser convergence rates compared to non-recurrent architectures, as highlighted in related NTK analyses for RNNs \cite{allen2019convergencernn}. Moreover, the Math-L2O architecture introduces an additional layer of complexity: the emergence of high-order polynomial dependencies not only on the input features but also on the learnable parameters themselves. This distinct feature renders the convergence proof for Math-L2O arguably more intricate than those for conventional RNNs, where such parameter-dependent high-order terms are typically less pronounced.


We address the pivotal connection between the NN's training convergence and the ultimate problem-solving convergence of the L2O model. 
Within the Math-L2O framework, we establish this critical linkage by explicitly demonstrating an alignment between the convergence dynamics exhibited during the NN's training phase and the convergence characteristics of its underlying backbone optimization algorithm. This alignment provides a novel theoretical bridge, ensuring that a successfully trained L2O model translates to effective convergence when applied to optimization tasks. Our contributions are summarized as follows:

\begin{enumerate}
    \item We provide a formal proof that the Math-L2O training framework substantially enhances the convergence performance of its underlying backbone algorithms. This is achieved by rigorously establishing an explicit alignment between the convergence rates of the training process and the iterative steps of the backbone algorithm.
    \item We establish the first linear convergence rate for Math-L2O training. Inspired by~\cite{nguyen2021proof}, we employ a NN architecture with a single wide layer and utilize NTK to prove the boundedness of NN outputs, gradients, and the training loss function within the Math-L2O framework.
    \item We introduce a novel deterministic parameter initialization scheme, coupled with a specific learning rate configuration strategy. This combined approach is proven to guarantee the training convergence of the Math-L2O model across all iterations.
    \item We empirically validate our theoretical findings through comprehensive experiments. The results showcase significant performance advantages, including up to a 50\% improvement in solution optimality over the standard GD algorithm post-training, and superior robustness compared to SOTA L2O models and the Adam optimizer~\cite{diederik2014adam}. Furthermore, ablation studies empirically confirm the practical efficacy and individual contributions of our proposed theorems.
\end{enumerate}


\section{Preliminary} \label{sec:definition}
This section first defines the optimization problem objective and the L2O framework. The L2O training loss is then formulated based on these definitions. Then, the NN's computational graph is employed to detail the forward pass and the derivation of parameter gradients.

\subsection{Definitions} 
Let $d > b$, suppose $x \in \mathbb{R}^{d \times 1}$, $y \in \mathbb{R}^{b \times 1}$, and $ \mathbf{M} \in \mathbb{R}^{b \times d}$, we define the optimization objective as:
\begin{equation} \label{eq:obj_f}
   \min_{x \in \mathbb{R}^d} f(x)=\tfrac{1}{2}\|\mathbf{M} x  -y\|_2^2.
\end{equation}
This objective function is commonly selected for convergence analysis~\cite{beck2009fast}.
The least-squares problem, a frequent subject in NN convergence studies~\cite{du2018gradient, allen2019convergence, allen2019convergencernn,liu2022loss,nguyen2021proof}, is a specific instance of the minimization in \Cref{eq:obj_f} where $d=b$ and $\mathbf{M} = \mathbf{I}$.

We assume $f$ to be $\beta$-smooth, such that $\| \mathbf{M}^\top \mathbf{M}\|_2 \leq \beta$, and $\mathbf{M}$ to possess full row rank, with $\lambda_{\min}(\mathbf{M} \mathbf{M}^\top) = \beta_{0} > 0$. This setting often favors numerical algorithms (e.g., GD) over analytical solutions due to computational complexity. GD's $\mathbf{O}(bd)$ complexity is typically lower than the $\mathbf{O}(b^3)$ of analytical methods involving costly matrix inversions. 
The loss function is then defined as the sum of $N$ objectives specified in \Cref{eq:obj_f}:
\begin{equation}\label{eq:loss_F}
   F(X)
   =\tfrac{1}{2}\|\mathbf{M} X-Y\|_2^2,
\end{equation}
where $F$, $\mathbf{M} \in \mathbb{R}^{Nb \times Nd}$, $X \in \mathbb{R}^{Nd\times 1}$, and $Y \in \mathbb{R}^{Nb \times 1}$ represent the concatenated objectives, parameters, variables, and labels, respectively, from $N$ optimization problems (see \Cref{sec:def_details} for details). $F$ is also $\beta$-smooth, given that $\|\mathbf{M}^T\mathbf{M}\|_2 \leq \max_{i=1,\dots,N} \{\|\mathbf{M}_i^T\mathbf{M}_i\|_2 \} = \beta$.

\paragraph{Learn to Optimize (L2O).} 
Given an initial point $X_0$, L2O takes $X_0$ as the input and generates a solution, denoted as $X_t$, with a machine learning model. Typically, let $g_W$ denote an $L$-layer NN with parameters $W = \{W_1, \dots, W_L\},W_{\ell} \in \mathbb{R}^{n_{\ell} \times n_{\ell-1}}, n_{1}, \dots, n_{L} \in \mathbb{R}$. Math-L2O~\cite{liu2023towards} takes an iterative workflow to generate solutions. For each step $t \in [T]$ in solving the problem in \Cref{eq:obj_f}, the NN model in Math-L2O is defined as $g_{W}(X_{t-1}, \nabla F(X_{t-1}))$. The NN receives the current state variable $X_{t-1}$ and its gradient $\nabla F(X_{t-1})$ as input. The update rule at step $t$, which employs the Hadamard product (denoted by $\odot$), is formulated as:
\begin{equation} \label{eq:math_l2o}
   \begin{aligned}
      X_{t} &= X_{t-1}  - \tfrac{1}{\beta} P_t\odot \nabla F(X_{t-1}), P_t = g_{W}(X_{t-1}, \nabla F(X_{t-1})).
   \end{aligned}
\end{equation}
$P_t$ represents a vector whose entries are learned step sizes. 
The NN $g_W$ takes structured layer-wise architecture. It employs a coordinate-wise architecture, processing each input dimension independently, recognized for its robustness in L2O applications~\cite{liu2023towards,song2024towards}. Thus, output dimension of the NN is one, i.e., $n_L=1$. Denote $[L] := \{1, \dots, L\}$, for layer $\ell \in [L]$, we denote $G_{\ell,t}$ as the (inner) output of layer $\ell$ at step $t$. Utilizing ReLU ($\relu$)~\cite{agarap2018deep} and Sigmoid ($\sigma$)~\cite{narayan1997generalized} activations, $G_{\ell,t}$ is defined as:
\begin{equation}\label{eq:NN_mathl2o}
   G_{\ell,t}=
   \begin{cases}
      [X_{t-1}, \nabla F(X_{t-1})]^\top & \ell=0, \\
      \relu(W_{\ell} G_{\ell-1,t}) & \ell \in [L-1], \\
      P_{t} =2\sigma(W_{L} G_{L-1,t})^\top & \ell=L.
   \end{cases}
\end{equation}

The L2O training problem is defined by:
\begin{equation} \label{eq:loss_l2o_training}
   F(W)
   =\tfrac{1}{2}\|\mathbf{M} X_T-Y\|_2^2, X_T = L2O_W(X_0, \nabla F(X_0)).
\end{equation}

\subsection{Layer-Wise Derivative of NN's Parameters} 
Let $k$ denote a training iteration for loss \Cref{eq:loss_l2o_training} minimization, which is distinct from an optimization step $t$ for solving objective \Cref{eq:loss_F}. 
The computational graph in \Cref{fig:cg_math_l2o} illustrates the Math-L2O forward and backward operations, which parallel those of Recurrent Neural Networks (RNNs)~\cite{grossberg2013recurrent}.
\Cref{fig:math_l2o_nn} details the NN block (see  \Cref{eq:NN_mathl2o}). 
\Cref{fig:math_l2o_forward_backward} depicts the overall process: the block takes an input solution, performs $T$ internal optimization steps to produce an updated solution (red dashed arrows), and each training iteration $k$ triggers a full backward pass (blue bold lines). 
As per~\cite{liu2023towards}, the gradient flow from the input features to the NN block is detached.
\begin{figure}[htp]
   \centering
   \vspace{-4mm}
   \begin{subfigure}[b]{0.494\linewidth}
      \centering
      \includegraphics[width=0.99\linewidth]{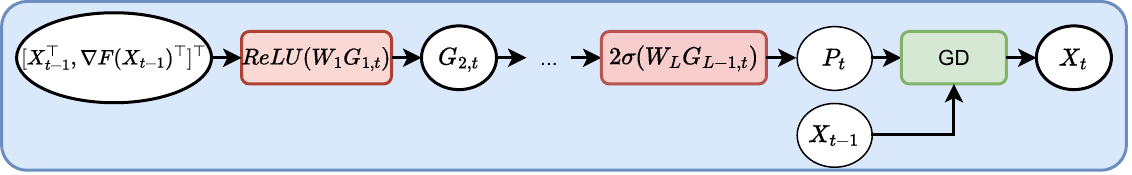}
      \vspace{-4mm}
      \caption{NN Block}
      \label{fig:math_l2o_nn}
   \end{subfigure}
   \hfill
   \begin{subfigure}[b]{0.494\linewidth}
      \centering
      \includegraphics[width=0.99\linewidth]{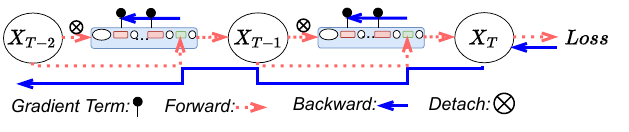}
      \vspace{-6mm}
      \caption{Forward and Backward Processes}
      \label{fig:math_l2o_forward_backward}
   \end{subfigure}
   \vspace{-2mm}
   \caption{Computational Graph of Math-L2O}
   \vspace{-2mm}
   \label{fig:cg_math_l2o}
\end{figure}
\vspace{-2mm}

The derivative of an objective $F$ w.r.t. the parameters $W_{\ell}$ of layer $\ell$ is determined via the computational graph, paralleling Back-Propagation-Through-Time (BPTT) for RNNs \cite{mozer2013focused}:
\begin{equation}\label{eq:derivative_NN_simple_mathl2o_framework}
   \tfrac{\partial F}{\partial W_{\ell}} =  \tfrac{\partial F(X_{T})}{\partial X_{T}} \Big(\mathsmaller{\sum}_{t=1}^{T} \big(\mathsmaller{\prod}_{j=T}^{t+1} \tfrac{\partial X_{j}}{\partial X_{j-1}}\big) \tfrac{\partial X_{t}}{\partial P_t}   \tfrac{\partial P_t}{\partial W_{\ell}} \Big).
\end{equation}
The summation aggregates gradients across $T$ optimization steps. $\mathsmaller{\prod}_{j=T}^{t+1} (\partial X_{j}/\partial X_{j-1})$ represents the chain rule application from the final output $X_T$ to an intermediate state $X_t$.

Moreover, we derive two key gradients, instrumental for establishing the theoretical results in the ensuing section.
Following Definition 2.2 in~\cite{allen2019convergence}, the gradient of the ReLU is represented by a diagonal matrix $\mathbf{D}_{\ell}^t$, where its $i$-th diagonal element is $[\mathbf{D}_{\ell}^t]_{i,i} \coloneqq \mathbf{1}_{ (W_{\ell} G_{\ell-1,t})_i \geq 0}$ for $i \in [n_{\ell}]$. 
Let $\Gamma_t \coloneqq \mathbf{M}^\top( \mathbf{M}{X_{t}} -Y)$ and $\Xi_\ell \coloneqq (\mathbf{I}_{d} \otimes W_{L}) ( \mathsmaller{\prod}_{j=L-1}^{\ell+1} \mathbf{D}_{j,t} (\mathbf{I}_{d} \otimes W_j) ) \mathbf{I}_{n_{\ell}}$. 
Defining $\mathcal{D}(\cdot)$ as the operator that constructs a diagonal matrix from a vector, the gradients for an inner layer $W_\ell$ ($\ell < L$) and the final layer $W_L$ are given by:
\begin{equation}\label{eq:derivative_NN_simple_final_mathl2o_l}
   \tfrac{\partial F}{\partial W_{\ell}} 
   =
   -\tfrac{1}{\beta}  \Gamma_T^\top
   \mathsmaller{\sum}_{t=1}^{T} 
      \big(\mathsmaller{\prod}_{j=T}^{t+1} (\mathbf{I}_d - \tfrac{1}{\beta}\mathbf{M}^\top \mathbf{M} \mathcal{D}(P_j))\big) 
      \mathcal{D} (\Gamma_t) \mathcal{D} \big(P_{t} \odot (1-P_{t}/2)\big) \Xi_\ell \otimes G_{\ell-1,t}^\top,
\end{equation}
\vspace{-4mm}
\begin{equation}\label{eq:derivative_NN_simple_final_mathl2o_L_p}
   \begin{aligned}
      \tfrac{\partial F}{\partial W_{L}} 
      =&  -\tfrac{1}{\beta}  \Gamma_T^\top 
      \mathsmaller{\sum}_{t=1}^{T} 
      \big(\mathsmaller{\prod}_{j=T}^{t+1} (\mathbf{I}_d - \tfrac{1}{\beta} \mathcal{D}(P_j) \mathbf{M}^\top \mathbf{M}) \big)
      \mathcal{D} ( \Gamma_T )
      \mathcal{D}\big(P_t \odot (1-P_t/2) \big) G_{L-1,t}^\top,
   \end{aligned}
\end{equation}
where $\otimes$ denotes the Kronecker product. 
\Cref{eq:derivative_NN_simple_final_mathl2o_L_p} (for $W_L$) differs from \Cref{eq:derivative_NN_simple_final_mathl2o_l} (for $W_\ell$) in its final terms: $G_{L-1,t}^\top$ replaces $\Xi_L \otimes G_{\ell-1,t}^\top$. This simplification arises as $W_L$ is the terminal layer, and $G_{L-1,t}$ is its direct input from layer $L-1$. Thus, its gradient calculation does not involve a subsequent layer propagation factor analogous to $\Xi_L$.



\section{L2O Convergence Demonstration Framework} \label{sec:train_gain}

This section rigorously substantiates the convergence of the L2O framework, Math-L2O. We first expose theoretical and numerical instabilities prevalent in current SOTA L2O methods. Then, we demonstrate Math-L2O's accelerated training convergence compared to GD and then present a formal methodology to establish its convergence.

\subsection{Limitations Analysis of Existing SOTA L2O Frameworks}
We analyze limitations in the convergence guarantees of two SOTA L2O frameworks: LISTA-CPSS~\cite{chen2018theoretical} and Math-L2O~\cite{liu2023towards}. 
LISTA-CPSS~\cite{chen2018theoretical} constructively proves that its predecessor, LISTA~\cite{gregor2010learning}, can attain a linear convergence rate. However, this theoretical guarantee is contingent upon several stringent conditions. Math-L2O~\cite{liu2023towards} proposes an L2O framework derived from the GD algorithm, incorporating necessary conditions for convergence. Both frameworks employ sequential solution updates and utilize BPTT for parameter optimization.


Initially, we assess training loss across varying optimization steps. This is pertinent due to the well-documented issue of gradient explosion of BPTT arising from long-term gradient accumulation~\cite{lillicrap2019backpropagation}. 
Both models are trained on 10 randomly sampled optimization problems for 400 epochs. 
\Cref{fig:baseline_exploding} depicts training losses (y-axis) against optimization steps (x-axis) for several learning rates (distinguished by line color). 
Data points exhibiting numerical overflow (indicative of gradient explosion at the first training iteration) are excluded, resulting in plot lines terminating before 100 steps for affected configurations.
The results demonstrate that both frameworks suffer from poor convergence at low learning rates (LRs) and training instability at high LRs.

\vspace{-2mm}
\begin{figure}[htp]
   \centering
   \begin{subfigure}{0.47\linewidth}
      \includegraphics[width=0.99\textwidth]{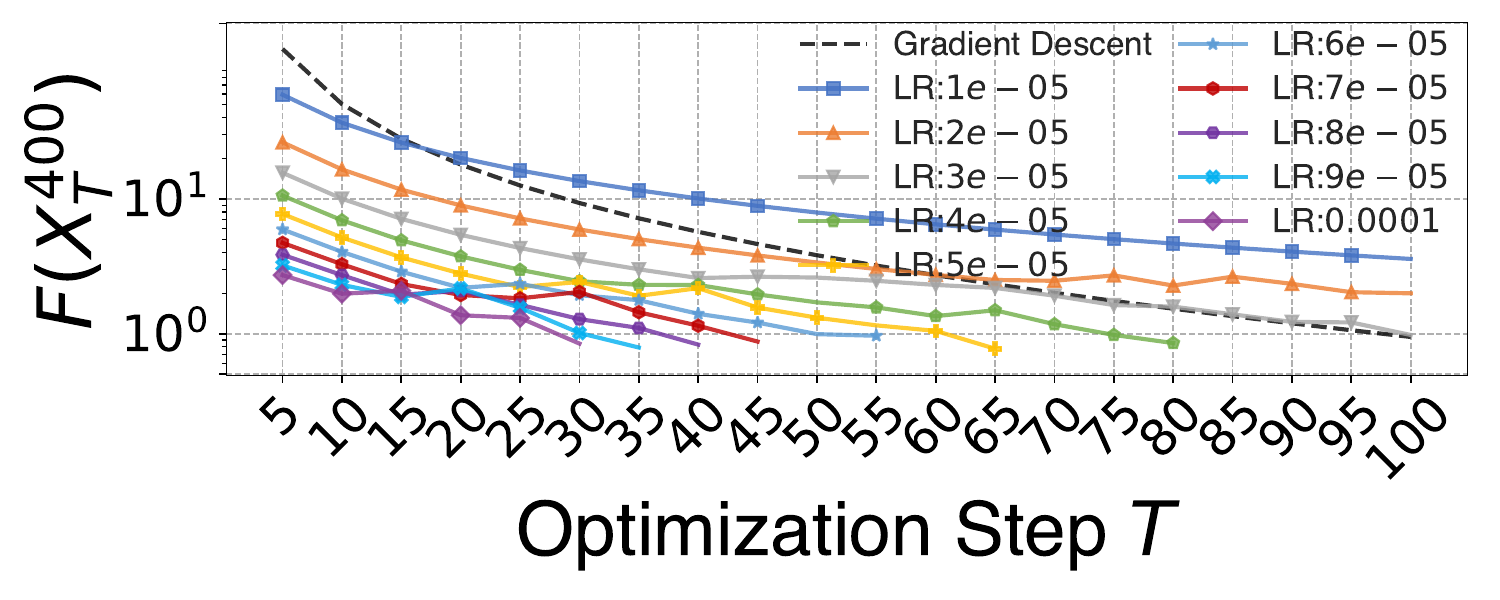}
      \vspace{-6mm}
      \caption{LISTA-CPSS \cite{chen2018theoretical}}
      \label{fig:gradient_exploding_lista}
   \end{subfigure}
   \hfill
   \begin{subfigure}{0.47\linewidth}
      \includegraphics[width=0.99\textwidth]{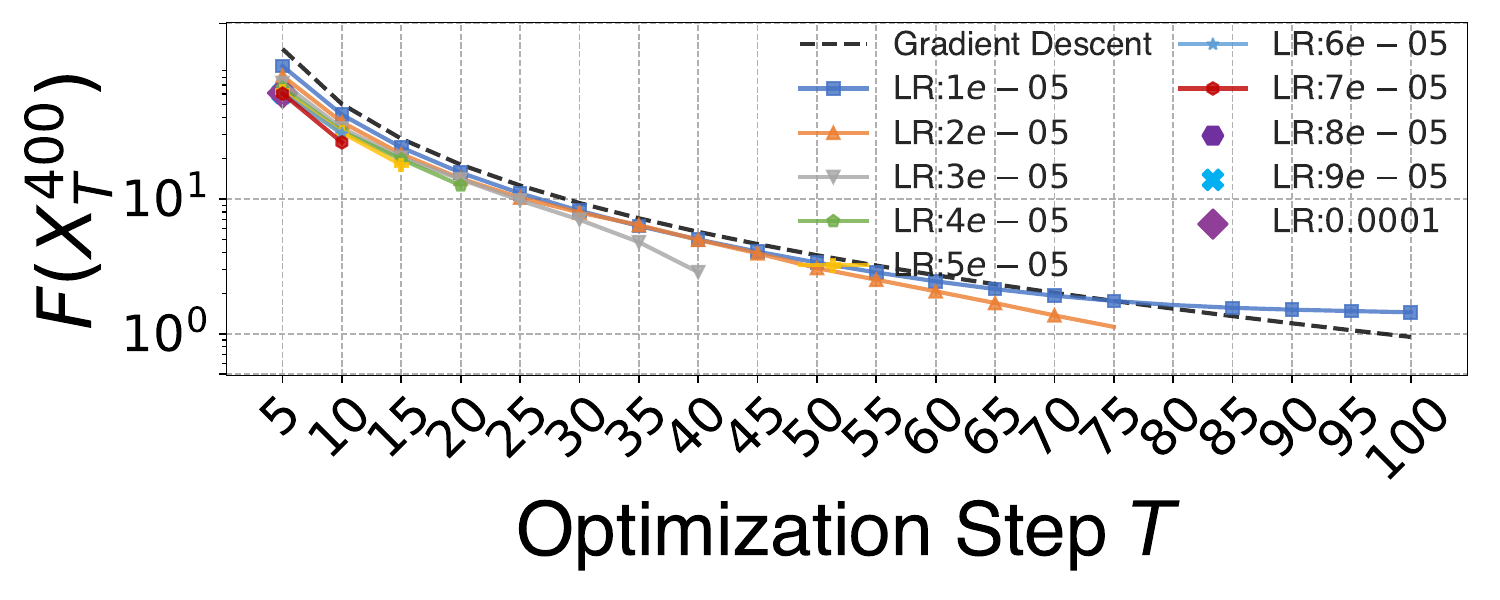}
      \vspace{-6mm}
      \caption{Math-L2O \cite{liu2023towards}}
      \label{fig:gradient_exploding_mathl2o}
   \end{subfigure}
   \vspace{-2mm}
   \caption{Training Loss of SOTA L2O Frameworks}
   \label{fig:baseline_exploding}
\end{figure}

Further, we examine the convergence conditions outlined for LISTA-CPSS~\cite{chen2018theoretical}, illustrating their propensity for violation during typical training procedures.
The first condition mandates asymptotic sign consistency between iterates $X_t$ and the solution $X^*$, requiring $\sign(X_t) = \sign(X^*)$ for all $t$.
The second condition imposes constraints on the columns of the learned parameter matrix $\mathbf{W}$ relative to the columns of the objective coefficient matrix $\mathbf{M}$.
Specifically, denoting column indices by $i$ and $j$, it necessitates that $\mathbf{W}_i^\top \mathbf{M}_i = 1$ and $\mathbf{W}_i^\top \mathbf{M}_j > 1$ for all $j \neq i$.

Following the experimental design in~\cite{liu2023towards}, we quantify the violation percentage of the aforementioned conditions during inference. 
The results are presented in \Cref{fig:violation_lista}. We consider two settings:
(i) shared parameters $W$ across iterations (\Cref{fig:violation_lista_wnotshated}), and
(ii) unique parameters $W_t$ per step $t$ (\Cref{fig:violation_lista_wshated}). 
Both scenarios reveal that the specified conditions are frequently violated post-training. 
For instance, in the shared $W$ case (\Cref{fig:violation_lista_wnotshated}), while the conditions hold in later steps, substantial violations occur in early steps. The divergence contradicts the convergence rate analysis presented in~\cite{chen2018theoretical}.
\begin{figure}[htp]
   \centering
   \vspace{-2mm}
   \begin{subfigure}{0.47\textwidth}
      \centering
      \includegraphics[width=0.99\textwidth]{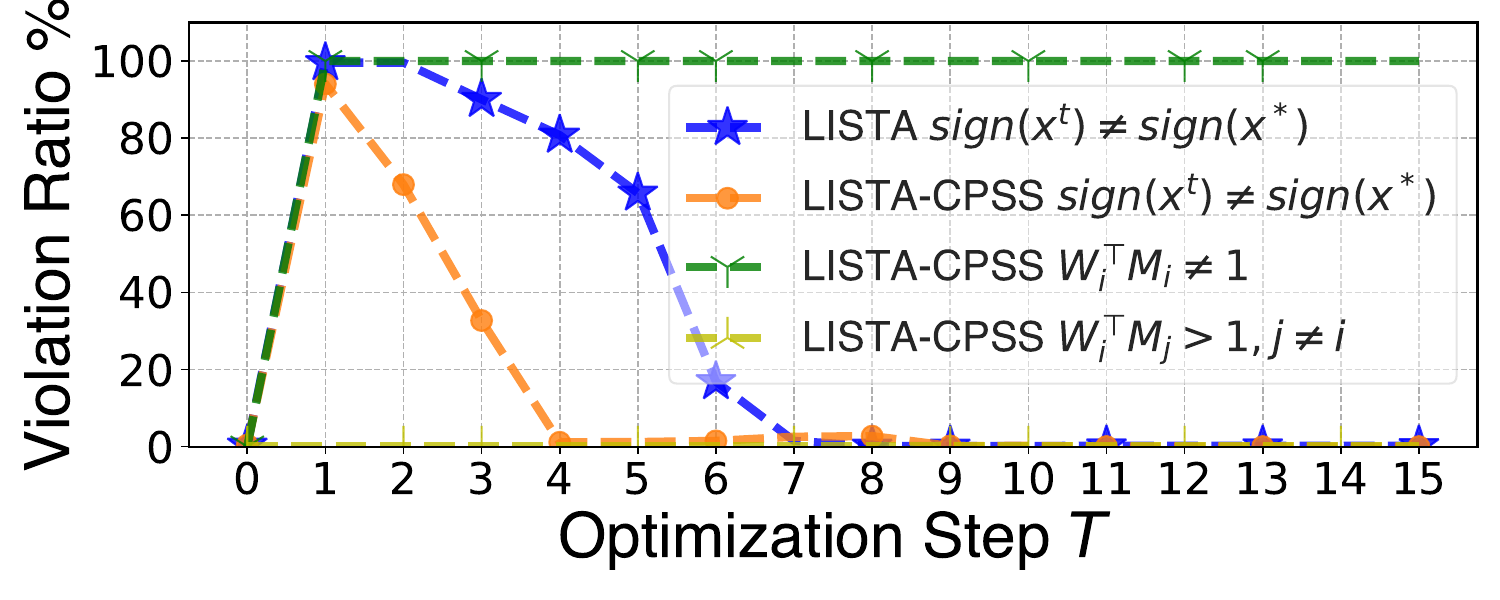}
      \vspace{-6mm}
      \caption{Shared $W$} 
      \label{fig:violation_lista_wnotshated}
   \end{subfigure}
   \hfill
   \begin{subfigure}{0.47\textwidth}
      \centering
      \includegraphics[width=0.99\textwidth]{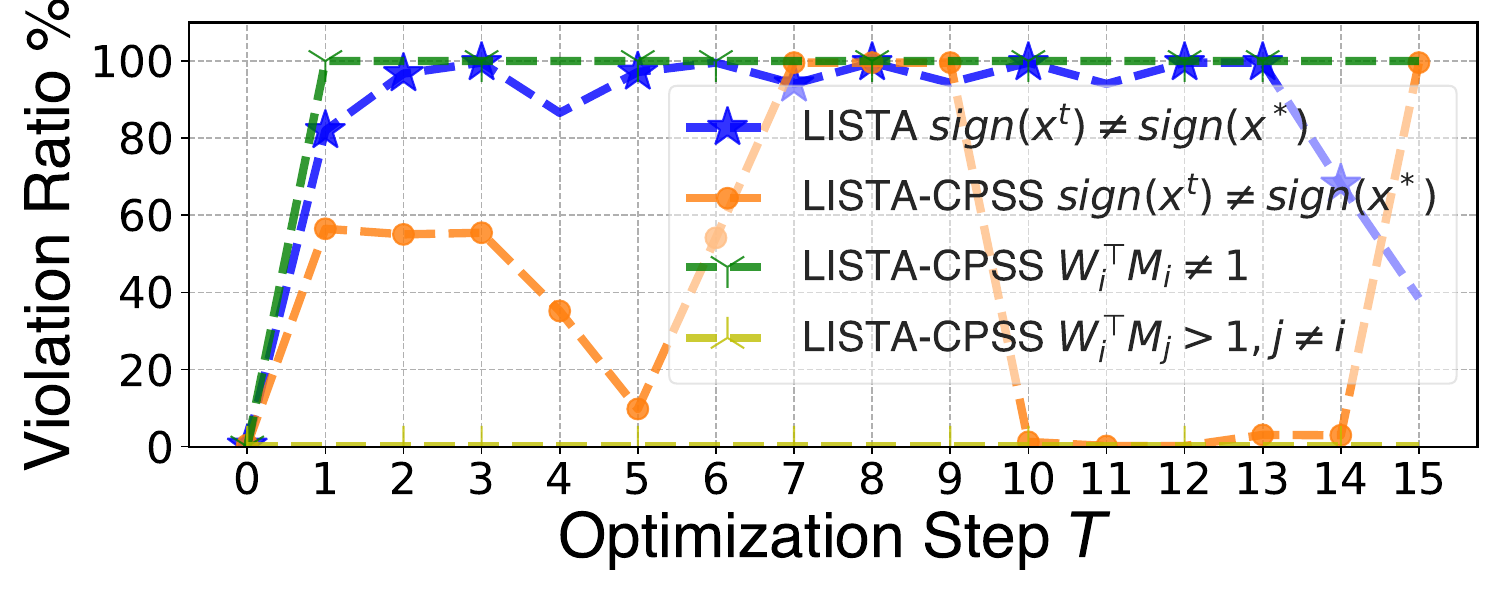}
      \vspace{-6mm}
      \caption{Unique $W_t$ per Step} 
      \label{fig:violation_lista_wshated}
   \end{subfigure}
   \vspace{-2mm}
   \caption{Violation Ratio of LISTA-CPSS Conditions During Inference} 
   \label{fig:violation_lista}
   \vspace{-4mm}
\end{figure}

The preceding observations highlight that training is indispensable for L2O convergence analysis. 
Three fundamental questions arise in L2O: (i) \textit{What is the impact of training on convergence?} (ii) \textit{How can training be incorporated into the convergence analysis framework?} (iii) \textit{What mechanisms ensure a stable training process?} We propose a concise approach to address these questions, establishing a direct alignment between the training's convergence rate and an existing algorithm's rate.

\subsection{L2O Convergence Demonstration: Aligning L2O to An Algorithm}
First, we introduce a general convergence analysis framework. 
Let $X^*$ be the optimal solution, $r_t$ represents an iteration-dependent rate term, and $C(X_0)$ be a constant that dependent on the initial point $X_0$ (and $X^*$), the convergence rate of an algorithm (either learned or classical) for minimizing an objective $F(X)$ (e.g., the objective in \Cref{eq:obj_f} or the loss in \Cref{eq:loss_F}) is often formulated as: $F(X_t) \leq r_{t} C(X_0)$. For example, standard GD has a rate of $F(X_t) \leq \frac{\beta}{t} \|X_0 - X^*\|_2^2$~\cite{beck2009fast}.

The performance of L2O models, stabilized via training, is typically assessed after $T$ iterations~\cite{liu2023towards, song2024towards}. 
We formulate the L2O training convergence rate w.r.t. training iteration $k$ as:
\begin{equation} \label{eq:l2o_rate}
   F(X_T^k) \leq r_{k} C(X_T^0), \quad \text{where } X_T^0 = \text{L2O}_{W^k} (X_0^0),
\end{equation}
with $X_T^0$ being the initial solution from the L2O model and a constant mapping $C$. 
Based on the proof in \cite{tibshirani2013lec6}, non-learning GD algorithm's convergence rate corresponding to the initial L2O state is:
\begin{equation} \label{eq:gd_rate}
   F(X_T^0) \leq \tfrac{\beta}{T} \|X_0^0 - X^*\|_2^2.
\end{equation}

Given the independence of training iteration $k$ and optimization step $T$,
we align the LHS of \Cref{eq:l2o_rate} with the RHS of \Cref{eq:gd_rate} by setting $C(X_T^0) = F(X_T^0)$.
Given initial point is constant that $X_T^0 = X_T^k$, this yields the combined training convergence rate:
\begin{equation} \label{eq:l2o_gd_rate}
   F(X_T^k) \leq r_{k} \tfrac{\beta}{T} \|X_0^k - X^*\|_2^2.
\end{equation}
Here, the LHS represents the objective value after $k$ training iterations, while the RHS is a constant term dependent on the initial point $X_0$. 
W.r.t. $T$, \Cref{eq:l2o_gd_rate} demonstrates a sub-linear convergence rate of at least $\mathcal{O}(1/T)$. 
The rate indicates that integrating L2O with an existing algorithm via training can enhance its convergence. 
Such integration is achieved by the Math-L2O framework~\cite{liu2023towards}, which utilizes a NN to learn hyperparameters for non-learning algorithms (e.g., step size for GD, step size and momentum for Nesterov Accelerated Gradient~\cite{beck2009fast}).

Further, we construct the Math-L2O training rate $r_k$ (see \Cref{eq:l2o_rate}). 
\Cref{sec:linear_convergence} establishes its linear convergence. 
Subsequently, \Cref{sec:init_strategy} proposes a deterministic initialization strategy to ensure the alignment ($C(X_T^0) = F(X_T^0)$) and uphold the theoretical conditions for this linear rate.


\section{Linear Convergence of L2O Training} \label{sec:linear_convergence}
In this section, we establish the linear convergence rate for training a Math-L2O model employing an over-parameterized NN, w.r.t. the loss defined in \Cref{eq:loss_F}. 
By training the NN (\Cref{eq:NN_mathl2o}) using GD, we establish its linear convergence rate via NTK theory. 
Classical NTK theory~\cite{jacot2018neural} requires infinite NN width to maintain a non-singular kernel matrix, which facilitates a gradient lower bound akin to the Polyak-Lojasiewicz condition~\cite{polyak1969minimization, nguyen2021proof}. 
Applying the relaxation from~\cite{nguyen2021proof} and the rigorous NN formalizations (\Cref{sec:definition}), we demonstrate that an NN width of $\mathcal{O}(Nd)$ is sufficient.

To derive the rate, we first introduce a lemma to bound Math-L2O's gradients. 
We then prove that appropriate initialization leads to deterministic loss minimization in the initial training iteration. 
After that, we develop a strategy to maintain this property throughout training, thereby ensuring convergence. 
This approach culminates in a linear convergence rate for an $\mathcal{O}(Nd)$-width NN. 
The main results are summarized herein, with detailed proofs deferred to \Cref{sec:tools} and \Cref{sec:linear_convergence_proof}.

\subsection{Bound Outputs of Math-L2O} \label{sec:l2o_output_bounds}
We define $\alpha_0 \coloneqq \sigma_{\min}(G^0_{L-1,T})$ and let $C_\ell > 0$ for $\ell \in [L]$ be any sequence of positive numbers. Moreover, for $t,j \in [T]$, we define the following quantities:
\vspace{-2mm}
\begin{equation} \label{eq:quantities}
   \!\!\begin{aligned}
      &\bar{\lambda}_{\ell} = \|W_{\ell}^0\|_2 + C_\ell, \Theta_{L} = \textstyle\prod_{\ell=1}^{L} \bar{\lambda}_\ell, 
      \Phi_j = \| X_0 \|_2 + \frac{2j-1}{\beta} \|\mathbf{M}^\top Y \|_2,    \\
      &\Lambda_{j}=  (1+\beta)  \| X_0 \|_2^2
      +
      \tfrac{(4j-3)(1+\beta) + \beta}{\beta} \| X_0 \|_2 \|\mathbf{M}^\top Y \|_2  
      + \tfrac{(2j-1)(\beta(2j-1)+(2j-2))}{\beta^2}  \|\mathbf{M}^\top Y \|_2^2, \\
      &\begin{alignedat}{2}
         S_{\Lambda,T} &= \textstyle\sum_{t=1}^{T} \Lambda_{t},              & \delta_1^t &= \textstyle\sum_{s=1}^{t} \left(\textstyle\prod_{j=s+1}^{t} (1 + \tfrac{1+\beta}{2} \Theta_L \Phi_j )\right) \Lambda_{s}, \\
         S_{\bar{\lambda},L} &= \textstyle\sum_{\ell=1}^{L} \bar{\lambda}_\ell^{-2}, & \delta_2 &= \textstyle\sum_{s=1}^{T-1} \left(\textstyle\prod_{j=s+1}^{T-1} (1 + \tfrac{1+\beta}{2} \Theta_{L} \Phi_j ) \right) \Lambda_{s}, \\
         \zeta_1 &= \sqrt{\beta}\| X_{0} \|_2 + (2T+1) \|Y\|_2, \quad                & \delta_3 &= (1+\beta)\| X_0 \|_2 + \big(2T-1 + \tfrac{2T-2}{\beta} \big) \|\mathbf{M}^\top Y \|_2, \\
         \zeta_2 &= \| X_{0}\|_2 + \tfrac{2T-2}{\beta} \|\mathbf{M}^\top Y\|_2,                         & \delta_4 &= \sigma(\delta_3 \Theta_L) (1 - \sigma(\delta_3 \Theta_L)), \\
     \end{alignedat}
   \end{aligned}
\end{equation}
where $X_0$ denotes the initial point, and $\mathbf{M}$ (parameter matrix) and $Y$ (labels) are input features from \Cref{eq:loss_F}. 
The defined quantities are positive under the conditions $j \ge 1$ and $\bar{\lambda}_\ell > 0$.

First, we derive a bound for the training gradients by considering them as perturbations from initialization.
This bound relates the gradient magnitude to the objective function in \Cref{eq:loss_F}, as detailed in the following lemma. Despite the derivative for inner layers (\Cref{eq:derivative_NN_simple_final_mathl2o_l}) containing an additional term compared to that of the last layer (\Cref{eq:derivative_NN_simple_final_mathl2o_L_p}), a uniform bound as stated applies. 
The proof is provided in \Cref{sec:gradient_bound}.
\begin{lemma} \label{lemma:gradient_wp_bound}
   Assuming $\max(\| W_{\ell}^{k+1} \|_2, \| W_{\ell}^k \|_2) \leq \bar{\lambda}_{\ell}$ for $\ell \in [L]$, for any training iteration $k$, the gradient of the $\ell$-th layer parameters $W_{\ell}^{k}$ is bounded by: 
   $ \big\|\frac{\partial F}{\partial W_{\ell}^{k}}\big\|_2
   \leq \frac{\sqrt{\beta}\Theta_{L} S_{\Lambda,T}}{2 \bar{\lambda}_{\ell}}  \| \mathbf{M} X_{T}^{k}  -Y \|_2.$
\end{lemma}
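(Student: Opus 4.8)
The plan is to bound $\|\partial F/\partial W_\ell^k\|_2$ term by term using the explicit gradient formulas \Cref{eq:derivative_NN_simple_final_mathl2o_l} and \Cref{eq:derivative_NN_simple_final_mathl2o_L_p}. The right-hand side factor $\|\mathbf{M}X_T^k - Y\|_2$ comes directly from $\Gamma_T^\top = (\mathbf{M}X_T^k - Y)^\top \mathbf{M}$, so the task reduces to bounding the operator norm of the remaining sum-of-products matrix by $\tfrac{\sqrt\beta\,\Theta_L S_{\Lambda,T}}{2\bar\lambda_\ell}$. First I would establish auxiliary bounds on the intermediate quantities: (i) $\|\mathcal{D}(P_j)\|_2 \le 2$ since $P_j = 2\sigma(\cdot) \in (0,2)$ coordinate-wise, hence $\|\mathbf{I}_d - \tfrac1\beta \mathbf{M}^\top\mathbf{M}\mathcal{D}(P_j)\|_2 \le 1$ (using $\|\mathbf{M}^\top\mathbf{M}\|_2 \le \beta$ and the fact that $P_j \in (0,2)$ makes the iteration map a contraction — this mirrors the standard GD step-size stability argument); thus the product $\prod_{j=T}^{t+1}(\cdots)$ has operator norm $\le 1$; (ii) $\|\mathcal{D}(\Gamma_t)\|_2 = \|\Gamma_t\|_\infty \le \|\Gamma_t\|_2 = \|\mathbf{M}^\top(\mathbf{M}X_t - Y)\|_2$, which I would bound via a recursion on $\|X_t\|_2$ (each Math-L2O step changes $X$ by at most $\tfrac1\beta\|P_t \odot \nabla F(X_{t-1})\|_2$, giving the linear-in-$t$ growth encoded in $\Phi_j$ and the quadratic form $\Lambda_t$); (iii) $\|\mathcal{D}(P_t\odot(1-P_t/2))\|_2 \le 1/2$ since the scalar map $p \mapsto p(1-p/2)$ on $(0,2)$ has maximum $1/2$ at $p=1$ — this supplies the $1/2$ factor; (iv) $\|\Xi_\ell\|_2 \le \prod_{j=\ell+1}^{L}\|W_j\|_2 \le \prod_{j=\ell+1}^L \bar\lambda_j$ using $\|\mathbf{D}_{j,t}\|_2 \le 1$ and $\|\mathbf{I}_d \otimes W_j\|_2 = \|W_j\|_2$; and (v) $\|G_{\ell-1,t}\|_2 \le \prod_{j=1}^{\ell-1}\|W_j\|_2 \cdot \|G_{0,t}\|_2 \le \big(\prod_{j=1}^{\ell-1}\bar\lambda_j\big)\,\Phi_t$ (with the ReLU being $1$-Lipschitz and the $\ell=L-1$ case handled by Sigmoid, plus $\|G_{0,t}\|_2 = \|[X_{t-1}, \nabla F(X_{t-1})]\|_2$ bounded by $\Phi_t$ up to the smoothness factor).

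Then I would assemble: for the inner layer, the Kronecker term $\|\Xi_\ell \otimes G_{\ell-1,t}^\top\|_2 = \|\Xi_\ell\|_2\|G_{\ell-1,t}\|_2 \le \big(\prod_{j=\ell+1}^L\bar\lambda_j\big)\big(\prod_{j=1}^{\ell-1}\bar\lambda_j\big)\Phi_t = \tfrac{\Theta_L}{\bar\lambda_\ell}\Phi_t$ — this is exactly where the $\Theta_L/\bar\lambda_\ell$ prefactor materializes. For the last layer, $\|G_{L-1,t}\|_2 \le \big(\prod_{j=1}^{L-1}\bar\lambda_j\big)\Phi_t$, and since the Sigmoid output is the relevant layer the bound is actually tighter, but it is dominated by the same $\tfrac{\Theta_L}{\bar\lambda_\ell}$ expression with $\ell = L$ (note $\Theta_L/\bar\lambda_L = \prod_{j=1}^{L-1}\bar\lambda_j$), so the uniform statement holds — this is the point the lemma's remark about "a uniform bound applies despite the extra term" is addressing. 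Multiplying the per-$t$ bounds (contraction product $\le 1$, $\|\mathcal D(\Gamma_t)\|_2$ absorbed into $\Lambda_t$, the $1/2$ from $p(1-p/2)$, the $\Theta_L/\bar\lambda_\ell \cdot$ (growth factor) from the $\Xi\otimes G$ term) and summing over $t \in [T]$ produces $\tfrac{\Theta_L}{2\bar\lambda_\ell}\sum_{t=1}^T \Lambda_t = \tfrac{\Theta_L S_{\Lambda,T}}{2\bar\lambda_\ell}$; the outer $\sqrt\beta$ comes from pulling $\|\mathbf{M}\|_2 \le \sqrt\beta$ out of $\Gamma_T^\top = (\mathbf{M}X_T^k-Y)^\top\mathbf{M}$ while leaving $\|\mathbf{M}X_T^k - Y\|_2$ as the claimed factor. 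Throughout, the hypothesis $\max(\|W_\ell^{k+1}\|_2,\|W_\ell^k\|_2)\le\bar\lambda_\ell$ is what licenses replacing every $\|W_j\|_2$ by $\bar\lambda_j$ uniformly in $k$.

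The main obstacle is step (ii) — getting the right recursive bound on $\|X_t\|_2$ and hence on $\|\mathbf{M}^\top(\mathbf{M}X_t - Y)\|_2$ that collapses cleanly into the defined quadratic form $\Lambda_t$. One must carefully track that $\|X_t\|_2 \le \|X_{t-1}\|_2 + \tfrac1\beta\|P_t\odot\nabla F(X_{t-1})\|_2 \le \|X_{t-1}\|_2 + \tfrac2\beta\|\mathbf{M}^\top\mathbf{M}X_{t-1}\|_2 + \tfrac2\beta\|\mathbf{M}^\top Y\|_2$, and the $\mathbf{M}^\top\mathbf{M}$ term does \emph{not} simply telescope unless one uses the contraction structure; this is presumably why $\Phi_j$ carries the coefficient $\tfrac{2j-1}{\beta}\|\mathbf{M}^\top Y\|_2$ rather than something involving $\beta$ powers — the clean form must come from the same $\|\mathbf I_d - \tfrac1\beta\mathbf{M}^\top\mathbf{M}\mathcal D(P_j)\|_2 \le 1$ observation applied to the residual $\mathbf{M}X_t - Y$ directly, giving $\|\mathbf{M}X_t-Y\|_2 \le \|\mathbf{M}X_{t-1}-Y\|_2 + (\text{small})$ and a linear-in-$t$ bound. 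Verifying that the bookkeeping in $\Lambda_t$ (the $(4j-3)$, $(2j-1)(\beta(2j-1)+(2j-2))$ coefficients) exactly matches the expanded square $\|\mathbf{M}^\top\mathbf{M}X_t\|_2\|X_t\|_2$-type product after substituting the $\Phi$-type linear bounds is the delicate part; I would do this by first proving $\|\mathbf{M}X_t - Y\|_2 \le \|\mathbf{M}X_0-Y\|_2 + \tfrac{2t}{\beta}\cdot(\dots)$ as a clean lemma, then deriving $\|\Gamma_t\|_2$ and $\|X_t\|_2$ bounds from it, and finally recognizing $\Lambda_t$ as (an upper bound for) the product $\|\Gamma_t\|_2 \cdot \|G_{0,t}\|_2$-style quantity, leaving the $\Theta_L$, $\bar\lambda_\ell$, and $1/2$ factors to the (routine) layer-propagation estimates (i), (iii), (iv), (v).
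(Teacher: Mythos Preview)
Your proposal is correct and matches the paper's proof essentially step for step: triangle/Cauchy--Schwarz on the explicit gradient formulas, then the contraction bound (i), the $1/2$ from $p(1-p/2)$, the layer-propagation bounds (iv)--(v) combining to $\Theta_L/\bar\lambda_\ell$, and the $\sqrt\beta$ pulled from $\|\mathbf M\|_2$. Your flagged obstacle about $\|X_t\|_2$ dissolves once you use the closed-form one-line expansion $X_t = \prod_s(\mathbf I - \tfrac1\beta\mathcal D(P_s)\mathbf M^\top\mathbf M)X_0 + \tfrac1\beta\sum_s\prod(\cdots)\mathcal D(P_s)\mathbf M^\top Y$ (this is \Cref{lemma:X_t_bound} in the paper) and apply the contraction bound to each factor directly---no recursion needed---yielding $\|X_t\|_2 \le \|X_0\|_2 + \tfrac{2t}{\beta}\|\mathbf M^\top Y\|_2$; the quantity $\Lambda_t$ is then precisely the product of the resulting bound on $\|\mathcal D(\Gamma_{t-1})\|_2$ (after factoring out a $\beta$) with the bound on $\|G_{\ell-1,t}\|_2$ from \Cref{lemma:G_bound}.
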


Building upon \Cref{lemma:X_t_bound,lemma:gradient_wp_bound} and auxiliary results (see \Cref{sec:tools}), we analyze the dynamics of the final solution $X_T$ w.r.t. parameter updates during training. 
The subsequent lemma establishes a rigorous formulation for the fluctuation of $X_T$ in response to changes in parameters between adjacent training iterations.
This result demonstrates that Math-L2O, viewed as a function of its learnable parameters, exhibits semi-smoothness, aligning with findings for ReLU-Nets in \cite{nguyen2021proof}. The proof is provided in \Cref{sec:semi_smooth}.

The semi-smoothness of the Math-L2O NN is preserved despite its recurrent operations. 
The coefficient associated with $\|W_{\ell}^{k+1} - W_{\ell}^{k} \|_2$ exhibits $\mathcal{O}(e^{LT})$ scaling, where $e$ is an initialization parameter detailed in \Cref{sec:init_strategy}. 
This represents a looser bound compared to that for ReLU-Nets~\cite{nguyen2021proof}, which is a consequence of Math-L2O's greater architectural complexity, specifically the $T$-fold execution of an $L$-layer NN block (see \Cref{eq:derivative_NN_simple_final_mathl2o_L_p}). 
However, this scaling behavior is consistent with observations for other deep architectures~\cite{allen2019convergence}.

\begin{lemma} \label{lemma:l2o_semi_smooth}
   For any training iteration $k$, assume there exist constants $\bar{\lambda}_\ell \in \mathbb{R}^+$ for $\ell \in [L]$ such that $\max_{k' \in \{k, k+1\}} \| W_\ell^{k'} \|_2 \leq \bar{\lambda}_\ell$.
   Let $X_t^{k+1}$ and $X_t^{k}$ be outputs of the Math-L2O (defined in \Cref{eq:math_l2o,eq:NN_mathl2o}) corresponding to parameters $W^{k+1}=\{W_\ell^{k+1}\}_{\ell=1}^L$ and $W^k=\{W_\ell^k\}_{\ell=1}^L$, respectively.
   Then, Math-L2O exhibits the following semi-smoothness property:
   \begin{equation*} 
      \|X_{t}^{k+1}-X_{t}^k\|_2
      \leq
      \tfrac{1}{2}
      \mathsmaller{\sum}_{s=1}^{t-1} 
      \big(\mathsmaller{\prod}_{j=s+1}^{t} 
      (1+  (1+\beta)/2 \Theta_L \Phi_j)\big)
      \Lambda_{s}
      \Theta_L
      \big(\mathsmaller{\sum}_{\ell=1}^{L} \bar{\lambda}_{\ell}^{-1} \|W_{\ell}^{k+1} - W_{\ell}^{k} \|_2\big).
   \end{equation*}
\end{lemma}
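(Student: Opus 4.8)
The plan is induction on the optimization step $t$, leveraging that the initial point $X_0$ is shared by the two parameter configurations, so $X_0^{k+1}=X_0^k$; this makes the $t=1$ case trivial (empty sum) and, more importantly, seeds the recursion at zero. Write $\Delta_t\coloneqq\|X_t^{k+1}-X_t^k\|_2$ and $W_\Delta\coloneqq\sum_{\ell=1}^L\bar{\lambda}_\ell^{-1}\|W_\ell^{k+1}-W_\ell^k\|_2$. Subtracting the two instances of the update \Cref{eq:math_l2o} and inserting $\pm\,P_t^{k+1}\odot\nabla F(X_{t-1}^k)$, I would decompose
\[
X_t^{k+1}-X_t^k=\Big(\mathbf{I}-\tfrac1\beta\mathcal{D}(P_t^{k+1})\mathbf{M}^\top\mathbf{M}\Big)\big(X_{t-1}^{k+1}-X_{t-1}^k\big)-\tfrac1\beta\big(P_t^{k+1}-P_t^k\big)\odot\nabla F(X_{t-1}^k),
\]
which isolates (a) the propagation of the previously accumulated discrepancy through a GD-type affine map, and (b) the change of the learned step-size vector $P_t=g_W(X_{t-1},\nabla F(X_{t-1}))$.

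For (a) I would control the operator norm of $\mathbf{I}-\tfrac1\beta\mathcal{D}(P_t^{k+1})\mathbf{M}^\top\mathbf{M}$ using $\|P_t^{k+1}\|_\infty\le 2$ (since $P_t=2\sigma(\cdot)$) and $\|\mathbf{M}^\top\mathbf{M}\|_2\le\beta$. For (b) I would split $P_t^{k+1}-P_t^k=\big(g_{W^{k+1}}(\mathrm{in}_t^{k+1})-g_{W^{k+1}}(\mathrm{in}_t^k)\big)+\big(g_{W^{k+1}}(\mathrm{in}_t^k)-g_{W^k}(\mathrm{in}_t^k)\big)$ with $\mathrm{in}_t^{k'}\coloneqq(X_{t-1}^{k'},\nabla F(X_{t-1}^{k'}))$. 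The first bracket is bounded by the Lipschitz constant of $g_{W^{k+1}}$ in its input — at most $\tfrac12\Theta_L$, since $\relu$ is $1$-Lipschitz, $\sigma$ is $\tfrac14$-Lipschitz (so $2\sigma$ is $\tfrac12$-Lipschitz), and $\prod_\ell\|W_\ell^{k+1}\|_2\le\Theta_L$ — times the input perturbation $\|\mathrm{in}_t^{k+1}-\mathrm{in}_t^k\|_2\le(1+\beta)\Delta_{t-1}$, which follows from $\beta$-smoothness of $F$. The second bracket is handled by the standard layer-by-layer telescoping for feed-forward nets: perturb one layer's weights at a time, using $1$-Lipschitz $\relu$, $\tfrac14$-Lipschitz $2\sigma$, and the hypothesis that every layer of each intermediate network has norm $\le\bar{\lambda}_\ell$, giving $\le\tfrac12\Theta_L\|G_{0,t}^k\|_2\,W_\Delta$. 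Multiplying the change in $P_t$ by $\tfrac1\beta\|\nabla F(X_{t-1}^k)\|_\infty$ and invoking the a priori magnitude bounds of \Cref{lemma:X_t_bound} — which furnish $\tfrac1\beta\|\nabla F(X_{t-1}^k)\|_2\le\Phi_t$ and $\tfrac1\beta\|\nabla F(X_{t-1}^k)\|_2\,\|G_{0,t}^k\|_2\le\Lambda_t$ purely in terms of $\|X_0\|_2$ and $\|\mathbf{M}^\top Y\|_2$ — collapses everything onto the packaged quantities of \Cref{eq:quantities}.

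Assembling (a) and (b) produces a scalar linear recursion of the form $\Delta_t\le\big(1+\tfrac{1+\beta}{2}\Theta_L\Phi_t\big)\Delta_{t-1}+\tfrac12\Theta_L\Lambda_t\,W_\Delta$ with $\Delta_0=0$, which unrolls term-by-term into
\[
\Delta_t\le\tfrac12\,\Theta_L\,W_\Delta\,\sum_{s=1}^{t-1}\Big(\prod_{j=s+1}^{t}\big(1+\tfrac{1+\beta}{2}\Theta_L\Phi_j\big)\Big)\Lambda_s,
\]
which is the claimed bound. The main obstacle is part (b)'s first bracket: one must carefully disentangle how the single perturbation $X_{t-1}^{k+1}-X_{t-1}^k$ re-enters at step $t$ simultaneously through $\nabla F$ and through the NN's input features, and must propagate the product-of-norms factor $\Theta_L$ consistently across all $L$ layers and all $T$ recurrent passes so that the semi-smoothness coefficient grows no worse than the stated $\mathcal{O}(e^{LT})$ scaling. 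Pinning down the precise absolute constant multiplying $\Delta_{t-1}$ (i.e.\ the sharp bound on the GD-type map in (a)) and the exact summation index range is then the remaining bookkeeping; the layer-wise telescoping in the second bracket and the final unrolling of the recursion are routine.
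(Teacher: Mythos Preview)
Your approach is essentially identical to the paper's: both derive the same scalar recursion $\Delta_t \le \big(1+\tfrac{1+\beta}{2}\Theta_L\Phi_t\big)\Delta_{t-1}+\tfrac{1}{2}\Theta_L\Lambda_t\,W_\Delta$ by combining the contraction $\|\mathbf{I}-\tfrac{1}{\beta}\mathcal{D}(P)\mathbf{M}^\top\mathbf{M}\|_2\le 1$, the $\tfrac{1}{2}\Theta_L$-Lipschitzness of $g_W$ in its input, the layer-wise telescoping in the weights (which the paper packages as \Cref{lemma:p_smooth}), and the $\Phi_t,\Lambda_t$ bounds coming from \Cref{lemma:X_t_bound}. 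The only cosmetic difference is that you keep $\nabla F(X_{t-1}^k)$ intact in the decomposition of $X_t^{k+1}-X_t^k$, whereas the paper splits off the $\mathbf{M}^\top Y$ piece and pairs $\mathbf{M}^\top\mathbf{M}X_{t-1}$ with $X_{t-1}^{k+1}$; both routes produce exactly the same coefficient $\Phi_t$ in front of $\|P_t^{k+1}-P_t^k\|_\infty$.

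One bookkeeping slip: your claim that the $t=1$ case is trivial is wrong, since $X_1^{k'}=X_0-\tfrac{1}{\beta}P_1^{k'}\odot\nabla F(X_0)$ depends on the parameters through $P_1^{k'}$, so $\Delta_1\neq 0$ in general. Correspondingly, unrolling your recursion from $\Delta_0=0$ gives $\sum_{s=1}^{t}\big(\prod_{j=s+1}^{t}A_j\big)B_s$, not $\sum_{s=1}^{t-1}$. The paper's own appendix proof in fact lands on $\sum_{s=1}^{t}$, so the upper limit $t-1$ in the displayed lemma statement is itself a typo that you have inadvertently reproduced.
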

\vspace{-2mm}


\Cref{lemma:l2o_semi_smooth} demonstrates that Math-L2O solutions exhibit a bounded response to perturbations in its NN parameters. 
This finding, in conjunction with \Cref{lemma:gradient_wp_bound}, facilitates a more nuanced analysis of the loss dynamics. 
Further, judicious selection of learning rates enables control over the evolution of NN parameters. 
Such control is instrumental in bounding the constant quantities from these lemmas, thereby establishing the desired convergence rate presented in the subsequent theorem.

\subsection{Linear Training Convergence Rate of Math-L2O}
Leveraging the bounds on Math-L2O's output (\Cref{lemma:X_t_bound}) and its gradient (\Cref{lemma:gradient_wp_bound}), the following theorem establishes the linear convergence rate for training the Math-L2O model. The proof is provided in \Cref{sec:linear_convergence_proof}.
\begin{theorem}\label{theorem:linear_convergence}
   Consider the NN defined in \Cref{eq:NN_mathl2o}, 
   using quantities from \Cref{eq:quantities}, suppose the following conditions hold at initialization:
   \begin{subequations}\label{eq:lbs_singular_value}
      \begin{minipage}[b][4mm]{.27\linewidth} 
         \centering
         \begin{equation} \label{eq:lb4_singular_value}
            \alpha_0 \geq 8(1+\beta)\zeta_2 ,
         \end{equation}
     \end{minipage}%
     \begin{minipage}[b][4mm]{.72\linewidth} 
         \begin{equation} \label{eq:lb3_singular_value}
            \alpha_0^2 \geq \tfrac{\beta^3}{4\beta_{0}^2} \delta_4^{-2}
            \big(
               - \tfrac{1}{2} \Theta_{L-1}^2\Lambda_{T}
               S_{\Lambda, T-1}
               + \Theta_{L}^2
               (\Lambda_{T} + \delta_2)
               S_{\bar{\lambda},L}
               S_{\Lambda, T}
            \big).
         \end{equation}
     \end{minipage}
     \vspace{-2mm}
     \begin{minipage}[b][4mm]{.44\linewidth} 
      \begin{equation} \label{eq:lb1_singular_value}
         \alpha_0^{2} \geq \max_{\ell \in [L]}\tfrac{\Theta_{L}}{C_\ell \bar{\lambda}_{\ell}} \tfrac{\beta^2 \sqrt{\beta}}{8  \beta_{0}^2}  \delta_4^{-2} \zeta_1 S_{\Lambda, T},
      \end{equation}
      \end{minipage}%
      \begin{minipage}[b][4mm]{.55\linewidth} 
            \begin{equation} \label{eq:lb2_singular_value}
               \alpha_0^3 \geq \tfrac{(1+\beta)\beta^2\sqrt{\beta}}{2\beta_{0}^2} \delta_4^{-2}
               \Theta_{L} \Theta_{L-1} \zeta_1 \zeta_2
               S_{\bar{\lambda},L}
               S_{\Lambda, T},
            \end{equation}
      \end{minipage}
   \end{subequations}

   Let the learning rate $\eta$ satisfy:

   \vspace{-2mm}
   \begin{subequations}
      \begin{minipage}[b][4mm]{.6\linewidth} 
          \begin{equation} \label{eq:learning_rate_upper_bound1}
            \eta < \tfrac{8}{\beta} (\delta_2 + \Lambda_T) \big( \delta_2 + \Theta_L S_{\Lambda,T} S_{\bar{\lambda},L} \big)^{-1} S_{\Lambda,T}^{-2},
          \end{equation}
      \end{minipage}%
      \begin{minipage}[b][4mm]{.35\linewidth} 
          \begin{equation} \label{eq:learning_rate_upper_bound2}
           \eta < \tfrac{1}{4}
           \tfrac{\beta^2}{\beta_{0}^2} \delta_4^{-2}
           \alpha_0^{-2}.
          \end{equation}
      \end{minipage}
   \end{subequations}
   Then, for weights $W^k = \{W_\ell^k\}_{\ell=1}^L$ at training iteration $k$, the loss function $F(W^k)$ converges linearly to a global minimum:
   \vspace{-2mm}
   \begin{equation*}
     F(W^{k}) \leq
     \big(1 - 4 \eta \tfrac{\beta_{0}^2}{\beta^2} \delta_4 \alpha_0^2 \big)^{k} F(W^{0}).
   \end{equation*}
\end{theorem}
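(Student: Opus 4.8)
The plan is to run the standard Neural-Tangent-Kernel global-convergence argument for gradient descent (following \cite{nguyen2021proof,liu2022provable}), with every constant tracked through the $T$-fold recurrence of the Math-L2O block. Three ingredients are needed, each valid on a neighborhood of the initialization where $\|W_\ell^k\|_2 \le \bar\lambda_\ell$ for all $\ell \in [L]$: (i) a Polyak--Lojasiewicz-type gradient lower bound $\|\partial F/\partial W^k\|_2^2 \ge 8\tfrac{\beta_{0}^2}{\beta^2}\delta_4\alpha_0^2\,F(W^k)$; (ii) a one-step descent inequality built from (i), the gradient upper bound \Cref{lemma:gradient_wp_bound}, and the semi-smoothness \Cref{lemma:l2o_semi_smooth}; and (iii) an induction showing the gradient-descent iterates never leave that neighborhood, so (i) and (ii) remain valid and the contraction compounds. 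The output bound \Cref{lemma:X_t_bound} and the auxiliary estimates of \Cref{sec:tools} supply the scalar constants throughout.

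For (i), the idea is to retain only the $t=T$ summand of the last-layer derivative \Cref{eq:derivative_NN_simple_final_mathl2o_L_p}: there the product $\prod_{j=T}^{T+1}(\cdot)$ is empty, so contracting with $\Gamma_T^\top$ leaves the single block $-\tfrac1\beta\,\Gamma_T^\top\mathcal{D}(\Gamma_T)\,\mathcal{D}\big(P_T\odot(1-P_T/2)\big)\,G_{L-1,T}^\top$. Three facts lower-bound its norm: $\|\Gamma_T\|_2^2 \ge \beta_{0}\|\mathbf{M}X_T^k-Y\|_2^2 = 2\beta_{0} F(W^k)$ since $\mathbf{M}$ has full row rank; $P_T\odot(1-P_T/2) = 2\sigma(1-\sigma) \ge \delta_4$ entrywise, because $\|W_L G_{L-1,T}\|_\infty \le \delta_3\Theta_L$ by \Cref{lemma:X_t_bound} and $\sigma'(z)=\sigma(z)(1-\sigma(z))$ is symmetric and decreasing in $|z|$; and $\sigma_{\min}(G_{L-1,T}) \ge \alpha_0/2$, which holds at initialization by definition of $\alpha_0$ and survives on the neighborhood since \Cref{lemma:l2o_semi_smooth} caps the motion of $G_{L-1,T}$. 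This one term already contributes order $\tfrac1{\beta^2}\beta_{0}^2\delta_4^2\alpha_0^2 F(W^k)$; the remaining $t<T$ summands and the inner-layer gradients \Cref{eq:derivative_NN_simple_final_mathl2o_l} (which carry the extra $\Xi_\ell$ factor) must then be shown not to cancel it. The four conditions \Cref{eq:lbs_singular_value} on $\alpha_0$ are precisely what makes this work: each forces $\alpha_0^2$ above a specific product of $\Theta_L,\Theta_{L-1},S_{\Lambda,T},S_{\bar\lambda,L},\zeta_1,\zeta_2,\delta_4^{-1}$ that upper-bounds the corresponding perturbation, and since $\alpha_0=\sigma_{\min}(G^0_{L-1,T})$ grows with the width of the wide layer, a width of $\mathcal{O}(Nd)$ suffices.

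For (ii), use that $F$ is convex and $\beta$-smooth in $X$, so $2F(W^{k+1}) \le 2F(W^k) + 2\langle\Gamma_T^k, X_T^{k+1}-X_T^k\rangle + \beta\|X_T^{k+1}-X_T^k\|_2^2$. Inserting the gradient-descent step $W^{k+1}-W^k = -\eta\,\partial F/\partial W^k$ and the chain-rule identity $\partial F/\partial W_\ell = (\partial X_T/\partial W_\ell)^\top\Gamma_T$, the linear-in-$\eta$ part of $\langle\Gamma_T^k, X_T^{k+1}-X_T^k\rangle$ is exactly $-\eta\|\partial F/\partial W^k\|_2^2$, while the residual of this expansion and the quadratic term are both $\mathcal{O}(\eta^2 F(W^k))$: \Cref{lemma:l2o_semi_smooth} with Cauchy--Schwarz gives $\|X_T^{k+1}-X_T^k\|_2^2 \lesssim S_{\bar\lambda,L}\|W^{k+1}-W^k\|_2^2$, and \Cref{lemma:gradient_wp_bound} gives $\|W^{k+1}-W^k\|_2^2 = \eta^2\|\partial F/\partial W^k\|_2^2 \le \eta^2\tfrac{\beta}{2}\Theta_L^2 S_{\Lambda,T}^2 S_{\bar\lambda,L} F(W^k)$. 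The learning-rate windows \Cref{eq:learning_rate_upper_bound1,eq:learning_rate_upper_bound2} are tuned so these terms absorb into $\tfrac12\eta\|\partial F/\partial W^k\|_2^2$, leaving $F(W^{k+1}) \le F(W^k) - \tfrac12\eta\|\partial F/\partial W^k\|_2^2 \le \big(1 - 4\eta\tfrac{\beta_{0}^2}{\beta^2}\delta_4\alpha_0^2\big)F(W^k)$ by (i). Step (iii) is then an induction on $k$: if $F(W^j) \le (1-c\eta)^j F(W^0)$ and $\|W_\ell^j\|_2 \le \bar\lambda_\ell$ for $j \le k$ with $c = 4\tfrac{\beta_{0}^2}{\beta^2}\delta_4\alpha_0^2$, then \Cref{lemma:gradient_wp_bound} makes $\sum_{j\le k}\|W_\ell^{j+1}-W_\ell^j\|_2$ a convergent geometric series of total size $\mathcal{O}\big(\tfrac{\Theta_L S_{\Lambda,T}}{\bar\lambda_\ell}\sqrt{\beta F(W^0)}/c\big)$; conditions \Cref{eq:lb1_singular_value,eq:lb2_singular_value} (with \Cref{eq:lb3_singular_value,eq:lb4_singular_value} covering the auxiliary quantities) are exactly the statement that this size is $\le C_\ell$, so $\|W_\ell^{k+1}\|_2 \le \|W_\ell^0\|_2 + C_\ell = \bar\lambda_\ell$ and the hypotheses of (i), (ii), \Cref{lemma:gradient_wp_bound}, and \Cref{lemma:l2o_semi_smooth} persist at $k+1$; iterating and sending $k\to\infty$ yields the claimed rate.

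The main obstacle is step (i) --- the gradient lower bound for the \emph{recurrent} unrolled map. In a feed-forward net a single wide hidden layer controls the whole Jacobian, but here $\partial F/\partial W_L$ is a sum over $T$ optimization steps, each dragging its own product of $\mathbf{I}-\tfrac1\beta\mathcal{D}(P_j)\mathbf{M}^\top\mathbf{M}$ factors and its own state $G_{L-1,t}$, and the inner layers add the $\Xi_\ell$ propagation factor; one must argue the $t=T$, $\ell=L$ contribution dominates without being annihilated by the rest. Keeping those perturbation bounds tight enough that the required lower bounds on $\alpha_0$ stay polynomial in the problem data --- rather than inheriting the $\mathcal{O}(e^{LT})$-type blow-up from the semi-smoothness lemma --- is the delicate bookkeeping, and it is exactly what dictates the particular forms of \Cref{eq:lbs_singular_value} and the learning-rate windows \Cref{eq:learning_rate_upper_bound1,eq:learning_rate_upper_bound2}. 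The descent and induction steps are routine once these constants are pinned down.
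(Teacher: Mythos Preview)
Your induction scaffold (step iii) and your use of \Cref{lemma:gradient_wp_bound} to show the weights stay in a $\bar\lambda_\ell$-ball via a geometric sum are exactly what the paper does. The substantive divergence is in how the one-step contraction is produced, and here your route through a standalone PL inequality plus a Taylor-remainder bound is not the paper's mechanism and, as stated, has a gap.

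The paper never proves $\|\partial F/\partial W^k\|^2 \ge cF(W^k)$. It works with the exact identity
\[
F(W^{k+1}) = F(W^k) + \tfrac12\|\mathbf{M}(X_T^{k+1}-X_T^k)\|_2^2 + (X_T^{k+1}-X_T^k)^\top\mathbf{M}^\top(\mathbf{M}X_T^k-Y)
\]
and splits the cross term by inserting the auxiliary point
\[
Z \;=\; X_{T-1}^k - \tfrac1\beta\,\mathcal{D}\!\big(2\sigma(W_L^{k+1}G_{L-1,T}^k)\big)\,\mathbf{M}^\top(\mathbf{M}X_{T-1}^k-Y),
\]
i.e.\ the step-$T$ output with \emph{only} $W_L$ advanced to $W_L^{k+1}$ while $X_{T-1}^k$ and $G_{L-1,T}^k$ are frozen at iteration $k$. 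The piece $(Z-X_T^k)^\top\Gamma_T$ is then computed \emph{exactly}: the Mean Value Theorem on the sigmoid turns $2\sigma(W_L^{k+1}G_{L-1,T}^k)-2\sigma(W_L^kG_{L-1,T}^k)$ into $-\eta\,\mathcal{D}(2\sigma'(v))\cdot(\partial F/\partial W_L^k)\,G_{L-1,T}^k$, and substituting the explicit formula \Cref{eq:derivative_NN_simple_final_mathl2o_L_p} for $\partial F/\partial W_L^k$ produces a quadratic form in $\mathbf{M}^\top(\mathbf{M}X_T^k-Y)$ with the kernel $G_{L-1,t}^{k\,\top}G_{L-1,T}^k$ in the middle. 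Its $t=T$ summand carries $G_{L-1,T}^{k\,\top}G_{L-1,T}^k$ and supplies the negative term; the $t<T$ summands are bounded above separately as positive perturbations. The remaining piece $(X_T^{k+1}-Z)^\top\Gamma_T$ and the quadratic term are bounded by \Cref{lemma:l2o_semi_smooth} and \Cref{lemma:gradient_wp_bound}, contributing further positive $\mathcal{O}(\eta)$ and $\mathcal{O}(\eta^2)$ terms. Condition \Cref{eq:lb3_singular_value} and \Cref{eq:learning_rate_upper_bound1} are then exactly what force the sum of all positive perturbations to be at most half the negative term.

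The gap in your step (ii) is the claim that ``the residual of this expansion'' is $\mathcal{O}(\eta^2F)$ via \Cref{lemma:l2o_semi_smooth}. That lemma is a first-order Lipschitz bound on $\|X_T^{k+1}-X_T^k\|$; it does not control the second-order remainder $X_T^{k+1}-X_T^k - (\partial X_T/\partial W)|_{W^k}(W^{k+1}-W^k)$. With ReLU in the inner layers the map $W\mapsto X_T$ is not even $C^1$, so no standard Taylor remainder is available, and a naive triangle-inequality bound on the residual is $\mathcal{O}(\eta\sqrt{F})$, which after pairing with $\Gamma_T$ gives $\mathcal{O}(\eta F)$ --- the same order as the PL term and hence not absorbable. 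The $Z$ trick bypasses this entirely by never linearizing in all of $W$: the only place a derivative appears is the MVT on the scalar sigmoid along the single ray $W_L^k\to W_L^{k+1}$, with everything else held fixed. Finally, your attribution of the four conditions \Cref{eq:lbs_singular_value} to ``non-cancellation in the PL bound'' is off: in the paper \Cref{eq:lb1_singular_value} enforces $\|W_\ell^{k+1}-W_\ell^0\|\le C_\ell$, \Cref{eq:lb2_singular_value} and \Cref{eq:lb4_singular_value} enforce $\sigma_{\min}(G_{L-1,T}^k)\ge\alpha_0/2$, and only \Cref{eq:lb3_singular_value} is about absorbing the positive perturbations into the negative term.
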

\vspace{-3mm}

\begin{remark}
   $(1-4 \eta \frac{\beta_0^2}{\beta^2} \delta_4 \alpha_0^2)^k$ is $r_k$ in \Cref{eq:l2o_gd_rate}, which is a less than one term since $\delta_4=\sigma(\delta_3 \Theta_L)(1-\sigma(\delta_3 \Theta_L)) >0$ and $\alpha_0:=\sigma_{\min }(G_{L-1, T}^0) > 0$ ($G_{L-1, T}^0$ is a thin matrix), which ensure that the L2O converges at least as fast as GD.
\end{remark}

\Cref{eq:learning_rate_upper_bound1,eq:learning_rate_upper_bound2} are based on the quantities defined in \Cref{eq:quantities}. Each quantity represents an inner formulation in the demonstration of lemmas and theorems. We use these quantities to simplify the formulations. 
The conditions specified in \Cref{eq:lbs_singular_value} impose additional lower bounds on $\alpha_0$, the minimal singular value of the $(L-1)$-th layer's inner output. 
The bounds stipulated in \Cref{eq:lb1_singular_value,eq:lb2_singular_value,eq:lb3_singular_value} are influenced by both the network depth $L$ and the number of gradient descent (GD) iterations $T$.
In contrast, the constraint in \Cref{eq:lb4_singular_value} primarily depends on $T$.
An initialization strategy ensuring these conditions are met is proposed in \Cref{sec:init_strategy}. We provide a detailed interpretation in \Cref{sec:quantity_explain_appd}. 

\subsection{Analysis of Learning Rate Magnitude}

The bounds in \Cref{eq:learning_rate_upper_bound1,eq:learning_rate_upper_bound2} indicate that the learning rate $\eta$ diminishes as $L$ and $T$ increase. We argue that this requirement for a small $\eta$ is not a significant limitation; it is consistent with the NTK framework, which does not rely on large learning rates for convergence. 
To quantify this, we examine the scaling of $\eta$ relative to $T, L,$ and $\bar{\lambda}_{\max}$. Here, $\bar{\lambda}_{\max} = \max\{\bar{\lambda}_\ell \},\ell \in [L]$ is the maximum constant upper bound on the singular values of the NN layers (\Cref{eq:quantities}). These bounds are parameters that can be directly influenced by the choice of initialization method.

First, analyzing \Cref{eq:learning_rate_upper_bound1}, we derive the scaling of $\eta$ as $\mathcal{O}(\frac{T\bar{\lambda}_{\max}^{LT} + T^2}{((T\bar{\lambda}_{\max}^{LT})+\bar{\lambda}_{\max}^{L}T^3L\bar{\lambda}_{\max}^{-2})T^6})$, where constant factors independent of $T$ and $L$ are omitted. This expression highlights that the magnitude of $\eta$ is strongly dependent on the bound $\bar{\lambda}_{\max}$. This dependence implies that the learning rate can be prevented from becoming extremely small by using a proper initialization method (such as our proposed method in Section 5) to control $\bar{\lambda}_{\max}$.

Moreover, \Cref{eq:learning_rate_upper_bound2} shows that $\eta$'s magnitude is highly correlated with the lower bound of $\alpha_0$ (the penultimate layer's singular value, per \Cref{sec:l2o_output_bounds}). Given the four distinct lower bounds for $\alpha_0$ derived in \Cref{eq:lbs_singular_value}, we now formulate the magnitude of $\eta$ for each respective case.
First, if \Cref{eq:lb1_singular_value} holds, $\eta = \mathcal{O}(\exp(2T\bar{\lambda}_{\max}^L)T^{-2})$, which is a non-restrictive bound due to the exponential term.
Second, if \Cref{eq:lb2_singular_value} holds, $\eta = \mathcal{O}((\bar{\lambda}_{\max}^{2L}T^{4} + \bar{\lambda}_{\max}^{2L}(T+T\bar{\lambda}_{\max}^{LT})L\bar{\lambda}_{\max}^{-2}T^3)^{-1})$. This scales inversely with $\bar{\lambda}_{\max}$ and exponentially with $L$ and $T$. 
Third, if \Cref{eq:lb3_singular_value} holds, $\eta = \mathcal{O}(\bar{\lambda}_{\max}^{-L}T^{-3})$, which also scales inversely with $\bar{\lambda}_{\max}$ and exponentially with $L$.
Finally, if \Cref{eq:lb4_singular_value} holds, $\eta = \mathcal{O}(\exp(\frac{2}{3}T\bar{\lambda}_{\max}^L)(\bar{\lambda}_{\max}^{2L}T^2L\bar{\lambda}_{\max}^{-2})^{-\frac{2}{3}})$, which, similar to the first case, is a non-restrictive bound due to the exponential term.

The foregoing results indicate that a larger $\bar{\lambda}_{\max}$ correlates with a smaller learning rate $\eta$. Nevertheless, this does not result in a degradation of convergence speed. This conclusion is supported by two observations: \textit{Theoretical Consistency}: The requirement for a small $\eta$ is permissible under NTK theory~\cite{jacot2018neural}. The NTK regime assumes infinitely wide networks, where convergence is achieved within a compact space around the initialization, thus obviating the need for large learning steps. \textit{Empirical Insensitivity}: Our experimental results demonstrate that the convergence speed is robust to the learning rate. As depicted in \Cref{fig:train_obj_last_512400}, our method achieves similar convergence rates for $\eta$ across a wide range (e.g., $10^{-3}$ to $10^{-7}$).

Adopting a small learning rate is a pragmatic trade-off to avoid the requirement for an extremely wide NN. Existing analyses~\cite{allen2019convergencernn,nguyen2021proof} that remove the infinite-width assumption often impose a polynomial width dependency (e.g., $\mathcal{O}(N^3)$) on the sample size $N$. In our framework (\Cref{sec:definition}), the coordinate-wise L2O treats $d$-dimensional features as independent inputs, leading to an effective sample size of $Nd$. A polynomial dependency on $Nd$ would be impractical. Therefore, we opt for the alternative constraint of a smaller learning rate, which permits a feasible network width.

\section{Deterministic Initialization} \label{sec:init_strategy}
This section introduces an initialization strategy ensuring the alignment between Math-L2O and GD (see \Cref{sec:train_gain}) while also satisfying the conditions presented in \Cref{sec:linear_convergence}. The proposed initialization strategy first establishes Math-L2O to operate as a standard GD algorithm, and then guarantees the uniform convergence of Math-L2O throughout subsequent training iterations.

\subsection{Initialization for Alignment} \label{sec:init_strategy_non_singular}
Following methodology in~\cite{nguyen2021proof}, we let $C_\ell = 1$ for $\ell \in [L]$. For parameter matrices initialization $W$ (see \Cref{sec:definition}), we randomly initialize parameter matrices of first $L-1$ layers, i.e, $\{W_1^0, \dots, W_{L-1}^0\}$ from a standard Gaussian distribution and set the last layer's parameter matrix $W_L^0 = \mathbf{0}$. Through the $2\sigma$ activation detailed in \Cref{eq:NN_mathl2o}, it outputs a constant step size, i.e., $P_T = \mathbf{I}$. 
Consequently, the learning proceeds with a uniform step size of $1/\beta$ after initialization, emulating standard GD and its typical sub-linear convergence rate~\cite{tibshirani2013lec6}. 
Moreover, this zero-initialization of $W_L^0$ ensures that initial gradients for the inner layers are all zero (as shown in \Cref{eq:derivative_NN_simple_final_mathl2o_l}), which serves to mitigate gradient explosion.

The condition $\alpha_0 > 0$ (see \Cref{theorem:linear_convergence}) is fulfilled by randomly sampling the initial weight matrices $\{W_k^0\}_{k=1}^{L-1}$ from a standard Gaussian distribution. 
This approach generally ensures full row rank for fat matrices (more columns than rows)~\cite{tao2012topics}. 
Each matrix $W_k^0$ then undergoes QR decomposition. Non-negativity is subsequently enforced upon the elements of the resulting upper triangular factor (e.g., via its element-wise absolute value, achieved in PyTorch using its \texttt{sign} function).

\subsection{Enhancing Singular Values for Linear Convergence of Training}
Motivated by properties of minimal singular values in ReLU-Nets identified in \cite{nguyen2021proof}, we analyze the order-gap for $\alpha_0$ between the left-hand side (LHS) and right-hand side (RHS) of the inequalities in \Cref{eq:lbs_singular_value}. 
To satisfy these inequalities, we propose increasing $\alpha_0$. 
This is achieved by applying a constant \textit{expansion coefficient} $e \ge 1$ to the initial NN parameters $\{W_1^0, \dots, W_{L-1}^0\}$, transforming them to $\{e W_1^0, \dots, e W_{L-1}^0\}$. 
This parameter expansion scales the minimal singular value $\alpha_0$ to $e^{L-1}\alpha_0$, reflecting the cumulative impact across $L-1$ layers. 
However, other terms on the RHS of \Cref{eq:lbs_singular_value} also depend on $e$. 
We then establish four lemmas to demonstrate that the conditions for linear convergence, as specified in \Cref{theorem:linear_convergence}, are met for an appropriately chosen value of $e$.

First, we set the initial point to the origin, $X_0 = \mathbf{0}$, a choice commonly adopted in L2O literature~\cite{liu2023towards,song2024towards}. 
Then, with $C_\ell = 1$ for $\ell \in [L]$, we present four lemmas demonstrating that the conditions for linear convergence (see \Cref{theorem:linear_convergence}) are satisfied for an appropriately chosen constant $e$. 
The lemmas indicate that a larger $e$ is required as the number of optimization steps ($T$) increases. 
Specifically, \Cref{lemma:init_lb3} establishes that $e$ scales exponentially with $T$. 
Conversely, increasing the network depth ($L$) alleviates the need for a large $e$. 
The proofs are provided in \Cref{sec:init_proof}.
\begin{lemma}\label{lemma:init_lb4}
    Assuming $X_0 = \mathbf{0}$, if $e = \mathbf{\Omega} (T^{\frac{1}{L-1}})$, then the inequality \Cref{eq:lb4_singular_value} holds.
\end{lemma}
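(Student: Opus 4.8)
The plan is to reduce the inequality in \Cref{eq:lb4_singular_value} to an explicit lower bound on $e^{L-1}$ and then read off the required growth of $e$. First I would pin down how $\alpha_0$ depends on the expansion coefficient $e$. Under the initialization of \Cref{sec:init_strategy_non_singular} we have $W_L^0 = \mathbf{0}$, so at every optimization step $t \in [T]$ the learned step sizes satisfy $P_t = 2\sigma(W_L^0 G_{L-1,t}^0) = \mathbf{1}$ regardless of the first $L-1$ layers; hence the forward iterates $X_t^0$ coincide with the plain GD iterates started from $X_0$ with step size $1/\beta$ and are independent of $e$. Consequently the layer-$0$ feature $G_{0,T}^0 = [X_{T-1}^0, \nabla F(X_{T-1}^0)]^\top$ is a fixed matrix. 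Replacing $\{W_\ell^0\}_{\ell=1}^{L-1}$ by $\{eW_\ell^0\}_{\ell=1}^{L-1}$ and using the positive homogeneity of $\relu$ (i.e.\ $\relu(ez)=e\,\relu(z)$ for $e\ge 1$) propagated through the $L-1$ inner layers gives $G_{L-1,T}^0 = e^{L-1}\bar G_{L-1,T}^0$, where $\bar G_{L-1,T}^0$ is the layer-$(L-1)$ output built from the unscaled weights. Therefore $\alpha_0 = \sigma_{\min}(G_{L-1,T}^0) = e^{L-1}\bar\alpha_0$ with $\bar\alpha_0 := \sigma_{\min}(\bar G_{L-1,T}^0) > 0$, positivity coming from the full-row-rank property of the Gaussian-plus-QR initialization applied to the thin matrix $\bar G_{L-1,T}^0$.

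Next I would bound the right-hand side of \Cref{eq:lb4_singular_value}. With $X_0 = \mathbf{0}$ the quantity from \Cref{eq:quantities} collapses to $\zeta_2 = \tfrac{2T-2}{\beta}\|\mathbf{M}^\top Y\|_2$, so \Cref{eq:lb4_singular_value} is equivalent to
\[
   e^{L-1}\bar\alpha_0 \;\ge\; \tfrac{16(1+\beta)(T-1)}{\beta}\,\|\mathbf{M}^\top Y\|_2 .
\]
Since $\bar\alpha_0$, $\beta$ and $\|\mathbf{M}^\top Y\|_2$ do not depend on $e$, this holds as soon as
\[
   e \;\ge\; \Big(\tfrac{16(1+\beta)}{\beta\,\bar\alpha_0}\,\|\mathbf{M}^\top Y\|_2\Big)^{1/(L-1)}(T-1)^{1/(L-1)} ,
\]
which is exactly $e = \Omega\!\big(T^{1/(L-1)}\big)$, the claimed condition. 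Everything here is elementary algebra once the homogeneity identity $\alpha_0 = e^{L-1}\bar\alpha_0$ is in place.

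I expect the only genuine subtlety to be the claim that $\bar\alpha_0$ can be treated as a positive constant uniform in $T$: a priori it depends on $T$ through $X_{T-1}^0$. I would handle this by observing that the GD iterates $X_t^0$ lie in a fixed compact set and converge to $X^*$, so $\bar G_{L-1,T}^0$, a fixed continuous (piecewise-linear) function of $X_{T-1}^0$, converges to $\bar G_{L-1,\infty}^0 := \relu(W_{L-1}^0\relu(\cdots\relu(W_1^0[X^*,0]^\top)))$; since this limiting matrix has positive minimal singular value by genericity of the Gaussian initialization, $\bar\alpha_0$ is bounded below by a positive constant for all large $T$, and the finitely many remaining small values of $T$ each contribute a positive value, so $\inf_T \bar\alpha_0 > 0$. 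Absorbing this infimum into the constant prefactor of $e$ completes the argument.
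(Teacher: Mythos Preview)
Your proposal is correct and reaches the same $e=\Omega(T^{1/(L-1)})$ conclusion as the paper, but the key technical step is different. You extract the $e^{L-1}$ factor from $\alpha_0$ \emph{exactly}, by positive homogeneity of $\relu$ together with the observation that $W_L^0=\mathbf{0}$ makes the GD iterates $X_t^0$ (hence $G_{0,T}^0$) independent of $e$; this gives $\alpha_0=e^{L-1}\bar\alpha_0$ directly. The paper instead lower-bounds $\alpha_0$ through a chain of singular-value inequalities across the $L-1$ layers (its eq.~\eqref{eq:singular_min_e_for_init124}), arriving at $\alpha_0\ge e^{L-1}\cdot\tfrac{\sin(\theta_{T-1})}{\beta}\,\delta_7\,\|\mathbf{M}^\top Y\|_2\,\prod_{\ell=1}^{L-1}\sigma_{\min}(W_\ell^0)$ and then matching against $8(1+\beta)\zeta_2=\tfrac{8(1+\beta)}{\beta}(2T-2)\|\mathbf{M}^\top Y\|_2$. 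Your homogeneity argument is shorter and avoids the somewhat delicate step of pushing $\sigma_{\min}$ through $\relu$; the paper's version, in exchange, makes the hidden constant concrete in terms of $\delta_7$ and the layerwise $\sigma_{\min}(W_\ell^0)$.

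You also go beyond the paper by explicitly raising the $T$-dependence of $\bar\alpha_0$ and sketching a compactness/limit argument; the paper simply absorbs its corresponding quantities ($\delta_7$, $\sin(\theta_{T-1})$) into the implied constant without discussion. One caution on your limit step: at $X^*$ the gradient vanishes, so the feature matrix $[X^*,0]^\top$ is rank one, and the claim that $\bar G_{L-1,\infty}^0$ has positive minimal singular value ``by genericity'' relies on the $\relu$ nonlinearity restoring rank across layers --- that deserves an explicit sentence rather than an appeal to genericity of the Gaussian weights alone.
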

\vspace{-2mm}
\begin{lemma}\label{lemma:init_lb3}
    If $e = \mathbf{\Omega}(T^{\frac{3T+6}{TL-T-4L+6}})$, then the inequality \Cref{eq:lb3_singular_value} holds.
\end{lemma}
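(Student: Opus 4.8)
The plan is to substitute the expanded, deterministic initialization into \Cref{eq:lb3_singular_value} and reduce both sides to explicit functions of $e$ and $T$. Concretely, with $C_\ell = 1$, $W_\ell^0 \mapsto e W_\ell^0$ for $\ell \in [L-1]$, $W_L^0 = \mathbf{0}$, and $X_0 = \mathbf{0}$: first I would use $X_0 = \mathbf{0}$ to collapse the data-dependent quantities of \Cref{eq:quantities} to explicit low-degree polynomials in $T$ that are independent of $e$ --- $\Phi_j = \tfrac{2j-1}{\beta}\|\mathbf{M}^\top Y\|_2 = \Theta(j)$, $\Lambda_j = \Theta(j^2)$, hence $S_{\Lambda,T} = \Theta(T^3)$, $S_{\Lambda,T-1} = \Theta(T^3)$, and $\delta_3 = \Theta(T)$. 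Then I would track the effect of the expansion on the architecture-dependent quantities: since $\bar{\lambda}_\ell = e\|W_\ell^0\|_2 + 1$ for $\ell \le L-1$ and $\bar{\lambda}_L = 1$, one has $\Theta_{L-1} = \Theta(e^{L-1})$, $\Theta_L = \bar{\lambda}_L\Theta_{L-1} = \Theta(e^{L-1})$, and $S_{\bar{\lambda},L} = 1 + \mathcal{O}((L-1)e^{-2})$; moreover, because $W_L^0 = \mathbf{0}$ forces $P_T = \mathbf{1}$, the iterate trajectory $\{X_t\}$ is exactly that of vanilla GD and is $e$-independent, so positive homogeneity of $\relu$ yields the layer-$(L-1)$ scaling $G_{L-1,T}^0 = e^{L-1}\tilde{G}$ for the pre-expansion feature matrix $\tilde{G}$, and therefore $\alpha_0^2 = e^{2(L-1)}\sigma_{\min}(\tilde{G})^2$. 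This amplification of the left-hand side by $e^{2(L-1)}$ is what drives the argument.

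The remaining quantities $\delta_2$ and $\delta_4$ are not immediately explicit, so next I would bound them. For $\delta_2$, monotonicity of $\Phi_j$ in $j$ gives $\delta_2 \le \Lambda_T \sum_{s=1}^{T-1}(1 + \tfrac{1+\beta}{2}\Theta_L\Phi_T)^{T-1-s}$, a sum of products of at most $T$ factors each of order $\Theta(e^{L-1}T)$, so $\delta_2 = \mathcal{O}(T^3(1+c\,\Theta_L T)^T)$ with $c = \Theta(\|\mathbf{M}^\top Y\|_2)$; for $\delta_4 = \sigma(\delta_3\Theta_L)(1-\sigma(\delta_3\Theta_L))$ I would use the elementary bound $\sigma(x)(1-\sigma(x)) \ge \tfrac14 e^{-x}$ for $x \ge 0$, which controls $\delta_4^{-2}$ through $\delta_3\Theta_L = \Theta(Te^{L-1})$. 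Substituting these into the right-hand side of \Cref{eq:lb3_singular_value} --- first observing that the bracket is positive, since $\Theta_L = \Theta_{L-1}$, $S_{\bar{\lambda},L} \ge 1$, $S_{\Lambda,T} \ge S_{\Lambda,T-1}$, and $\Lambda_T + \delta_2 \ge \Lambda_T$ together make $\Theta_L^2(\Lambda_T+\delta_2)S_{\bar{\lambda},L}S_{\Lambda,T}$ dominate $\tfrac12\Theta_{L-1}^2\Lambda_T S_{\Lambda,T-1}$ --- and then collecting the net power of $e$ together with the residual polynomial-in-$T$ factors on both sides, the inequality reduces, after isolating the dominant terms, to one of the form $e^{\,TL-T-4L+6} \gtrsim c'\,T^{\,3T+6}$ (equivalently $e^{(L-1)(T-4)+2} \gtrsim c'\,T^{3(T+2)}$), with $c'$ an absolute constant absorbing $\beta$, $\beta_0$, $\|\mathbf{M}^\top Y\|_2$, $\prod_{\ell}\|W_\ell^0\|_2$, and a lower bound on $\sigma_{\min}(\tilde{G})$ from the non-singularity argument of \Cref{sec:init_strategy_non_singular}. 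This is exactly the claim $e = \mathbf{\Omega}(T^{\frac{3T+6}{TL-T-4L+6}})$; letting $T \to \infty$ recovers the asymptotic exponent $3/(L-1)$, and a larger depth $L$ shrinks the required exponent.

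I expect the main obstacle to be the joint bookkeeping: $\Theta_L$, $\Theta_{L-1}$, $\delta_2$, $\delta_3$, and $\delta_4$ are all simultaneously perturbed by the expansion, the nested product defining $\delta_2$ has length growing with $T$ (so its exponent of $e$ is itself $T$-dependent), and the sigmoid-derivative factor $\delta_4^{-2}$ must be pinned down precisely enough that the $e^{2(L-1)}$ gain on the left is not overwhelmed. Keeping the $T$-exponents exact --- so as to produce the precise fraction $\frac{3T+6}{TL-T-4L+6}$ rather than merely its limiting value --- rather than routinely asymptotic is the delicate step; once the dominant $e$-power on each side is isolated, the conclusion follows by solving a single one-variable inequality in $e$. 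The preceding \Cref{lemma:init_lb4}, which carries out the analogous but lighter computation for \Cref{eq:lb4_singular_value} (where the right-hand side involves only $\zeta_2 = \Theta(T)$ and no $\delta_2$-type product), supplies the template for organizing this bookkeeping.
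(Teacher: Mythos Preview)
Your proposal has a genuine gap at the $\alpha_0$ step. You argue that with $W_L^0=\mathbf{0}$ the GD trajectory is $e$-independent and positive homogeneity of $\relu$ gives $\alpha_0 = \Theta(e^{L-1})$, so the left side of \Cref{eq:lb3_singular_value} is amplified only by $e^{2(L-1)}$. But the right side contains $\Theta_L^2\,\delta_2\,S_{\bar\lambda,L}\,S_{\Lambda,T}$, and $\delta_2$ itself carries a product of up to $T-2$ factors $(1+\tfrac{1+\beta}{2}\Theta_L\Phi_j)=\Theta(e^{L-1}j)$. Hence $\delta_2=\Omega(e^{(T-2)(L-1)})$ and the bracket is $\Omega(e^{T(L-1)})$---a fact you yourself record when writing $\delta_2=\mathcal{O}(T^3(1+c\,\Theta_LT)^T)$. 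With only $e^{2(L-1)}$ on the left, the inequality collapses to $e^{(2-T)(L-1)}\gtrsim\text{poly}(T)$, which for $T>2$ \emph{fails} as $e\to\infty$; the reduction you announce, $e^{TL-T-4L+6}\gtrsim c'T^{3T+6}$, is arithmetically inconsistent with your own $\alpha_0$ scaling. Separately, your sigmoid estimate $\sigma(x)(1-\sigma(x))\ge\tfrac14e^{-x}$ gives $\delta_4^{-2}=\mathcal{O}(\exp(cTe^{L-1}))$, a super-exponential-in-$e$ factor that overwhelms any polynomial gain on the left and cannot be ``pinned down'' without a different device.

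The paper's proof departs exactly here: for \Cref{lemma:init_lb3} alone (the other three initialization lemmas do use the simple $e^{L-1}$ bound of \Cref{eq:singular_min_e_for_init124}) it invokes the sharper estimate $\alpha_0=\Omega\bigl(e^{(T-1)(L-1)}\bigr)$ from \Cref{eq:singular_min_e_for_init3}, obtained by recursively lower-bounding $\|X_{T-1}^0\|_2$ so that each optimization step contributes a factor of $\Theta_L$. Squaring gives $e^{2(T-1)(L-1)}$ on the left; the right is bounded as $\mathcal{O}(e^{TL-T+2L-4}T^{3T+6})$, with $\delta_4^{-2}=\delta_5$ simply treated as a constant via the hand-wave remark at the start of \Cref{sec:init_proof}. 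The net $e$-exponent $2(T-1)(L-1)-(TL-T+2L-4)=TL-T-4L+6$ then delivers the stated threshold. Your template from \Cref{lemma:init_lb4} does not transfer because \Cref{eq:lb4_singular_value} has no $\delta_2$-type nested product on its right side, so the simple $e^{L-1}$ amplification suffices there but not here.
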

\vspace{-2mm}
\begin{lemma} \label{lemma:init_lb1}
    Assuming $X_0 = \mathbf{0}$, if $e = \mathbf{\Omega} (T^{\frac{4}{L-1}})$, then the inequality \Cref{eq:lb1_singular_value} holds.
\end{lemma}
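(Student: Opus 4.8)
### Proof Proposal for Lemma~\ref{lemma:init_lb1}

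The plan is to track how both sides of \Cref{eq:lb1_singular_value} scale under the expansion $W_\ell^0 \mapsto e W_\ell^0$ and then solve for the smallest exponent of $e$ that makes the inequality hold. Recall that \Cref{eq:lb1_singular_value} reads $\alpha_0^2 \geq \max_{\ell \in [L]} \frac{\Theta_L}{C_\ell \bar{\lambda}_\ell} \cdot \frac{\beta^2\sqrt{\beta}}{8\beta_0^2} \delta_4^{-2} \zeta_1 S_{\Lambda,T}$, and that we are fixing $C_\ell = 1$ and $X_0 = \mathbf{0}$. First I would record the elementary scaling facts. Under the expansion, $\alpha_0 \to e^{L-1}\alpha_0$, so the LHS scales like $e^{2(L-1)}$. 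On the RHS, each $\bar\lambda_\ell = \|W_\ell^0\|_2 + C_\ell$ for $\ell \le L-1$ grows linearly in $e$ (dominated by $e\|W_\ell^0\|_2$ once $e$ is large), while $\bar\lambda_L = \|W_L^0\|_2 + 1 = 1$ is unchanged since $W_L^0 = \mathbf{0}$. Hence $\Theta_L = \prod_{\ell=1}^L \bar\lambda_\ell$ scales like $e^{L-1}$ (up to $e$-independent constants), and the ratio $\Theta_L/\bar\lambda_\ell$ scales like $e^{L-2}$ for an inner layer and like $e^{L-1}$ for $\ell = L$; the binding case over $\ell \in [L]$ is therefore $\ell = L$, giving the factor $\Theta_L/\bar\lambda_L \sim e^{L-1}$.

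Next I would handle the remaining $e$-dependent pieces. With $X_0 = \mathbf{0}$, the quantities $\Phi_j$, $\Lambda_j$, $\zeta_1$, $\zeta_2$, $\delta_3$ reduce: $\Phi_j = \frac{2j-1}{\beta}\|\mathbf M^\top Y\|_2$, $\Lambda_j = \frac{(2j-1)(\beta(2j-1)+(2j-2))}{\beta^2}\|\mathbf M^\top Y\|_2^2$, $\zeta_1 = (2T+1)\|Y\|_2$, and $\delta_3 = (2T-1+\frac{2T-2}{\beta})\|\mathbf M^\top Y\|_2$; none of these depend on $e$. Consequently $S_{\Lambda,T} = \sum_{t=1}^T \Lambda_t$ is $\Theta(T^3)$ in $T$ and $e$-independent. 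The only subtle term is $\delta_4 = \sigma(\delta_3\Theta_L)(1-\sigma(\delta_3\Theta_L))$: since $\Theta_L \sim e^{L-1}$ grows, $\delta_3\Theta_L \to \infty$, so $\sigma(\delta_3\Theta_L) \to 1$ and $\delta_4 \to 0$. Thus $\delta_4^{-2}$ on the RHS \emph{grows}, and I must control its rate. Using $1-\sigma(z) = \frac{e^{-z}}{1+e^{-z}} \le e^{-z}$, we get $\delta_4 \le e^{-\delta_3\Theta_L}$, so $\delta_4^{-2} \ge e^{2\delta_3\Theta_L}$ — this looks alarming, but the correct reading is that the RHS is \emph{bounded above} by $\sigma(\delta_3\Theta_L)(1-\sigma(\delta_3\Theta_L))$ appearing with a negative power, so I actually need a \emph{lower} bound on $\delta_4$, i.e. an upper bound on $\delta_4^{-2}$. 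Since $\sigma(\delta_3\Theta_L) \ge \sigma(\delta_3) \ge \sigma(0) = \tfrac12$ (as $\delta_3\Theta_L \ge 0$ and $\delta_3 > 0$ when $Y \neq 0$; for the degenerate $Y = 0$ case the inequality is trivial since both sides vanish), we have $\delta_4^{-2} = \big(\sigma(\delta_3\Theta_L)(1-\sigma(\delta_3\Theta_L))\big)^{-2}$, and crucially $1-\sigma(\delta_3\Theta_L) \ge $ something decaying; so $\delta_4^{-1}$ grows at most like $e^{\delta_3\Theta_L} \sim \exp(c\, e^{L-1})$. This doubly-exponential growth in $e$ would swamp the polynomial $e^{2(L-1)}$ gain on the LHS, which signals that my reading of which side carries $\delta_4^{-2}$ must be reconciled with the statement — the intended regime is presumably that $e$ is chosen large but \emph{fixed}, $\delta_4$ is then a fixed positive constant, and the $e$-scaling that matters is only the polynomial one: LHS $\sim e^{2(L-1)}$ versus RHS $\sim e^{L-1} \cdot (\text{$e$-free poly in }T) \sim e^{L-1} T^4$ (collecting $\Theta_L/\bar\lambda_L \sim e^{L-1}$, $\zeta_1 \sim T$, $S_{\Lambda,T} \sim T^3$). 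Balancing $e^{2(L-1)} \gtrsim e^{L-1} T^4$ gives $e^{L-1} \gtrsim T^4$, i.e. $e = \mathbf\Omega(T^{4/(L-1)})$, which is exactly the claimed bound.

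So the key steps, in order: (i) substitute $X_0 = \mathbf 0$ and read off that $\Phi_j,\Lambda_j,\zeta_1,\zeta_2,\delta_3,S_{\Lambda,T}$ are $e$-independent, with $S_{\Lambda,T} = \Theta(T^3)$ and $\zeta_1 = \Theta(T)$; (ii) compute the $e$-scaling of $\alpha_0 \to e^{L-1}\alpha_0$ and of $\Theta_L \to \Theta(e^{L-1})$, noting $\bar\lambda_L$ is $e$-invariant; (iii) identify $\ell = L$ as the maximizing index so the RHS prefactor is $\Theta_L/\bar\lambda_L = \Theta(e^{L-1})$, and argue $\delta_4$ can be taken as a positive constant bounded below once $e$ exceeds an absolute threshold (using $\sigma \ge \tfrac12$ and monotonicity); (iv) assemble: LHS $= e^{2(L-1)}\alpha_0^2$ (with $\alpha_0$ the pre-expansion value) against RHS $= O(e^{L-1} T^4)$, and conclude the inequality holds as soon as $e^{L-1} \ge c\, T^4$ for a suitable constant $c$ independent of $T$, hence $e = \mathbf\Omega(T^{4/(L-1)})$. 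The main obstacle is step (iii): pinning down the precise $e$-dependence of $\delta_4^{-2}$ and confirming that the theorem's conditions are meant to be verified at a large-but-fixed $e$ (so $\delta_4^{-2}$ contributes an $e$-free constant) rather than in an $e \to \infty$ limit; once that is settled the rest is bookkeeping on powers of $T$ and $e$, parallel to the arguments for \Cref{lemma:init_lb4} and \Cref{lemma:init_lb3}.
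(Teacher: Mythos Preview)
Your proposal is essentially correct and mirrors the paper's own argument: both proceed by substituting $X_0=\mathbf 0$, tracking the $e$-scaling of each side (LHS $\sim e^{2(L-1)}$ via $\alpha_0\to e^{L-1}\alpha_0$, RHS $\sim e^{L-1}$ from $\Theta_L/\bar\lambda_L$ with the max attained at $\ell=L$), reading off $\zeta_1=\Theta(T)$ and $S_{\Lambda,T}=\Theta(T^3)$, and balancing to get $e^{L-1}\gtrsim T^4$. Your identification of $\delta_4^{-2}$ as the delicate point is spot on; the paper handles it the same way you do---treating $\delta_5\coloneqq\delta_4^{-2}$ as a finite constant for the chosen (large but fixed) $e$ and absorbing it into the leading constant $C_{1,\delta_5}$, with an informal remark that ``the RHS of above $\delta_5$ is finite with such $e$.'' The paper additionally spells out an explicit lower bound for $\alpha_0$ after expansion (their \Cref{eq:singular_min_e_for_init124}, involving $\delta_7$ and $\prod_\ell\sigma_{\min}(W_\ell^0)$) rather than just invoking $\alpha_0\to e^{L-1}\alpha_0$, but this does not change the order-of-magnitude argument you outlined.
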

\vspace{-2mm}
\begin{lemma} \label{lemma:init_lb2}
    Assuming $X_0 = \mathbf{0}$, if $e = \mathbf{\Omega} (T^{\frac{5}{L-1}} L^{\frac{1}{L-1}})$, then the inequality \Cref{eq:lb2_singular_value} holds.
\end{lemma}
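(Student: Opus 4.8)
The plan is to substitute the initialization choices from Section 5.2 into \eqref{eq:lb2_singular_value} and reduce the inequality to a comparison of powers of $e$, $T$, and $L$. First I would take $X_0=\mathbf{0}$ and $C_\ell=1$, and apply the expansion $W_\ell^0\mapsto eW_\ell^0$ for $\ell\in[L-1]$ while keeping $W_L^0=\mathbf{0}$. Two structural facts then follow. (i) Since $\relu$ is positively homogeneous, scaling the first $L-1$ weight matrices by $e$ scales $G_{L-1,T}^0$ by $e^{L-1}$, so $\alpha_0=\sigma_{\min}(G_{L-1,T}^0)$ is multiplied by $e^{L-1}$, its un-expanded value being a strictly positive constant by the full-column-rank (QR) argument of Section 5.1. (ii) Each $\bar{\lambda}_\ell=\|W_\ell^0\|_2+1$ becomes $e\|W_\ell^0\|_2+1$, of order $e$ for $\ell<L$ and exactly $1$ for $\ell=L$; hence $\Theta_L=\Theta_{L-1}\le e^{L-1}\prod_{\ell<L}(\|W_\ell^0\|_2+1)$ is of order $e^{L-1}$, and $S_{\bar{\lambda},L}=\sum_{\ell=1}^L\bar{\lambda}_\ell^{-2}\le L$ because every $\bar{\lambda}_\ell\ge1$.

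Next I would bound the problem-dependent quantities of \eqref{eq:quantities} under $X_0=\mathbf{0}$, treating $\beta,\beta_0,\|Y\|_2,\|\mathbf{M}^\top Y\|_2$ as fixed constants: then $\Phi_j$ is of order $j$, $\Lambda_j$ of order $j^2$, so $S_{\Lambda,T}$ is of order $T^3$, and $\zeta_1,\zeta_2,\delta_3$ are each of order $T$. A useful observation is that \eqref{eq:lb2_singular_value} — unlike \eqref{eq:lb3_singular_value} — contains no $\delta_2$ term, so no product $\prod_j(1+\tfrac{1+\beta}{2}\Theta_L\Phi_j)$ enters and the bound stays polynomial in $T$. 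Substituting, the right-hand side of \eqref{eq:lb2_singular_value} is at most a constant times $\delta_4^{-2}\,\Theta_L\Theta_{L-1}\,\zeta_1\zeta_2\,S_{\bar{\lambda},L}\,S_{\Lambda,T}$, i.e. of order $\delta_4^{-2}\,e^{2(L-1)}\,T^2\cdot L\cdot T^3=\delta_4^{-2}\,e^{2(L-1)}\,T^5L$, while the left-hand side $\alpha_0^3$ is of order $e^{3(L-1)}$. Cancelling $e^{2(L-1)}$, the inequality is implied by $e^{L-1}=\mathbf{\Omega}(\delta_4^{-2}T^5L)$; absorbing $\delta_4^{-2}$ into the hidden constant gives $e^{L-1}=\mathbf{\Omega}(T^5L)$, i.e. $e=\mathbf{\Omega}(T^{5/(L-1)}L^{1/(L-1)})$, which is exactly the claim.

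The step I expect to be the main obstacle is justifying the treatment of $\delta_4^{-2}$: here $\delta_4=\sigma(\delta_3\Theta_L)(1-\sigma(\delta_3\Theta_L))$, so since $\delta_3\Theta_L$ grows both with $T$ (through $\delta_3$) and with $e$ (through $\Theta_L$), $\delta_4^{-2}$ is exponentially large in $\delta_3\Theta_L$. One must therefore either argue that the operating regime keeps $\delta_3\Theta_L$ in a range where $\delta_4^{-2}$ does not overwhelm the $e^{3(L-1)}$ on the left of \eqref{eq:lb2_singular_value} — so that the gap between the $\alpha_0^3$ and $\Theta_L\Theta_{L-1}$ powers of $e$ is what governs the condition on $e$ — or absorb $\delta_4^{-2}$ as a fixed, initialization-dependent constant, consistently with the companion lemmas. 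The rest — the crude bounds $S_{\bar{\lambda},L}\le L$ and $\Theta_{L-1}\le e^{L-1}\prod_{\ell<L}(\|W_\ell^0\|_2+1)$, the order estimates of $\Phi_j,\Lambda_j,S_{\Lambda,T},\zeta_1,\zeta_2$, and isolating $e$ from $e^{3(L-1)}\gtrsim e^{2(L-1)}T^5L$ — is routine.
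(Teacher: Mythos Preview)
Your proposal is correct and follows essentially the same approach as the paper's proof: both substitute $X_0=\mathbf{0}$ and $C_\ell=1$, use the order estimates $\zeta_1,\zeta_2=\mathcal{O}(T)$, $S_{\Lambda,T}=\mathcal{O}(T^3)$, $\Theta_L=\Theta_{L-1}=\mathcal{O}(e^{L-1})$, $S_{\bar{\lambda},L}\le L$, and $\alpha_0=\mathbf{\Omega}(e^{L-1})$, then compare $e^{3(L-1)}$ against $e^{2(L-1)}T^5L$ to extract $e=\mathbf{\Omega}(T^{5/(L-1)}L^{1/(L-1)})$. Your identified obstacle regarding $\delta_4^{-2}$ is exactly the issue the paper flags as well: it introduces $\delta_5:=\delta_4^{-2}$, notes explicitly that $\delta_5$ could blow up as $e\to\infty$, and resolves it the same way you suggest---by treating $\delta_5$ as a finite initialization-dependent constant on the grounds that the required $e$ is itself finite, so the argument is consistent at the stated level of rigor.
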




\section{Empirical Evaluation} \label{sec:exp}
This section presents an empirical evaluation of the framework proposed in \Cref{sec:train_gain} and the theoretical results from \Cref{sec:linear_convergence}. 
Experiments are conducted using Python 3.9 and PyTorch 1.12.0 on an Ubuntu 20.04 system equipped with 128GB of RAM and two NVIDIA RTX 3090 GPUs.

\paragraph{Data Generation.} 
Due to GPU memory constraints, vectors $X \in \mathbb{R}^{5120 \times 1}$ and $Y \in \mathbb{R}^{4000 \times 1}$ for \Cref{eq:loss_F} are generated by sampling from a standard Gaussian distribution. These represent ten problem instances with respective dimensional components of $512$ (for $X$) and $400$ (for $Y$).
Following the coordinate-wise approach in~\cite{liu2023towards}, we formed an input feature matrix of $5120 \times 2$. This setup is equivalent to a training batch of $5120$ two-feature samples.

\paragraph{Math-L2O Model Architecture.}
The Math-L2O model is configured with $T=100$ optimization steps (\Cref{eq:loss_F}). 
Its architecture comprises a $L=3$-layer DNN, as formulated in \Cref{eq:NN_mathl2o}. The first layer has an output dimension of $2$. To ensure over-parameterization, the $(L-1)$-th (i.e., second) layer's output dimension is set to $512 \times 10 = 5120$. The final layer produces a scalar output (dimension $1$). 
Three specific model configurations are designed for ablation studies, foundational experiments, and robustness evaluations. These are detailed in \Cref{sec:scale_appd}.

\paragraph{Training and Initialization Configurations.}
L2O models are trained using the Stochastic Gradient Descent (SGD) optimizer. 
For the $L=3$-layer network configuration, parameters for the initial two layers ($l=1, 2$) are initialized according to the methodology presented in \Cref{sec:init_strategy_non_singular}, while parameters for the final layer ($l=3$) are zero-initialized.

\subsection{Training Performance} \label{sec:exp_train}
We evaluated the mean training loss in \Cref{eq:loss_F} across all samples. 
\Cref{fig:train_obj_last_512400} illustrates this loss at $T=100$, benchmarked against the standard GD objective (black dashed line). 
The results demonstrate that Math-L2O consistently achieves fast training convergence, corroborating the theoretical linear convergence established in \Cref{theorem:linear_convergence}.

Further, we investigated the robustness of our proposed L2O method to variations in optimization steps and learning rates (LRs). 
Models corresponding to different step/LR configurations are trained for $400$ epochs. 
\Cref{fig:train_obj_last_lr} presents the training objectives for these configurations, benchmarked against standard GD (black dashed line). 
In contrast to the instability observed for Math-L2O \cite{liu2023towards} and LISTA-CPSS \cite{chen2018theoretical} under certain settings (\Cref{fig:baseline_exploding}), the consistent convergence across all tested configurations in \Cref{fig:train_obj_last_lr} demonstrates the robustness of our proposed L2O approach.

\begin{figure}[htp]
    \vspace{-2mm}
    \centering
    \begin{subfigure}{0.47\textwidth}
        \includegraphics[width=0.99\linewidth]{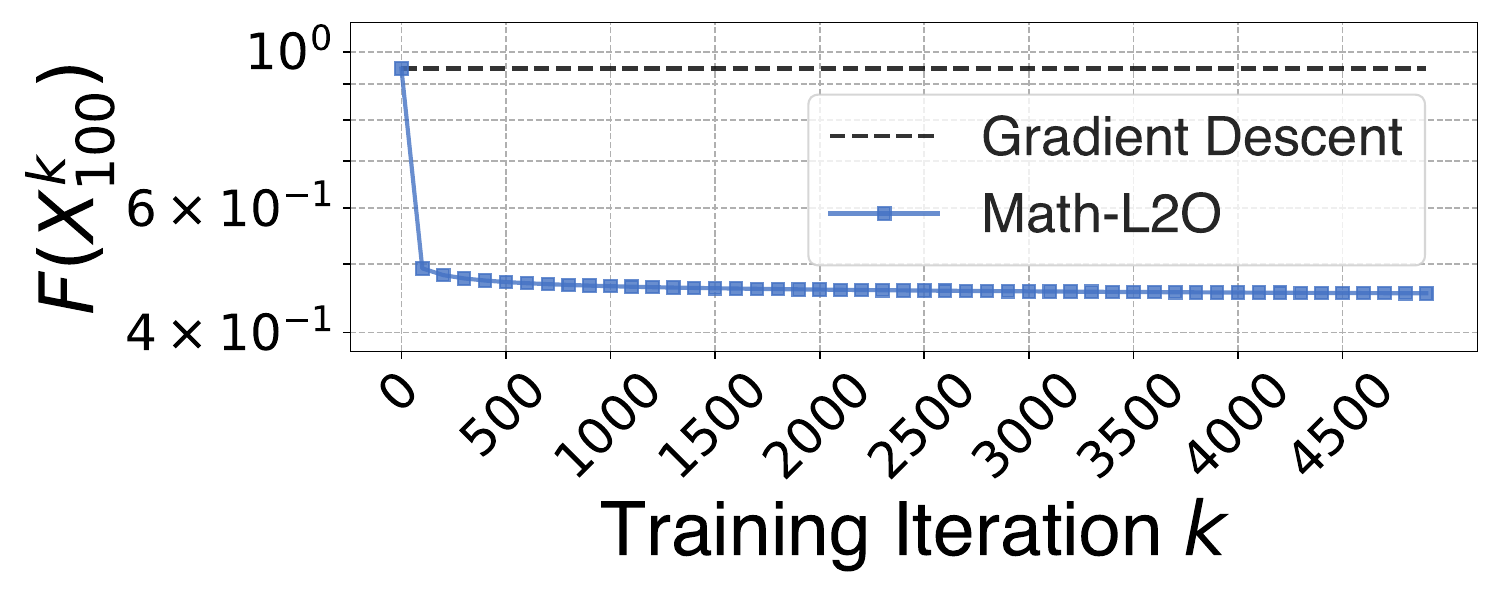}
        \vspace{-6mm}
        \caption{Loss with Training Iteration}
        \label{fig:train_obj_last_512400}
    \end{subfigure}
    \hfill
    \begin{subfigure}{0.47\textwidth}
        \includegraphics[width=0.99\linewidth]{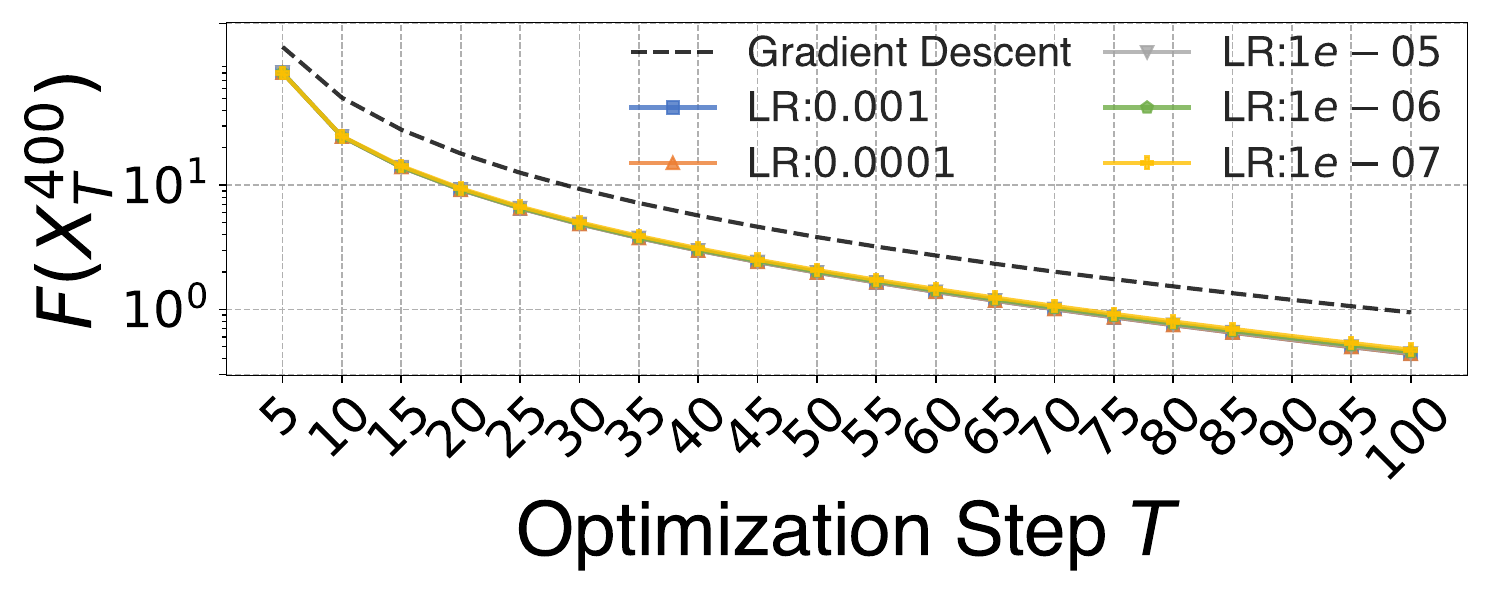}
        \vspace{-6mm}
        \caption{Loss with Optimization Step}
        \label{fig:train_obj_last_lr}
    \end{subfigure}
    \vspace{-3mm}
    \caption{Training Loss of Our L2O}
    \label{fig:train_conv_our}
\end{figure}
\vspace{-4mm}

Moreover, we evaluate the inference performance of our framework against baseline methods. 
Experimental results (in \Cref{sec:exp_infer}) demonstrate the framework's robustness to hyperparameters.

\subsection{Ablation Studies for Learning Rate $\eta$ and Expansion Coefficient $e$} \label{sec:exp_abla}
We conduct ablation studies to assess the impact of the LR $\eta$, theoretically bounded in \Cref{eq:learning_rate_upper_bound1,eq:learning_rate_upper_bound2} (\Cref{theorem:linear_convergence}), and the initialization coefficient $e$, defined in \Cref{sec:init_strategy}. 
The experimental configuration employs $T=20$, input $X \in \mathbb{R}^{32 \times 32}$, output $Y \in \mathbb{R}^{32 \times 20}$, and a neural network width of 1024. 
Performance is measured by the relative improvement of the proposed L2O method over standard GD at iteration $T=20$, calculated as $ \frac{\text{obj}_{\text{GD}} - \text{obj}_{\text{L2O}}}{\text{obj}_{\text{GD}}} $. 
These studies further validate \Cref{coro:lr_ub_e}, which establishes an inverse relationship between the viable LR $\eta$ and the coefficient $e$, implying that a larger $e$ necessitates a smaller $\eta$ to ensure convergence.

With the initialization coefficient fixed at $e=50$, we evaluate the impact of varying the LR $\eta$ on the relative objective improvement. 
The results in \Cref{fig:train_obj_last_ablation_3225_e50} demonstrate that while LRs such as $10^{-4}$ and smaller achieve convergence, $\eta = 10^{-3}$ leads to unstable behavior or divergence. 
This finding empirically supports the existence of an operational upper bound on the LR, consistent with the theoretical constraints outlined in \Cref{eq:learning_rate_upper_bound1,eq:learning_rate_upper_bound2}.
Moreover, reducing the LR below this stability threshold results in slower convergence rates. 
This observation aligns with the implication of \Cref{theorem:linear_convergence} that, under the specified conditions, larger permissible LRs yield faster convergence.

Fixing the LR at $\eta=10^{-7}$, we examine the influence of the initialization coefficient $e$ on performance. 
The results, presented in \Cref{fig:train_obj_last_ablation_3225_lr10_7}, demonstrate that the relative objective improvement consistently increases with larger values of $e$. 
Additional results exploring different $e$ and LR combinations are deferred to \Cref{sec:exp_appd} owing to space constraints.  
These findings validate the proposed strategies for selecting the initialization coefficient and learning rate.

\begin{figure}[htp]
    \vspace{-2mm}
    \centering
    \begin{subfigure}{0.47\textwidth}
        \centering
        \includegraphics[width=0.99\linewidth]{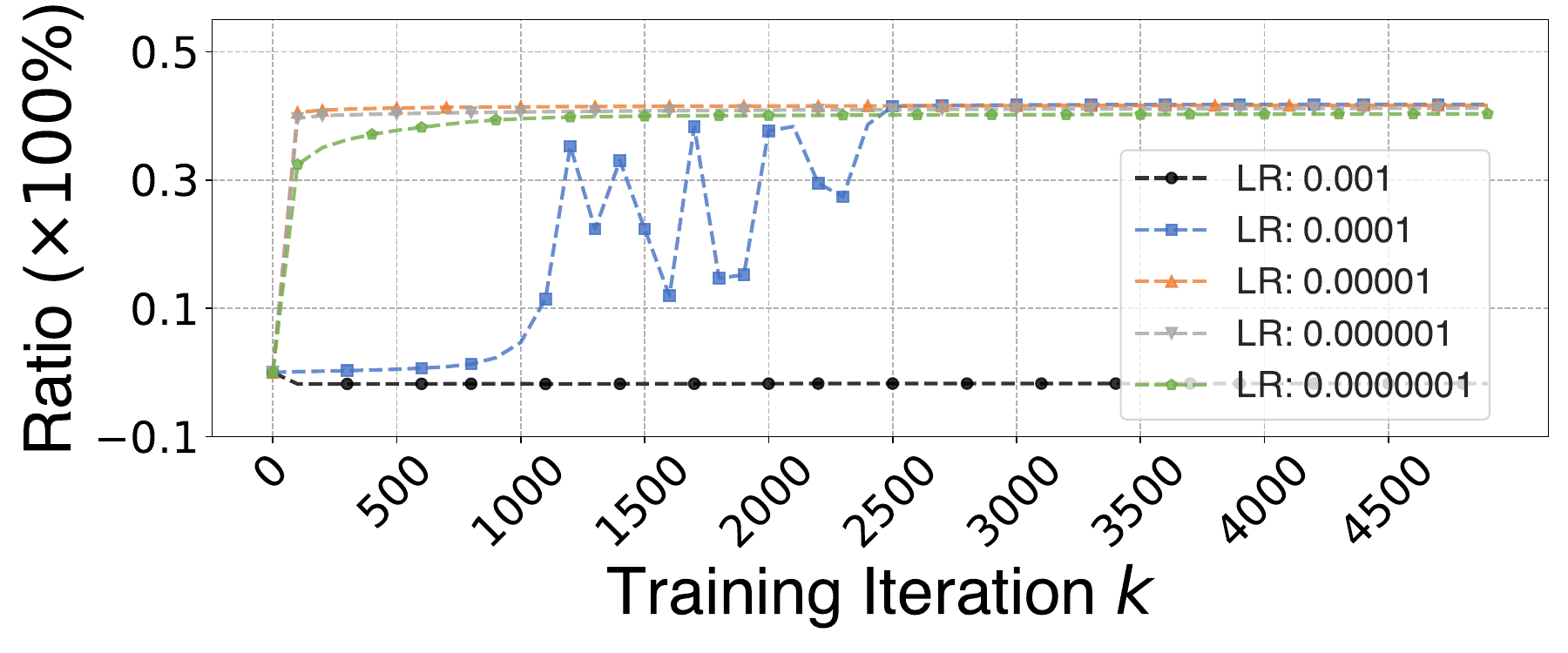}
        \vspace{-6mm}
        \caption{Ratio with Learning Rate $\eta$, $e=50$}
        \label{fig:train_obj_last_ablation_3225_e50}
    \end{subfigure}
    \hfill
    \begin{subfigure}{0.47\textwidth}
        \centering
        \includegraphics[width=0.99\linewidth]{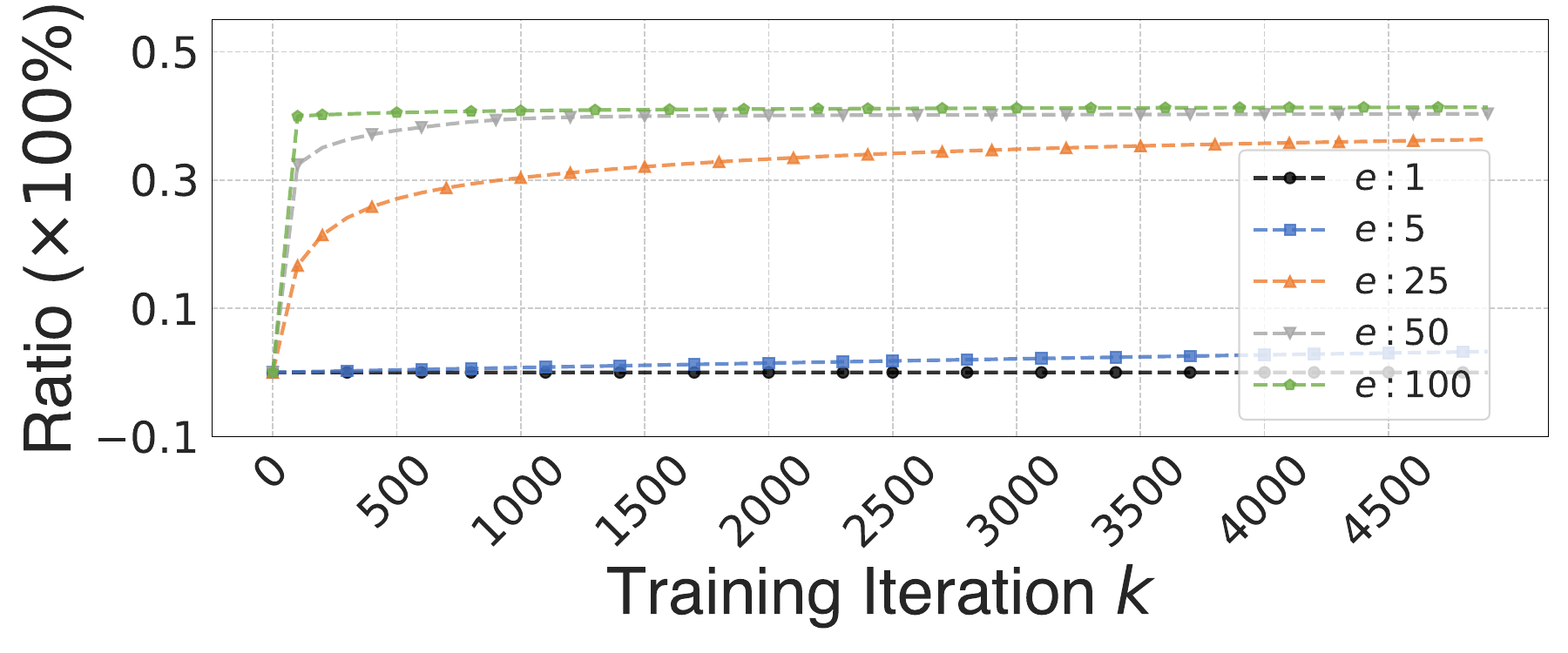}
        \vspace{-6mm}
        \caption{Ratio with Expansion Coefficient $e$, $\eta=10^{-7}$}
        \label{fig:train_obj_last_ablation_3225_lr10_7}
    \end{subfigure}
    \vspace{-2mm}
    \caption{Ablation Studies of Improve Ratio to Learning Rate and Expansion Coefficient}
    \label{fig:ablations}
    \vspace{-5mm}
\end{figure}


\section{Conclusion} \label{sec:conclusion}
This work analyzes a Learning-to-Optimize (L2O) framework that accelerates Gradient Descent (GD) through adaptive step-size learning. We theoretically prove that the L2O training enhances GD's convergence rate by linking network training bounds to GD's performance. Leveraging Neural Tangent Kernel (NTK) theory and the over-parameterization scheme via wide layers, we establish convergence guarantees for the complete L2O system. A principled initialization strategy is introduced to satisfy the theoretical requirements for these guarantees. Empirical results across various optimization problems validate our theory and demonstrate substantial practical efficacy.

\section*{Acknowledgements}
This work is supported in part by funding from CUHK (4937007, 4937008, 5501329, 5501517).

\bibliographystyle{plainnat}
\bibliography{main}

\newpage
\appendix


\section{Appendix} \label{sec:appendix}

\subsection{Details for Definitions} \label{sec:def_details}
\paragraph{General L2O.}
Given $X_0$, we have the following L2O update with NN $g$ to generate $X_{T}$:
\begin{equation}\label{eq:l2o}
   X_{t} = X_{t-1}+ g(W_1, W_2, \dots, W_L, X_{t-1}, \nabla F(X_{t-1})), t \in[T].
\end{equation}

\paragraph{Concatenation of $N$ Problems.}
For $t \in [T]$, we make the following denotations to represent the concatenation of $N$ samples (each is a unique optimization problem):
\begin{equation*}
   \begin{aligned}
      \mathbf{M} & := \begin{bmatrix}
         \mathbf{M}_1   &  &  \\
          &   \dots &  \\
          &    & \mathbf{M}_N \\
     \end{bmatrix}, 
      X_{t} := [x_{1, t}^\top | x_{2,t}^\top | \dots | x_{N,t}^\top]^\top, 
      Y  := [y_{1}^\top | y_{2}^\top | \dots | y_{N}^\top]^\top.
   \end{aligned}
\end{equation*}
$X_{t}$ and $Y$ are still column vectors since we take the coordinate-wise setting from~\cite{liu2023towards}.

\subsection{Detailed Interpretation of Quantities in \Cref{theorem:linear_convergence}} \label{sec:quantity_explain_appd}
We elaborate on the quantities introduced in \Cref{theorem:linear_convergence}. Our notational convention is as follows: subscripts $T$ and $k$ identify constant terms that are dependent on the total steps $T$ and the training iteration $k$. Conversely, indices $j$ and $t$ (appearing as superscripts or subscripts) are used to reference scalar-valued functions at a specific step $t$ or index $j$.
\begin{itemize}
   \item $\bar{\lambda}_{\ell}$ is a positive constant upper bound for each $\ell$-th layer in NN $g_W$ (see \Cref{sec:definition}), which is constructed in the proof of \Cref{theorem:linear_convergence}. 
   \item $\Theta_{L}$ is a positive constant w.r.t. $\bar{\lambda}_{\ell}$. 
   \item Denote $\bar{\lambda}_{\min}, \bar{\lambda}_{\max} := \min\{\bar{\lambda}_\ell \}, \max\{\bar{\lambda}_\ell \},\ell \in [L]$, $\Theta_{L}$ is lower and upper bounded by $\Omega(\bar{\lambda}_{\min}^L)$ and $\mathcal{O}(\bar{\lambda}_{\max}^L)$, respectively. Moreover, $\Theta_{L}^{-1}$ is  $\Omega(\bar{\lambda}_{\max}^{-L})$ and $\mathcal{O}(\bar{\lambda}_{\min}^{-L})$. 
   \item $\Phi_j$ is a scalar-valued function w.r.t. step $j$. The constant coefficients are given by initial point $X_0$, coefficient matrix $\mathbf{M}$, and coefficient vector $Y$ from problem defined in \Cref{eq:loss_F}. $\beta$ is the smoothness extent of objective. We use two denotations, $j$ and $t$, for step, which are used to formulate different computations in formulations. This formulation is derived by the upper bound relaxation of $L_2$-norm of gradient at $X_0$. $\Phi_j$ is $\mathcal{O}(j)$ and $\Omega(j)$.
   \item $\Lambda_{j}$ is a scalar-valued function w.r.t. step $j$, which is identical to those in $\Phi_j$. $\Lambda_j$ is $\mathcal{O}(j^2)$ and $\Omega(j^2)$. 
   \item $S_{\Lambda,T}$ and $S_{\bar{\lambda},L}$ are positive constants, which represents the summation of $\Lambda$ of $T$ steps and summation of $\bar{\lambda}$ of $L$-th NN layers, respectively. 
   \item $S_{\Lambda,T}$ is used in the demonstration for Lemma 4.1 (bound of gradient of NN training), line 625, page 22. The proof is achieved by upper bound relaxation of $L_2$-norm. $S_{\bar{\lambda},L}$ is used in Theorem 4.3 and related auxiliary lemmas. $S_{\Lambda,T}$ is $\mathcal{O}(T^3)$ and $\Omega(T^3)$. $S_{\Lambda,T}^{-1}$ is $\mathcal{O}(T^{-3})$ and $\Omega(T^{-3})$. Denote $\bar{\lambda}_{\min}, \bar{\lambda}_{\max} = \min\{\bar{\lambda}_\ell \}, \max\{\bar{\lambda}_\ell \},\ell \in [L]$, $S_{\bar{\lambda},L}$ is $\Omega(L\bar{\lambda}_{\max}^{-2})$ and $\mathcal{O}(L\bar{\lambda}_{\min}^{-2})$. Moreover, $S_{\bar{\lambda},L}^{-1}$ is $\Omega(L^{-1}\bar{\lambda}_{\max}^{2})$ and $\mathcal{O}(L^{-1}\bar{\lambda}_{\min}^{2})$. 
   \item $\zeta_1$ and $\zeta_2$ are two positive constants scale linearly w.r.t., $X_0$, $\mathbf{M}$, $Y$, $T$, and $\beta$. $\zeta_1$ and $\zeta_2$ are both $\Omega(T)$ and $\mathcal{O}(T)$.
   \item $\zeta_1$ and $\zeta_2$: Two positive constants scale linearly w.r.t., $X_0$, $\mathbf{M}$, $Y$, $T$, and $\beta$. $\zeta_1$ and $\zeta_2$ are both $\Omega(T)$ and $\mathcal{O}(T)$.
   \item $\delta_1^t$: A scalar-valued function w.r.t. step $t$. The constant coefficients are $\Theta_{L}$, $\Phi_j$, and $\Lambda_{s}$, where $s$ denotes an step. Denote $\bar{\lambda}_{\min}, \bar{\lambda}_{\max} = \min\{\bar{\lambda}_\ell \}, \max\{\bar{\lambda}_\ell \},\ell \in [L]$, $\delta_1^t$ is $\Omega(t\bar{\lambda}_{\min}^{Lt})$ and $\mathcal{O}(t\bar{\lambda}_{\max}^{Lt})$.
   \item $\delta_2$: Positive constant scales with $T$. Denote $\bar{\lambda}_{\min}, \bar{\lambda}_{\max} = \min\{\bar{\lambda}_\ell \}, \max\{\bar{\lambda}_\ell \},\ell \in [L]$, $\delta_2$ is $\Omega(T\bar{\lambda}_{\min}^{LT})$ and $\mathcal{O}(T\bar{\lambda}_{\max}^{LT})$. Moreover, $\delta_2^{-1}$ is $\Omega(T\bar{\lambda}_{\max}^{-LT})$ and $\mathcal{O}(T\bar{\lambda}_{\min}^{-LT})$.
   \item $\delta_3$: Positive constant scales linearly w.r.t., $X_0$, $\mathbf{M}$, $Y$, $T$, and $\beta$. $\delta_3$ is both $\Omega(T)$ and $\mathcal{O}(T)$.
   \item $\delta_4$: Denote $\bar{\lambda}_{\min}, \bar{\lambda}_{\max} = \min\{\bar{\lambda}_\ell \}, \max\{\bar{\lambda}_\ell \},\ell \in [L]$, $\delta_4$ is $\Omega(\exp(-T\bar{\lambda}_{\max}^L))$ and $\mathcal{O}(\exp(-T\bar{\lambda}_{\min}^L))$. Moreover, $\delta_4^{-1}$ is $\mathcal{O}(\exp(T\bar{\lambda}_{\max}^L))$ and $\Omega(\exp(T\bar{\lambda}_{\min}^L))$.
\end{itemize}

\subsection{Derivative of General L2O} \label{sec:derivative_bbnn}
In this section, we derive a general framework for any L2O models by the chain rule, which gives us a complete workflow of each component in the derivatives within the chain. Then, we apply it to the Math-L2O framework~\cite{liu2023towards} to get the formulation for the L2O model defined in \Cref{eq:NN_mathl2o}. 

Due to the chain rule, we derive the following general formulation of the derivative in L2O model:
\begin{equation*}
   \tfrac{\partial F(X_{T})}{\partial W_{\ell}} = \tfrac{\partial F(X_{T})}{\partial X_{T}} \left(\tfrac{\partial X_{T}}{\partial X_{T-1}} \tfrac{\partial X_{T-1}}{\partial W_{\ell}} + \tfrac{\partial X_{T}}{\partial G_{L,t}}\tfrac{\partial G_{L,t}}{\partial W_{\ell}}\right).
\end{equation*}

We then calculate each term in the right-hand side (RHS) in the above formulation. 
First, we calculate $\tfrac{\partial X_{T-1}}{\partial W_{\ell}}$ as:
\begin{equation*}
   \tfrac{\partial X_{T-1}}{\partial W_{\ell}} = \tfrac{\partial X_{T-1}}{\partial X_{T-2}} \tfrac{\partial X_{T-2}}{\partial W_{\ell}} + \tfrac{\partial X_{T-1}}{\partial G_{L,T-1}}\tfrac{\partial G_{L,T-1}}{\partial W_{\ell}}.
\end{equation*}
Thus, we can iteratively derive the gradient until $X_{1}$. After rearranging terms, we have the following complete formulation of $\tfrac{\partial F}{\partial W_{\ell}}$:
\begin{equation}\label{eq:derivative_NN}
   \tfrac{\partial F(X_{T})}{\partial W_{\ell}} =  \tfrac{\partial F(X_{T})}{\partial X_{T}} \big( \mathsmaller{\sum}_{t=1}^{T} (\mathsmaller{\prod}_{j=T}^{t+1}  \tfrac{\partial X_{j}}{\partial X_{j-1}})   \tfrac{\partial X_{t}}{\partial G_{L,t}}  \tfrac{\partial G_{L,t}}{\partial W_{\ell}} \big).
\end{equation}

We note that $\tfrac{\partial X_{j}}{\partial X_{j-1}}$ relies on different implementations. For example, for general L2O model that the update in each step is directly the output of neural networks (NNs), we have $\tfrac{\partial X_{j}}{\partial X_{j-1}} :=\mathbf{I} + \tfrac{\partial G_{L,j}}{\partial X_{j-1}}$. Then, \Cref{eq:derivative_NN} is derived by:
\begin{equation}\label{eq:derivative_NN_simple_blackbox_framework}
   \tfrac{\partial F}{\partial W_{\ell}} =  \tfrac{\partial F(X_{T})}{\partial X_{T}} \Big(\mathsmaller{\sum}_{t=1}^{T} \big(\mathsmaller{\prod}_{j=T}^{t+1} (\mathbf{I} + \tfrac{\partial G_{L,j}}{\partial X_{j-1}})\big)  \tfrac{\partial X_{T}}{\partial G_{L,t}}  \tfrac{\partial G_{L,t}}{\partial W_{\ell}} \Big).
\end{equation}

$\tfrac{\partial G_{L,j}}{\partial X_{j-1}}$ depends on specific implementation of NNs. \citet{liu2023towards} simplify $\tfrac{\partial G_{L,j}}{\partial X_{j-1}}$ by detaching input tensor from the back-propagation process, which truncate the branches in the chain from $F(X_{T})$ to $W_{\ell}$. The detaching operation yields simpler $\tfrac{\partial X_{j}}{\partial X_{j-1}}$. As will be introduced in the following sections, $\tfrac{\partial X_{j}}{\partial X_{j-1}}$ depends only on NN's output.

Further, the definition of $\tfrac{\partial X_{T}}{\partial G_{L,t}}$ is framework-dependent. In the general L2O model, $\tfrac{\partial X_{T}}{\partial G_{L,t}} :=\mathbf{I}$, whereas in Math-L2O~\cite{liu2023towards}, it is defined based on the FISTA algorithm~\cite{beck2009fast}. Subsequently, we perform a layer-by-layer computation for each derivative $\tfrac{\partial G_{L,j}}{\partial X_{t-1}}$ and $\tfrac{\partial G_{L,t}}{\partial W_{\ell}}$.


First, we derive $\tfrac{\partial G_{L,t}}{\partial G_{L-1,t}}$ by:
\begin{equation*}
   \tfrac{\partial G_{L,t}}{\partial G_{L-1,t}}=\begin{cases}
      {\nabla \relu(G_{L-1,t})} W_{\ell} & \ell \in [L-1], \\ 
      {\nabla 2\sigma(G_{\ell,t})} W_{\ell} & \ell=L.
     \end{cases}
\end{equation*}

For simplification, we use $\nabla \relu$ and $\nabla 2\sigma$ to represent derivatives $\nabla \relu(G_{L-1,t})$ and $\nabla 2\sigma(G_{\ell,t})$, respectively, which are corresponding diagonal matrices of coordinate-wise activation function's derivatives. Next, $\tfrac{\partial G_{L,t}}{\partial X_{t-1}}$ is given by:
\begin{equation}\label{eq:NN_derivative_x_blackbox}
   \begin{aligned}
      \tfrac{\partial G_{L,j}}{\partial X_{T-1}} = (\mathsmaller{\prod}_{\ell=L}^{2}\tfrac{\partial G_{l,j}}{\partial G_{l,j-1}}) \tfrac{\partial G_{1,j-1}}{\partial X_{T-1}} = {\nabla 2\sigma} w_L  (\mathsmaller{\prod}_{\ell=L-1}^{2}{\nabla \relu} W_{\ell} ) [\mathbf{I}, \mathbf{H}^\top],
   \end{aligned}
\end{equation}
where $\mathbf{H}:=\mathbf{M}^\top \mathbf{M}$ denotes the Hessian matrix of the loss function in \Cref{eq:loss_F}. 

Second, $\tfrac{\partial G_{L,t}}{\partial W_{\ell}}$ is given by:
\begin{equation}\label{eq:NN_derivative_w_blackbox}
   \begin{aligned}
      \tfrac{\partial G_{l,t}}{\partial W_{\ell}} 
      & = \big(\mathsmaller{\prod}_{j=L}^{\ell+1}\tfrac{\partial G_{j,t}}{\partial G_{j-1,t}}\big) \tfrac{\partial G_{l,t}}{\partial W_{\ell}}\\
      & = \begin{cases}
         {\nabla 2\sigma} w_L (\mathsmaller{\prod}_{j=L-1}^{\ell+1}{\nabla \relu} W_j)  {\nabla \relu} (\mathbf{I}_{n_{\ell}} \otimes {G_{\ell-1,t}}^\top) & \ell \in [L-1], \\ 
         {\nabla 2\sigma} (\mathbf{I}_{n_{\ell}} \otimes {G_{L-1,t}}^\top)& \ell=L,
         \end{cases}
   \end{aligned}
\end{equation}
where $\mathbf{I}_{n_{\ell}} \in \mathbb{R}^{n_{\ell} \times n_{\ell}}$, $\otimes$ denotes Kronecker Product, and $\mathbf{I}_{n_{\ell}} \otimes {G_{\ell-1,t}}^\top \in \mathbb{R}^{n_{\ell} \times n_{\ell} n_{\ell-1}}$.

Substituting \Cref{eq:NN_derivative_x_blackbox} and \Cref{eq:NN_derivative_w_blackbox} into \Cref{eq:derivative_NN_simple_blackbox_framework} yields following final derivative formulation of general L2O model: 
\begin{equation}\label{eq:derivative_NN_simple_final_blackbox}
   \begin{aligned}
      & \tfrac{\partial F}{\partial W_{\ell}} \\
      =&  \tfrac{\partial F(X_{T})}{\partial X_{T}} \Big(\mathsmaller{\sum}_{t=1}^{T} \big(\mathsmaller{\prod}_{j=T}^{t+1}(\mathbf{I} + \tfrac{\partial G_{L,j}}{\partial X_{j-1}})\big) \tfrac{\partial X_{T}}{\partial G_{L,t}}\tfrac{\partial G_{L,t}}{\partial W_{\ell}} \Big), \\
      =& \begin{cases}
         \begin{aligned}
              \mathbf{K}_{n_{\ell}, n_{\ell-1}} \bigg( &  ( {X_{T}^k}^\top   \mathbf{M}^\top -Y^\top)\mathbf{M} \\
              & \Big(\mathsmaller{\sum}_{t=1}^{T} \big(\mathbf{I} + {\nabla 2\sigma} w_L  (\mathsmaller{\prod}_{\ell=L-1}^{2}{\nabla \relu} W_{\ell} ) [\mathbf{I}, \mathbf{H}^\top] \big)^{T-t} \\ 
            & \qquad \nabla 
            {2\sigma} w_L^\top \big(\mathsmaller{\prod}_{j=L-1}^{\ell+1}{\nabla \relu} W_j\big)  {\nabla \relu} (\mathbf{I}_{n_{\ell}} \otimes {G_{\ell-1,t}}^\top)  \Big) \bigg)^\top
         \end{aligned} & \ell \in [L-1], \\ 
         \begin{aligned}
            \mathbf{K}_{n_{\ell}, n_{\ell-1}} \Bigg(&  ({X_{T}^k}^\top \mathbf{M}^\top -Y^\top)\mathbf{M} \\
            & \Big(\mathsmaller{\sum}_{t=1}^{T} \big(\mathbf{I} + {\nabla 2\sigma} w_L  (\mathsmaller{\prod}_{\ell=L-1}^{2}{\nabla \relu} W_{\ell} ) [\mathbf{I}, \mathbf{H}^\top] \big)^{T-t} \\ 
            & \quad 
            {\nabla 2\sigma} (\mathbf{I}_{n_{\ell}} \otimes {G_{L-1,t}}^\top) \Big)\Bigg)^\top
         \end{aligned} & l =L,
         \end{cases}
   \end{aligned}
\end{equation}
where $\mathbf{K}_{n_{\ell}, n_{\ell-1}} $ denotes a commutation matrix, which is a $n_{\ell} * n_{\ell-1} \times n_{\ell} * n_{\ell-1}$ permutation matrix that swaps rows and columns in the vectorization process.

\subsection{Derivative of Coordinate-Wise Math-L2O} \label{sec:derivative_l2o}
Based on the results in \Cref{sec:derivative_bbnn}, in this section, we construct the gradient formulations for Math-L2O model. We present the results in \Cref{eq:derivative_NN_simple_final_mathl2o_l} and \Cref{eq:derivative_NN_simple_final_mathl2o_L_p}.

As defined in \Cref{eq:math_l2o,eq:NN_mathl2o}, Math-L2O~\cite{liu2023towards} learns to choose hyperparameters of existing non-learning algorithms~\cite{liu2023towards,song2024towards}. Suppose $P_i \in \mathbb{R}^{N*d}, i \in [0, \dots, T]$ is the hyperparameter vector generated by NNs. Suppose $X_{-1} := X_0$, based on \Eqref{eq:math_l2o}, the solution update process from the initial step is defined by:
\begin{equation}\label{eq:math_l2o02T}
   \begin{aligned}
      X_1 &= X_0  - \tfrac{1}{\beta} P_1\odot \nabla F(X_0) , \\
      X_2 &= X_1  - \tfrac{1}{\beta} P_2\odot  \nabla F(X_1) , \\
      &  \qquad \dots, \\
      X_{T} &= X_{T-1}  - \tfrac{1}{\beta} P_T\odot \nabla F(X_{T-1}),
   \end{aligned}
\end{equation}

We re-use the definition in \Cref{sec:definition} that defines $\mathcal{D}(\cdot)$ as the operator that constructs a diagonal matrix from a vector, we calculate the following one-line and linear-like formulation of $X_T$ with $X_0$:
\begin{equation} \label{eq:gd_oneline_x0}
   X_T = \mathsmaller{\prod}_{t=T}^{1} (\mathbf{I} - \tfrac{1}{\beta} \mathcal{D}(P_t) \mathbf{M}^\top \mathbf{M}) X_0 + \tfrac{1}{\beta} \mathsmaller{\sum}_{t=1}^{T} \mathsmaller{\prod}_{s=T}^{t+1} (\mathbf{I} - \tfrac{1}{\beta} \mathcal{D}(P_s) \mathbf{M}^\top \mathbf{M}) \mathcal{D}(P_t) \mathbf{M}^\top Y.
\end{equation}

Given that $P_t$ is generated by a non-linear neural network with $X_{t-1}$ as input, the resulting system dynamics are inherently non-linear. Consequently, this system cannot be formulated as the aforementioned linear dynamic system. Moreover, we note that for non-smooth problems, the uncertain sub-gradient can be replaced by the gradient map to obtain analogous formulations~\cite{song2024towards}.


Due to the above computational graph in \Cref{fig:cg_math_l2o}, the gradient of $X_{t}$ comes from $X_{t-1}$ and $P_{t}$, which yields the following framework of each layer's derivative (\Cref{eq:derivative_NN_simple_mathl2o_framework}):
\begin{equation}\label{eq:derivative_NN_simple_mathl2o_framework_appd}
   \tfrac{\partial F}{\partial W_{\ell}} =  \tfrac{\partial F(X_{T})}{\partial X_{T}} \Big(\mathsmaller{\sum}_{t=1}^{T} \big(\mathsmaller{\prod}_{j=T}^{t+1} \tfrac{\partial X_{j}}{\partial X_{j-1}}\big) \tfrac{\partial X_{t}}{\partial P_t}   \tfrac{\partial P_t}{\partial W_{\ell}} \Big).
\end{equation}
We obtain the above equation by counting the number of formulations from $F$ to $W_{\ell}$. From the \Cref{fig:cg_math_l2o}, we conclude that each timestamp $t$ leads to the gradient of $\tfrac{\partial X_{T}}{\partial X_{T-1}}$. Thus, there are $\mathsmaller{\prod}_{j=T}^{t+1}\tfrac{\partial X_{j}}{\partial X_{j-1}}$ blocks of formulation in total.


We start with deriving the formulation of gradient w.r.t. the GD algorithm, which yields the gradient of $\tfrac{\partial X_{T}}{\partial P_T}$. 
Due to the GD formulation in \Cref{eq:math_l2o02T}, we derive $\tfrac{\partial X_{t}}{\partial X_{t-1}}$ as:
\begin{equation}\label{eq:derivative_NN_mathl2o_x21_pre}
   \begin{aligned}
      \tfrac{\partial X_{t}}{\partial X_{t-1}}=&\mathbf{I}_d - \tfrac{1}{\beta}\tfrac{\partial\Big( P_{t} \odot \nabla F (X_{t-1})\Big)}{\partial X_{t-1}}\\
      = & \mathbf{I}_d - \tfrac{1}{\beta}\tfrac{\partial P_{t} \odot \big(\mathbf{M}^\top(\mathbf{M} X_{t-1} -Y )\big)}{\partial X_{t-1}},\\
      = &\mathbf{I}_d - \tfrac{1}{\beta} \mathcal{D}(P_{t})  \mathbf{M}^\top\mathbf{M} 
      - \tfrac{1}{\beta}\tfrac{\partial P_{t} \odot \big(\mathbf{M}^\top (\mathbf{M} X_{t-1} -Y )\big)}{\partial P_{t}} \tfrac{\partial P_{t}}{\partial X_{t-1}}, \\
      = &\mathbf{I}_d - \tfrac{1}{\beta}  \mathcal{D}(P_{t}) \mathbf{M}^\top\mathbf{M} 
      - \tfrac{1}{\beta} 
      \mathcal{D} \big(\mathbf{M}^\top (\mathbf{M} X_{t-1} -Y ) \big) \tfrac{\partial P_{t}}{\partial X_{t-1}}.
   \end{aligned}
\end{equation}

Next, we calculate $\tfrac{\partial P_{t}}{\partial X_{t-1}}$. 
Similarly, we derive $\tfrac{\partial \vec(G_{L,t})}{\partial W_{\ell}}$ and each $\tfrac{\partial \vec(G_{L,j})}{\partial X_{j-1}}$ of Math-L2O layer-by-layer. $\tfrac{\partial \vec(G_{L,t})}{\partial \vec(G_{L-1,t})}$ in Math-L2O is similar to \Cref{eq:NN_derivative_w_blackbox}. 
We calculate:
\begin{equation}\label{eq:NN_derivative_w_mathl2o_p}
   \begin{aligned}
    \begin{cases}
      \tfrac{\partial P_t}{\partial W_{\ell}} = \begin{aligned}
         &\mathcal{D} \big(P_t \odot (1-P_t/2)\big) (\mathbf{I}_{d} \otimes W_{L})  \mathsmaller{\prod}_{j=L-1}^{\ell+1}\mathbf{D}_{j,t} \mathbf{I}_{d} \otimes W_j \mathbf{I}_{n_{\ell}} \otimes {G_{\ell-1,t}}^\top
      \end{aligned}
      & \ell \in [L-1], \\ 
      \tfrac{\partial P_t}{\partial W_{L}} =  \mathcal{D} \big(P_t \odot (1-P_t/2)\big) {G_{L-1,t}}^\top  & \ell=L.
      \end{cases}
    \end{aligned}
\end{equation}

Similarly, we calculate the following derivative of output of Math-L2O w.r.t. it input at step $t$:
\begin{equation}\label{eq:NN_derivative_x_mathl2o}
   \begin{aligned}
      \tfrac{\partial P_t}{\partial X_{t-1}}=\mathcal{D} \big(P_t \odot (1-P_t/2)\big) W_{L} (\mathsmaller{\prod}_{\ell=L-1}^{2}\mathbf{D}_{\ell,t} W_{\ell} ) [\mathbf{I}, \mathbf{H}^\top]^\top.
   \end{aligned}
\end{equation}

Substituting \Cref{eq:NN_derivative_x_mathl2o} into \Cref{eq:derivative_NN_mathl2o_x21_pre} yields $\tfrac{\partial X_{t}}{\partial X_{t-1}}$:
\begin{equation}\label{eq:derivative_NN_mathl2o_x21_full}
   \begin{aligned}
      \tfrac{\partial X_{t}}{\partial X_{t-1}} = & \mathbf{I}_d  - \tfrac{1}{\beta}  \mathcal{D}(P_{t}) \mathbf{M}^\top\mathbf{M} \\
      &- \tfrac{1}{\beta} 
      \mathcal{D} \big(\mathbf{M}^\top (\mathbf{M} X_{t-1} -Y ) \big)
      \mathcal{D}\big(P_t \odot (1-P_t/2)\big) W_{L} (\mathsmaller{\prod}_{\ell=L-1}^{2}\mathbf{D}_{\ell,t} W_{\ell} ) [\mathbf{I}, \mathbf{H}^\top]^\top.
   \end{aligned}
\end{equation}

We note that in~\cite{liu2023towards}, the gradient formulations are simplified in the implementation by detaching the input feature from the computational graph. Thus, we can eliminate the complicated last term in the above formulation, which leads to the following compact version:
\begin{equation}\label{eq:derivative_NN_mathl2o_x21_simple}
   \tfrac{\partial X_{t}}{\partial X_{t-1}} = \mathbf{I}_d - \tfrac{1}{\beta}  \mathcal{D}(P_{t}) \mathbf{M}^\top\mathbf{M}.
\end{equation}

In this paper, we take the gradient formulation in \Cref{eq:derivative_NN_mathl2o_x21_simple}. 

Next, we calculate the $\tfrac{\partial X_{t}}{\partial P_{t}}$ component in \Cref{eq:derivative_NN_simple_mathl2o_framework_appd}. 
We calculate the derivative of GD's output w.r.t. its input hyperparameter $P$ (generated by NNs) as:
\begin{equation} \label{eq:fista_derivative_p}
   \begin{aligned}
    \tfrac{\partial X_{t}}{\partial P_{t}} &= - \tfrac{1}{\beta} \mathcal{D}(\nabla F(X_{t-1}))
    = - \tfrac{1}{\beta} \mathcal{D} \big(\mathbf{M}^\top (\mathbf{M} X_{t-1} -Y )\big), 
    \end{aligned}
\end{equation}
where $\nabla F(X_{t-1}) := \mathbf{M}^\top (\mathbf{M} X_{t-1} -Y )$ is the first-order derivative of the objective in \Eqref{eq:obj_f}.

Substituting \Cref{eq:NN_derivative_w_mathl2o_p}, \Cref{eq:derivative_NN_mathl2o_x21_simple}, and \Cref{eq:fista_derivative_p} into \Cref{eq:derivative_NN_simple_mathl2o_framework_appd} yields the final derivative of all layers' parameters. 

First, for $\ell=L$, since there is no cumulative gradients of later layers, \Eqref{eq:derivative_NN_simple_final_mathl2o_L_p} is directly calculated by:
\begin{equation*}
   \begin{aligned}
      \tfrac{\partial F}{\partial W_{L}} 
      =&  -\tfrac{1}{\beta}  
      \mathsmaller{\sum}_{t=1}^{T} \big(\mathbf{M}^\top( \mathbf{M}{X_{T}} -Y) \big)^\top
      \big(
         \mathsmaller{\prod}_{j=T}^{t+1} \mathbf{I} - \tfrac{1}{\beta} \mathcal{D}(P_j) \mathbf{M}^\top \mathbf{M} \big) 
      \\
      & \qquad \qquad
      \mathcal{D}\big((\mathbf{M}^\top(\mathbf{M} X_{t-1} - Y) ) \big)
      \mathcal{D}\big(P_t \odot (1-P_t/2) \big) {G_{L-1,t}}^\top.
   \end{aligned}
\end{equation*}
And its transpose is given by:
\begin{equation}\label{eq:derivative_NN_simple_final_mathl2o_L_p_transpose}
   \begin{aligned}
      {\tfrac{\partial F}{\partial W_{L}}}^\top 
      =&  -\tfrac{1}{\beta}  
      \mathsmaller{\sum}_{t=1}^{T} G_{L-1,t} \mathcal{D}\big(P_t \odot (1-P_t/2) \big)
      \mathcal{D}\big((\mathbf{M}^\top(\mathbf{M} X_{t-1} - Y) ) \big) \\
      & \qquad \qquad
      \big(\mathsmaller{\prod}_{j=t+1}^{T} \mathbf{I} - \tfrac{1}{\beta}\mathbf{M}^\top \mathbf{M} \mathcal{D}(P_j) \big) 
      \mathbf{M}^\top( \mathbf{M}{X_{T}} -Y) .
   \end{aligned}
\end{equation}

When $\ell \in [L-1]$, the derivative is calculated by:
\begin{equation*}
   \begin{aligned}
      \tfrac{\partial F}{\partial W_{\ell}} 
      =& \tfrac{\partial F(X_{T})}{\partial X_{T}} \left(\mathsmaller{\sum}_{t=1}^{T} \left(\mathsmaller{\prod}_{j=T}^{t+1}
      \tfrac{\partial X_{j}}{\partial X_{j-1}}
      \right) \tfrac{\partial X_{t}}{\partial P_{t}}\tfrac{\partial P_{t}}{\partial W_{\ell}} \right), \\
      =& 
      -\tfrac{1}{\beta}  \mathsmaller{\sum}_{t=1}^{T} 
      (\mathbf{M}^\top( \mathbf{M}{X_{T}}-Y))^\top
      \big(
         \mathsmaller{\prod}_{j=T}^{t+1} \mathbf{I}_d - \tfrac{1}{\beta}\mathbf{M}^\top \mathbf{M} \mathcal{D}(P_j)
         \big) \\
      & \qquad \qquad 
      \mathcal{D}\big((\mathbf{M}^\top(\mathbf{M} X_{t-1} - Y) ) \big)
      \mathcal{D} \big(P_{t} \odot (1-P_{t}/2)\big) \\
      & \qquad \qquad 
      (\mathbf{I}_{d} \otimes W_{L})  \mathsmaller{\prod}_{j=L-1}^{\ell+1}\mathbf{D}_{j,t} \mathbf{I}_{d} \otimes W_j \mathbf{I}_{n_{\ell}} \otimes {G_{\ell-1,t}}^\top.
   \end{aligned}
\end{equation*}

\begin{remark}
   The only difference between \Cref{eq:derivative_NN_simple_final_mathl2o_L_p} and \Cref{eq:derivative_NN_simple_final_mathl2o_l} lies in the last term, where \Cref{eq:derivative_NN_simple_final_mathl2o_l} is more complicated due to the accumulated gradients from later layers.
\end{remark}

The above two formulations are used in the next section to derive the gradient bound for each layer.

\subsection{Tools} \label{sec:tools}
In this section, prior to constructing the convergence bounds, we first derive several analytical tools. These tools are foundational for the convergence rate analysis and also establish key properties of the L2O models. We use superscript $k$ to denote parameters and variables at training iteration $k$, and subscript $t$ to denote the optimization step.

\subsubsection{NN's Outputs are Bounded} \label{sec:nn_output_bound}
First, we demonstrate that the outputs and inner outputs of NN layers within the L2O model are bounded.
\paragraph{Bound $\big\|\mathbf{I}- \tfrac{1}{\beta} \mathcal{D}(P_t^{k})\mathbf{M}^\top\mathbf{M}\big\|_2, \forall k, t$.} 

\begin{lemma}\label{lemma:p_bound1}
   Suppose $\| \mathbf{M}^\top\mathbf{M} \|_2 \leq \beta$ and $0 < P_t^{k} < 2$, we have the following bound:
   \begin{equation} \label{eq:p_bound1}
      \big\|\mathbf{I}- \tfrac{1}{\beta} \mathcal{D}(P_t^{k})\mathbf{M}^\top\mathbf{M}\big\|_2 < 1.
   \end{equation}
\end{lemma}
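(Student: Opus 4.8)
The plan is to bound $\|\mathbf{I} - \tfrac{1}{\beta}\mathcal{D}(P_t^k)\mathbf{M}^\top\mathbf{M}\|_2$ by exploiting the structure of the product of a diagonal matrix with a positive semidefinite matrix. The key observation is that although $\mathcal{D}(P_t^k)\mathbf{M}^\top\mathbf{M}$ is not symmetric, it is similar to a symmetric matrix: writing $D := \mathcal{D}(P_t^k)$, which is diagonal with strictly positive entries (since $0 < P_t^k < 2$ componentwise), we have $D^{1/2}$ well-defined and invertible, and
\begin{equation*}
   \mathbf{I} - \tfrac{1}{\beta} D\mathbf{M}^\top\mathbf{M} = D^{1/2}\big(\mathbf{I} - \tfrac{1}{\beta} D^{1/2}\mathbf{M}^\top\mathbf{M} D^{1/2}\big) D^{-1/2}.
\end{equation*}
Since the spectral norm is not similarity-invariant in general, I would instead work directly with the symmetric matrix $A := \mathbf{I} - \tfrac{1}{\beta} D^{1/2}\mathbf{M}^\top\mathbf{M} D^{1/2}$ and note that the eigenvalues of $\mathbf{I} - \tfrac{1}{\beta}D\mathbf{M}^\top\mathbf{M}$ coincide with those of $A$, but that does not immediately control the operator norm of the non-symmetric matrix. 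So a cleaner route is needed.

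The cleaner route: I would show the result at the level of the update map acting on the relevant quantity, or more simply, bound the norm via the eigenvalues by using a weighted inner product. Define the inner product $\langle u, v\rangle_{D^{-1}} := u^\top D^{-1} v$ with induced norm $\|u\|_{D^{-1}}$. Then $\mathbf{I} - \tfrac{1}{\beta}D\mathbf{M}^\top\mathbf{M}$ is self-adjoint with respect to $\langle\cdot,\cdot\rangle_{D^{-1}}$, because $D^{-1}(\mathbf{I} - \tfrac{1}{\beta}D\mathbf{M}^\top\mathbf{M}) = D^{-1} - \tfrac{1}{\beta}\mathbf{M}^\top\mathbf{M}$ is symmetric. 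Hence its $\|\cdot\|_{D^{-1}}$-operator norm equals its spectral radius, which equals $\max_i |1 - \tfrac{1}{\beta}\mu_i|$ where $\mu_i \geq 0$ are the eigenvalues of $D^{1/2}\mathbf{M}^\top\mathbf{M}D^{1/2}$. I would then argue $0 \le \mu_i < 2\beta$: nonnegativity is clear since $D^{1/2}\mathbf{M}^\top\mathbf{M}D^{1/2} \succeq 0$, and the upper bound follows from $\mu_{\max} = \|D^{1/2}\mathbf{M}^\top\mathbf{M}D^{1/2}\|_2 \le \|D^{1/2}\|_2^2\|\mathbf{M}^\top\mathbf{M}\|_2 < 2\cdot\beta$, using $\|P_t^k\|_\infty < 2$ and $\|\mathbf{M}^\top\mathbf{M}\|_2 \le \beta$. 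Therefore $|1 - \tfrac{1}{\beta}\mu_i| < 1$ for all $i$, giving $\|\mathbf{I} - \tfrac{1}{\beta}D\mathbf{M}^\top\mathbf{M}\|_{D^{-1}} < 1$.

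The remaining gap is converting the weighted-norm bound back to a spectral-norm statement, since the lemma is stated in $\|\cdot\|_2$. Strictly, a similarity transform only gives $\|\mathbf{I} - \tfrac{1}{\beta}D\mathbf{M}^\top\mathbf{M}\|_2 \le \|D^{-1/2}\|_2\|D^{1/2}\|_2 \cdot \rho$ where $\rho = \max_i|1-\tfrac1\beta\mu_i| < 1$, and the prefactor $\|D^{-1/2}\|_2\|D^{1/2}\|_2$ can exceed $1$, so this does not literally prove the $\|\cdot\|_2 < 1$ claim in all cases. I anticipate this is the main obstacle, and I suspect the intended proof is actually the short symmetric one: interpret $\mathbf{M}^\top\mathbf{M}$ with the diagonal scaling differently, or the authors implicitly mean the bound in an appropriate norm, or $P_t^k$ constant (equal across coordinates) in the regime where it is applied so that $\mathcal{D}(P_t^k) = p\mathbf{I}$ and the matrix is genuinely symmetric, giving $\|\mathbf{I} - \tfrac{p}{\beta}\mathbf{M}^\top\mathbf{M}\|_2 = \max_i|1 - \tfrac{p}{\beta}\lambda_i(\mathbf{M}^\top\mathbf{M})| < 1$ directly from $0 < p < 2$ and $0 \le \lambda_i \le \beta$. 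I would present the proof under whichever reading the surrounding text supports — most likely treating the symmetric/commuting case or stating the bound in the natural weighted norm — and flag that for general diagonal $P_t^k$ the spectral-norm statement should be read with the similarity caveat, the essential point being that all eigenvalues of the iteration matrix lie strictly inside the unit disk, which is what the later convergence arguments actually use.
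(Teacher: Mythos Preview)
Your approach and the paper's differ substantially, and yours is the more careful one. The paper's argument is very short: it writes $\|\mathbf{I}-\tfrac{1}{\beta}\mathcal{D}(P_t^k)\mathbf{M}^\top\mathbf{M}\|_2 = \max_{x}\tfrac{x^\top(\mathbf{I}-\tfrac{1}{\beta}\mathcal{D}(P_t^k)\mathbf{M}^\top\mathbf{M})x}{x^\top x}$ and then plugs in eigenvectors $v_i$ of $\mathbf{M}^\top\mathbf{M}$ to obtain $1-\tfrac{\sigma_i}{\beta}v_i^\top\mathcal{D}(P_t^k)v_i \leq 1$. But the Rayleigh-quotient identity it invokes characterizes the spectral norm only for symmetric matrices, whereas $\mathcal{D}(P_t^k)\mathbf{M}^\top\mathbf{M}$ is not symmetric for non-constant $P_t^k$; testing only on eigenvectors of $\mathbf{M}^\top\mathbf{M}$ does not bound the supremum over all $x$; and the argument addresses neither strictness nor the lower side $\geq -1$. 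Your similarity route via $D^{1/2}$ and the $D^{-1}$-weighted norm does correctly locate the eigenvalues of the iteration matrix in $(-1,1]$.

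Your caveat about converting the weighted-norm bound back to $\|\cdot\|_2$ is not over-caution; it is a genuine obstruction that the paper's proof does not resolve either. In the paper's setting $d>b$, so $\mathbf{M}^\top\mathbf{M}$ is singular and $\mathbf{I}-\tfrac{1}{\beta}\mathcal{D}(P_t^k)\mathbf{M}^\top\mathbf{M}$ has $1$ as an eigenvalue, already ruling out the strict inequality. And for unbalanced $P_t^k$ one can push the spectral norm above~$1$: e.g.\ $\mathbf{M}^\top\mathbf{M}=\bigl(\begin{smallmatrix}1&0.9\\0.9&1\end{smallmatrix}\bigr)$, $\beta=1.9$, $P_t^k=(1.9,0.1)$ gives $\|\mathbf{I}-\tfrac{1}{\beta}\mathcal{D}(P_t^k)\mathbf{M}^\top\mathbf{M}\|_2\approx 1.3$. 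So the literal $\|\cdot\|_2<1$ claim is false in general; what both arguments legitimately deliver is the spectral-radius bound (equivalently, contraction in the $D^{-1}$-norm), and the later submultiplicative uses of this lemma should be read with exactly the caveat you already flag.
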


\begin{proof}
   
   Suppose eigenvalues and eigenvectors of $\mathbf{M}^\top\mathbf{M}$ are $\sigma_i$ and $v_i$, $i \in [1, \dots, N*d]$ respectively, we calculate:
   \begin{equation*}
      \tfrac{1}{\beta} \mathcal{D}(P_t^{k})\mathbf{M}^\top\mathbf{M} v_i = \tfrac{\sigma_i }{\beta} \mathcal{D}(P_t^{k}) v_i.
   \end{equation*}

   Due to $0 < P_t^{k} < 2$, we have following spectral norm definition:
   \begin{equation*}
      \big\|\mathbf{I}- \tfrac{1}{\beta} \mathcal{D}(P_t^{k})\mathbf{M}^\top\mathbf{M}\big\|_2 \\
      = \max_{x\in\mathbb{R}^d} \frac{x^\top (\mathbf{I}- \tfrac{1}{\beta} \mathcal{D}(P_t^{k})\mathbf{M}^\top\mathbf{M}) x}{x^\top x}
   \end{equation*}
   Then, by taking $x = v_i$, we calculate:
   \begin{equation*}
      \begin{aligned}
         v_i^\top (\mathbf{I}- \tfrac{1}{\beta} \mathcal{D}(P_t^{k})\mathbf{M}^\top\mathbf{M}) v_i 
         =  1- \tfrac{1}{\beta} v_i^\top\mathcal{D}(P_t^{k})\mathbf{M}^\top\mathbf{M}v_i 
         = 1- \tfrac{\sigma_i}{\beta} v_i^\top \mathcal{D}(P_t^{k}) v_i
         \stackrel{\text{\textcircled{1}}}{\leq} 1,
      \end{aligned}
   \end{equation*}
   where \textcircled{1} is due to $0 < P_t^{k} < 2$.
\end{proof}

\begin{remark}
   In our design, we ensure $0 < P_t^{k} < 2$ by an activation function $2\sigma$ at the output layer.
\end{remark}

\paragraph{Bound $\| \mathcal{D}(P_{t}^{k}) \|_2, \forall k, t$.} 
Similar to the bound of $\big\|\mathbf{I}- \tfrac{1}{\beta} \mathcal{D}(P_t^{k})\mathbf{M}^\top\mathbf{M}\big\|_2, \forall k,t$, due to the Sigmoid function, we directly have:
\begin{lemma}\label{lemma:p_bound2}
   Suppose $0 < P_t^{k} < 2$, we have the following bound:
   \begin{equation} \label{eq:p_bound2}
      \| \mathcal{D}(P_{t}^{k}) \|_2 < 2.
   \end{equation}
\end{lemma}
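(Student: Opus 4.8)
The plan is to use the fact that, by the definition of the operator $\mathcal{D}(\cdot)$, the matrix $\mathcal{D}(P_t^{k})$ is diagonal (hence symmetric), with diagonal entries equal precisely to the coordinates of the vector $P_t^{k}$. For a symmetric diagonal matrix the spectral norm equals the spectral radius, so $\|\mathcal{D}(P_t^{k})\|_2 = \max_i \big|[P_t^{k}]_i\big|$. Alternatively, one can mimic the Rayleigh-quotient argument in the proof of \Cref{lemma:p_bound1}: evaluate the quadratic form $\tfrac{x^\top \mathcal{D}(P_t^{k}) x}{x^\top x}$ at the standard basis vectors $e_i$, which returns exactly the coordinate values $[P_t^{k}]_i$, so the maximum over all unit vectors is the maximum coordinate magnitude.

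Next I would invoke the hypothesis $0 < P_t^{k} < 2$, which in the Math-L2O architecture of \Cref{eq:NN_mathl2o} is a structural fact rather than an added assumption: the output layer applies the activation $2\sigma$ with $\sigma$ the logistic sigmoid, whose range is the open interval $(0,1)$, so every coordinate $[P_t^{k}]_i = 2\sigma\big((W_L G_{L-1,t})_i\big) \in (0,2)$. Hence $\big|[P_t^{k}]_i\big| < 2$ for every coordinate $i$, and since the maximum is over a finite index set the strict inequality is preserved, yielding $\|\mathcal{D}(P_t^{k})\|_2 = \max_i \big|[P_t^{k}]_i\big| < 2$.

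There is essentially no obstacle in this argument; the only point deserving minor care is the strictness of the bound, which is safe precisely because the sigmoid never attains the value $1$ and the index set is finite, so the maximum of $|[P_t^{k}]_i|$ is itself strictly below $2$ rather than merely bounded above by it. This is the simpler analogue of the treatment of $\big\|\mathbf{I} - \tfrac{1}{\beta}\mathcal{D}(P_t^{k})\mathbf{M}^\top\mathbf{M}\big\|_2$ carried out in \Cref{lemma:p_bound1}.
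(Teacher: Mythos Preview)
Your proposal is correct and follows exactly the same approach as the paper, which simply states that since $\mathcal{D}$ is the diagonalization operation and $0 < P_t^{k} < 2$, the bound $\|\mathcal{D}(P_t^{k})\|_2 < 2$ follows directly. Your version is in fact more detailed than the paper's one-line justification, explicitly noting that the spectral norm of a diagonal matrix equals the maximum absolute diagonal entry and that the strict inequality survives because the index set is finite.
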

\begin{proof}
   Since $\mathcal{D}$ is the diagonalization operation and $0 < P_t^{k} < 2$, we directly have $\| \mathcal{D}(P_{t}^{k}) \|_2 < 2$.
\end{proof}

Besides, we can derive another bound from the Lipschitz property for the Sigmoid activation function:
\begin{equation} \label{eq:p_bound_Lipchitz}
   \begin{aligned}
      \|\mathcal{D}(P_{t}^{k})\|_2 
      = &\|2 \sigma (
         \relu(
         \relu (
               [X_{t-1}^{k}, \mathbf{M}^\top(\mathbf{M}X_{t-1}^{k} - Y)]{W_1^{k}}^\top 
               ) \cdots  {W_{L-1}^{k}}^\top) {W_{L}^{k}}^\top 
      )\|_\infty, \\
      \stackrel{\text{\textcircled{1}}}{\leq} &\tfrac{1}{2} \|[X_{t-1}^{k}, \mathbf{M}^\top(\mathbf{M}X_{t-1}^{k} - Y)] \|_2 \mathsmaller{\prod}_{s=1}^{L-1} \|W_{s}^{k}\|_2 + 1, \\
      \stackrel{\text{\textcircled{2}}}{\leq} &\tfrac{1}{2} (\|X_{t}^{k}\|_2 + \|\mathbf{M}^\top(\mathbf{M}X_{t}^{k} - Y) \|_2) \mathsmaller{\prod}_{s=1}^{L-1} \|W_{s}^{k}\|_2 + 1.
   \end{aligned}
\end{equation}
\textcircled{1} is from equation (17), Lemma 4.2 of~\cite{nguyen2020global}. \textcircled{2} is from triangle inequality.

\begin{remark}
   In contrast to the Lipschitz continuous property of ReLU, the aforementioned bound associated with the Sigmoid function prevents the derivation of meaningful numerical results. To analyze the convergence rate of Gradient Descent (GD), a tighter bound on the neural network's output is required. One potential alternative is the convex cone defined by $W_{L}^{k}$ for the last hidden layer. However, such a cone spans an unbounded space for the set of learnable parameters.
\end{remark}

\paragraph{Bound Semi-Smoothness of NN's Output, i.e., $\| \mathcal{D}(P_t^{k+1}) - \mathcal{D}(P_t^{k}) \|_2$, $\forall k,t$. } 
Since our L2O model is a coordinate-wise model~\cite{liu2023towards}, suppose $P_i = \alpha_i (P_t^{k+1})_i + (1-\alpha_i) (P_t^{k})_i$, $\alpha_p \in [0,1]$, based on Mean Value Theorem, we have $(\mathcal{D}(P_t^{k+1}) - \mathcal{D}(P_t^{k}))_i = \tfrac{\partial F}{P_i} ((P_t^{k+1})_i - (P_t^{k})_i).$ Thus, we bound $\| \mathcal{D}(P_t^{k+1}) - \mathcal{D}(P_t^{k}) \|_2$ by the following lemma:
\begin{lemma}\label{lemma:p_smooth}
   Denote $j\in [L]$, for some $\bar{\lambda}_j \in \mathbb{R}$, 
   we assume $\| W_j^{k+1} \|_2 \leq \bar{\lambda}_j$. 
   Using quantities from \Cref{eq:quantities}, we have:
   \begin{equation}
      \begin{aligned}
         & \|\mathcal{D}(P_t^{k+1}) - \mathcal{D}(P_t^{k})\|_2  \\
         \leq  & 
         \tfrac{1}{2} 
         (1+\beta)\| X_{t-1}^{k+1}-X_{t-1}^{k} \|_2 \Theta_{L} \\
         & + \tfrac{1}{2}
            (\| X_{t-1}^{k}\|_2 + \|\mathbf{M}^\top(\mathbf{M}X_{t-1}^{k} - Y) \|_2 )
            \Theta_{L} \mathsmaller{\sum}_{\ell=1}^{L} \bar{\lambda}_{\ell}^{-1} \|W_{\ell}^{k+1} - W_{\ell}^{k} \|_2.
      \end{aligned}
   \end{equation}
\end{lemma}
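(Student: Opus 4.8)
The plan is to reduce the spectral-norm bound on $\mathcal{D}(P_t^{k+1})-\mathcal{D}(P_t^{k})$ to an $\ell_2$-norm bound on the vector $P_t^{k+1}-P_t^{k}$, and then split the perturbation into a change of the network's input feature and a change of its weights. Since $\mathcal{D}(\cdot)$ merely places a vector on the diagonal, $\mathcal{D}(P_t^{k+1})-\mathcal{D}(P_t^{k})=\mathcal{D}(P_t^{k+1}-P_t^{k})$, hence $\|\mathcal{D}(P_t^{k+1})-\mathcal{D}(P_t^{k})\|_2=\|P_t^{k+1}-P_t^{k}\|_\infty\le\|P_t^{k+1}-P_t^{k}\|_2$, so it suffices to bound the latter. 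Writing $v^{k}:=[X_{t-1}^{k},\,\mathbf{M}^\top(\mathbf{M}X_{t-1}^{k}-Y)]^\top$ for the (detached) input of the block at step $t$ (recall $\nabla F(X_{t-1})=\mathbf{M}^\top(\mathbf{M}X_{t-1}-Y)$) and $g_W$ for the map in \Cref{eq:NN_mathl2o}, I would insert the hybrid $g_{W^{k+1}}(v^{k})$ and apply the triangle inequality,
\[
\|P_t^{k+1}-P_t^{k}\|_2\le\bigl\|g_{W^{k+1}}(v^{k+1})-g_{W^{k+1}}(v^{k})\bigr\|_2+\bigl\|g_{W^{k+1}}(v^{k})-g_{W^{k}}(v^{k})\bigr\|_2 ,
\]
bounding the ``input-change'' term (fixed new weights) and the ``weight-change'' term (fixed old input) separately.

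For the input-change term, $g_W$ is a composition of the linear maps $W_\ell$ (operator norm $\le\bar{\lambda}_\ell$ by hypothesis), the ReLU activations ($1$-Lipschitz), and the single output nonlinearity $2\sigma$ (whose derivative is at most $1/2$, hence $\tfrac12$-Lipschitz); composing these gives a Lipschitz constant $\tfrac12\prod_{\ell=1}^{L}\bar{\lambda}_\ell=\tfrac12\Theta_L$ in the input, so this term is at most $\tfrac12\Theta_L\|v^{k+1}-v^{k}\|_2$. Since $v^{k+1}-v^{k}$ is the stacking of $X_{t-1}^{k+1}-X_{t-1}^{k}$ and $\mathbf{M}^\top\mathbf{M}(X_{t-1}^{k+1}-X_{t-1}^{k})$, its norm is at most the sum of the two blocks' norms, and $\|\mathbf{M}^\top\mathbf{M}\|_2\le\beta$ then gives $\|v^{k+1}-v^{k}\|_2\le(1+\beta)\|X_{t-1}^{k+1}-X_{t-1}^{k}\|_2$, which reproduces the first term of the claim.

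For the weight-change term I would telescope over layers: let $W^{(m)}$ agree with $W^{k+1}$ on layers $1,\dots,m$ and with $W^{k}$ on layers $m+1,\dots,L$, so $W^{(0)}=W^{k}$, $W^{(L)}=W^{k+1}$, and $g_{W^{k+1}}(v^{k})-g_{W^{k}}(v^{k})=\sum_{m=1}^{L}\bigl(g_{W^{(m)}}(v^{k})-g_{W^{(m-1)}}(v^{k})\bigr)$. The $m$-th summand perturbs only layer $m$: the common input to that layer has norm at most $\|v^{k}\|_2\prod_{\ell<m}\bar{\lambda}_\ell$, the perturbation $W_m^{k+1}-W_m^{k}$ is then passed through layer $m$'s activation ($1$-Lipschitz, or $\tfrac12$-Lipschitz when $m=L$) and through the remaining layers $m+1,\dots,L$ (combined Lipschitz constant $\tfrac12\prod_{\ell>m}\bar{\lambda}_\ell$ when $m<L$, trivial when $m=L$). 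Multiplying these factors, each summand is at most $\tfrac12\|v^{k}\|_2\bigl(\prod_{\ell\neq m}\bar{\lambda}_\ell\bigr)\|W_m^{k+1}-W_m^{k}\|_2=\tfrac12\|v^{k}\|_2\,\Theta_L\,\bar{\lambda}_m^{-1}\|W_m^{k+1}-W_m^{k}\|_2$; summing over $m$ and bounding $\|v^{k}\|_2\le\|X_{t-1}^{k}\|_2+\|\mathbf{M}^\top(\mathbf{M}X_{t-1}^{k}-Y)\|_2$ yields the second term. (The layers of $W^{(m)}$ inherited from $W^{k}$ are controlled by $\|W_\ell^{k}\|_2\le\bar{\lambda}_\ell$, which is available from the surrounding hypotheses, cf.\ the $\max_{k'\in\{k,k+1\}}$ assumption in \Cref{lemma:l2o_semi_smooth}.)

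The computations are routine; the point requiring care is keeping the ``new weights / old input'' split clean so each sub-bound invokes operator-norm bounds only for the matrices actually present in that sub-computation, and attributing the single factor of $\tfrac12$ — from the unique $2\sigma$ output nonlinearity — to exactly one place in each telescoped term (the output layer, whether it appears as $m=L$ in the telescope or in the tail $m+1,\dots,L$). I do not expect a genuine obstacle beyond this bookkeeping: the estimate is the standard feedforward perturbation bound, the only twist being that the block's input varies with the training iteration not only through $X_{t-1}$ but also through its gradient block, which merely contributes the extra $\beta$ factor via $\beta$-smoothness of $F$.
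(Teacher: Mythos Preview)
Your proposal is correct and follows essentially the same route as the paper: the paper also reduces to the $\ell_\infty$ (then $\ell_2$) difference of the pre-sigmoid outputs via the $\tfrac12$-Lipschitz bound on $2\sigma$, and then peels the network layer by layer using the $1$-Lipschitz property of $\relu$ together with $\|W_\ell\|_2\le\bar{\lambda}_\ell$, which is exactly your ``input-change plus telescoped weight-change'' decomposition written recursively. Your remark that the bound on $\|W_\ell^{k}\|_2$ is also needed (not only $\|W_\ell^{k+1}\|_2$) is apt---the paper's proof uses it as well (see their justification of step \textcircled{2}).
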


\begin{remark}
   The above lemma shows the output of NN is a ``mixed'' Lipschitz continuous on input feature and learnable parameters. The first term illustrates the Lipschitz property on input feature. The second term can be regarded as a Lipschitz property on learnable parameters with a stable input feature.
\end{remark}

\begin{proof}
   Due to Mean Value Theorem, we have:
   \begin{equation*}
      \begin{aligned}
         & \|\mathcal{D}(P_t^{k+1}) - \mathcal{D}(P_t^{k})\|_2  \\
         =& \|\mathcal{D}(2\sigma(
            \relu(\cdots
            \relu (
               [X_{t-1}^{k+1}, \mathbf{M}^\top(\mathbf{M}X_{t-1}^{k+1} - Y)]{W_1^{k+1}}^\top 
               )
            \cdots  
            {W_{L-1}^{k+1}}^\top)  W_{L}^{k+1}
         )) \\
         & - \mathcal{D}(2\sigma(
            \relu(\cdots
            \relu(
               [X_{t-1}^{k}, \mathbf{M}^\top(\mathbf{M}X_{t-1}^{k} - Y)] {W_1^{k}}^\top
               )
            \cdots  
            {W_{L-1}^{k}}^\top) W_{L}^{k}
         )) \|_2, \\
         \leq &  (2 \sigma(P_i) (1-\sigma(P_i)))_{\max}\\
         & 
            \| \relu(\cdots
            \relu(
               [X_{t-1}^{k+1}, \mathbf{M}^\top(\mathbf{M}X_{t-1}^{k+1} - Y)]{W_1^{k+1}}^\top 
               ) \cdots  
            {W_{L-1}^{k+1}}^\top) W_{L}^{k+1} \\
         &- \relu(\cdots
            \relu(
               [X_{t-1}^{k}, \mathbf{M}^\top(\mathbf{M}X_{t-1}^{k} - Y)] {W_1^{k}}^\top
               )
            \cdots  
            {W_{L-1}^{k}}^\top)  W_{L}^{k}
         \|_\infty , \\
         \leq & \tfrac{1}{2}\| 
            \relu(
            \relu(
               [X_{t-1}^{k+1}, \mathbf{M}^\top(\mathbf{M}X_{t-1}^{k+1} - Y)]{W_1^{k+1}}^\top 
               )
            \cdots
            {W_{L-1}^{k+1}}^\top )  W_{L}^{k+1}
         \\
         &\quad - \relu(
            \relu([X_{t-1}^{k}, \mathbf{M}^\top(\mathbf{M}X_{t-1}^{k} - Y)] {W_1^{k}}^\top)
            \cdots  
            {W_{L-1}^{k}}^\top)  W_{L}^{k}
         \|_\infty, \\
         \stackrel{\text{\textcircled{1}}}{\leq}  & \tfrac{1}{2}\| 
            \relu(\cdots  
            \relu(
               [X_{t-1}^{k+1}, \mathbf{M}^\top(\mathbf{M}X_{t-1}^{k+1} - Y)]{W_1^{k+1}}^\top 
               )\cdots
            {W_{L-1}^{k+1}}^\top ) 
            \\ 
            & \quad - 
            \relu(\cdots  
               \relu([X_{t-1}^{k}, \mathbf{M}^\top(\mathbf{M}X_{t-1}^{k} - Y)] {W_1^{k}}^\top ) \cdots
               {W_{L-1}^{k}}^\top) 
            \|_\infty \|W_{L}^{k+1}\|_2
         \\
         &+ \tfrac{1}{2} \| 
            \relu(\cdots  
            \relu([X_{t-1}^{k}, \mathbf{M}^\top(\mathbf{M}X_{t-1}^{k} - Y)] )
            {W_{L-1}^{k}}^\top) \|_2  \|W_{L}^{k+1} - W_{L}^{k}\|_2, \\
         \stackrel{\text{\textcircled{2}}}{\leq}  & \tfrac{1}{2}\| 
            \relu(\cdots  
            \relu(
               [X_{t-1}^{k+1}, \mathbf{M}^\top(\mathbf{M}X_{t-1}^{k+1} - Y)]{W_1^{k+1}}^\top 
               )\cdots
            {W_{L-2}^{k+1}}^\top ) {W_{L-1}^{k+1}}^\top
            \\ 
            & \quad - 
            \relu(\cdots  
               \relu([X_{t-1}^{k}, \mathbf{M}^\top(\mathbf{M}X_{t-1}^{k} - Y)] {W_1^{k}}^\top ) \cdots
               {W_{L-2}^{k}}^\top) {W_{L-1}^{k}}^\top
            \|_\infty \bar{\lambda}_L \\
         &+ \tfrac{1}{2} \| [X_{t-1}^{k}, \mathbf{M}^\top(\mathbf{M}X_{t-1}^{k} - Y)] \|_2 
         \mathsmaller{\prod}_{j=1}^{L-1} \bar{\lambda}_j
         \|W_{L}^{k+1} - W_{L}^{k}\|_2 ,\\
         \stackrel{\text{\textcircled{3}}}{\leq}  & \tfrac{1}{2}\| 
            \relu(\cdots  
            \relu(
               [X_{t-1}^{k+1}, \mathbf{M}^\top(\mathbf{M}X_{t-1}^{k+1} - Y)]{W_1^{k+1}}^\top 
               )\cdots
            {W_{L-2}^{k+1}}^\top ) 
            \\ 
            & \quad - 
            \relu(\cdots  
               \relu([X_{t-1}^{k}, \mathbf{M}^\top(\mathbf{M}X_{t-1}^{k} - Y)] {W_1^{k}}^\top ) \cdots
               {W_{L-2}^{k}}^\top) 
            \|_\infty \bar{\lambda}_{L-1} \bar{\lambda}_L \\
         &+ \tfrac{1}{2} \| [X_{t-1}^{k}, \mathbf{M}^\top(\mathbf{M}X_{t-1}^{k} - Y)] \|_2 
         \mathsmaller{\prod}_{j=1}^{L-1} \bar{\lambda}_j
         \|W_{L}^{k+1} - W_{L}^{k}\|_2 ,\\
         &+ \tfrac{1}{2} \| [X_{t-1}^{k}, \mathbf{M}^\top(\mathbf{M}X_{t-1}^{k} - Y)] \|_2 
         \mathsmaller{\prod}_{j=1}^{L-2} \bar{\lambda}_j \bar{\lambda}_L
         \|W_{L-1}^{k+1} - W_{L-1}^{k}\|_2 ,\\
         \stackrel{\text{\textcircled{4}}}{=}  & \tfrac{1}{2}\| 
            \relu(\cdots  
            \relu(
               [X_{t-1}^{k+1}, \mathbf{M}^\top(\mathbf{M}X_{t-1}^{k+1} - Y)]{W_1^{k+1}}^\top 
               )\cdots
            {W_{L-2}^{k+1}}^\top ) 
            \\ 
            & \quad - 
            \relu(\cdots  
               \relu([X_{t-1}^{k}, \mathbf{M}^\top(\mathbf{M}X_{t-1}^{k} - Y)] {W_1^{k}}^\top ) \cdots
               {W_{L-2}^{k}}^\top) 
            \|_\infty \bar{\lambda}_{L-1} \bar{\lambda}_L \\
         &+ \tfrac{1}{2} \| [X_{t-1}^{k}, \mathbf{M}^\top(\mathbf{M}X_{t-1}^{k} - Y)] \|_2 
         \Theta_{L} (\bar{\lambda}_L^{-1} \|W_{L}^{k+1} - W_{L}^{k}\|_2+ \bar{\lambda}_{L-1}^{-1} \|W_{L-1}^{k+1} - W_{L-1}^{k}\|_2),\\
         & \cdots ,\\
         \stackrel{\text{\textcircled{5}}}{\leq}  & \tfrac{1}{2}
            \| [X_{t-1}^{k+1}, \mathbf{M}^\top(\mathbf{M}X_{t-1}^{k+1} - Y)] - [X_{t-1}^{k}, \mathbf{M}^\top(\mathbf{M}X_{t-1}^{k} - Y)] \|_2 
            \Theta_{L}  \\
         & + \tfrac{1}{2}
         \| [X_{t-1}^{k}, \mathbf{M}^\top(\mathbf{M}X_{t-1}^{k} - Y)] \|_2 
         \Theta_{L} \Big(\mathsmaller{\sum}_{\ell=1}^{L} \bar{\lambda}_{\ell}^{-1} \|W_{\ell}^{k+1} - W_{\ell}^{k} \|_2\Big),\\
         \stackrel{\text{\textcircled{6}}}{\leq}  & \tfrac{1}{2} 
         (1+\beta)\| X_{t-1}^{k+1}-X_{t-1}^{k} \|_2 \Theta_{L} \\
         & + \tfrac{1}{2}
            (\| X_{t-1}^{k}\|_2 + \|\mathbf{M}^\top(\mathbf{M}X_{t-1}^{k} - Y) \|_2 )
            \Theta_{L} \Big(\mathsmaller{\sum}_{\ell=1}^{L} \bar{\lambda}_{\ell}^{-1} \|W_{\ell}^{k+1} - W_{\ell}^{k} \|_2\Big).
      \end{aligned}
   \end{equation*}
   \textcircled{1} is due to triangle and Cauchy Schwarz inequalities, where we make a upper bound relaxation from $\infty$-norm to $2$-norm.
   \textcircled{2} is due to 1-Lipschitz property of ReLU and $\max(\| W_{L}^{k+1} \|_2, \| W_{L}^{k} \|_2) \leq \bar{\lambda}_L$ in the definition. It is note-worthy that any activations with constant-Lipchitz properties can be applied. 
   \textcircled{3} is due to triangle and Cauchy Schwarz inequalities as well. We make a arrangement in \textcircled{4} and eliminate inductions in $\cdots$. 
   In \textcircled{5}. we make another upper bound relaxation from $\infty$-norm to $2$-norm.
   \textcircled{6} is due to triangle inequality, the definition of Frobenius norm, and $ \| \mathbf{M}^\top\mathbf{M} \|_2 \leq L $ of objective's L-smooth property.
\end{proof}

\paragraph{Semi-Smoothness of Inner Output of NN, i.e., Bound $\|G_{\ell,t}^{a}-G_{\ell,t}^{b}\|_2, \ell \in [L-1], \forall a,b,t$.} 

\begin{lemma}\label{lemma:g_semi_smooth}
   Denote $\ell \in [L-1]$, for some $\bar{\lambda}_\ell \in \mathbb{R}$, 
   we assume $\max(\| W_\ell^{a} \|_2, \| W_\ell^{b} \|_2) \leq \bar{\lambda}_\ell$. Using quantities from \Cref{eq:quantities}, we have:
   \begin{equation*}
      \begin{aligned}
         \|G_{\ell,t}^{a}-G_{\ell,t}^{b}\|_2 
         \leq 
         & (1+\beta)\| X_{t-1}^{a}-X_{t-1}^{b} \|_2
            \mathsmaller{\prod}_{j=1}^{\ell} \bar{\lambda}_j \\
         & + (\| X_{t-1}^{b}\|_2 + \|\mathbf{M}^\top(\mathbf{M}X_{t-1}^{b} - Y) \|_2 ) 
            \mathsmaller{\prod}_{j=1}^{\ell} \bar{\lambda}_j \mathsmaller{\sum}_{s=1}^{\ell} \bar{\lambda}_s^{-1} \|W_{s}^{a} - W_{s}^{b} \|_2.
      \end{aligned}
   \end{equation*}
\end{lemma}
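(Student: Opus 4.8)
The plan is to induct on the layer index $\ell$, using the layer recursion $G_{\ell,t} = \relu(W_\ell G_{\ell-1,t})$ from \Cref{eq:NN_mathl2o} together with the $1$-Lipschitz property of $\relu$ and the fact that $\relu(\mathbf{0}) = \mathbf{0}$. This is precisely the truncated-depth analogue of the telescoping argument already carried out in the proof of \Cref{lemma:p_smooth}; the only extra bookkeeping is to confirm that stopping the telescope at layer $\ell$, rather than running it through the final $2\sigma$ layer, produces exactly the product $\prod_{j=1}^{\ell}\bar\lambda_j$ and the partial sum $\sum_{s=1}^{\ell}$ appearing in the statement.

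For the base case $\ell = 0$, observe that $G_{0,t} = [X_{t-1},\, \nabla F(X_{t-1})]^\top$ with $\nabla F(X) = \mathbf{M}^\top(\mathbf{M}X - Y)$, so the label term $-\mathbf{M}^\top Y$ cancels in the difference and $G_{0,t}^{a} - G_{0,t}^{b} = [X_{t-1}^{a}-X_{t-1}^{b},\, \mathbf{M}^\top\mathbf{M}(X_{t-1}^{a}-X_{t-1}^{b})]^\top$; hence $\|G_{0,t}^{a}-G_{0,t}^{b}\|_2 \le (1+\|\mathbf{M}^\top\mathbf{M}\|_2)\|X_{t-1}^{a}-X_{t-1}^{b}\|_2 \le (1+\beta)\|X_{t-1}^{a}-X_{t-1}^{b}\|_2$, which is the claimed bound at $\ell = 0$ (empty product equal to $1$, empty sum equal to $0$). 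I will also record the auxiliary norm estimate $\|G_{\ell,t}^{b}\|_2 \le \big(\prod_{j=1}^{\ell}\bar\lambda_j\big)\big(\|X_{t-1}^{b}\|_2 + \|\mathbf{M}^\top(\mathbf{M}X_{t-1}^{b}-Y)\|_2\big)$, obtained by unrolling $\relu(W_j\,\cdot\,)$ using $\|W_j^{b}\|_2 \le \bar\lambda_j$ and $\relu$ being $1$-Lipschitz through $0$ — the same chain of inequalities already used for \Cref{eq:p_bound_Lipchitz}.

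For the inductive step I would split
\[
   G_{\ell,t}^{a} - G_{\ell,t}^{b} = \big(\relu(W_\ell^{a} G_{\ell-1,t}^{a}) - \relu(W_\ell^{a} G_{\ell-1,t}^{b})\big) + \big(\relu(W_\ell^{a} G_{\ell-1,t}^{b}) - \relu(W_\ell^{b} G_{\ell-1,t}^{b})\big),
\]
apply $1$-Lipschitzness of $\relu$ and submultiplicativity of the spectral norm to get $\|G_{\ell,t}^{a}-G_{\ell,t}^{b}\|_2 \le \|W_\ell^{a}\|_2\,\|G_{\ell-1,t}^{a}-G_{\ell-1,t}^{b}\|_2 + \|W_\ell^{a}-W_\ell^{b}\|_2\,\|G_{\ell-1,t}^{b}\|_2$, then bound $\|W_\ell^{a}\|_2 \le \bar\lambda_\ell$ and substitute the auxiliary estimate for $\|G_{\ell-1,t}^{b}\|_2$ and the inductive hypothesis for $\|G_{\ell-1,t}^{a}-G_{\ell-1,t}^{b}\|_2$. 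Multiplying the inductive bound by $\bar\lambda_\ell$ promotes every $\prod_{j=1}^{\ell-1}\bar\lambda_j$ to $\prod_{j=1}^{\ell}\bar\lambda_j$, while the new term $\|W_\ell^{a}-W_\ell^{b}\|_2\,\|G_{\ell-1,t}^{b}\|_2$ rewrites (using $\prod_{j=1}^{\ell-1}\bar\lambda_j = \bar\lambda_\ell^{-1}\prod_{j=1}^{\ell}\bar\lambda_j$) as $\bar\lambda_\ell^{-1}\|W_\ell^{a}-W_\ell^{b}\|_2$ times $\big(\prod_{j=1}^{\ell}\bar\lambda_j\big)\big(\|X_{t-1}^{b}\|_2 + \|\mathbf{M}^\top(\mathbf{M}X_{t-1}^{b}-Y)\|_2\big)$, i.e. exactly the missing $s=\ell$ summand; this extends the partial sum from $\sum_{s=1}^{\ell-1}$ to $\sum_{s=1}^{\ell}$ and closes the induction for all $\ell \in [L-1]$.

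I do not anticipate a genuine obstacle: the result is structurally identical to \Cref{lemma:p_smooth} with the outer $2\sigma$ layer (and its $\tfrac12$ Lipschitz constant) removed. The only points deserving care are (i) the cancellation of the $-\mathbf{M}^\top Y$ term, so that the $\ell = 0$ difference is controlled purely by $(1+\beta)\|X_{t-1}^{a}-X_{t-1}^{b}\|_2$ via $\|\mathbf{M}^\top\mathbf{M}\|_2 \le \beta$, and (ii) the index bookkeeping that makes each telescoped layer contribute exactly one factor $\bar\lambda_\ell$ to the product and one $\bar\lambda_s^{-1}\|W_s^{a}-W_s^{b}\|_2$ term to the sum. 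The hypothesis $\max(\|W_\ell^{a}\|_2,\|W_\ell^{b}\|_2) \le \bar\lambda_\ell$ covers both the $\|W_\ell^{a}\|_2 \le \bar\lambda_\ell$ step in the telescope and the $\|W_j^{b}\|_2 \le \bar\lambda_j$ bounds in the auxiliary estimate.
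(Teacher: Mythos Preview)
Your proposal is correct and follows essentially the same route as the paper: the paper explicitly notes that this lemma is a ``degenerated version'' of \Cref{lemma:p_smooth} and reuses the same layer-by-layer telescoping (triangle inequality plus $1$-Lipschitzness of $\relu$ and $\|W_j\|_2\le\bar\lambda_j$) that you formalize as an induction on $\ell$. Your bookkeeping of the base case, the auxiliary bound on $\|G_{\ell-1,t}^{b}\|_2$, and the way the new $s=\ell$ summand appears is exactly the content of steps \textcircled{1}--\textcircled{6} in the proof of \Cref{lemma:p_smooth}, truncated at layer $\ell$.
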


\begin{proof}
   Since the bounding target in \Cref{lemma:g_semi_smooth} is a degenerated version of that in \Cref{lemma:p_smooth}. 
   Similar to the proof of \Cref{lemma:p_smooth}, we calculate:
   \begin{equation*}
      \begin{aligned}
         & \|G_{\ell,t}^{a}-G_{\ell,t}^{b}\|_2  \\
         =& \| \relu(
            \relu (
               [X_{t-1}^{a}, \mathbf{M}^\top(\mathbf{M}X_{t-1}^{a} - Y)]{W_1^{a}}^\top 
               )
            \cdots  
            {W_{\ell}^{a}}^\top)\\
         & - \relu(
            \relu(
               [X_{t-1}^{b}, \mathbf{M}^\top(\mathbf{M}X_{t-1}^{b} - Y)]{W_1^{b}}^\top
               )
            \cdots  
            {W_{\ell}^{b}}^\top)\|_2, \\
         \leq  & 
            \| [X_{t-1}^{a}, \mathbf{M}^\top(\mathbf{M}X_{t-1}^{a} - Y)] - [X_{t-1}^{b}, \mathbf{M}^\top(\mathbf{M}X_{t-1}^{b} - Y)] \|_2 
            \mathsmaller{\prod}_{j=1}^{\ell} \bar{\lambda}_j \\
         & + \| [X_{t-1}^{b}, \mathbf{M}^\top(\mathbf{M}X_{t-1}^{b} - Y)] \|_2 
            \mathsmaller{\prod}_{j=1}^{\ell} \bar{\lambda}_j \mathsmaller{\sum}_{s=1}^{\ell} \bar{\lambda}_s^{-1} \|W_{s}^{a} - W_{s}^{b} \|_2, \\
         \leq  & 
            (1+\beta)\| X_{t-1}^{a}-X_{t-1}^{b} \|_2
            \mathsmaller{\prod}_{j=1}^{\ell} \bar{\lambda}_j \\
            & + (\| X_{t-1}^{b}\|_2 + \|\mathbf{M}^\top(\mathbf{M}X_{t-1}^{b} - Y) \|_2 ) 
            \mathsmaller{\prod}_{j=1}^{\ell} \bar{\lambda}_j \mathsmaller{\sum}_{s=1}^{\ell} \bar{\lambda}_s^{-1} \|W_{s}^{a} - W_{s}^{b} \|_2.
      \end{aligned}
   \end{equation*}
\end{proof}

\paragraph{Bound NN's Inner Output $G_{l,t}^k, l = [L-1]$, $\forall k,t$.}

\begin{lemma}\label{lemma:G_bound}
   Denote $\ell \in [L-1]$, for some $\bar{\lambda}_\ell \in \mathbb{R}$, 
   we assume $\| W_\ell^k \|_2 \leq \bar{\lambda}_\ell$. Using quantities from \Cref{eq:quantities}, we have:

   \begin{equation*}
      \begin{aligned}
         \|G_{\ell,t}^{k}\|_2 
         \leq \big((1+\beta)\| X_0 \|_2 + \big(2t-1 + \tfrac{2t-2}{\beta}\big) \|\mathbf{M}^\top Y \|_2 \big) \mathsmaller{\prod}_{s=1}^{\ell} \bar{\lambda}_s.
      \end{aligned}
   \end{equation*}
\end{lemma}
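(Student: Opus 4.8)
The plan is to strip the $L-1$ ReLU layers off one at a time, reducing everything to a bound on the raw network input $G_{0,t}^k$, and then to control $\|G_{0,t}^k\|_2$ through the size of the current iterate $X_{t-1}^k$ and of its gradient.

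First I would show, by a short induction on $\ell\in[L-1]$, that $\|G_{\ell,t}^{k}\|_2 \le \big(\prod_{s=1}^{\ell}\bar\lambda_s\big)\|G_{0,t}^{k}\|_2$. Since $\relu(0)=0$ and $\relu$ acts coordinate-wise and is $1$-Lipschitz, we have $\|\relu(z)\|_2\le\|z\|_2$ for every vector $z$; combined with the recursion in \Cref{eq:NN_mathl2o} and the hypothesis $\|W_\ell^k\|_2\le\bar\lambda_\ell$ this gives $\|G_{\ell,t}^{k}\|_2=\|\relu(W_\ell^k G_{\ell-1,t}^k)\|_2\le\|W_\ell^k\|_2\,\|G_{\ell-1,t}^k\|_2\le\bar\lambda_\ell\|G_{\ell-1,t}^k\|_2$, and unrolling down to $\ell=0$ yields the claim.

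Next I would bound $\|G_{0,t}^{k}\|_2$. From $G_{0,t}^{k}=[X_{t-1}^k,\nabla F(X_{t-1}^k)]^\top$ and the triangle inequality, $\|G_{0,t}^{k}\|_2\le\|X_{t-1}^k\|_2+\|\nabla F(X_{t-1}^k)\|_2$. Using $\nabla F(X_{t-1}^k)=\mathbf{M}^\top\mathbf{M}X_{t-1}^k-\mathbf{M}^\top Y$ and $\|\mathbf{M}^\top\mathbf{M}\|_2\le\beta$, we get $\|\nabla F(X_{t-1}^k)\|_2\le\beta\|X_{t-1}^k\|_2+\|\mathbf{M}^\top Y\|_2$, hence $\|G_{0,t}^{k}\|_2\le(1+\beta)\|X_{t-1}^k\|_2+\|\mathbf{M}^\top Y\|_2$. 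I would then substitute the iterate bound $\|X_{t-1}^k\|_2\le\|X_0\|_2+\tfrac{2(t-1)}{\beta}\|\mathbf{M}^\top Y\|_2$, which is \Cref{lemma:X_t_bound} and which also follows directly from the closed form \Cref{eq:gd_oneline_x0} together with $\|\mathbf{I}-\tfrac1\beta\mathcal{D}(P_s^k)\mathbf{M}^\top\mathbf{M}\|_2<1$ (\Cref{lemma:p_bound1}) and $\|\mathcal{D}(P_s^k)\|_2<2$ (\Cref{lemma:p_bound2}), since the contractive product factors leave $\|X_0\|_2$ unchanged and each of the $t-1$ forcing terms contributes at most $\tfrac2\beta\|\mathbf{M}^\top Y\|_2$. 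Collecting the coefficient of $\|\mathbf{M}^\top Y\|_2$ via $(1+\beta)\tfrac{2t-2}{\beta}+1=2t-1+\tfrac{2t-2}{\beta}$ gives $\|G_{0,t}^{k}\|_2\le(1+\beta)\|X_0\|_2+\big(2t-1+\tfrac{2t-2}{\beta}\big)\|\mathbf{M}^\top Y\|_2$, and multiplying by $\prod_{s=1}^{\ell}\bar\lambda_s$ produces exactly the asserted bound.

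The argument is essentially routine once the iterate bound is available; there is no serious obstacle. The only point requiring care is the final arithmetic: one must use \Cref{lemma:X_t_bound} in its tight form (coefficient $2(t-1)$ at step $t-1$, rather than the slightly looser $\Phi_t$) so that the constant $2t-1+\tfrac{2t-2}{\beta}$ comes out exactly as stated.
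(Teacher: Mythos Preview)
Your proposal is correct and follows essentially the same route as the paper: peel off the $\ell$ ReLU layers using their $1$-Lipschitz property to reduce to $\|G_{0,t}^k\|_2\prod_{s=1}^{\ell}\bar\lambda_s$, apply the triangle inequality to split the input into $\|X_{t-1}^k\|_2+\|\nabla F(X_{t-1}^k)\|_2$, bound the gradient via $\beta$-smoothness, and then invoke \Cref{lemma:X_t_bound} with the coefficient $2(t-1)$ to obtain the stated constant after the same arithmetic simplification $(1+\beta)\tfrac{2(t-1)}{\beta}+1=2t-1+\tfrac{2t-2}{\beta}$. The paper's proof is a compact chain of inequalities doing exactly these steps, citing the layer-peeling bound from \cite{nguyen2020global} rather than writing out the induction, but the argument is the same.
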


\begin{proof}
   \begin{equation*}
      \begin{aligned}
         \|G_{\ell,t}^{k}\|_2 
         = &\|\relu(
            \relu (
                  [X_{t-1}^{k}, \mathbf{M}^\top(\mathbf{M}X_{t-1}^{k} - Y)]{W_1^{k}}^\top 
                  ) \cdots  {W_{\ell}^{k}}^\top)\|_2, \\
         \stackrel{\text{\textcircled{1}}}{\leq} &\|[X_{t-1}^{k}, \mathbf{M}^\top(\mathbf{M}X_{t-1}^{k} - Y)] \|_2 \mathsmaller{\prod}_{s=1}^{\ell} \|W_{s}^{k}\|_2, \\
         \stackrel{\text{\textcircled{2}}}{\leq} &(\|X_{t-1}^{k}\|_2 + \|\mathbf{M}^\top(\mathbf{M}X_{t-1}^{k} - Y) \|_2) \mathsmaller{\prod}_{s=1}^{\ell} \|W_{s}^{k}\|_2, \\
         \stackrel{\text{\textcircled{3}}}{\leq} &\big((1+\beta)\| X_0 \|_2 + \Big(\tfrac{(1+\beta) 2(t-1)}{\beta} + 1 \Big) \|\mathbf{M}^\top Y \|_2 \big) \mathsmaller{\prod}_{s=1}^{\ell} \|W_{s}^{k}\|_2,\\
         \leq &\big((1+\beta)\| X_0 \|_2 + \big(2t-1 + \tfrac{2t-2}{\beta}\big) \|\mathbf{M}^\top Y \|_2 \big) \mathsmaller{\prod}_{s=1}^{\ell} \bar{\lambda}_s.
      \end{aligned}
   \end{equation*}
   \textcircled{1} is from equation (17), Lemma 4.2 of~\cite{nguyen2020global}. \textcircled{2} is from triangle inequality. \textcircled{3} is due to definition of $\beta$-smoothness of objective and upper bound of $\|X_t\|_2$ in \Cref{lemma:X_t_bound}.
\end{proof}

\subsubsection{Outputs of L2O are Bounded} \label{sec:x_bound}
Next, we establish bounds for the Math-L2O's outputs. 
Leveraging the momentum-free setting, we formulate the dynamics from $X_0$ to $X_t$ as a \textit{semi-linear} system, where parameters are non-linearly generated by the NN block (see \Cref{fig:math_l2o_nn}).
Application of the Cauchy-Schwarz and triangle inequalities to this system yields the following explicit bound.
\begin{lemma}[Bound on Math-L2O Output] \label{lemma:X_t_bound}
   For any training iteration $k$, the $t$-th output $X_t^k$ of Math-L2O (as per \Cref{eq:math_l2o}) is bounded by:
   $\|X_{t}^{k}\|_2 \leq \| X_0 \|_2 + \tfrac{2t}{\beta} \|\mathbf{M}^\top Y \|_2$.
\end{lemma}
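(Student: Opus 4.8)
The plan is a short induction on the optimization step $t$, driven by the two spectral-norm bounds already available (\Cref{lemma:p_bound1} and \Cref{lemma:p_bound2}). The base case $t=0$ is trivial, since $X_0^k=X_0$ for every training iteration $k$ and hence $\|X_0^k\|_2=\|X_0\|_2$. For the inductive step, I would rewrite the Math-L2O update \Cref{eq:math_l2o} using $\nabla F(X_{t-1})=\mathbf{M}^\top(\mathbf{M}X_{t-1}-Y)$ and the diagonalization operator $\mathcal{D}(\cdot)$, turning the Hadamard product into a matrix product:
\[
   X_t^k=\big(\mathbf{I}-\tfrac{1}{\beta}\mathcal{D}(P_t^k)\mathbf{M}^\top\mathbf{M}\big)X_{t-1}^k+\tfrac{1}{\beta}\mathcal{D}(P_t^k)\mathbf{M}^\top Y .
\]

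Applying the triangle inequality and sub-multiplicativity of the spectral norm to this identity, then invoking \Cref{lemma:p_bound1} to bound $\|\mathbf{I}-\tfrac{1}{\beta}\mathcal{D}(P_t^k)\mathbf{M}^\top\mathbf{M}\|_2<1$ (valid because the output activation $2\sigma$ forces $0<P_t^k<2$ and $\|\mathbf{M}^\top\mathbf{M}\|_2\le\beta$) and \Cref{lemma:p_bound2} to bound $\|\mathcal{D}(P_t^k)\|_2<2$, I obtain the one-step recursion
\[
   \|X_t^k\|_2\le\|X_{t-1}^k\|_2+\tfrac{2}{\beta}\|\mathbf{M}^\top Y\|_2 .
\]
Chaining this from step $t$ down to step $0$ yields $\|X_t^k\|_2\le\|X_0\|_2+\tfrac{2t}{\beta}\|\mathbf{M}^\top Y\|_2$, which is the claim; uniformity in $k$ is automatic since both auxiliary lemmas hold for all $k$. (An equivalent route is to start from the closed-form semi-linear representation in \Cref{eq:gd_oneline_x0} and bound the nested products term by term, each product being $\le 1$ by \Cref{lemma:p_bound1}; this gives the same constant but is slightly messier than the telescoping induction.)

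I do not expect a genuine obstacle here — the statement is a direct consequence of the norm bounds. The only two points requiring care are: (i) the matrix $\mathbf{I}-\tfrac{1}{\beta}\mathcal{D}(P_t^k)\mathbf{M}^\top\mathbf{M}$ is not symmetric in general, so the ``spectral norm below one'' fact must be taken from \Cref{lemma:p_bound1} as a black box rather than re-derived from eigenvalues; and (ii) the factor $2$ coming from $\|\mathcal{D}(P_t^k)\|_2<2$ is exactly what produces the coefficient $2t/\beta$ (rather than $t/\beta$) in the final bound.
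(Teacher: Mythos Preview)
Your proposal is correct and uses the same two ingredients as the paper (\Cref{lemma:p_bound1} and \Cref{lemma:p_bound2}); the only cosmetic difference is that the paper takes precisely the ``equivalent route'' you mention in parentheses, starting from the closed-form expansion \Cref{eq:gd_oneline_x0} and bounding each product by $1$ and each $\|\mathcal{D}(P_s^k)\|_2$ by $2$, rather than telescoping the one-step recursion. The resulting inequality and constants are identical.
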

\begin{proof}
   We calculate the upper bound based on the one-line formulation from $X_0$ in \Cref{eq:gd_oneline_x0}.
   \begin{equation*}
      \begin{aligned}
         &\|X_{t}^{k}\|_2 \\
         =& \Big\| \mathsmaller{\prod}_{s=t}^{1} (\mathbf{I} - \tfrac{1}{\beta} \mathcal{D}(P_s^{k}) \mathbf{M}^\top \mathbf{M}) X_0 + \tfrac{1}{\beta} \mathsmaller{\sum}_{s=1}^{t} \mathsmaller{\prod}_{j=t}^{s+1} (\mathbf{I} - \tfrac{1}{\beta} \mathcal{D}(P_s^{k}) \mathbf{M}^\top \mathbf{M}) \mathcal{D}(P_s^{k}) \mathbf{M}^\top Y \Big\|_2 \\ 
         \stackrel{\text{\textcircled{1}}}{\leq} & 
            \Big\| 
            \mathsmaller{\prod}_{s=1}^{t} (\mathbf{I} - \tfrac{1}{\beta} \mathcal{D}(P_s^{k}) \mathbf{M}^\top \mathbf{M}) X_0
            \Big\|_2 
            + \Big\| 
            \tfrac{1}{\beta} \mathsmaller{\sum}_{s=1}^{t} \mathsmaller{\prod}_{j=t}^{s+1} (\mathbf{I} - \tfrac{1}{\beta} \mathcal{D}(P_s^{k}) \mathbf{M}^\top \mathbf{M}) \mathcal{D}(P_s^{k}) \mathbf{M}^\top Y
            \Big\|_2 \\
         \stackrel{\text{\textcircled{2}}}{\leq} & 
            \mathsmaller{\prod}_{s=1}^{t} 
            \Big\|
            \mathbf{I} - \tfrac{1}{\beta} \mathcal{D}(P_s^{k}) \mathbf{M}^\top \mathbf{M}\Big\|_2 \| X_0 \|_2 \\
            & + 
            \tfrac{1}{\beta} \mathsmaller{\sum}_{s=1}^{t} \mathsmaller{\prod}_{j=t}^{s+1} 
            \Big\| \mathbf{I} - \tfrac{1}{\beta} \mathcal{D}(P_s^{k}) \mathbf{M}^\top \mathbf{M} \Big\|_2 \|\mathcal{D}(P_s^{k})\|_2 \|\mathbf{M}^\top Y
            \|_2, \\
         \stackrel{\text{\textcircled{3}}}{\leq} & 
            \| X_0 \|_2
            + 
            \tfrac{2}{\beta} \mathsmaller{\sum}_{s=1}^{t} \|\mathbf{M}^\top Y \|_2
         =  \| X_0 \|_2 + \tfrac{2t}{\beta} \|\mathbf{M}^\top Y \|_2,
      \end{aligned}
   \end{equation*}
   where \textcircled{1} is from the triangle inequality, \textcircled{2} is due to Cauchy Schwarz inequalities, and \textcircled{3} is due to \Cref{lemma:p_bound1} and \Cref{lemma:p_bound2}.
\end{proof}
This lemma demonstrates that Math-L2O outputs remain bounded independently of the training iteration $k$ and the specific learnable parameters.

\subsubsection{L2O is Semi-Smooth to Its Parameters}\label{sec:semi_smooth}

In this section, we treat the L2O model defined in \Cref{eq:math_l2o02T} and its corresponding neural network as functions of their learnable parameters. We then prove that these functions are semi-smooth with respect to these parameters. This property is foundational for establishing the convergence of the gradient descent algorithm, as its analysis inherently involves the relationship between parameters at adjacent iterations.

First, we give the following explicit formulation of $P$:
\begin{equation*}
   \begin{aligned}
      P_t^k &= 2\sigma({W_{L}^{k}} \relu (W_{L-1}^k (
            \cdots \relu 
            (W_1^k[X_{t-1}^k, \mathbf{M}^\top(\mathbf{M}X_{t-1}^k - Y)]^\top) \cdots
         )))^\top, \\
         & = 2\sigma(
            \relu(
               \cdots\relu([X_{t-1}^k, \mathbf{M}^\top(\mathbf{M}X_{t-1}^k - Y)] W_1^\top)\cdots
            {W_{L-1}^k}^\top)W_{L}^{k}
         ).
   \end{aligned}
\end{equation*}

Moreover, we present ReLU activation function with signal matrices defined in \Cref{sec:definition}. We denote $\cdot_{K}$ as the entry-wise product to the matrices, which is also equivalent to reshape a matrix to a vector then product a diagonal signal matrix and reshape back afterward. 
\begin{equation*}
   \begin{aligned}
      P_t^k &= 2\sigma({W_{L}^{k}} \mathbf{D}_{L-1} \cdot_{K} W_{L-1}^k (
            \cdots \mathbf{D}_{1} \cdot_{K} 
            (W_1^k[X_{t-1}^k, \mathbf{M}^\top(\mathbf{M}X_{t-1}^k - Y)]^\top) \cdots
         ))^\top, \\
         & = 2\sigma(
            (\cdots
            \cdots([X_{t-1}^k, \mathbf{M}^\top(\mathbf{M}X_{t-1}^k - Y)] W_1^\top)
            \cdot_{K} \mathbf{D}_{1} \cdots  
            ) 
            {W_{L-1}^k}^\top \cdot_{K} \mathbf{D}_{L-1} W_{L}^{k}
         ).
   \end{aligned}
\end{equation*}

\paragraph{Proof for \Cref{lemma:l2o_semi_smooth}.} We demonstrate the semi-smoothness of Math-L2O's output, i.e., bound $\|X_{t}^{k+1}-X_{t}^k\|_2$, $\forall k,t$

\begin{proof}
   Diverging from the approach in~\cite{nguyen2020global}, $X_{T}^{k+1}$ and $X_{T}^k$ are the outputs of a non-linear neural network corresponding to different inputs. A direct subtraction between these terms, as would be feasible in a linear-like system, is therefore intractable. Consequently, we must construct an upper bound for this difference. By applying a norm-based relaxation and utilizing the quantities defined in \Cref{eq:quantities}, we proceed with the following calculation:
   \begin{equation*}
      \begin{aligned}
         &\|X_{t}^{k+1}-X_{t}^k\|_2 \\
         = & \big\| 
         X_{t-1}^{k+1}  - \tfrac{1}{\beta} \mathcal{D}(P_t^{k+1})\big(\mathbf{M}^\top
         (\mathbf{M}X_{t-1}^{k+1}-Y
         )\big) 
         - \big(X_{t-1}^k - \tfrac{1}{\beta} \mathcal{D}(P_t^{k})(\mathbf{M}^\top(\mathbf{M}X_{t-1}^k -Y)) \big) \big\|_2, \\
         = & \Big\| 
            \Big(
               \mathbf{I}- \tfrac{1}{\beta} \mathcal{D}(P_t^{k+1})\mathbf{M}^\top\mathbf{M}
            \Big) X_{t-1}^{k+1} 
            - \Big(
               \mathbf{I}- \tfrac{1}{\beta} \mathcal{D}(P_t^{k})\mathbf{M}^\top\mathbf{M}
            \Big) X_{t-1}^{k} \\
            & \quad + \tfrac{1}{\beta} (\mathcal{D}(P_t^{k+1}) - \mathcal{D}(P_t^{k})) \mathbf{M}^\top Y \Big\|_2  \\
         \stackrel{\text{\textcircled{1}}}{\leq} & \Big\| 
            \Big(
                  \mathbf{I}- \tfrac{1}{\beta} \mathcal{D}(P_t^{k+1})\mathbf{M}^\top\mathbf{M}
            \Big) 
            - \Big(
               \mathbf{I}- \tfrac{1}{\beta} \mathcal{D}(P_t^{k})\mathbf{M}^\top\mathbf{M}
            \Big) \Big\|_2 \| X_{t-1}^{k+1} \|_2 \\
            & + \Big\| 
               \mathbf{I}- \tfrac{1}{\beta} \mathcal{D}(P_t^{k})\mathbf{M}^\top\mathbf{M}
            \Big\|_2 \| X_{t-1}^{k+1} - X_{t-1}^{k}\|_2 
            + \tfrac{1}{\beta} \|\mathbf{M}^\top Y \|_2 \|\mathcal{D}(P_t^{k+1}) - \mathcal{D}(P_t^{k}) \|_2, \\
         \stackrel{\text{\textcircled{2}}}{\leq} & 
            \| 
            \mathcal{D}(P_t^{k+1}) - \mathcal{D}(P_t^{k})
            \|_2 \| X_{t-1}^{k+1} \|_2+ \| X_{t-1}^{k+1} - X_{t-1}^{k}\|_2 
            + \tfrac{1}{\beta} \|\mathbf{M}^\top Y \|_2 \|\mathcal{D}(P_t^{k+1}) - \mathcal{D}(P_t^{k}) \|_2, \\
         \stackrel{\text{\textcircled{3}}}{\leq} & 
            \| 
            \mathcal{D}(P_t^{k+1}) - \mathcal{D}(P_t^{k})
            \|_2 (\| X_0 \|_2 + \tfrac{2t-2}{\beta} \|\mathbf{M}^\top Y \|_2) + \| X_{t-1}^{k+1} - X_{t-1}^{k}\|_2 \\
            & 
            + \tfrac{1}{\beta} \|\mathbf{M}^\top Y \|_2 \|\mathcal{D}(P_t^{k+1}) - \mathcal{D}(P_t^{k}) \|_2, \\
         = & (\| X_0 \|_2 + \tfrac{2t-1}{\beta} \|\mathbf{M}^\top Y \|_2)
            \| 
            \mathcal{D}(P_t^{k+1}) - \mathcal{D}(P_t^{k})
            \|_2 + \| X_{t-1}^{k+1} - X_{t-1}^{k}\|_2, \\
            \stackrel{\text{\textcircled{4}}}{\leq} & (\| X_0 \|_2 + \tfrac{2t-1}{\beta} \|\mathbf{M}^\top Y \|_2)
             \\
            & \Big( \tfrac{1}{2} 
            (1+\beta)\| X_{t-1}^{k+1}-X_{t-1}^{k} \|_2 \Theta_{L} \\
            & \quad + \tfrac{1}{2}
               (\| X_{t-1}^{k}\|_2 + \|\mathbf{M}^\top(\mathbf{M}X_{t-1}^{k} - Y) \|_2 )
               \Theta_{L} \mathsmaller{\sum}_{\ell=1}^{L} \bar{\lambda}_{\ell}^{-1} \|W_{\ell}^{k+1} - W_{\ell}^{k} \|_2
         \Big) \\
         & + \| X_{t-1}^{k+1} - X_{t-1}^{k}\|_2, \\
         =
         & \Big(
            1+ (\| X_0 \|_2 + \tfrac{2t-1}{\beta} \|\mathbf{M}^\top Y \|_2) \tfrac{1+\beta}{2} \Theta_{L}
          \Big)\| X_{t-1}^{k+1} - X_{t-1}^{k}\|_2, \\
         & + \tfrac{1}{2} (\| X_0 \|_2 + \tfrac{2t-1}{\beta} \|\mathbf{M}^\top Y \|_2) \\
         & \qquad (\| X_{t-1}^{k}\|_2 + \|\mathbf{M}^\top(\mathbf{M}X_{t-1}^{k} - Y) \|_2 )
         \Theta_{L} \mathsmaller{\sum}_{\ell=1}^{L} \bar{\lambda}_{\ell}^{-1} \|W_{\ell}^{k+1} - W_{\ell}^{k} \|_2, \\
         \stackrel{\text{\textcircled{5}}}{\leq} & 
         \tfrac{1}{2}
         \mathsmaller{\sum}_{s=1}^{t} 
         \Big(\mathsmaller{\prod}_{j=s+1}^{t} 
         \big(
            1+ (\| X_0 \|_2 + \tfrac{2j-1}{\beta} \|\mathbf{M}^\top Y \|_2) \tfrac{1+\beta}{2} \Theta_L
         \big)
         \Big)\\
         & \qquad
         \underbrace{(\| X_0 \|_2 + \tfrac{2s-1}{\beta} \|\mathbf{M}^\top Y \|_2) \big((1+\beta)\| X_0 \|_2  +  (2s-1 + \tfrac{2s-2}{\beta}) \|\mathbf{M}^\top Y \|_2 \big)}_{\Lambda_{s}} \\
         & \qquad \Theta_L
         \mathsmaller{\sum}_{\ell=1}^{L} \bar{\lambda}_{\ell}^{-1} \|W_{\ell}^{k+1} - W_{\ell}^{k} \|_2,
      \end{aligned}
   \end{equation*}
   where \textcircled{1} is from triangle inequality. \textcircled{2} is from \Cref{lemma:X_t_bound}. \textcircled{3} is due to inductive summation to $t=1$. \textcircled{4} is due to the semi-smoothness of NN's output in \Cref{lemma:p_smooth}. \textcircled{5} is from induction.

   \begin{remark}
      We note that the above upper bound relaxation is non-loose. Current existing approaches derive semi-smoothness in terms of NN functions, where parameters matrices are linearly applied and activation functions are Lipschitz continuous. However, in our setting under~\cite{liu2023towards}, the sigmoid activation is not Lipschitz continuous. Moreover, the input that is utilized to generate $X_t^{k+1}$ is from $X_{t-1}^{k+1}$, which is not identical to the $X_{t-1}^{k}$ for generating $X_{t-1}^{k}$.
   \end{remark}

\end{proof}

\subsubsection{Gradients are Bounded}\label{sec:gradient_bound}
In this section, we derive bound for the gradient of each layer's parameter at the given iteration $k$. 

\paragraph{Proof for \Cref{lemma:gradient_wp_bound}}
We demonstrate that the gradients of Math-L2O's each layer are bounded.
\begin{proof}
   For $\ell = L$, we calculate the gradient on $W_{L}^{k}$ (\Cref{eq:derivative_NN_simple_final_mathl2o_L_p}):
   \begin{equation*}
      \begin{aligned}
         & \big\|\tfrac{\partial F}{\partial W_{L}^{k}}\big\|_2 \\
         =& \tfrac{1}{\beta} \Big\|
         \mathsmaller{\sum}_{t=1}^{T} \big(\mathbf{M}^\top( \mathbf{M}{X_{T}^k} -Y)\big)^\top \\
         & \qquad \big(
            \mathsmaller{\prod}_{j=T}^{t+1} \mathbf{I} - \tfrac{1}{\beta} \mathcal{D}(P_j^k) \mathbf{M}^\top \mathbf{M} \big) 
         \mathcal{D} \big(\mathbf{M}^\top(\mathbf{M} X_{t-1}^k - Y) \big) \mathcal{D}\big(P_t^k \odot (1-P_t^k/2) \big)
            {G_{L-1,t}^k}^\top\Big\|_2,\\
         \stackrel{\text{\textcircled{1}}}{\leq} & \tfrac{1}{\beta}  \mathsmaller{\sum}_{t=1}^{T} \|
         \mathbf{M}^\top ( \mathbf{M} X_{T}^{k}  -Y)\|_2 
         \mathsmaller{\prod}_{j=T}^{t+1} \Big\| (\mathbf{I}_d - \tfrac{1}{\beta} \mathcal{D}(P_j^{k}) \mathbf{M}^\top \mathbf{M})\Big\|_2 \\
         & \qquad \qquad \quad 
         \|\mathcal{D} \big(\mathbf{M}^\top(\mathbf{M} X_{t-1}^k - Y) \big) \|_2 \|\mathcal{D} \big(P_{t}^{k} \odot (1-P_{t}^{k}/2)\big)\|_2 \|G_{L-1,t}^{k} \|_2,\\
         \stackrel{\text{\textcircled{2}}}{\leq} & 
         \tfrac{1}{2\sqrt{\beta}}  
         \| \mathbf{M} X_{T}^{k}  -Y \|_2 
         \mathsmaller{\sum}_{t=1}^{T} 
         (\|\mathbf{M}^\top\mathbf{M} X_{t-1}^k\|_2 + \| \mathbf{M}^\top Y \|_2) \|G_{L-1,t}^{k} \|_2,\\
         \stackrel{\text{\textcircled{3}}}{\leq} & 
         \tfrac{\sqrt{\beta}}{2}  
         \| \mathbf{M} X_{T}^{k}  -Y \|_2 \mathsmaller{\prod}_{\ell=1}^{L-1} \bar{\lambda}_\ell
         \mathsmaller{\sum}_{t=1}^{T} 
         \big((1+\beta)\| X_0 \|_2 + \big(2t-1 + \tfrac{2t-2}{\beta}\big) \|\mathbf{M}^\top Y \|_2 \big)\\
         & \quad ( \| X_0 \|_2 + \tfrac{2t-1}{\beta}\| \mathbf{M}^\top Y \|_2), \\
         = & 
         \tfrac{\sqrt{\beta}}{2}  
         \| \mathbf{M} X_{T}^{k}  -Y \|_2 \mathsmaller{\prod}_{\ell=1}^{L-1} \bar{\lambda}_\ell \mathsmaller{\sum}_{t=1}^{T} \\
         & \underbrace{(1+\beta)  \| X_0 \|_2^2 + \big( (4t-3)(1+\tfrac{1}{\beta} ) + 1\big) \| X_0 \|_2 \|\mathbf{M}^\top Y \|_2
         + \tfrac{(2T-1)(\beta(2T-1)+(2T-2))}{\beta^2}  \|\mathbf{M}^\top Y \|_2^2}_{\Lambda_{t}}, \\
         = & \tfrac{\sqrt{\beta}\Theta_{L} S_{\Lambda,T}}{2 \bar{\lambda}_{L}} \| \mathbf{M} X_{T}^{k}  -Y \|_2,
      \end{aligned}
   \end{equation*}
   where \textcircled{1} is from triangle and Cauchy-Schwarz inequalities. \textcircled{2} is from the bound of ``$p$'' in \Cref{lemma:p_bound1}. \textcircled{3} is from the bound of L2O model's output in \Cref{lemma:X_t_bound} and inner outputs in \Cref{lemma:G_bound}.

   For $\ell \in [L-1]$, 
   we calculate gradient on $W_{\ell}^{k}$ (\Cref{eq:derivative_NN_simple_final_mathl2o_l}) at iteration $k$  by:
   \begin{equation*}
      \begin{aligned}
         &\Big\|\tfrac{\partial F}{\partial W_{\ell}^{k}}\Big\|_2 \\
         =& \Big\|
         -\tfrac{1}{\beta} \mathsmaller{\sum}_{t=1}^{T} (\mathbf{M}^\top( \mathbf{M} X_{T}^k -Y))^\top 
         \big(
            \mathsmaller{\prod}_{j=T}^{t+1} \mathbf{I}_d - \tfrac{1}{\beta}\mathbf{M}^\top \mathbf{M} \mathcal{D}(P_j^k)
         \big) \\
         & \qquad \qquad \mathcal{D} \big(\mathbf{M}^\top(\mathbf{M} X_{t-1}^k - Y) \big) \mathcal{D} \big(P_{t}^k \odot (1-P_{t}^k/2)\big) (\mathbf{I}_{d} \otimes W_{L}^{k}) \\
         & \qquad \qquad \mathsmaller{\prod}_{j=L-1}^{\ell+1}\mathbf{D}_{j,t}^k \mathbf{I}_{d} \otimes W_j^k
            \mathbf{I}_{n_{\ell}} \otimes {G_{\ell-1,t}^k}^\top
         \Big\|_2,\\
         \stackrel{\text{\textcircled{1}}}{\leq} & 
         \tfrac{1}{\beta} \mathsmaller{\sum}_{t=1}^{T} 
            \| \mathbf{M}^\top( \mathbf{M} X_{T}^k -Y) \|_2 
            \mathsmaller{\prod}_{j=T}^{t+1} \|\mathbf{I}_d - \tfrac{1}{\beta}\mathbf{M}^\top \mathbf{M} \mathcal{D}(P_j^k)
            \|_2
            \| \mathcal{D} \big(\mathbf{M}^\top(\mathbf{M} X_{t-1}^k - Y) \big) \|_2 \\
            & \qquad \| 
            \mathcal{D} \big(P_{t}^k \odot (1-P_{t}^k/2)\big) (\mathbf{I}_{d} \otimes W_{L}^{k}) \|_2 \Big\|\mathsmaller{\prod}_{j=L-1}^{\ell+1}\mathbf{D}_{j,t}^k \mathbf{I}_{d} \otimes W_j^k
               \mathbf{I}_{n_{\ell}} \otimes {G_{\ell-1,t}^k}^\top
            \Big\|_2, \\
         \stackrel{\text{\textcircled{2}}}{\leq} & 
            \tfrac{\sqrt{\beta}}{2} 
            \| \mathbf{M} X_{T}^k -Y\|_2 
            \mathsmaller{\prod}_{j=\ell+1}^{L} \| W_j^k \|_2
            \mathsmaller{\sum}_{t=1}^{T} (\|\mathbf{M}^\top\mathbf{M} X_{t-1}^k\|_2 + \| \mathbf{M}^\top Y \|_2) \|G_{\ell-1,t}^{k} \|_2 ,\\
         \stackrel{\text{\textcircled{2}}}{=} & 
            \tfrac{\sqrt{\beta}}{2}  
            \| \mathbf{M} X_{T}^{k}  -Y \|_2 \mathsmaller{\prod}_{j=1, j\neq \ell}^{L} \bar{\lambda}_j \mathsmaller{\sum}_{t=1}^{T} \\
            &  \underbrace{(1+\beta)  \| X_0 \|_2^2 + \big( (4t-3)(1+\tfrac{1}{\beta} ) + 1\big) \| X_0 \|_2 \|\mathbf{M}^\top Y \|_2
            + \tfrac{(2T-1)(\beta(2T-1)+(2T-2))}{\beta^2}  \|\mathbf{M}^\top Y \|_2^2}_{\Lambda_{t}}, \\
         =& \tfrac{\sqrt{\beta}\Theta_{L} }{2 \bar{\lambda}_{\ell}} S_{\Lambda,T} \| \mathbf{M} X_{T}^{k}  -Y \|_2,
      \end{aligned}
   \end{equation*}
   \textcircled{1} is from triangle and Cauchy-Schwarz inequalities. Inequality \textcircled{2} is from bounds of ``$p$'' in \Cref{lemma:p_bound1} and we make a rearrangement in it. In inequality \textcircled{2}, we use  norm's triangle inequality of dot product and Kronecker product, bounds of NN's inner output in \Cref{lemma:G_bound}, and we calculate $\mathsmaller{\prod}_{j=1, j\neq \ell}^{L} \| W_j^k \|_2 = \mathsmaller{\prod}_{j=\ell+1}^{L} \| W_j^k \|_2 * \mathsmaller{\prod}_{s=1}^{\ell-1} \| W_j^k \|_2$. We reuse the result in the proof for the last layer's gradient upper bound for case $\ell = L$ in equality \textcircled{3} to get the final result.
\end{proof}

\subsection{Bound Linear Convergence Rate} \label{sec:linear_convergence_proof}

Now we are able to substitute the above formulation into three bounding targets in \Cref{eq:bounding_target2_split} and bound them one-by-one by the NTK theorem. We summarize the main idea of NTK theory before the proof. 
The main technique of NTK theory is the establishment of non-singularity of the kernel matrix by a wide-NN layer, where kernel matrix is for the gradient of loss to learnable parameters. This invokes the Polyak-Lojasiewicz condition (a more relaxed condition than strongly convex) for linear convergence. Due to the page limit, we eliminate the explicit formulation of kernel matrix in main page. Following the methodology in~\cite{nguyen2021proof}, the non-singularity of kernel matrix is established by $\sigma_{\min}(G^0_{L-1,T}) > 0$. It is guaranteed by the conditions in \Cref{theorem:linear_convergence} and implemented by the initialization strategy in \Cref{sec:init_strategy}.
\begin{proof}
   We start to prove the \Cref{theorem:linear_convergence} by proving the following lemma.
   \begin{lemma} \label{lemma:rate_induction}
      \begin{equation} \label{eq:inductions}
         \Bigg\{\begin{array}{l}
         \|W_{\ell}^r\|_2 \leq \bar{\lambda}_{\ell}, \quad \ell \in[L], \quad r \in[0, k], \\
         \sigma_{\min}(G_{L-1,T}^r) \geq \tfrac{1}{2} \alpha_0, \quad r \in[0, k], \\
         F([W]^{r}) \leq (1-\eta 4 \eta \tfrac{\beta_{0}^2}{\beta^2} \delta_4)^r F([W]^{0}), \quad r \in[0, k].
         \end{array}
      \end{equation}
   \end{lemma}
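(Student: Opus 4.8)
The plan is to prove \Cref{lemma:rate_induction} by a joint induction on the training iteration $k$, carrying all three invariants simultaneously. The base case $k=0$ is immediate: $\bar\lambda_\ell=\|W_\ell^0\|_2+C_\ell\ge\|W_\ell^0\|_2$ by the definition in \Cref{eq:quantities}; $\sigma_{\min}(G_{L-1,T}^0)=\alpha_0\ge\tfrac12\alpha_0$ by the definition of $\alpha_0$; and the loss line reads $F([W]^0)\le F([W]^0)$. For the inductive step, assume the three statements hold for every $r\in[0,k]$ and establish them at $r=k+1$; only the new index $r=k+1$ needs attention.

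First I would handle the weight-norm invariant. Combining the GD update $W_\ell^{r+1}=W_\ell^r-\eta\,\partial F/\partial W_\ell^r$ with \Cref{lemma:gradient_wp_bound} (whose hypothesis $\max_{r'}\|W_\ell^{r'}\|_2\le\bar\lambda_\ell$ is exactly what the induction supplies) gives $\|W_\ell^{r+1}-W_\ell^r\|_2\le\eta\,\tfrac{\sqrt\beta\,\Theta_L S_{\Lambda,T}}{2\bar\lambda_\ell}\sqrt{2F([W]^r)}$ for $r\le k$. Telescoping, $\|W_\ell^{k+1}-W_\ell^0\|_2\le\sum_{r=0}^{k}\|W_\ell^{r+1}-W_\ell^r\|_2$, and inserting the induction hypothesis $F([W]^r)\le(1-c)^rF([W]^0)$ with $c=4\eta\tfrac{\beta_0^2}{\beta^2}\delta_4\alpha_0^2$ turns the sum into a geometric series in $\sqrt{1-c}$, so $\|W_\ell^{k+1}-W_\ell^0\|_2\le\eta\,\tfrac{\sqrt\beta\,\Theta_L S_{\Lambda,T}}{2\bar\lambda_\ell}\cdot\tfrac{\sqrt{2F([W]^0)}}{1-\sqrt{1-c}}$. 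The learning-rate ceilings \Cref{eq:learning_rate_upper_bound1,eq:learning_rate_upper_bound2} together with the $\alpha_0$-condition \Cref{eq:lb1_singular_value} are exactly calibrated so that this quantity is $\le C_\ell$, whence $\|W_\ell^{k+1}\|_2\le\|W_\ell^0\|_2+C_\ell=\bar\lambda_\ell$. Next, for the singular-value invariant I would use $\sigma_{\min}(G_{L-1,T}^{k+1})\ge\alpha_0-\|G_{L-1,T}^{k+1}-G_{L-1,T}^0\|_2$, bound the perturbation via \Cref{lemma:g_semi_smooth} (which also brings in $\|X_{T-1}^{k+1}-X_{T-1}^0\|_2$, itself controlled by \Cref{lemma:l2o_semi_smooth}) in terms of $\sum_\ell\bar\lambda_\ell^{-1}\|W_\ell^{k+1}-W_\ell^0\|_2$, and then feed in the weight-perturbation bound just obtained; the remaining lower bounds on $\alpha_0$ in \Cref{eq:lbs_singular_value} force the total perturbation below $\tfrac12\alpha_0$.

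The genuine content is the loss-decay invariant, proved by the NTK / Polyak--Lojasiewicz scheme adapted to the semi-smooth Math-L2O map, following \cite{nguyen2021proof}. First I would derive a one-step inequality $F([W]^{k+1})\le F([W]^k)-\eta\|\nabla_W F([W]^k)\|_2^2+(\text{$\eta^2$ remainder})$, where the remainder is controlled by the semi-smoothness of $X_T$ in its parameters (\Cref{lemma:l2o_semi_smooth}) together with the gradient upper bound (\Cref{lemma:gradient_wp_bound}), and is dominated by the first-order term once $\eta$ obeys \Cref{eq:learning_rate_upper_bound2}. Second, and this is the crux, I need a Polyak--Lojasiewicz-type lower bound $\|\nabla_W F([W]^k)\|_2^2\gtrsim\tfrac{\beta_0^2}{\beta^2}\delta_4\,\sigma_{\min}(G_{L-1,T}^k)^2\,F([W]^k)$ with constants matching \Cref{theorem:linear_convergence}: in \Cref{eq:derivative_NN_simple_final_mathl2o_L_p} the $t=T$ summand of $\partial F/\partial W_L$ has empty product factor, so it equals a residual-dependent matrix times $G_{L-1,T}^\top$ and hence (as $G_{L-1,T}^k$ is thin) has norm at least $\sigma_{\min}(G_{L-1,T}^k)$ times $\tfrac1\beta\|\Gamma_T^\top\mathcal{D}(\Gamma_T)\mathcal{D}(P_T\odot(1-P_T/2))\|_2$; this last factor is bounded below using $\|\Gamma_T\|_2^2=\|\mathbf{M}^\top(\mathbf{M}X_T^k-Y)\|_2^2\ge\beta_0\|\mathbf{M}X_T^k-Y\|_2^2=2\beta_0F([W]^k)$ (full row rank of $\mathbf{M}$) and the sigmoid-derivative estimate $\min_i[P_T\odot(1-P_T/2)]_i\ge\delta_4$, which follows because $\|X_T^k\|_2$ is bounded by \Cref{lemma:X_t_bound} so the pre-activation argument has modulus at most $\delta_3\Theta_L$. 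The remaining summands $t<T$ and the inner-layer gradients (\Cref{eq:derivative_NN_simple_final_mathl2o_l}, with its extra factor $\Xi_\ell$) are absorbed as perturbations using $\sigma_{\min}(G_{L-1,T}^k)\ge\tfrac12\alpha_0$ and the conditions \Cref{eq:lb2_singular_value,eq:lb3_singular_value,eq:lb4_singular_value}. Substituting $\sigma_{\min}(G_{L-1,T}^k)^2\ge\tfrac14\alpha_0^2$ into the descent inequality yields $F([W]^{k+1})\le(1-4\eta\tfrac{\beta_0^2}{\beta^2}\delta_4\alpha_0^2)F([W]^k)\le(1-4\eta\tfrac{\beta_0^2}{\beta^2}\delta_4\alpha_0^2)^{k+1}F([W]^0)$, closing the induction; \Cref{theorem:linear_convergence} is then the third line of \Cref{lemma:rate_induction} read at $r=k$.

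The step I expect to be hardest is exactly this gradient lower bound: because the Math-L2O forward map is a polynomial of degree $\Theta(LT)$ in the weights, one must show that the many summands over $t$ and the propagation factor $\Xi_\ell$ do not cancel the clean $t=T$, $\ell=L$ contribution, and this is precisely why four separate lower bounds on $\alpha_0$ in \Cref{eq:lbs_singular_value} appear — one to dominate each source of slack. A secondary delicate point is ensuring, in the weight-norm step, that the telescoped geometric series actually converges (i.e.\ $c<1$) and stays below $C_\ell$ uniformly in $k$, which is where the interplay of \Cref{eq:learning_rate_upper_bound1,eq:learning_rate_upper_bound2} with the $\alpha_0$ conditions must be tracked carefully.
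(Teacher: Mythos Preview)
Your treatment of the first two invariants---weight norms via the telescoped geometric series in $\sqrt{1-c}$, and the singular-value perturbation via \Cref{lemma:g_semi_smooth} combined with \Cref{lemma:l2o_semi_smooth}---matches the paper essentially line by line, including the use of \Cref{eq:lb1_singular_value}, \Cref{eq:lb2_singular_value} and \Cref{eq:lb4_singular_value} to close each bound.

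The loss-decay step, however, takes a genuinely different route from the paper, and your PL argument as written has a gap. You propose a descent inequality $F^{k+1}\le F^k-\eta\|\nabla_W F\|_2^2+O(\eta^2)$ followed by a PL lower bound on $\|\nabla_{W_L}F\|_2$. For the latter you isolate the $t=T$ summand in \Cref{eq:derivative_NN_simple_final_mathl2o_L_p} and claim its norm is at least $\sigma_{\min}(G_{L-1,T})\cdot\tfrac1\beta\|\Gamma_T^\top\mathcal D(\Gamma_T)\mathcal D(P_T\odot(1-P_T/2))\|_2$, which you then bound below by $\|\Gamma_T\|_2^2$. Two problems: first, the diagonal is actually $\mathcal D(\Gamma_{T-1})$, not $\mathcal D(\Gamma_T)$ (the main-text formula has a typo; the appendix derivation makes this clear), so the prefactor is the vector $\Gamma_T\odot\Gamma_{T-1}$ weighted by sigmoid derivatives. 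Second, even granting $\mathcal D(\Gamma_T)$, the quantity $\|\Gamma_T\odot\Gamma_T\|_2=(\sum_i(\Gamma_T)_i^4)^{1/2}$ is not $\|\Gamma_T\|_2^2$; after squaring you get $\|\Gamma_T\|_4^4$, which does not dominate $F$ linearly. With the correct $\Gamma_{T-1}$ the situation is worse: you would need a coordinatewise lower bound on $|(\Gamma_{T-1})_i|$, which is unavailable.

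The paper never forms $\|\nabla_W F\|^2$. It uses the exact quadratic identity
\[
F(W^{k+1})=F(W^k)+\tfrac12\|\mathbf M(X_T^{k+1}-X_T^k)\|_2^2+\langle \mathbf M(X_T^{k+1}-X_T^k),\,\mathbf MX_T^k-Y\rangle,
\]
bounds the square term via \Cref{lemma:l2o_semi_smooth} and \Cref{lemma:gradient_wp_bound}, and for the cross term introduces the auxiliary point $Z=X_{T-1}^k-\tfrac1\beta\mathcal D(2\sigma(W_L^{k+1}G_{L-1,T}^k))\,\mathbf M^\top(\mathbf MX_{T-1}^k-Y)$: the step-$T$ output when only $W_L$ is advanced to $W_L^{k+1}$ while the input $X_{T-1}^k$ and all inner layers stay frozen. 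The split $X_T^{k+1}-X_T^k=(X_T^{k+1}-Z)+(Z-X_T^k)$ lets the first piece be absorbed as a positive $O(\eta)$ perturbation; the second, after the mean-value theorem on the sigmoid and substituting $W_L^{k+1}-W_L^k=-\eta\,\partial F/\partial W_L^k$, becomes $-\tfrac{\eta}{\beta^2}\,\Gamma_T^\top\,\mathbf B_T^k\,\Gamma_T$ where the $t=T$ block of $\mathbf B_T^k$ contains $G_{L-1,T}^{k\top}G_{L-1,T}^k$ in the middle. This is the quadratic-form structure that yields the $-\eta\cdot(\text{const})\cdot\alpha_0^2\cdot F$ term directly; the $t<T$ blocks are then bounded above by Cauchy--Schwarz and controlled by \Cref{eq:lb3_singular_value}. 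The mechanism is an inner-product computation, not a gradient-norm lower bound, and that is what makes the $G_{L-1,T}^\top G_{L-1,T}$ kernel appear sandwiched by $\Gamma_T$ on both sides.
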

   \begin{remark}
      The first inequality means that there exists a scalar $\bar{\lambda}_{\ell}$ that bounds each layer's learnable parameter. 
      The second inequality means that the last inner output is lower bounded. 
      The last inequality is the linear rate of training.
   \end{remark}

   \subsubsection{Induction Part 1: NN's Parameter and the Last Inner Output are Bounded}
   For $k=0$, \Cref{eq:inductions} degenerates and holds trivially. Assume \Cref{eq:inductions} holds up to iteration $k$, we aim to prove it still holds for iteration $k+1$. First, we calculate the following term:
   \begin{equation*}
      \begin{aligned}
         \|W_{\ell}^{k+1} - W_{\ell}^{0}\|_2 
         \stackrel{\text{\textcircled{1}}}{\leq}  &
         \mathsmaller{\sum}_{s=0}^{k} \|W_{\ell}^{s+1} - W_{\ell}^{s}\|_2 \\
         \stackrel{\text{\textcircled{2}}}{=} &
         \eta \mathsmaller{\sum}_{s=0}^{k} \Big\| \tfrac{\partial F}{W_{\ell}^{s}} \Big\|_2 \\
         \stackrel{\text{\textcircled{3}}}{\leq} &
         \eta \mathsmaller{\sum}_{s=0}^{k} \tfrac{\sqrt{\beta}\Theta_{L} }{2 \bar{\lambda}_{\ell}} S_{\Lambda,T} \| \mathbf{M} X_{T}^{s}  -Y \|_2, \\
         \stackrel{\text{\textcircled{4}}}{\leq} &
         \eta \tfrac{\sqrt{\beta}\Theta_{L} }{ 2\bar{\lambda}_{\ell}} S_{\Lambda,T} \mathsmaller{\sum}_{s=0}^{k} (1-\eta 4 \eta \tfrac{\beta_{0}^2}{\beta^2} \delta_4)^{s/2}   \| \mathbf{M} X_{T}^{0}  -Y \|_2,
      \end{aligned}
   \end{equation*}
   where \textcircled{1} is due to triangle inequality. \textcircled{2} is due to the definition of gradient descent. \textcircled{3} is due the gradient is being upper-bounded in \Cref{lemma:gradient_wp_bound} and our assumption that $\|W_{\ell}^r\|_2 \leq \bar{\lambda}_{\ell}, \quad \ell \in[L], \forall r \in[0, k]$.  \textcircled{4} is due to the linear rate in our induction assumption. 

   Define $u:= \sqrt{1-\eta 4 \eta \tfrac{\beta_{0}^2}{\beta^2} \delta_4}$, we calculate the summation of geometric sequence by:
   \begin{equation*}
      \begin{aligned}
         \eta \tfrac{\sqrt{\beta}\Theta_{L} }{ 2\bar{\lambda}_{\ell}} S_{\Lambda,T} \mathsmaller{\sum}_{s=0}^{k} u^s \| \mathbf{M} X_{T}^{0}  -Y \|_2
         = & \eta \tfrac{\sqrt{\beta}\Theta_{L} }{ 2\bar{\lambda}_{\ell}} S_{\Lambda,T}  \tfrac{1-u^{k+1}}{1-u} \| \mathbf{M} X_{T}^{0}  -Y \|_2, \\
         \stackrel{\text{\textcircled{1}}}{=} & \tfrac{1}{4 \eta \tfrac{\beta_{0}^2}{\beta^2} \delta_4}
         \tfrac{\sqrt{\beta}\Theta_{L} }{ 2\bar{\lambda}_{\ell}} S_{\Lambda,T} (1-u^2) \tfrac{1-u^{k+1}}{1-u} \| \mathbf{M} X_{T}^{0}  -Y \|_2, \\
         \stackrel{\text{\textcircled{2}}}{\leq} & \tfrac{1}{4 \eta \tfrac{\beta_{0}^2}{\beta^2} \delta_4}
         \tfrac{\sqrt{\beta}\Theta_{L} }{ 2\bar{\lambda}_{\ell}} S_{\Lambda,T}  \| \mathbf{M} X_{T}^{0}  -Y \|_2, \\
         \stackrel{\text{\textcircled{3}}}{\leq} & \tfrac{1}{4 \eta \tfrac{\beta_{0}^2}{\beta^2} \delta_4}
         \tfrac{\sqrt{\beta}\Theta_{L} }{ 2\bar{\lambda}_{\ell}} S_{\Lambda,T}  \big(\sqrt{\beta}\| X_{0} \|_2 + (2T+1) \|  Y \|_2 \big), \\
         \stackrel{\text{\textcircled{4}}}{\leq} & C_\ell,
      \end{aligned}
   \end{equation*}
   where \textcircled{1} is due to $1-u^2 = \eta 4 \eta \tfrac{\beta_{0}^2}{\beta^2} \delta_4$. \textcircled{2} is due to $0\leq u \leq 1$. \textcircled{3} is due to NN's output's bound in \Cref{lemma:X_t_bound}. \textcircled{4} is due to the lower bound on the singular value of last inner output layer in \Cref{eq:lb1_singular_value}.
   
   Thus, we have:
   \begin{equation} \label{eq:pre_weyl}
      \|W_{\ell}^{k+1} - W_{\ell}^{0}\|_2 \leq C_\ell.
   \end{equation}
   Denote $\sigma_1(\cdot)$ as calculating the smallest singular value of any matrices, due to Weyl's inequality~\cite{Nathanson1996}, we have:
   \begin{equation*}
      \begin{aligned}
         \big| \Vert W_{\ell}^{k+1} \Vert_2-\Vert W_{\ell}^{0}\Vert_2 \big| & \leq \sigma_1(W_{\ell}^{k+1}-W_{\ell}^{0}), \\
         & \leq \Vert W_{\ell}^{k+1} - W_{\ell}^{0}\Vert_2, \\
         & \leq C_\ell.
      \end{aligned}
   \end{equation*}
   where the first inequality is from Weyl's inequality and the last inequality is due to \Cref{eq:pre_weyl}. 
   Then, we directly have $\Vert W_{\ell}^{k+1} \Vert_2 - \Vert W_{\ell}^{0}\Vert_2 \leq C_\ell$ and $\|W_{\ell}^{k+1}\|_2 \leq \|W_{\ell}^{0}\|_2 + C_\ell = \bar{\lambda}_\ell$.

   Next, we bound $G_{L-1,T}^{k+1}$ by calculating:
   \begin{equation} \label{eq:g_lb_eq1}
      \begin{aligned}
         & \|G_{L-1,T}^{k+1} - G_{L-1,T}^{0}\|_2 \\
         \stackrel{\text{\textcircled{1}}}{\leq} 
         &
         (1+\beta)\| X_{T-1}^{k+1}-X_{T-1}^{0} \|_2
         \mathsmaller{\prod}_{j=1}^{L-1} \bar{\lambda}_j \\
         & + (\| X_{T-1}^{0}\|_2 + \|\mathbf{M}^\top(\mathbf{M}X_{T-1}^{0} - Y) \|_2 ) 
         \mathsmaller{\prod}_{j=1}^{L-1} \bar{\lambda}_j \mathsmaller{\sum}_{\ell=1}^{L-1} \bar{\lambda}_{\ell}^{-1} \|W_{\ell}^{k+1} - W_{\ell}^{0} \|_2, \\
         \stackrel{\text{\textcircled{2}}}{\leq} 
         &
         (1+\beta)2(\| X_0 \|_2 + \tfrac{2T-2}{\beta} \|\mathbf{M}^\top Y \|_2 )
         \mathsmaller{\prod}_{j=1}^{L-1} \bar{\lambda}_j \\
         & + (\| X_{T-1}^{0}\|_2 + \|\mathbf{M}^\top(\mathbf{M}X_{T-1}^{0} - Y) \|_2 ) 
         \mathsmaller{\prod}_{j=1}^{L-1} \bar{\lambda}_j \mathsmaller{\sum}_{\ell=1}^{L-1} \bar{\lambda}_{\ell}^{-1} \|W_{\ell}^{k+1} - W_{\ell}^{0} \|_2, \\
         \stackrel{\text{\textcircled{3}}}{\leq} 
         &
         (1+\beta)\mathsmaller{\sum}_{i=0}^{k} \tfrac{1}{2} \Theta_{L} 
         \underbrace{\mathsmaller{\sum}_{s=1}^{T-1} 
         \Big(\mathsmaller{\prod}_{j=s+1}^{T-1} 
         \big(
            1+  \tfrac{1+\beta}{2} \Theta_L \Phi_j
         \big)\Big)
         \Lambda_{s}
         }_{\delta_{1}^{T-1}}
         \mathsmaller{\sum}_{\ell=1}^{L} \bar{\lambda}_{\ell}^{-1} \|W_{\ell}^{i+1} - W_{\ell}^{i} \|_2
         \mathsmaller{\prod}_{j=1}^{L-1} \bar{\lambda}_j \\
         & + (\| X_{T-1}^{0}\|_2 + \|\mathbf{M}^\top(\mathbf{M}X_{T-1}^{0} - Y) \|_2 ) 
         \mathsmaller{\prod}_{j=1}^{L-1} \bar{\lambda}_j \mathsmaller{\sum}_{\ell=1}^{L} \bar{\lambda}_{\ell}^{-1} \|W_{\ell}^{k+1} - W_{\ell}^{0} \|_2, 
      \end{aligned}
   \end{equation}
   where \textcircled{1} is due to the semi-smoothness of NN's inner output in \Cref{lemma:g_semi_smooth}. \textcircled{2} is due to the triangle inequality. \textcircled{3} is due to semi-smoothness of L2O in \Cref{lemma:l2o_semi_smooth}.

   Further, based on the inner results in the former demonstration for $\|W_{\ell}^{k+1} - W_{\ell}^{0}\|_2 $, we have:
   \begin{equation*}
      \mathsmaller{\sum}_{i=0}^{k} \|W_{\ell}^{i+1} - W_{\ell}^{i}\|_2  \leq 
      \tfrac{1}{4 \eta \tfrac{\beta_{0}^2}{\beta^2} \delta_4} \tfrac{\sqrt{\beta}\Theta_{L} }{ 2\bar{\lambda}_{\ell}} S_{\Lambda,T}  \| \mathbf{M} X_{T}^{0}  -Y \|_2.
   \end{equation*}

   Substituting above result back into \Cref{eq:g_lb_eq1} yields:

   \begin{equation}\label{eq:g_lb_eq2}
      \begin{aligned}
         & \|G_{L-1,T}^{k+1} - G_{L-1,T}^{0}\|_2 \\
         \leq
         &
         (1+\beta)2(\| X_0 \|_2 + \tfrac{2T-2}{\beta} \|\mathbf{M}^\top Y \|_2 )
         \mathsmaller{\prod}_{j=1}^{L-1} \bar{\lambda}_j \\
         & + (\| X_{T-1}^{0}\|_2 + \|\mathbf{M}^\top(\mathbf{M}X_{T-1}^{0} - Y) \|_2 ) 
         \mathsmaller{\prod}_{j=1}^{L-1} \bar{\lambda}_j \mathsmaller{\sum}_{\ell=1}^{L-1} \bar{\lambda}_{\ell}^{-1} \tfrac{1}{4 \eta \tfrac{\beta_{0}^2}{\beta^2} \delta_4} \tfrac{\sqrt{\beta}\Theta_{L} }{ 2\bar{\lambda}_{\ell}} S_{\Lambda,T}  \| \mathbf{M} X_{T}^{0}  -Y \|_2, \\
         \stackrel{\text{\textcircled{1}}}{\leq}
         &
         \tfrac{1}{4 \eta \tfrac{\beta_{0}^2}{\beta^2} \delta_4} 
         (1+\beta)\zeta_2 
         \big(\sqrt{\beta}\| X_{0} \|_2 + (2T+1) \|  Y \|_2 \big)
         S_{\Lambda,T} 
         \mathsmaller{\prod}_{j=1}^{L-1} \bar{\lambda}_j \mathsmaller{\sum}_{\ell=1}^{L} \bar{\lambda}_{\ell}^{-1} \tfrac{\sqrt{\beta}\Theta_{L} }{ 2\bar{\lambda}_{\ell}}  \\
         & + 2(1+\beta)(\| X_0 \|_2 + \tfrac{2T-2}{\beta} \|\mathbf{M}^\top Y \|_2 )
         \mathsmaller{\prod}_{j=1}^{L-1} \bar{\lambda}_j , \\
         \stackrel{\text{\textcircled{2}}}{\leq}
         &
         \tfrac{1}{4 \eta \tfrac{\beta_{0}^2}{\beta^2} \delta_4} 
         (1+\beta)\zeta_2
         \big(\sqrt{\beta}\| X_{0} \|_2 + (2T+1) \|  Y \|_2 \big)
         S_{\Lambda,T} 
         \mathsmaller{\prod}_{j=1}^{L-1} \bar{\lambda}_j \mathsmaller{\sum}_{\ell=1}^{L} \bar{\lambda}_{\ell}^{-1} \tfrac{\sqrt{\beta}\Theta_{L} }{ 2\bar{\lambda}_{\ell}}  \\
         & + \tfrac{1}{4} \alpha_0, \\
         \stackrel{\text{\textcircled{3}}}{\leq} &
         \tfrac{1}{2} \alpha_0,
      \end{aligned}
   \end{equation}
   where \textcircled{1} is due to NN's output's bound in \Cref{lemma:X_t_bound} and \textcircled{2} and \textcircled{3} are due to the other lower bound for minimal singular value of NN's inner output in \Cref{eq:lb4_singular_value} and \Cref{eq:lb2_singular_value}. The inequality in \Cref{eq:g_lb_eq2} implies $\sigma_{\min}(G_{L-1}^{k+1}) \geq \tfrac{1}{2} \alpha_0$ since $\sigma_{\min}(G_{L-1}^{0}) = \alpha_0$.

   Based on the above two inequalities, we prove the linear rate in \Cref{theorem:linear_convergence} step-by-step in the following sub-section.
   \subsubsection{Induction Part 2: Linear Convergence}
   In this section, we aim to prove that $F([W]^{k+1}) \leq (1-\eta 4 \eta \tfrac{\beta_{0}^2}{\beta^2} \delta_4)^{k+1} F([W]^{0})$.

   \paragraph{Step 1: Split Perfect Square}
   By leveraging term $\mathbf{M}X_{T}^k$, we can split the perfect square in objective $F([W]^{k+1})$ as:
   \begin{equation}\label{eq:bound_linear_conv_framework}
      \begin{aligned}
         F([W]^{k+1}) =   F([W]^{k}) &+ \tfrac{1}{2}\|\mathbf{M}X_{T}^{k+1}-\mathbf{M}X_{T}^k\|_2^2  +  (\mathbf{M}X_{T}^{k+1}-\mathbf{M}X_{T}^k)^\top(\mathbf{M} X_{T}^k-Y).
      \end{aligned}
   \end{equation}
   Based on~\cite{nguyen2021proof}, we aim to demonstrate that $F([W]^{k+1})$ can be upper-bounded by $c_k F([W]^{k})$, where $c_k<1$ is a coefficient related to training iteration $k$.

   \paragraph{Step 2: Bound Term-by-Term}
   We aim to upperly bound all terms in \Cref{eq:bound_linear_conv_framework} by $F([W]^{k})$.

   \paragraph{Bound the first term $\tfrac{1}{2}\|\mathbf{M} X_{T}^{k+1}-\mathbf{M} X_{T}^k\|_2^2$.} 
   First, based on the $\beta$-smoothness of objective $F$, we calculate
   \begin{equation*}
      \begin{aligned}
         \tfrac{1}{2}\|\mathbf{M} X_{T}^{k+1}-\mathbf{M} X_{T}^k\|_2^2
         =& \tfrac{1}{2}(X_{T}^{k+1}-X_{T}^k)^\top\mathbf{M}^\top \mathbf{M}(X_{T}^{k+1}-X_{T}^k),\\
         \leq & \tfrac{1}{2}\|X_{T}^{k+1}-X_{T}^k\|_2^2 \|\mathbf{M}^\top \mathbf{M}\|_2, \\
         \leq & \tfrac{\beta}{2}\|X_{T}^{k+1}-X_{T}^k\|_2^2.
      \end{aligned}
   \end{equation*}

   The above inequality shows that we need to bound the distance between outputs of two iterations. Moreover, since our target is to construct linear convergence rate, we need to find the upper bound of above inequality w.r.t. the objective $F([W]^{k})$, i.e., $\tfrac{1}{2} \|\mathbf{M} X_{T}^k-Y\|_2^2$. We apply \Cref{lemma:l2o_semi_smooth} to derive the following lemma.

   \begin{lemma}\label{lemma:bounding_target1}
      Denote $\ell \in [L]$, for some $\bar{\lambda}_\ell \in \mathbb{R}$, 
      we assume $\max(\| W_\ell^{k+1} \|_2, \| W_\ell^k \|_2) \leq \bar{\lambda}_\ell, \forall k$. 
      Using quantities from \Cref{eq:quantities}, we further define the following quantities with $i,j \in [T]$:
      \begin{equation*}
         \begin{aligned}
            \Lambda_{i}= & (1+\beta)  \| X_0 \|_2^2 + \big( (4i-3)(1+\tfrac{1}{\beta} ) + 1\big) \| X_0 \|_2 \|\mathbf{M}^\top Y \|_2 \\
            & + \tfrac{(2i-1)(\beta(2i-1)+(2i-2))}{\beta^2}  \|\mathbf{M}^\top Y \|_2^2,\\
            \Phi_j = &\| X_0 \|_2 + \tfrac{2j-1}{\beta} \|\mathbf{M}^\top Y \|_2, \\
            \qy{\delta_1}^T = &\Big(\mathsmaller{\sum}_{s=1}^{T} \big(\mathsmaller{\prod}_{j=s+1}^{T} (1 + \tfrac{1+\beta}{2} \Theta_{L} \Phi_j )\big)  \Big(\mathsmaller{\sum}_{j=1}^{s} \Lambda_{j}\Big) \Big) .
         \end{aligned}
      \end{equation*}
      We have the following upperly bounding property:
      \begin{equation}\label{eq:bounding_target1}
            \tfrac{1}{2}\|\mathbf{M} X_{T}^{k+1}-\mathbf{M} X_{T}^k\|_2^2
            \leq  
            \tfrac{\beta^2 \eta^2 }{16} (\qy{\delta_1}^T)^2
            \Big(S_{\Lambda,T}\Big)^2 
            \Big(\Theta_{L}^2 \mathsmaller{\sum}_{\ell=1}^{L} \bar{\lambda}_{\ell}^{-2} \Big)^2 
            \tfrac{1}{2}\| \mathbf{M} X_{T}^{k}  -Y \|_2.
      \end{equation}
   \end{lemma}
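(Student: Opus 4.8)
The plan is to chain together three facts already established: (i) $\beta$-smoothness of $F$, which we have already used to reduce the target to $\tfrac{\beta}{2}\|X_T^{k+1}-X_T^k\|_2^2$; (ii) the semi-smoothness bound for Math-L2O from \Cref{lemma:l2o_semi_smooth}, applied at $t=T$, which controls $\|X_T^{k+1}-X_T^k\|_2$ by $\tfrac12 \delta_1^T \Theta_L \big(\sum_{\ell=1}^L \bar\lambda_\ell^{-1}\|W_\ell^{k+1}-W_\ell^k\|_2\big)$ — here I identify the product-sum prefactor $\sum_{s=1}^{T}\big(\prod_{j=s+1}^T(1+\tfrac{1+\beta}{2}\Theta_L\Phi_j)\big)\Lambda_s$ appearing in that lemma with the quantity $\delta_1^T$ as freshly redefined in the statement (note the inner $\sum_{j=1}^s\Lambda_j$ here majorizes the single $\Lambda_s$ in \Cref{lemma:l2o_semi_smooth}, so the substitution is valid as an upper bound); and (iii) the gradient bound from \Cref{lemma:gradient_wp_bound} together with the GD update rule $W_\ell^{k+1}-W_\ell^k=-\eta\,\partial F/\partial W_\ell^k$, giving $\|W_\ell^{k+1}-W_\ell^k\|_2 \le \eta\,\tfrac{\sqrt\beta\,\Theta_L S_{\Lambda,T}}{2\bar\lambda_\ell}\|\mathbf{M}X_T^k-Y\|_2$.

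First I would write $\sum_{\ell=1}^L \bar\lambda_\ell^{-1}\|W_\ell^{k+1}-W_\ell^k\|_2 \le \eta\,\tfrac{\sqrt\beta\,\Theta_L S_{\Lambda,T}}{2}\,\big(\sum_{\ell=1}^L\bar\lambda_\ell^{-2}\big)\|\mathbf{M}X_T^k-Y\|_2$, i.e.\ pulling out the common factor $\eta\tfrac{\sqrt\beta\Theta_L S_{\Lambda,T}}{2}$ and leaving $S_{\bar\lambda,L}=\sum_{\ell}\bar\lambda_\ell^{-2}$. Substituting into the semi-smoothness bound yields
\[
\|X_T^{k+1}-X_T^k\|_2 \le \tfrac12 \delta_1^T \Theta_L \cdot \eta\tfrac{\sqrt\beta\,\Theta_L S_{\Lambda,T}}{2} S_{\bar\lambda,L}\,\|\mathbf{M}X_T^k-Y\|_2
= \tfrac{\eta\sqrt\beta}{4}\,\delta_1^T\,\Theta_L^2\,S_{\Lambda,T}\,S_{\bar\lambda,L}\,\|\mathbf{M}X_T^k-Y\|_2 .
\]
Then squaring and multiplying by $\beta/2$ gives $\tfrac\beta2\|X_T^{k+1}-X_T^k\|_2^2 \le \tfrac{\beta^2\eta^2}{32}(\delta_1^T)^2\Theta_L^4 S_{\Lambda,T}^2 S_{\bar\lambda,L}^2\,\|\mathbf{M}X_T^k-Y\|_2^2$. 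To match the stated RHS, which is written as $\tfrac{\beta^2\eta^2}{16}(\delta_1^T)^2 (S_{\Lambda,T})^2 (\Theta_L^2 S_{\bar\lambda,L})^2 \cdot \tfrac12\|\mathbf{M}X_T^k-Y\|_2$, I would absorb one factor of $\tfrac12$ into the trailing $\tfrac12\|\cdot\|_2$ (so $\tfrac{1}{32}\|\cdot\|^2 = \tfrac{1}{16}\cdot\tfrac12\|\cdot\|^2$) and keep in mind that the statement's final norm should read $\|\mathbf{M}X_T^k-Y\|_2^2$ (a typo drops the square in the displayed Lemma); alternatively one interprets $\tfrac12\|\mathbf{M}X_T^k-Y\|_2$ as shorthand for $F(W^k)$ which already carries the square, and then the constant $\tfrac{1}{16}$ is exactly right.

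The bulk of the work is purely in carefully composing these three inequalities and tracking the constants $\Theta_L$, $S_{\Lambda,T}$, $S_{\bar\lambda,L}$, $\delta_1^T$; there is no genuinely hard analytic step, since each ingredient lemma is already in hand. The one point that needs care — and the place where a reader could get stuck — is reconciling the two slightly different definitions of $\delta_1^T$: the version implicitly used inside \Cref{lemma:l2o_semi_smooth} has $\Lambda_s$ in the summand, whereas the version stated in this lemma has $\sum_{j=1}^s\Lambda_j$. Since $\Lambda_j>0$ for all $j\ge1$, the latter dominates the former, so replacing one by the other only loosens the bound and the inequality still holds; I would state this monotonicity explicitly as the first line of the proof. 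The remaining ``obstacle'' is merely bookkeeping the factors of $2$ and $\tfrac12$ so the final constant lands on $\tfrac{\beta^2\eta^2}{16}$ rather than $\tfrac{\beta^2\eta^2}{32}$, which is resolved by the $\tfrac12$-absorption described above.
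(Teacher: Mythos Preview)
Your proposal is correct and follows essentially the same three-step chain as the paper's proof: $\beta$-smoothness, then \Cref{lemma:l2o_semi_smooth} at $t=T$, then the GD update combined with \Cref{lemma:gradient_wp_bound}, with the same absorption of $\tfrac{1}{32}\|\cdot\|_2^2$ into $\tfrac{1}{16}\cdot\tfrac{1}{2}\|\cdot\|_2^2$. You also correctly flagged both the missing square on the final norm and the $\delta_1^T$ discrepancy (the paper's proof in fact uses the $\Lambda_s$ version from \Cref{eq:quantities}, not the $\sum_{j=1}^s\Lambda_j$ version stated in the lemma, so your monotonicity remark is the right way to reconcile the statement as written).
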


   \begin{proof}
      We calculate:
      \begin{equation}\label{eq:bound_linear_conv_term1_l2o_semi_smooth}
         \begin{aligned}
            &\tfrac{1}{2}\|\mathbf{M} X_{T}^{k+1}-\mathbf{M} X_{T}^k\|_2^2 
            \leq  \tfrac{\beta}{2} \| X_{T}^{k+1}-X_{T}^k \|_2^2, \\
            \stackrel{\text{\textcircled{1}}}{\leq} & \tfrac{\beta}{2} 
            \bigg(
               \mathsmaller{\sum}_{s=1}^{T} \Big(\mathsmaller{\prod}_{j=s+1}^{T} \big(1 + \tfrac{1+\beta}{2} \Theta_{L} \Phi_j \big)\Big)  \tfrac{1}{2}  
               \Lambda_{s}
               \Theta_{L} \mathsmaller{\sum}_{\ell=1}^{L} \bar{\lambda}_{\ell}^{-1} \|W_{\ell}^{k+1} - W_{\ell}^{k} \|_2
            \bigg)^2, \\
            \stackrel{\text{\textcircled{2}}}{=} & \tfrac{\beta \eta^2 }{2} 
            \bigg(
               \mathsmaller{\sum}_{s=1}^{T} \Big(\mathsmaller{\prod}_{j=s+1}^{T} \big(1 + \tfrac{1+\beta}{2} \Theta_{L} \Phi_j \big)\Big)  \tfrac{1}{2} 
               \Lambda_{s}
               \Theta_{L} \mathsmaller{\sum}_{\ell=1}^{L} \bar{\lambda}_{\ell}^{-1} 
               \Big\|\tfrac{\partial F}{\partial W_{\ell}^{k}} \Big\|_2
            \bigg)^2, \\
            \stackrel{\text{\textcircled{3}}}{\leq} &
            \tfrac{\beta \eta^2}{2} 
            \Bigg(
               \mathsmaller{\sum}_{s=1}^{T} \Big(\mathsmaller{\prod}_{j=s+1}^{T} \big(1 + \tfrac{1+\beta}{2} \Theta_{L} \Phi_j \big)\Big)  \tfrac{1}{2}  
               \Lambda_{s} 
               \Theta_{L} \mathsmaller{\sum}_{\ell=1}^{L} \bar{\lambda}_{\ell}^{-1} 
               \tfrac{\sqrt{\beta} \Theta_{L}}{2 \bar{\lambda}_{\ell}} \Big(S_{\Lambda,T}\Big) \| \mathbf{M} X_{T}^{k}  -Y \|_2
            \Bigg)^2,\\
            = &
            \tfrac{\beta^2\eta^2}{32} 
            \Bigg(
               \underbrace{\Big(\mathsmaller{\sum}_{s=1}^{T} \big(\mathsmaller{\prod}_{j=s+1}^{T} (1 + \tfrac{1+\beta}{2} \Theta_{L} \Phi_j )\big)  \Lambda_{s} \Big) 
               \Big(S_{\Lambda,T}\Big) 
               \Theta_{L}^2 \mathsmaller{\sum}_{\ell=1}^{L} \bar{\lambda}_{\ell}^{-2}}_{\qy{\delta_1}^T}
               \| \mathbf{M} X_{T}^{k}  -Y \|_2
            \Bigg)^2,\\
            = &
            \tfrac{\beta^2 \eta^2 }{16} (\qy{\delta_1}^T)^2
            \Big(S_{\Lambda,T}\Big)^2 
            \Big(\Theta_{L}^2 \mathsmaller{\sum}_{\ell=1}^{L} \bar{\lambda}_{\ell}^{-2} \Big)^2
            \tfrac{1}{2}\| \mathbf{M} X_{T}^{k}  -Y \|_2,
         \end{aligned}
      \end{equation}
      \textcircled{1} is from semi-smoothness of L2O's output in \Cref{lemma:l2o_semi_smooth}, \Cref{sec:semi_smooth}. \textcircled{2} is due to gradient descent with learning rate $\eta$. \textcircled{3} is from gradient bounds in \Cref{lemma:gradient_wp_bound}.
   \end{proof}

   \paragraph{Bound the second term $(\mathbf{M}X_{T}^{k+1}-\mathbf{M}X_{T}^k)^\top(\mathbf{M} X_{T}^k-Y)$.}

   We calculate:
   \begin{equation}\label{eq:bound_linear_conv_term2_fixed}
      \begin{aligned}
         &  (\mathbf{M}X_{T}^{k+1}-\mathbf{M}X_{T}^k)^\top(\mathbf{M} X_{T}^k-Y) \\
         = &  (X_{T}^{k+1}-X_{T}^k)^\top\mathbf{M}^\top(\mathbf{M} X_{T}^k-Y) ,\\
         = &  (X_{T}^{k+1}-X_{T}^k)^\top \mathbf{M}^\top(\mathbf{M}X_{T}^k -Y) .
      \end{aligned}
   \end{equation}

   Following the methodology in~\cite{nguyen2021proof}, we hold all other learnable parameters fixed and focus the analysis on the gradient with respect to the last layer, $W_L$. This approach facilitates the construction of a non-singular NTK, which in turn establishes the PL condition, thereby guaranteeing a linear convergence rate.

   Given last NN layer's learnable parameter $W_{L}^{k+1}$ at iteration $k+1$, due to the GD formulation in \Cref{eq:math_l2o02T}, we define the following quantity:
   \begin{equation}\label{eq:bounding_target2_g}
      Z = X_{T-1}^k - \tfrac{1}{\beta} \mathcal{D}(2\sigma(W_{L}^{k+1}\qy{G_{L-1, T}^{k}})^\top)
      \mathbf{M}^\top(\mathbf{M} X_{T-1}^k -Y),
   \end{equation}
   where $G_{L-1, T}^{k}$ represents inner output of layer $L-1$ at training iteration $k$.

   With $Z$, we reformulate \Cref{eq:bound_linear_conv_term2_fixed} as:
   \begin{equation} \label{eq:bounding_target2_split}
      \begin{aligned}
         & (X_{T}^{k+1}-X_{T}^k)^\top \mathbf{M}^\top(\mathbf{M}X_{T}^k -Y) , \\
         =& (X_{T}^{k+1}-Z + Z-X_{T}^k)^\top \mathbf{M}^\top(\mathbf{M}X_{T}^k -Y) , \\
         = &  (X_{T}^{k+1}-Z )^\top \mathbf{M}^\top(\mathbf{M}X_{T}^k -Y)  +  (Z-X_{T}^k)^\top \mathbf{M}^\top(\mathbf{M}X_{T}^k -Y),
      \end{aligned}
   \end{equation}
   where $X_{T}^{k+1}$ at training iteration $k+1$ with $W_{L}^{k+1}$ and solution $X_{T}^k$ at training iteration $k$ with $W_{L}^{k}$ are defined as:
   \begin{equation*}
      X_{T}^{k+1} = X_{T-1}^{k+1} 
      - \tfrac{1}{\beta} \mathcal{D}(2\sigma(W_{L}^{k+1}G_{L-1,T}^{k+1})^\top) \mathbf{M}^\top (\mathbf{M}X_{T-1}^{k+1} -Y).
   \end{equation*}

   \begin{equation*}
      X_{T}^k = X_{T-1}^k 
      - \tfrac{1}{\beta} \mathcal{D}(2\sigma(W_{L}^{k} G_{L-1, T}^{k})^\top) \mathbf{M}^\top (\mathbf{M}X_{T-1}^k -Y).
   \end{equation*}

   Then, we have the following lemmas to bound the two terms, respectively:
   \begin{lemma}\label{lemma:bounding_target2_splited1_wponly}
      Denote $\ell \in [L]$, for some $\bar{\lambda}_\ell, \in \mathbb{R}$ with $j \in [T]$, 
      we assume $\max(\| W_\ell^{k+1} \|_2, \| W_\ell^k \|_2) \leq \bar{\lambda}_\ell$. 
      Define the following quantities with $t \in [T]$:
      \begin{equation*}
         \begin{aligned}
            \Lambda_{t} = & (1+\beta)  \| X_0 \|_2^2 + \big( (4t-3)(1+\tfrac{1}{\beta} ) + 1\big) \| X_0 \|_2 \|\mathbf{M}^\top Y \|_2 \\
            & + \tfrac{(2T-1)(\beta(2T-1)+(2T-2))}{\beta^2}  \|\mathbf{M}^\top Y \|_2^2,\\
            \Phi_j = & \| X_0 \|_2 + \tfrac{2j-1}{\beta} \mathbf{M}^\top Y \|_2 , \\
            \Theta_{L}  =& \Theta_{L}, \\
            \qy{\delta_2} = & \mathsmaller{\sum}_{s=1}^{T-1} \Big(\mathsmaller{\prod}_{j=s+1}^{T} \big(1 + \tfrac{1+\beta}{2} \Theta_{L} \Phi_j \big)\Big) 
            \Lambda_{s}.
         \end{aligned}
      \end{equation*}
      We have the following upperly bounding property:
      \begin{equation*}
         (X_{T}^{k+1} - Z)^\top \mathbf{M}^\top(\mathbf{M}X_{T}^k -Y)
         \leq 
         \tfrac{\beta \eta}{2} (\Lambda_{T} + \qy{\delta_2}) 
         \Theta_{L}^2 S_{\bar{\lambda},L}  S_{\Lambda,T} \tfrac{1}{2}\| \mathbf{M} X_{T}^{k}  -Y \|_2^2.
      \end{equation*}
   \end{lemma}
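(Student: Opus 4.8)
The plan is to exploit the role of the auxiliary point $Z$. Since $Z$ and $X_T^{k+1}$ use the \emph{same} last‑layer weight $W_L^{k+1}$ — with $Z$ freezing the step‑$(T-1)$ state and the layer‑$(L-1)$ inner output at their iteration‑$k$ values — the difference $X_T^{k+1}-Z$ is just one Math‑L2O update step whose only moving parts are $X_{T-1}^{k+1}$ vs.\ $X_{T-1}^{k}$ and $G_{L-1,T}^{k+1}$ vs.\ $G_{L-1,T}^{k}$. Writing $P_T^{k+1}=2\sigma(W_L^{k+1}G_{L-1,T}^{k+1})^\top$ and $\tilde P_T=2\sigma(W_L^{k+1}G_{L-1,T}^{k})^\top$ (the step‑size vector in $Z$), I will expand
$X_T^{k+1}-Z=(\mathbf I-\tfrac1\beta\mathcal D(P_T^{k+1})\mathbf M^\top\mathbf M)X_{T-1}^{k+1}-(\mathbf I-\tfrac1\beta\mathcal D(\tilde P_T)\mathbf M^\top\mathbf M)X_{T-1}^{k}+\tfrac1\beta(\mathcal D(P_T^{k+1})-\mathcal D(\tilde P_T))\mathbf M^\top Y$, apply the triangle and Cauchy--Schwarz inequalities together with \Cref{lemma:p_bound1} (which gives $\|\mathbf I-\tfrac1\beta\mathcal D(\tilde P_T)\mathbf M^\top\mathbf M\|_2<1$), and obtain
$\|X_T^{k+1}-Z\|_2\le \Phi_T\,\|\mathcal D(P_T^{k+1})-\mathcal D(\tilde P_T)\|_2+\|X_{T-1}^{k+1}-X_{T-1}^{k}\|_2$, where $\|X_{T-1}^{k+1}\|_2+\tfrac1\beta\|\mathbf M^\top Y\|_2\le\Phi_T$ by \Cref{lemma:X_t_bound}.

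Next I will control $\|\mathcal D(P_T^{k+1})-\mathcal D(\tilde P_T)\|_2$. Because the last‑layer weight is common, this difference is driven purely by $G_{L-1,T}^{k+1}-G_{L-1,T}^{k}$: using the $\tfrac12$‑Lipschitz property of $2\sigma$ and $\|W_L^{k+1}\|_2\le\bar\lambda_L$ it is at most $\tfrac{\bar\lambda_L}{2}\|G_{L-1,T}^{k+1}-G_{L-1,T}^{k}\|_2$, and \Cref{lemma:g_semi_smooth} with $\ell=L-1$, $t=T$ then yields the two‑term bound with prefactor $\Theta_L$ — a term $\tfrac{1+\beta}{2}\Theta_L\|X_{T-1}^{k+1}-X_{T-1}^{k}\|_2$ plus a term $\tfrac12\Theta_L\big(\|X_{T-1}^{k}\|_2+\|\mathbf M^\top(\mathbf MX_{T-1}^{k}-Y)\|_2\big)\sum_{\ell<L}\bar\lambda_\ell^{-1}\|W_\ell^{k+1}-W_\ell^{k}\|_2$. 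Substituting this into the previous display, bounding $\Phi_T\big(\|X_{T-1}^{k}\|_2+\|\mathbf M^\top(\mathbf MX_{T-1}^{k}-Y)\|_2\big)\le\Lambda_T$, and using \Cref{lemma:l2o_semi_smooth} at $t=T-1$ for $\|X_{T-1}^{k+1}-X_{T-1}^{k}\|_2$, everything collapses onto the single factor $\mathcal W:=\sum_{\ell=1}^{L}\bar\lambda_\ell^{-1}\|W_\ell^{k+1}-W_\ell^{k}\|_2$: the coefficient $1+\tfrac{1+\beta}{2}\Theta_L\Phi_T$ multiplying the \Cref{lemma:l2o_semi_smooth} bound extends its telescoping products up to index $T$, producing $\tfrac12\Theta_L\mathcal W\,\delta_2$, while the parameter‑perturbation term contributes $\tfrac12\Theta_L\mathcal W\,\Lambda_T$. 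Hence $\|X_T^{k+1}-Z\|_2\le\tfrac12\Theta_L(\Lambda_T+\delta_2)\,\mathcal W$.

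To finish, I will invoke \Cref{lemma:gradient_wp_bound} together with the gradient‑descent update $W_\ell^{k+1}-W_\ell^{k}=-\eta\,\partial F/\partial W_\ell^{k}$, which give $\mathcal W\le\eta\,\tfrac{\sqrt\beta}{2}\Theta_L S_{\Lambda,T}S_{\bar\lambda,L}\,\|\mathbf MX_T^k-Y\|_2$, so $\|X_T^{k+1}-Z\|_2\le\tfrac{\eta\sqrt\beta}{4}\Theta_L^2(\Lambda_T+\delta_2)S_{\Lambda,T}S_{\bar\lambda,L}\|\mathbf MX_T^k-Y\|_2$; then Cauchy--Schwarz with $\|\mathbf M\|_2=\sqrt{\|\mathbf M^\top\mathbf M\|_2}\le\sqrt\beta$ turns the inner product into $(X_T^{k+1}-Z)^\top\mathbf M^\top(\mathbf MX_T^k-Y)\le\sqrt\beta\,\|X_T^{k+1}-Z\|_2\,\|\mathbf MX_T^k-Y\|_2$, and multiplying the three estimates reproduces exactly $\tfrac{\beta\eta}{2}(\Lambda_T+\delta_2)\Theta_L^2 S_{\bar\lambda,L}S_{\Lambda,T}\cdot\tfrac12\|\mathbf MX_T^k-Y\|_2^2$. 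The main obstacle is the middle step: one must correctly recognize that $Z$ is a \emph{hybrid} single update (old step‑$(T-1)$ state and old layer‑$(L-1)$ output, new last‑layer weight), so that $X_T^{k+1}-Z$ reduces to one update step whose perturbation is pushed through the $(T-1)$‑step recursion by \Cref{lemma:l2o_semi_smooth} and through the final $2\sigma$ readout by \Cref{lemma:g_semi_smooth}, and then match the resulting telescoping products and constants to the closed forms $\Phi_j,\Lambda_s,\delta_2$ without introducing slack — in particular seeing that the extra factor $1+\tfrac{1+\beta}{2}\Theta_L\Phi_T$ is precisely what turns the length‑$(T-1)$ products of \Cref{lemma:l2o_semi_smooth} into the length‑$T$ products defining $\delta_2$.
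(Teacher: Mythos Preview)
Your proposal is correct and follows essentially the same route as the paper's proof: expand $X_T^{k+1}-Z$ as one hybrid update, bound the step-size perturbation $\mathcal D(P_T^{k+1})-\mathcal D(\tilde P_T)$ via the $\tfrac12$-Lipschitz property of $2\sigma$ together with \Cref{lemma:g_semi_smooth}, absorb the resulting $(1+\tfrac{1+\beta}{2}\Theta_L\Phi_T)$ factor into the products of \Cref{lemma:l2o_semi_smooth} to form $\delta_2$, then close with \Cref{lemma:gradient_wp_bound} and $\|\mathbf M\|_2\le\sqrt\beta$. The only cosmetic difference is that the paper applies Cauchy--Schwarz to the inner product at the outset and carries the factor $\|\mathbf M^\top(\mathbf MX_T^k-Y)\|_2$ throughout, whereas you first isolate $\|X_T^{k+1}-Z\|_2$ and multiply at the end; the algebra and constants coincide.
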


   \begin{proof}
      We straightforwardly apply upper bound relaxation in this part, where we reuse the results of the first term $\tfrac{1}{2}\|\mathbf{M} X_{T}^{k+1}-\mathbf{M} X_{T}^k\|_2^2$'s upper bound in \Cref{lemma:bounding_target1}.

      To reuse the results, we would like to construct the $X_{T-1}^{k+1} - X_{T-1}^k$ term. 
      We substitute \Cref{eq:bound_linear_conv_term2_gxtk_wponly} into above equation and use the Cauchy-Schwarz inequality for vectors to split our bounding targets into two parts and relax the $L_2$-norm of vector summations into each element by triangle inequalities:
      \begin{equation} \label{eq:boud_tgt2_eq1}
         \begin{aligned}
            & (X_{T}^{k+1} - Z)^\top \mathbf{M}^\top(\mathbf{M}X_{T}^k -Y) \\
            =& \Big( 
               X_{T-1}^{k+1} 
               - \tfrac{1}{\beta} \mathcal{D}(2\sigma(W_{L}^{k+1}G_{L-1,T}^{k+1})^\top) \mathbf{M}^\top (\mathbf{M}X_{T-1}^{k+1} -Y) \\
               & \quad - \big(X_{T-1}^k - \tfrac{1}{\beta} \mathcal{D}(2\sigma(W_{L}^{k+1}\qy{G_{L-1, T}^{k}})^\top)
               \mathbf{M}^\top(\mathbf{M} X_{T-1}^k -Y)\big)
            \Big)^\top
            \mathbf{M}^\top(\mathbf{M}X_{T}^k -Y), \\
            \stackrel{\text{\textcircled{1}}}{\leq} & \bigg( \Big\|
               \big(\mathbf{I}_d - \tfrac{1}{\beta}\mathcal{D}(2\sigma(W_{L}^{k+1} G_{L-1, T}^{k+1})^\top) \mathbf{M}^\top\mathbf{M}\big) 
               X_{T-1}^{k+1} \\
               & \qquad - \big(\mathbf{I}_d - \tfrac{1}{\beta}\mathcal{D}(2\sigma(W_{L}^{k+1} \qy{G_{L-1, T}^{k}})^\top) \mathbf{M}^\top\mathbf{M}\big) 
               X_{T-1}^{k} \Big\|_2\\
               & \quad + \tfrac{1}{\beta} \Big\|\Big(
                  \underbrace{\mathcal{D}(2\sigma( W_{L}^{k+1} G_{L-1,T}^{k+1} )^\top) - \mathcal{D}(2\sigma( W_{L}^{k+1} \qy{G_{L-1, T}^{k}} )^\top)}_{C_{k+1}} \Big) \mathbf{M}^\top Y \Big\|_2
            \bigg) \\
            & \|\mathbf{M}^\top(\mathbf{M}X_{T}^k -Y)\|_2, \\
            \stackrel{\text{\textcircled{2}}}{\leq} & \bigg( \Big\|
               \big(\mathbf{I}_d - \tfrac{1}{\beta}\mathcal{D}(2\sigma(W_{L}^{k+1} G_{L-1, T}^{k+1})^\top) \mathbf{M}^\top\mathbf{M}\big) 
               (X_{T-1}^{k+1} - X_{T-1}^{k}) \Big\|_2\\
               & \quad + \Big\| \Big( \big(\mathbf{I}_d - \tfrac{1}{\beta}\mathcal{D}(2\sigma(W_{L}^{k+1} G_{L-1, T}^{k+1})^\top) \mathbf{M}^\top\mathbf{M} \big) \\
               & \qquad \quad - \big(\mathbf{I}_d - \tfrac{1}{\beta}\mathcal{D}(2\sigma(W_{L}^{k+1} \qy{G_{L-1, T}^{k}})^\top) \mathbf{M}^\top\mathbf{M}\big) 
               \Big)  X_{T-1}^{k} \Big\|_2 \\
               & \quad  + \tfrac{1}{\beta} \| C_{k+1} \mathbf{M}^\top Y \|_2
            \bigg) 
            \|\mathbf{M}^\top(\mathbf{M}X_{T}^k -Y)\|_2, \\
            \stackrel{\text{\textcircled{3}}}{\leq} & \Big( \Big\|
               \big(\mathbf{I}_d - \tfrac{1}{\beta}\mathcal{D}(2\sigma(W_{L}^{k+1} G_{L-1, T}^{k+1})^\top) \mathbf{M}^\top\mathbf{M}\big) \Big\|_2
               \| X_{T-1}^{k+1} - X_{T-1}^{k} \|_2 \\
               & \quad + \| \tfrac{1}{\beta}  C_{k+1} \mathbf{M}^\top\mathbf{M} \|_2 \|X_{T-1}^{k} \|_2 
               + \tfrac{1}{\beta} \|C_{k+1}\|_2 \| \mathbf{M}^\top Y \|_2
            \Big) 
            \|\mathbf{M}^\top(\mathbf{M}X_{T}^k -Y)\|_2, \\
            \stackrel{\text{\textcircled{4}}}{\leq} & 
            \Big(\| X_{T-1}^{k+1} - X_{T-1}^{k} \|_2 
               + \| X_{T-1}^{k} \|_2\| C_{k+1} \|_2
               + \tfrac{1}{\beta} \| \mathbf{M}^\top Y \|_2 \| C_{k+1} \|_2 \Big) 
               \|\mathbf{M}^\top(\mathbf{M}X_{T}^k -Y)\|_2, \\
            \stackrel{\text{\textcircled{5}}}{\leq}& 
               \Big(\| X_{T-1}^{k+1} - X_{T-1}^{k} \|_2 
               + \big(\| X_0 \|_2 + \tfrac{2T-1}{\beta} \|\mathbf{M}^\top Y \|_2 \big) \| C_{k+1} \|_2 \Big) 
               \|\mathbf{M}^\top(\mathbf{M}X_{T}^k -Y)\|_2,
         \end{aligned}
      \end{equation}
      where \textcircled{1} is due to triangle and Cauchy-Schwarz inequalities. \textcircled{2} is due to triangle inequality. \textcircled{3} is due to Cauchy-Schwarz inequality. \textcircled{4} is due to $\beta$-smooth definition that $\mathbf{M}^\top\mathbf{M} \leq \beta$ and $\| \mathbf{I}_d - \tfrac{1}{\beta}\mathcal{D}(2\sigma(W_{L}^{k+1} G_{L-1, T}^{k+1})^\top) \mathbf{M}^\top\mathbf{M} \|_2 \leq 1$ in \Cref{lemma:p_bound1}. \textcircled{4} is due to the upper bound of $X_{T-1}$ in \Cref{lemma:X_t_bound}.

      Further, we bound $C_{k+1} := \mathcal{D}(2\sigma( W_{L}^{k+1} G_{L-1,T}^{k+1} )^\top) - \mathcal{D}(2\sigma( W_{L}^{k+1} G_{L-1,T}^{k} )^\top) $. We apply the Mean Value Theorem and assume a point $v^k_1$. For $v^k_1$'s each entry $(v^k_1)_i$, for some ${\alpha^k_1}_i \in [0,1]$, we calculate $(v^k_1)_i$ as:
      \begin{equation*}
         \begin{aligned}
            (v^k_1)_i
            =& {\alpha^k_1}_i (( W_{L}^{k+1} G_{L-1, T}^{k+1})^\top)_i
            + (1-{\alpha^k_1}_i) ((W_{L}^{k+1} G_{L-1, T}^{k}  )^\top)_i.
         \end{aligned}
      \end{equation*}
      Then, we can represent quantity $ \| C_{k+1} \|_2$ by:
      \begin{equation*}
         \begin{aligned}
            & \| \mathcal{D}(2\sigma( W_{L}^{k+1} G_{L-1,T}^{k+1} )^\top) - \mathcal{D}(2\sigma( W_{L}^{k+1} G_{L-1,T}^{k} )^\top) \|_2 \\
            \stackrel{\text{\textcircled{1}}}{\leq} 
               & \Big\| \tfrac{\partial 2\sigma}{\partial v^k_1} \odot (W_{L}^{k+1} G_{L-1, T}^{k+1} - W_{L}^{k+1} \qy{G_{L-1, T}^{k}})^\top  \Big\|_\infty, \\
            \stackrel{\text{\textcircled{2}}}{\leq} 
               & \tfrac{1}{2} \Big\| (W_{L}^{k+1} G_{L-1, T}^{k+1} - W_{L}^{k+1} \qy{G_{L-1, T}^{k}})^\top  \Big\|_\infty, \\
            \stackrel{\text{\textcircled{3}}}{\leq} 
               & \tfrac{1}{2} \|W_{L}^{k+1}\|_2 \|  G_{L-1, T}^{k+1} - G_{L-1, T}^{k}  \|_2 
               \leq \tfrac{1}{2} \bar{\lambda}_L \|  G_{L-1, T}^{k+1} - G_{L-1, T}^{k}  \|_2,
         \end{aligned}
      \end{equation*}
      where \textcircled{1} is from the Mean Value Theorem. \textcircled{2} is from the gradient upper bound of Sigmoid function. \textcircled{3} is from triangle inequality and definition of learnable parameter $W_{L}$.

      We further substitute the upper bound of $\| G_{L-1, T}^{k+1} - G_{L-1, T}^{k}  \|_2$ in \Cref{lemma:g_semi_smooth} and calculate:
      \begin{equation*}
         \begin{aligned}
            & \tfrac{1}{2} \bar{\lambda}_L \|  G_{L-1, T}^{k+1} - G_{L-1, T}^{k}  \|_2 \\
            \leq 
            &  \tfrac{1}{2} \bar{\lambda}_L \Big((1+\beta)\| X_{T-1}^{k+1}-X_{T-1}^{k} \|_2
            \mathsmaller{\prod}_{j=1}^{L-1} \bar{\lambda}_j \\
            & \quad + (\| X_{T-1}^{k}\|_2 + \|\mathbf{M}^\top(\mathbf{M}X_{T-1}^{k} - Y) \|_2 ) 
            \mathsmaller{\prod}_{j=1}^{L-1} \bar{\lambda}_j \mathsmaller{\sum}_{\ell=1}^{L-1} \bar{\lambda}_\ell^{-1} \|W_{\ell}^{k+1} - W_{\ell}^{k} \|_2
            \Big) \\
            \stackrel{\text{\textcircled{1}}}{\leq} 
            &  \tfrac{1}{2} (1+\beta) \Theta_{L} \| X_{T-1}^{k+1}-X_{T-1}^{k} \|_2 \\
            & + \tfrac{1}{2} \Big( (1+\beta) \|X_{0}\|_2 + (2T-1 + \tfrac{2T-2}{\beta})\|\mathbf{M}^\top Y \|_2 \Big) \Theta_{L}
            \mathsmaller{\sum}_{\ell=1}^{L-1} \bar{\lambda}_\ell^{-1} \|W_{\ell}^{k+1} - W_{\ell}^{k} \|_2.
         \end{aligned}
      \end{equation*}
      where \textcircled{1} is due to upper bound of $X_{T-1}$ in \Cref{lemma:X_t_bound}.

      Substituting the above inequality back into \Cref{eq:boud_tgt2_eq1} yields:
      \begin{equation*}
         \begin{aligned}
            & (X_{T}^{k+1} - Z)^\top \mathbf{M}^\top(\mathbf{M}X_{T}^k -Y) \\
            \leq & 
            \Big(\| X_{T-1}^{k+1} - X_{T-1}^{k} \|_2 
               + \big(\| X_0 \|_2 + \tfrac{2T-1}{\beta} \|\mathbf{M}^\top Y \|_2 \big) \| C_{k+1} \|_2 \Big) 
               \|\mathbf{M}^\top(\mathbf{M}X_{T}^k -Y)\|_2, \\
            \leq & 
            \bigg(\| X_{T-1}^{k+1} - X_{T-1}^{k} \|_2 \\
               & \quad + \big(\| X_0 \|_2 + \tfrac{2T-1}{\beta} \|\mathbf{M}^\top Y \|_2 \big) \\
               & \Big(\tfrac{1}{2} (1+\beta) \Theta_{L} \| X_{T-1}^{k+1}-X_{T-1}^{k} \|_2 \\
               & + \tfrac{1}{2} \big( (1+\beta) \|X_{0}\|_2 + (2T-1 + \tfrac{2T-2}{\beta})\|\mathbf{M}^\top Y \|_2 \big) \Theta_{L}
               \mathsmaller{\sum}_{\ell=1}^{L-1} \bar{\lambda}_\ell^{-1} \|W_{\ell}^{k+1} - W_{\ell}^{k} \|_2 \Big) \bigg) \\
               & \|\mathbf{M}^\top(\mathbf{M}X_{T}^k -Y)\|_2, \\
            = & 
               \bigg(\big(1+ \tfrac{1+\beta}{2} \Theta_{L}(\| X_0 \|_2 + \tfrac{2T-1}{\beta} \|\mathbf{M}^\top Y \|_2 ) \big)\| X_{T-1}^{k+1} - X_{T-1}^{k} \|_2 \\ 
            & \quad +  
               \Big(\tfrac{1}{2} \big( (1+\beta) \|X_{0}\|_2 + (2T-1 + \tfrac{2T-2}{\beta})\|\mathbf{M}^\top Y \|_2 \big) \\
               & \qquad \quad (\| X_0 \|_2 + \tfrac{2T-1}{\beta} \|\mathbf{M}^\top Y \|_2 )  
            \Theta_{L}
            \mathsmaller{\sum}_{\ell=1}^{L-1} \bar{\lambda}_\ell^{-1} \|W_{\ell}^{k+1} - W_{\ell}^{k} \|_2 \Big) \bigg) \\
            & \|\mathbf{M}^\top(\mathbf{M}X_{T}^k -Y)\|_2, \\
            = & 
               \bigg(\big(1+ \tfrac{1+\beta}{2} \Theta_{L}(\underbrace{\| X_0 \|_2 + \tfrac{2T-1}{\beta} \|\mathbf{M}^\top Y \|_2}_{\Phi_T} ) \big)\| X_{T-1}^{k+1} - X_{T-1}^{k} \|_2 + \\ 
            &  \! \! \! \! \! \!\Big( \tfrac{1}{2}
               \underbrace{(1+\beta)  \| X_0 \|_2^2 + \big( (4T-3)(1+\tfrac{1}{\beta} ) + 1\big) \| X_0 \|_2 \|\mathbf{M}^\top Y \|_2
               + \tfrac{(2T-1)(\beta(2T-1)+(2T-2))}{\beta^2}  \|\mathbf{M}^\top Y \|_2^2}_{\Lambda_{T}}\Big)
               \\
            & \qquad \quad \Theta_{L} \mathsmaller{\sum}_{\ell=1}^{L-1} \bar{\lambda}_\ell^{-1} \|W_{\ell}^{k+1} - W_{\ell}^{k} \|_2 
               \bigg) 
               \|\mathbf{M}^\top(\mathbf{M}X_{T}^k -Y)\|_2, \\
            = & 
               \Big(\big(1+ \tfrac{1+\beta}{2} \Theta_{L}\Phi_T  \big)\| X_{T-1}^{k+1} - X_{T-1}^{k} \|_2
            +  
            \tfrac{1}{2} \Lambda_{T}\Theta_{L}
               \mathsmaller{\sum}_{\ell=1}^{L-1} \bar{\lambda}_\ell^{-1} \|W_{\ell}^{k+1} - W_{\ell}^{k} \|_2 
               \Big) \\
               & \|\mathbf{M}^\top(\mathbf{M}X_{T}^k -Y)\|_2, \\
         \end{aligned}
      \end{equation*}

      Further, we apply semi-smoothness of L2O model in \Cref{lemma:l2o_semi_smooth} and upper bound of gradient in \Cref{lemma:gradient_wp_bound} to derive the upper bound. We calculate:
      \begin{equation*}
         \begin{aligned}
            & (X_{T}^{k+1} - Z)^\top \mathbf{M}^\top(\mathbf{M}X_{T}^k -Y) \\
            \leq & 
            \Big(\big(1+ \tfrac{1+\beta}{2} \Theta_{L} \Phi_T  \big)\| X_{T-1}^{k+1} - X_{T-1}^{k} \|_2
            +  
            \tfrac{1}{2} \Lambda_{T}\Theta_{L}
               \mathsmaller{\sum}_{\ell=1}^{L-1} \bar{\lambda}_\ell^{-1} \|W_{\ell}^{k+1} - W_{\ell}^{k} \|_2 
               \Big) \\
               & \|\mathbf{M}^\top(\mathbf{M}X_{T}^k -Y)\|_2, \\
            \stackrel{\text{\textcircled{1}}}{\leq}  & 
            \Big(\big(1+ \tfrac{1+\beta}{2} \Theta_{L} \Phi_T \big)
            \tfrac{1}{2} \Theta_{L} \mathsmaller{\sum}_{s=1}^{T-1} \Big(\mathsmaller{\prod}_{j=s+1}^{T-1} \big(1 + \tfrac{1+\beta}{2} \Theta_{L} \Phi_j \big)\Big) 
            \Lambda_{s} 
            \mathsmaller{\sum}_{\ell=1}^{L} \bar{\lambda}_{\ell}^{-1} \|W_{\ell}^{k+1} - W_{\ell}^{k} \|_2 \\ 
            & +  
            \tfrac{1}{2} \Lambda_{T} \Theta_{L}
            \mathsmaller{\sum}_{\ell=1}^{L-1} \bar{\lambda}_\ell^{-1} \|W_{\ell}^{k+1} - W_{\ell}^{k} \|_2 
            \Big) 
            \|\mathbf{M}^\top(\mathbf{M}X_{T}^k -Y)\|_2, \\
            \leq  & 
            \bigg( \tfrac{1}{2} \Theta_{L} \underbrace{\mathsmaller{\sum}_{s=1}^{T-1} \Big(\mathsmaller{\prod}_{j=s+1}^{T} \big(1 + \tfrac{1+\beta}{2} \Theta_{L} \Phi_j \big)\Big) 
            \Lambda_{s}}_{\qy{\delta_2}} \mathsmaller{\sum}_{\ell=1}^{L} \bar{\lambda}_{\ell}^{-1} \|W_{\ell}^{k+1} - W_{\ell}^{k} \|_2 \\ 
            & \quad + 
            \tfrac{1}{2} \Lambda_{T} \Theta_{L}
            \mathsmaller{\sum}_{\ell=1}^{L-1} \bar{\lambda}_\ell^{-1} \|W_{\ell}^{k+1} - W_{\ell}^{k} \|_2 
            \bigg) 
            \|\mathbf{M}^\top(\mathbf{M}X_{T}^k -Y)\|_2, \\
            =  & 
            \tfrac{1}{2} \Theta_{L} \Big(
            \qy{\delta_2} \bar{\lambda}_{L}^{-1} \|W_{L}^{k+1} - W_{L}^{k} \|_2 +  
            (\Lambda_{T} + \qy{\delta_2}) 
            \mathsmaller{\sum}_{\ell=1}^{L-1} \bar{\lambda}_\ell^{-1} \|W_{\ell}^{k+1} - W_{\ell}^{k} \|_2 
            \Big) 
            \|\mathbf{M}^\top(\mathbf{M}X_{T}^k -Y)\|_2, \\
            \stackrel{\text{\textcircled{2}}}{\leq}  & 
            \tfrac{1}{2} \Theta_{L}
            (\Lambda_{T} + \qy{\delta_2}) 
            \mathsmaller{\sum}_{\ell=1}^{L} \bar{\lambda}_\ell^{-1} \|W_{\ell}^{k+1} - W_{\ell}^{k} \|_2
            \|\mathbf{M}^\top(\mathbf{M}X_{T}^k -Y)\|_2,
         \end{aligned}
      \end{equation*}
      where \text{\textcircled{1}} is due to \Cref{lemma:l2o_semi_smooth}. \textcircled{2} is due to $\Lambda_{T} \geq 0$.

      Further, based on the gradient descent, i.e., $W_{\ell}^{k+1} = W_{\ell}^{k} - \eta \tfrac{\partial F}{\partial W_{\ell}^{k}}$, we substitute the bound of gradient in \Cref{lemma:gradient_wp_bound} and calculate:
      \begin{equation*}
         \begin{aligned}
            & (X_{T}^{k+1} - Z)^\top \mathbf{M}^\top(\mathbf{M}X_{T}^k -Y) \\
            \leq  & 
            \tfrac{1}{2} \Theta_{L}
            (\Lambda_{T} + \qy{\delta_2}) 
            \mathsmaller{\sum}_{\ell=1}^{L} \bar{\lambda}_\ell^{-1} \|W_{\ell}^{k+1} - W_{\ell}^{k} \|_2
            \|\mathbf{M}^\top(\mathbf{M}X_{T}^k -Y)\|_2, \\
            \leq  & 
            \tfrac{\eta}{2} \Theta_{L}
            (\Lambda_{T} + \qy{\delta_2}) 
            \mathsmaller{\sum}_{\ell=1}^{L} \bar{\lambda}_\ell^{-1} \Big\| \tfrac{\partial F}{\partial W_{\ell}^{k}} \Big\|_2
            \|\mathbf{M}^\top(\mathbf{M}X_{T}^k -Y)\|_2, \\
            \stackrel{\text{\textcircled{1}}}{\leq}  & 
            \tfrac{\eta}{2} \Theta_{L}
            (\Lambda_{T} + \qy{\delta_2}) 
            \mathsmaller{\sum}_{\ell=1}^{L} \bar{\lambda}_\ell^{-1} \tfrac{\sqrt{\beta}\Theta_{L} }{2 \bar{\lambda}_{\ell}} S_{\Lambda,T} \| \mathbf{M} X_{T}^{k}  -Y \|_2
            \|\mathbf{M}^\top(\mathbf{M}X_{T}^k -Y)\|_2, \\
            \stackrel{\text{\textcircled{2}}}{\leq}  & 
            \tfrac{\beta \eta}{2} (\Lambda_{T} + \qy{\delta_2}) 
            \Theta_{L}^2 S_{\bar{\lambda},L}  S_{\Lambda,T} \tfrac{1}{2}\| \mathbf{M} X_{T}^{k}  -Y \|_2^2, 
         \end{aligned}
      \end{equation*}
      where \textcircled{1} is due to \Cref{lemma:gradient_wp_bound} and \textcircled{2} is due to $\|M\|_2 \leq \sqrt{\beta}$.
   \end{proof}

   \begin{lemma}\label{lemma:bounding_target2_splited2_wponly}
      Define the following quantities with $t \in [T]$:
      \begin{equation*}
         \begin{aligned}
            \Lambda_{t} = & (1+\beta)  \| X_0 \|_2^2 + \big( (4t-3)(1+\tfrac{1}{\beta} ) + 1\big) \| X_0 \|_2 \|\mathbf{M}^\top Y \|_2 \\
            & + \tfrac{(2T-1)(\beta(2T-1)+(2T-2))}{\beta^2}  \|\mathbf{M}^\top Y \|_2^2,\\
            \Phi_j = & \| X_0 \|_2 + \tfrac{2j-1}{\beta} \mathbf{M}^\top Y \|_2 , \\
            \Theta_{L}  =& \Theta_{L}, \\
            \delta_3 =& \big((1+\beta)\| X_0 \|_2 + \big(2T-1 + \tfrac{2T-2}{\beta}\big) \|\mathbf{M}^\top Y \|_2 \big).
         \end{aligned}
      \end{equation*}
      
      We have the following upperly bounding property:
      \begin{equation*}
         \begin{aligned}
            & (Z-X_{T}^k)^\top \mathbf{M}^\top(\mathbf{M}X_{T}^k -Y) \\
            \leq & 
            \Big(-\eta 8 
            \sigma(\delta_3 \Theta_L)^2 (1 - \sigma(\delta_3 \Theta_L))^2 
            \tfrac{\beta_{0}^2}{\beta^2}
            \alpha_0^2 + \tfrac{\eta \beta }{2}
            \Theta_{L-1}^2
            \Lambda_{T}
            \mathsmaller{\sum}_{t=1}^{T-1} \Lambda_{t} \Big)
            \tfrac{1}{2} \|\mathbf{M}X_{T}^k -Y\|_2^2.
         \end{aligned}
      \end{equation*}
   \end{lemma}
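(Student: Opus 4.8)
The plan is to treat Lemma~\ref{lemma:bounding_target2_splited2_wponly} as the \emph{diagonal} Neural-Tangent-Kernel contribution to the per-iteration loss decrease, following the last-layer argument of~\cite{nguyen2021proof}. The point of the auxiliary vector $Z$ (\Cref{eq:bounding_target2_g}) is that $Z$ and $X_T^k$ come from the \emph{same} Math-L2O trajectory through step $T-1$ and differ only in which copy of the last-layer weight enters the final step, so $Z-X_T^k=-\tfrac1\beta\big(\mathcal{D}(2\sigma(W_L^{k+1}G_{L-1,T}^{k})^\top)-\mathcal{D}(2\sigma(W_L^{k}G_{L-1,T}^{k})^\top)\big)\mathbf{M}^\top(\mathbf{M}X_{T-1}^k-Y)$. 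First I would substitute the gradient step $W_L^{k+1}-W_L^{k}=-\eta\,\partial F/\partial W_L^k$, insert the closed form of $\partial F/\partial W_L^k$ from \Cref{eq:derivative_NN_simple_final_mathl2o_L_p} (equivalently its transpose \Cref{eq:derivative_NN_simple_final_mathl2o_L_p_transpose}), and linearize the sigmoid difference entrywise by the Mean Value Theorem, in the same spirit as the bound on $C_{k+1}$ in the proof of \Cref{lemma:bounding_target2_splited1_wponly}. This rewrites $(Z-X_T^k)^\top\mathbf{M}^\top(\mathbf{M}X_T^k-Y)$ as $-\tfrac{2\eta}{\beta^2}\sum_{t=1}^{T}a_t^\top b$, where $b=G_{L-1,T}^{k}\mathcal{D}(\sigma'(\xi))\,\mathcal{D}(\mathbf{M}^\top(\mathbf{M}X_{T-1}^k-Y))\,\mathbf{M}^\top(\mathbf{M}X_T^k-Y)$ and each $a_t$ is read off from the $t$-th summand of the gradient --- carrying a product $\prod_{j=t+1}^{T}(\mathbf{I}-\tfrac1\beta\mathbf{M}^\top\mathbf{M}\mathcal{D}(P_j^k))$, the diagonal factors $\mathcal{D}(P_t^k\odot(1-P_t^k/2))$ and $\mathcal{D}(\mathbf{M}^\top(\mathbf{M}X_{t-1}^k-Y))$, and $G_{L-1,t}^{k}$ --- with $\xi$ the Mean-Value point and $z_T:=W_L^{k}G_{L-1,T}^k$.

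Next I would split off the $t=T$ summand, the only source of a negative contribution. Since the product over $j$ is empty there and $P_T^k\odot(1-P_T^k/2)=2\sigma'(z_T)$, the $t=T$ term is a scalar multiple of the (almost-symmetric) bilinear form $w^\top\mathcal{D}(\sigma'(z_T))\,(G_{L-1,T}^{k})^\top G_{L-1,T}^{k}\,\mathcal{D}(\sigma'(\xi))\,w$ with $w:=\mathcal{D}(\mathbf{M}^\top(\mathbf{M}X_{T-1}^k-Y))\,\mathbf{M}^\top(\mathbf{M}X_T^k-Y)$. I would lower-bound it via: (i) the induction hypothesis $\sigma_{\min}(G_{L-1,T}^{k})\ge\tfrac12\alpha_0$ from \Cref{lemma:rate_induction}, so $(G_{L-1,T}^k)^\top G_{L-1,T}^k\succeq\tfrac14\alpha_0^2\mathbf{I}$; (ii) the entrywise pre-activation bound $|(W_L^{k'}G_{L-1,T}^k)_i|\le\|W_L^{k'}\|_2\|G_{L-1,T}^k\|_2\le\bar\lambda_L\,\delta_3\Theta_{L-1}=\delta_3\Theta_L$, from \Cref{lemma:G_bound} at $\ell=L-1,\,t=T$ and $\|W_L^{k'}\|_2\le\bar\lambda_L$, whence (as $\sigma'$ is even and decreasing away from $0$) both $\mathcal{D}(\sigma'(z_T))$ and $\mathcal{D}(\sigma'(\xi))$ are $\succeq\sigma(\delta_3\Theta_L)(1-\sigma(\delta_3\Theta_L))\mathbf{I}$; (iii) the spectral facts $\|\mathbf{M}^\top v\|_2^2\ge\beta_{0}\|v\|_2^2$ (from $\lambda_{\min}(\mathbf{M}\mathbf{M}^\top)=\beta_{0}$) and $\|\mathbf{M}\|_2^2\le\beta$ to pass from $\|\mathbf{M}^\top(\mathbf{M}X_T^k-Y)\|_2^2$ to $\beta_{0}\|\mathbf{M}X_T^k-Y\|_2^2$ and generate the $\beta_{0}^2/\beta^2$; and (iv) the identity $2a^\top Kb=a^\top Ka+b^\top Kb-(a-b)^\top K(a-b)$ to absorb the mismatch between $\sigma'(z_T)$ and $\sigma'(\xi)$ --- here $\|\mathcal{D}(\sigma'(z_T)-\sigma'(\xi))\|_2\le\tfrac14\|W_L^{k+1}-W_L^{k}\|_2\|G_{L-1,T}^k\|_2$ and $\|W_L^{k+1}-W_L^{k}\|_2=\eta\|\partial F/\partial W_L^k\|_2$ is itself $\mathcal{O}(\eta\|\mathbf{M}X_T^k-Y\|_2)$ by \Cref{lemma:gradient_wp_bound}, so the error is dominated using the $\alpha_0$-lower bounds in \Cref{eq:lbs_singular_value} (notably \Cref{eq:lb3_singular_value,eq:lb4_singular_value}) and the learning-rate bound \Cref{eq:learning_rate_upper_bound2}. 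Collecting constants yields the piece $-\,8\eta\,\sigma(\delta_3\Theta_L)^2(1-\sigma(\delta_3\Theta_L))^2\tfrac{\beta_{0}^2}{\beta^2}\alpha_0^2\cdot\tfrac12\|\mathbf{M}X_T^k-Y\|_2^2$.

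For the terms $t<T$ I would use Cauchy--Schwarz, $|a_t^\top b|\le\|a_t\|_2\|b\|_2$, and bound each factor with the elementary estimates already available: $\|\mathbf{I}-\tfrac1\beta\mathbf{M}^\top\mathbf{M}\mathcal{D}(P_j^k)\|_2<1$ (\Cref{lemma:p_bound1}), $\|P_t^k\odot(1-P_t^k/2)\|_\infty=\|2\sigma'\|_\infty\le\tfrac12$, $\|\mathcal{D}(\mathbf{M}^\top(\mathbf{M}X_{t-1}^k-Y))\|_2\le\beta\|X_{t-1}^k\|_2+\|\mathbf{M}^\top Y\|_2$ together with \Cref{lemma:X_t_bound}, $\|G_{L-1,t}^{k}\|_2$ via \Cref{lemma:G_bound}, and $\|\mathbf{M}\|_2\le\sqrt\beta$; the same estimates with $t=T$ control $\|b\|_2$. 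The product of the two residual-type scalar factors appearing at index $t$ is exactly $\Lambda_t$ (as defined in \Cref{eq:quantities}), $\|b\|_2$ supplies one $\Theta_{L-1}$ and one $\Lambda_T$, and $\|a_t\|_2$ supplies one $\Theta_{L-1}$ and one $\Lambda_t$, so summing over $t\le T-1$ reproduces the positive remainder $\tfrac{\eta\beta}{2}\Theta_{L-1}^2\Lambda_T\sum_{t=1}^{T-1}\Lambda_t\cdot\tfrac12\|\mathbf{M}X_T^k-Y\|_2^2$. Combining the $t=T$ and $t<T$ contributions gives the claimed inequality.

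The main obstacle is step (iv). Because the gradient carries $\sigma'$ at the true pre-activation $z_T$ while the linearization of $Z-X_T^k$ carries $\sigma'$ at the Mean-Value point $\xi$, the vectors $a_T$ and $b$ are not proportional, so the $t=T$ term is genuinely bilinear rather than a squared norm; one must show the cross-error $(a_T-\lambda b)^\top K(a_T-\lambda b)$, with $K=(G_{L-1,T}^{k})^\top G_{L-1,T}^{k}$, is dominated by $\tfrac14\alpha_0^2$ times the main term --- precisely why the theorem imposes the $\alpha_0$-lower bounds of \Cref{eq:lbs_singular_value} and a small learning rate. A secondary nuisance is the index bookkeeping of the residual factors: trading the diagonal weight $\mathcal{D}(\mathbf{M}^\top(\mathbf{M}X_{T-1}^k-Y))$ from step $T-1$ against the final residual $\mathbf{M}X_T^k-Y$ without losing the $\|\mathbf{M}X_T^k-Y\|_2^2$ scaling, which uses $\mathbf{M}X_T^k-Y=(\mathbf{I}-\tfrac1\beta\mathbf{M}\mathbf{M}^\top\mathcal{D}(P_T^k))(\mathbf{M}X_{T-1}^k-Y)$ and \Cref{lemma:p_bound1,lemma:p_bound2}.
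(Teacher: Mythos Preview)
Your plan matches the paper's proof in overall structure: expand $Z-X_T^k$ using the definition \Cref{eq:bounding_target2_g}, linearize the sigmoid difference entrywise via the Mean Value Theorem, substitute $W_L^{k+1}-W_L^{k}=-\eta\,\partial F/\partial W_L^k$ with the closed form \Cref{eq:derivative_NN_simple_final_mathl2o_L_p_transpose}, and then split the resulting sum over $t$ into the $t=T$ summand (yielding the negative piece via the Gram lower bound on $(G_{L-1,T}^k)^\top G_{L-1,T}^k$ and the sigmoid-derivative lower bound from \Cref{lemma:lb_gradient,lemma:lb_gradient_input_vk2,lemma:lb_gradient_input_vk3}) and the $t<T$ summands (bounded by Cauchy--Schwarz together with \Cref{lemma:p_bound1,lemma:X_t_bound,lemma:G_bound}, producing exactly $\tfrac{\eta\beta}{2}\Theta_{L-1}^2\Lambda_T\sum_{t=1}^{T-1}\Lambda_t$). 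The paper does precisely this.

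The one substantive difference is your step (iv). The paper does \emph{not} use a polarization identity and does \emph{not} invoke \Cref{eq:lbs_singular_value} or \Cref{eq:learning_rate_upper_bound2} inside this lemma. Instead, in \Cref{coro:over_para_lower_bound} and \Cref{eq:over_para_lower_bound} it simply asserts that the $t=T$ bilinear form is bounded below by the product $(P_T^k\odot(1-P_T^k/2))_{\min}\cdot(2\sigma'(v_2^k))_{\min}\cdot\lambda_{\min}((G_{L-1,T}^k)^\top G_{L-1,T}^k)\cdot\lambda_{\min}(\mathbf{M}\mathbf{M}^\top)\cdot\|\mathbf{M}^\top(\mathbf{M}X_T^k-Y)\|_2^2$, citing only ``definition of eigenvalue and Cauchy--Schwarz inequality.'' The asymmetry you identify between $\sigma'(z_T)$ and $\sigma'(\xi)$, and the role of the diagonal $\mathcal{D}(\mathbf{M}^\top(\mathbf{M}X_{T-1}^k-Y))$ sandwiching the Gram matrix, are not explicitly addressed; nor is your ``secondary nuisance'' about trading the $X_{T-1}^k$ residual against the $X_T^k$ residual. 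So your proposal is, if anything, more scrupulous at this step than the paper's written argument; what the paper gains is brevity, treating the eigenvalue lower bound as immediate rather than controlling a cross term.
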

   \begin{proof}
      In our above demonstrations, we have constructed a non-negative coefficient of the upper bound w.r.t. the objective $\tfrac{1}{2} \|\mathbf{M} X_{T}^k -Y\|_2^2$. To achieve the requirement of the linear convergence rate, we would like a negative one from our remaining bounding target. 
      We calculate:
      \begin{equation} \label{eq:bound_linear_conv_term2_gxtk_wponly}
         \begin{aligned}
            &(Z - X_{T}^k)^\top \mathbf{M}^\top(\mathbf{M}X_{T}^k -Y)\\
            = &\Big(X_{T-1}^k - \tfrac{1}{\beta} \mathcal{D}(2\sigma(W_{L}^{k+1}G_{L-1, T}^{k}  )^\top)
            \big(\mathbf{M}^\top (\mathbf{M} X_{T-1}^k -Y) \big) \\
            &\quad - \Big(
               X_{T-1}^k - \tfrac{1}{\beta} \mathcal{D}(2\sigma(W_{L}^{k}G_{L-1, T}^{k}  )^\top)
            \big(\mathbf{M}^\top (\mathbf{M}X_{T-1}^k -Y ) \big)\Big)\Big)^\top \mathbf{M}^\top(\mathbf{M}X_{T}^k -Y), \\
            = & - \tfrac{1}{\beta} \big(\mathbf{M}^\top (\mathbf{M}X_{T-1}^k -Y ) \big)^\top 
            \mathcal{D}\big( 2\sigma(W_{L}^{k+1} G_{L-1, T}^{k}  )^\top - 2\sigma(W_{L}^{k} G_{L-1, T}^{k}  )^\top\big) \\
            & \qquad \big(\mathbf{M}^\top (\mathbf{M}X_{T-1}^k -Y ) \big).
         \end{aligned}
      \end{equation}

      Similarly, due to Mean Value Theorem, suppose $v^k_{2,i} = \alpha_i (W_{L}^{k+1} G_{L-1, T}^{k})_i + (1-\alpha_i) (W_{L}^{k} G_{L-1, T}^{k})_i$, $v^k_{2,i} \in [0,1]$, based on Mean Value Theorem,  we calculate:
      \begin{equation*}
         2\sigma(W_{L}^{k+1} G_{L-1, T}^{k}  )^\top_i - 2\sigma(W_{L}^{k} G_{L-1, T}^{k}  )^\top_i 
            = 
            \tfrac{\partial (2 \sigma(v^k_{2,i}))}{\partial (v^k_{2,i})_i} (W_{L}^{k+1} G_{L-1, T}^{k})_i - (W_{L}^{k} G_{L-1, T}^{k})_i.
      \end{equation*}

      Denote $v^k_{2,i} := [\tfrac{\partial (2 \sigma(v^k_{2,i}))}{\partial (v^k_{2,i})_i}]$, we calculate:
      \begin{equation*}
         \begin{aligned}
            & \mathcal{D}\big( 2\sigma(W_{L}^{k+1} G_{L-1, T}^{k}  )^\top - 2\sigma(W_{L}^{k} G_{L-1, T}^{k}  )^\top\big) \\
            = & 
            \mathcal{D}\Big( 
               \Big[\tfrac{\partial 2 \sigma(v^k_{2,i})}{\partial v^k_{2,i}} ((W_{L}^{k+1} G_{L-1, T}^{k})_i - (W_{L}^{k} G_{L-1, T}^{k})_i)\Big]^\top 
            \Big) ,\\
            =& \mathcal{D}\Big( 
               \Big[2 \sigma(v^k_{2,i}) (1 - \sigma(v^k_{2,i}))
               ((W_{L}^{k+1} - W_{L}^{k}) G_{L-1, T}^{k})_i\Big]^\top 
            \Big) , \\
            =& 
            \mathcal{D}\big([2 \sigma(v^k_{2,i}) (1 - \sigma(v^k_{2,i}))]^\top\big)
            \mathcal{D}\Big( 
               \big( (W_{L}^{k+1} - W_{L}^{k}) G_{L-1, T}^{k}  \big)^\top 
            \Big) , \\
            \stackrel{\text{\textcircled{1}}}{=} & 
            -\eta
            \mathcal{D}\big([2 \sigma(v^k_{2,i}) (1 - \sigma(v^k_{2,i}))]^\top\big)
            \mathcal{D}\Big( {\tfrac{\partial F}{\partial W_{L}^{k}}G_{L-1, T}^{k}}^\top
            \Big) ,
         \end{aligned}
      \end{equation*}
      where ${v^k_{2}}_i := \alpha_i (W_{L}^{k+1} G_{L-1, T}^{k})_i + (1-\alpha)_i (W_{L}^{k} G_{L-1, T}^{k})_i$ is an interior point between the corresponding entries of $W_{L}^{k+1} G_{L-1, T}^{k}$ and $W_{L}^{k} G_{L-1, T}^{k}$. \textcircled{1} is from gradient descent formulation of $W_{L}^{k}$ in \Cref{eq:derivative_NN_simple_final_mathl2o_L_p}.

      Substituting above into \Cref{eq:bound_linear_conv_term2_gxtk_wponly} yields:
      \begin{equation*} 
         \begin{aligned}
            &(Z - X_{T}^k)^\top \mathbf{M}^\top(\mathbf{M}X_{T}^k -Y)\\
            = & \tfrac{\eta}{\beta} \big(\mathbf{M}^\top (\mathbf{M}X_{T-1}^k -Y ) \big)^\top 
            \mathcal{D}\big([2 \sigma(v^k_{2,i}) (1 - \sigma(v^k_{2,i}))]^\top\big)
            \mathcal{D}(\tfrac{\partial F}{\partial W_{L}^{k}}G_{L-1, T}^{k} )^\top 
            \big(\mathbf{M}^\top (\mathbf{M}X_{T}^k -Y ) \big), \\
            = & \tfrac{\eta}{\beta} \tfrac{\partial F}{\partial W_{L}^{k}}G_{L-1, T}^{k}  
            \mathcal{D}\big([2 \sigma(v^k_{2,i}) (1 - \sigma(v^k_{2,i}))]^\top\big)
            \mathcal{D}\big( \mathbf{M}^\top (\mathbf{M}X_{T-1}^k -Y ) \big)
            \big(\mathbf{M}^\top (\mathbf{M}X_{T}^k -Y ) \big), \\
         \end{aligned}
      \end{equation*}

      Further, we substitute the gradient formulation in \Cref{eq:derivative_NN_simple_final_mathl2o_L_p} and calculate:
      \begin{equation} \label{eq:bounding_target2_splited2_wponly}
         \begin{aligned}
            & (Z-X_{T}^k)^\top \mathbf{M}^\top(\mathbf{M}X_{T}^k -Y) \\
            = & -\tfrac{\eta}{\beta^2} 
            \mathsmaller{\sum}_{t=1}^{T} \big(\mathbf{M}^\top( \mathbf{M}{X_{T}} -Y) \big)^\top 
            \Big(
               \mathsmaller{\prod}_{j=T}^{t+1} \mathbf{I} - \tfrac{1}{\beta} \mathcal{D}(P_j) \mathbf{M}^\top \mathbf{M} \Big)\\
            & \qquad \qquad \mathcal{D}\big((\mathbf{M}^\top(\mathbf{M} X_{t-1} - Y) ) \big)
            \mathcal{D}\big(P_t \odot (1-P_t/2) \big) {G_{L-1,t}}^\top
            G_{L-1, T}^{k}  \\
            & \qquad \qquad 
            \mathcal{D}\big([2 \sigma(v^k_{2,i}) (1 - \sigma(v^k_{2,i}))]^\top\big)
            \mathcal{D}\big( \mathbf{M}^\top (\mathbf{M}X_{T-1}^k -Y ) \big)
            \big(\mathbf{M}^\top (\mathbf{M}X_{T}^k -Y ) \big), \\
            = & -\tfrac{\eta}{\beta^2} 
               ( \mathbf{M}{X_{T}^{k}} -Y)^\top 
               \mathbf{M} 
               \mathbf{B}^{k}_{T} 
               \mathbf{M}^\top(\mathbf{M}X_{T}^k -Y),
         \end{aligned}
      \end{equation}
      where $\mathbf{B}^{k}_{T}$ is defined by:
      \begin{equation*}
         \begin{aligned}
            &\mathbf{B}^{k}_{T} \\
            =& \mathsmaller{\sum}_{t=1}^{T} 
               \big(
                  \mathsmaller{\prod}_{j=T}^{t+1} \mathbf{I} - \tfrac{1}{\beta} \mathcal{D}(P_{j}^{k}) \mathbf{M}^\top \mathbf{M} \big) 
                  \mathcal{D} \big(\mathbf{M}^\top(\mathbf{M} X_{t-1}^{k} - Y) \big)
                  \mathcal{D}\big(P_{t}^{k} \odot (1-P_{t}^{k}/2) \big)
               {G_{L-1,t}^{k}}^\top \\
               & \qquad \qquad  G_{L-1, T}^{k} 
               \mathcal{D}\big([2 \sigma(v^k_{2,i}) (1 - \sigma(v^k_{2,i}))]^\top\big)
               \mathcal{D}\big( \mathbf{M}^\top(\mathbf{M} X_{T-1}^{k} - Y) \big).
         \end{aligned}
      \end{equation*}

      We discuss the definite property of $\mathbf{B}^{k}_{T}$ case-by-case.

      \paragraph{Case 1: $t=T$.} $\mathsmaller{\prod}_{j=T}^{T+1} \mathbf{I} - \tfrac{1}{\beta} \mathcal{D}(P_j) \mathbf{M}^\top \mathbf{M}$ degenerates to be 1. The \Cref{eq:bounding_target2_splited2_wponly} becomes:
      \begin{equation} \label{eq:bounding_target2_splited2_eq1_tT}
         \begin{aligned}
            & [(Z-X_{T}^k)^\top \mathbf{M}^\top(\mathbf{M}X_{T}^k -Y)]_{\text{Part 1}} \\
            = & -\tfrac{\eta}{\beta^2} 
               ( \mathbf{M}{X_{T}^{k}} -Y)^\top 
               \mathbf{M} \\
               & \qquad \mathcal{D} \big(\mathbf{M}^\top(\mathbf{M} X_{T-1}^{k} - Y) \big)  \\
               & \qquad \mathcal{D}\big(P_{T}^{k} \odot (1-P_{T}^{k}/2) \big) \\
               & \qquad {G_{L-1,T}^{k}}^\top G_{L-1, T}^{k}\\
               & \qquad 
               \mathcal{D}\big([2 \sigma(v^k_{2,i}) (1 - \sigma(v^k_{2,i}))]^\top\big)\\
               & \qquad 
               \mathcal{D}\big( \mathbf{M}^\top(\mathbf{M} X_{T-1}^{k} - Y) \big) \mathbf{M}^\top(\mathbf{M}X_{T}^k -Y),
         \end{aligned}
      \end{equation}

      We first present the following corollary to show that there exists a negative upper bound of $[(Z-X_{T}^k)^\top \mathbf{M}^\top(\mathbf{M}X_{T}^k -Y)]_{\text{Part 1}}$:
      \begin{corollary} \label{coro:over_para_lower_bound}
         RHS of \Cref{eq:bounding_target2_splited2_eq1_tT} $< 0$ if $\lambda_{\min}({G_{L-1,T}^{k}}^\top G_{L-1, T}^{k}) > 0$.
      \end{corollary}
      \begin{proof}
         Due to definition of eigenvalue and Cauchy-Schwarz inequality, we calculate:
         \begin{equation*} 
            \begin{aligned}
               & 
               ( \mathbf{M}{X_{T}^{k}} -Y)^\top 
               \mathbf{M} \\
               & \mathcal{D} \big(\mathbf{M}^\top(\mathbf{M} X_{T-1}^{k} - Y) \big)  \\
               & \mathcal{D}\big(P_{T}^{k} \odot (1-P_{T}^{k}/2) \big) 
               {G_{L-1,T}^{k}}^\top G_{L-1, T}^{k}
               \mathcal{D} \big([2 \sigma(v^k_{2,i}) (1 - \sigma(v^k_{2,i}))]^\top\big)\\
               & 
               \mathcal{D}\big( \mathbf{M}^\top(\mathbf{M} X_{T-1}^{k} - Y) \big) \mathbf{M}^\top(\mathbf{M}X_{T}^k -Y), \\
               \geq & 
               \big(P_{T}^{k} \odot (1-P_{T}^{k}/2) \big)_{\min} 
               \big([2 \sigma(v^k_{2,i}) (1 - \sigma(v^k_{2,i}))]^\top\big)_{\min} \\
               & \quad
               \lambda_{\min}({G_{L-1,T}^{k}}^\top G_{L-1, T}^{k}) 
               \lambda_{\min}(\mathbf{M} \mathbf{M}^\top)
               \|\mathbf{M}^\top(\mathbf{M}X_{T}^k -Y)\|_2^2, \\
               \stackrel{\text{\textcircled{1}}}{>} & 0,
            \end{aligned}
         \end{equation*}
         where \textcircled{1} is due to Sigmoid function is non-negative, $\lambda_{\min}({G_{L-1,T}^{k}}^\top G_{L-1, T}^{k}) > 0$, and $\lambda_{\min}(\mathbf{M} \mathbf{M}^\top) > 0$ by definition. 
         Thus, $(Z-X_{T}^k)^\top \mathbf{M}^\top(\mathbf{M}X_{T}^k -Y) < 0$ by nature. $()_{\min}$ means the minimal value among all entries.
      \end{proof}

      To get an upper bound, we expect ${G_{L-1,T}^{k}}^\top G_{L-1, T}^{k}$ to be positive definition, in which we require $n_{L-1} \geq N$. Thus, we can easily get the upper bound from its minimal eigenvalue.

      Based on \Cref{coro:over_para_lower_bound}, we calculate the negative lower bound of \Cref{eq:bounding_target2_splited2_eq1_tT} by:
      \begin{equation} \label{eq:over_para_lower_bound}
         \begin{aligned}
            & (Z-X_{T}^k)^\top \mathbf{M}^\top(\mathbf{M}X_{T}^k -Y) \\
            \leq & -\tfrac{\eta}{\beta^2} 
            \big(P_{T}^{k} \odot (1-P_{T}^{k}/2) \big)_{\min} 
            \big([2 \sigma(v^k_{2,i}) (1 - \sigma(v^k_{2,i}))]^\top\big)_{\min} \\
            & \qquad \quad 
            \lambda_{\min}({G_{L-1,T}^{k}}^\top G_{L-1, T}^{k}) 
            \lambda_{\min}(\mathbf{M} \mathbf{M}^\top)
            \|\mathbf{M}^\top(\mathbf{M}X_{T}^k -Y)\|_2^2, \\
         \end{aligned}
      \end{equation}

      The remaining task is to calculate $\big(P_{T}^{k} \odot (1-P_{T}^{k}/2) \big)_{\min}$ and $\big([2 \sigma(v^k_{2,i}) (1 - \sigma(v^k_{2,i}))]^\top\big)_{\min}$. We achieve that by calculating the values on the boundary of closed sets.

      First, denote $v_{3}^k := W_{L}^{k}G_{L-1,T}^k$, we represent $P_{T}^{k} \odot (1-P_{T}^{k}/2) $ by:
      \begin{equation*}
         P_{T}^{k} \odot (1-P_{T}^{k}/2)  = 2 \sigma(v_{3}^k)^\top \odot (1 - \sigma(v_{3}^k))^\top.
      \end{equation*}

      Since the Sigmoid function is a coordinate-wise non-decreasing function, we can straightforwardly find $\big([2 \sigma(v^k_{2,i}) (1 - \sigma(v^k_{2,i}))]^\top\big)_{\min}$ and $(2 \sigma(v^k_{3})^\top \odot (1 - \sigma(v^k_{3}))^\top)_{\min}$ by on the closed sets of $v^k_{2}$ and $v^k_{3}$, respectively, which is achieved by the following lemma.
      \begin{lemma} \label{lemma:lb_gradient}
         For some $b, B \in \mathbb{R}^k$\footnote{$\mathbb{R}^k$ means the space at training iteration $k$.}, $\forall v^k, b \leq v^k \leq B$, we calculate $(2 \sigma(v^k)^\top \odot (1 - \sigma(v^k))^\top)_{\min}$ by:
         \begin{equation*}
            (2 \sigma(v^k)^\top \odot (1 - \sigma(v^k))^\top)_{\min}  
            = \begin{cases}
               \min \big(2 \sigma(b) (1 - \sigma(b))^\top, 2 \sigma(B) (1 - \sigma(B))^\top \big)  & -b \neq B, \\ 
               2 \sigma(B) (1 - \sigma(B)) & -b = B.
            \end{cases}
         \end{equation*}
      \end{lemma}
      \begin{proof}
         Since $\sigma(x) \in (0,1) \forall x$, $\mathcal{D}(2 \sigma(x) \odot (1 - \sigma(x)))$ is a quadratic function w.r.t. $x$. Since $\sigma(x) \in (0,1) \forall x$, $\mathcal{D}(2 \sigma(x) \odot (1 - \sigma(x))) > 0$. Since the coefficient before the $x^2$ term is negative, its lower bound is either the value on the boundary or 0.
         
         Since $\sigma(b), \sigma(B) \in (0, 1)$, if $-b \neq B$, the lower bound is the smaller one, i.e., $\min(2 \sigma(b) \odot (1 - \sigma(b)), 2 \sigma(B) \odot (1 - \sigma(B)))$. Otherwise, since both $\sigma(x)$ and $\mathcal{D}(2 \sigma(x) \odot (1 - \sigma(x)))$ are symmetric around $\tfrac{1}{2}$, we have $2 \sigma(B) \odot (1 - \sigma(B)) = 2 \sigma(b) \odot (1 - \sigma(b))$.
      \end{proof}

      Further, we calculate the bounds of $v^k_{2}$ and $v^k_{3}$ and invoke \Cref{lemma:lb_gradient} to get $\big([2 \sigma(v^k_{2,i}) (1 - \sigma(v^k_{2,i}))]^\top\big)_{\min}$ and $(2 \sigma(v^k_{3})^\top \odot (1 - \sigma(v^k_{3}))^\top)_{\min}$.

      We present the following two lemmas to show the closed sets that $v^k_{2}$ and $v^k_{3}$ belong to.
      \begin{lemma} \label{lemma:lb_gradient_input_vk2}
         Denote $\ell \in [L]$, for some $\bar{\lambda}_\ell \in \mathbb{R}$, 
         we assume $\| W_\ell^k \|_2 \leq \bar{\lambda}_\ell$. 
         We define the following quantity:
         \begin{equation*}
            \begin{aligned}
               \delta_3 & = \big((1+\beta)\| X_0 \|_2 + \big(2T-1 + \tfrac{2T-2}{\beta}\big) \|\mathbf{M}^\top Y \|_2 \big), \\
               \Theta_L &= \mathsmaller{\prod}_{\ell=1}^{L} \bar{\lambda}_\ell.
            \end{aligned}
         \end{equation*}
         For ${v^k_{2}}_i := \alpha_i (W_{L}^{k+1} G_{L-1, T}^{k})_i + (1-\alpha_i) (W_{L}^{k} G_{L-1, T}^{k})_i, \alpha_i \in [0,1]$, $v^k_{2}$ belongs to the following closed set:
         \begin{equation*}
            v^k_{2} \in [-\delta_3 \Theta_L, \delta_3 \Theta_L].
         \end{equation*}
      \end{lemma}

      \begin{proof}
         We calculate ${v^k_{2}}$'s upper bound by:
         \begin{equation}\label{eq:lb_gradient_input_vk2_eq1}
            \begin{aligned}
               \|{v^k_{2}}\|_\infty = & \|\alpha \odot (W_{L}^{k+1} G_{L-1, T}^{k}) + (1-\alpha) \odot (W_{L}^{k} G_{L-1, T}^{k})\|_\infty, \\
               =
               & \max_{i}\|\alpha_i (W_{L}^{k+1} G_{L-1, T}^{k})_i + (1-\alpha_i) (W_{L}^{k} G_{L-1, T}^{k})_i\|_\infty, \\
               \stackrel{\text{\textcircled{1}}}{\leq} 
               & \max_{i}\alpha_i \| (W_{L}^{k+1} G_{L-1, T}^{k} )_i\|_\infty + (1-\alpha_i)\| (W_{L}^{k} G_{L-1, T}^{k})_i\|_\infty, \\
               \stackrel{\text{\textcircled{2}}}{\leq} 
               & \max_{i} \max (\| (W_{L}^{k+1} G_{L-1, T}^{k} )_i\|_\infty, \| (W_{L}^{k} G_{L-1, T}^{k})_i\|_\infty), \\
               = 
               & \max  (\max_{i} \| (W_{L}^{k+1} G_{L-1, T}^{k} )_i\|_\infty, \max_{i} \| (W_{L}^{k} G_{L-1, T}^{k})_i\|_\infty), \\
               \leq
               & \max (\| W_{L}^{k+1} G_{L-1, T}^{k} \|_\infty, \| W_{L}^{k} G_{L-1, T}^{k}\|_\infty),
            \end{aligned}
         \end{equation}
         where \textcircled{1} is due to triangle inequality and \textcircled{2} is due to $\alpha_i \in [0,1]$ and upper bound of NN's inner output in \Cref{lemma:G_bound}.

         We calculate the bound of $\|W_{L}^{k+1}G_{L-1,T}^k\|_2$ by:
         \begin{equation*}
            \begin{aligned}
               \|W_{L}^{k+1}G_{L-1,T}^k\|_\infty 
               \stackrel{\text{\textcircled{1}}}{\leq} 
               & \|W_{L}^{k+1}\|_2 \|G_{L-1,T}^k\|_2,  \\
               \stackrel{\text{\textcircled{2}}}{\leq} 
               & \bar{\lambda}_L  \big((1+\beta)\| X_0 \|_2 + \big(2T-1 + \tfrac{2T-2}{\beta}\big) \|\mathbf{M}^\top Y \|_2 \big) \mathsmaller{\prod}_{\ell=1}^{L-1} \bar{\lambda}_\ell, \\
               =
               & \underbrace{\big((1+\beta)\| X_0 \|_2 + \big(2T-1 + \tfrac{2T-2}{\beta}\big) \|\mathbf{M}^\top Y \|_2 \big)}_{\delta_3} \underbrace{\mathsmaller{\prod}_{\ell=1}^{L} \bar{\lambda}_\ell}_{\Theta_L},
            \end{aligned}
         \end{equation*}
         where \textcircled{1} is due to Cauchy-Schwarz inequality and \textcircled{2} is due to definition and upper bound of NN's inner output in \Cref{lemma:G_bound}. 
         Similarly, we can get $\|W_{L}^{k+1}G_{L-1,T}^k\|_2 \leq \delta_3 \Theta_L$.

         Substituting back to \Cref{eq:lb_gradient_input_vk2_eq1} yields:
         \begin{equation*}
            \|{v^k_{2}}\|_\infty \leq \delta_3 \Theta_L.
         \end{equation*}

         Thus, we have the following bound for vector $v^k_{2}$ by nature:
         \begin{equation*}
            - \delta_3 \Theta_L \leq v^k_{2} \leq \delta_3 \Theta_L.
         \end{equation*}
         It is worth noting that the above lower bound is non-trivial since we cannot have $v^k_{2} \geq 0$, which can be easily violated by a little perturbation from gradient descent.

      \end{proof}

      \begin{lemma} \label{lemma:lb_gradient_input_vk3}
         Denote $\ell \in [L]$, for some $\bar{\lambda}_\ell \in \mathbb{R}$, 
         we assume $\| W_\ell^k \|_2 \leq \bar{\lambda}_\ell$. 
         We define the following quantity:
         \begin{equation*}
            \begin{aligned}
               \delta_3 & = \big((1+\beta)\| X_0 \|_2 + \big(2T-1 + \tfrac{2T-2}{\beta}\big) \|\mathbf{M}^\top Y \|_2 \big), \\
               \Theta_L &= \mathsmaller{\prod}_{\ell=1}^{L} \bar{\lambda}_\ell.
            \end{aligned}
         \end{equation*}
         For $v^k_{3} := W_{L}^{k}G_{L-1,T}^k, \forall k$, $v^k_{3}$ belongs to the following closed set:
         \begin{equation*}
            v^k_{3} \in [-\delta_3 \Theta_L, \delta_3 \Theta_L].
         \end{equation*}
      \end{lemma}

      \begin{proof}
         We prove the lemma by a similar method. 
         We calculate the bound of $\|W_{L}^{k}G_{L-1,T}^k\|_2$ by:
         \begin{equation*}
            \begin{aligned}
               \|{v^k_{3}}\|_\infty =  & \|W_{L}^{k}G_{L-1,T}^k\|_\infty \\
               \stackrel{\text{\textcircled{1}}}{\leq} 
               & \|W_{L}^{k}\|_2 \|G_{L-1,T}^k\|_2,  \\
               \stackrel{\text{\textcircled{2}}}{\leq} 
               & \bar{\lambda}_L  \big((1+\beta)\| X_0 \|_2 + \big(2T-1 + \tfrac{2T-2}{\beta}\big) \|\mathbf{M}^\top Y \|_2 \big) \mathsmaller{\prod}_{\ell=1}^{L-1} \bar{\lambda}_\ell, \\
               =
               & \underbrace{\big((1+\beta)\| X_0 \|_2 + \big(2T-1 + \tfrac{2T-2}{\beta}\big) \|\mathbf{M}^\top Y \|_2 \big)}_{\delta_3} \underbrace{\mathsmaller{\prod}_{\ell=1}^{L} \bar{\lambda}_\ell}_{\Theta_L},
            \end{aligned}
         \end{equation*}
         where \textcircled{1} is due to Cauchy-Schwarz inequality and \textcircled{2} is due to definition and upper bound of NN's inner output in \Cref{lemma:G_bound}.

         We have the following bound for $v^k_{3}$ by nature:
         \begin{equation*}
            - \delta_3 \Theta_L \leq v^k_{3} \leq \delta_3 \Theta_L.
         \end{equation*}
      \end{proof}

      We calculate $\big([2 \sigma(v^k_{2,i}) (1 - \sigma(v^k_{2,i}))]^\top\big)_{\min}$ by substituting \Cref{lemma:lb_gradient_input_vk2} into \Cref{lemma:lb_gradient}:
      \begin{equation*}
         \big([2 \sigma(v^k_{2,i}) (1 - \sigma(v^k_{2,i}))]^\top\big)_{\min} = 2 \sigma(\delta_3 \Theta_L) (1 - \sigma(\delta_3 \Theta_L)).
      \end{equation*}

      Similarly, we get $ \big(P_{T}^{k} \odot (1-P_{T}^{k}/2) \big)$ by substituting \Cref{lemma:lb_gradient_input_vk3} into \Cref{lemma:lb_gradient}:
      \begin{equation*}
         \big(P_{T}^{k} \odot (1-P_{T}^{k}/2) \big)_{\min} = 2 \sigma(\delta_3 \Theta_L) (1 - \sigma(\delta_3 \Theta_L)).
      \end{equation*}

      Substituting the above results into \Cref{eq:over_para_lower_bound} and \Cref{eq:bounding_target2_splited2_eq1_tT} yields:
      \begin{equation} \label{eq:bounding_target2_splited_lb_T}
         \begin{aligned}
            & [(Z-X_{T}^k)^\top \mathbf{M}^\top(\mathbf{M}X_{T}^k -Y)]_{\text{Part 1}} \\
            \leq & -\tfrac{\eta}{\beta^2} 
            \big(P_{T}^{k} \odot (1-P_{T}^{k}/2) \big)_{\min} 
            \big([2 \sigma(v^k_{2,i}) (1 - \sigma(v^k_{2,i}))]^\top\big)_{\min} \\
            & \quad
            \lambda_{\min}({G_{L-1,T}^{k}}^\top G_{L-1, T}^{k}) 
            \lambda_{\min}(\mathbf{M} \mathbf{M}^\top)
            \|\mathbf{M}^\top(\mathbf{M}X_{T}^k -Y)\|_2^2, \\
            \leq & -\tfrac{\eta}{\beta^2} 
            4 \sigma(\delta_3 \Theta_L)^2 (1 - \sigma(\delta_3 \Theta_L))^2
            \lambda_{\min}({G_{L-1,T}^{k}}^\top G_{L-1, T}^{k}) 
            \lambda_{\min}(\mathbf{M} \mathbf{M}^\top)
            \|\mathbf{M}^\top(\mathbf{M}X_{T}^k -Y)\|_2^2, \\
            \stackrel{\text{\textcircled{1}}}{\leq}  & 
            -\eta 8 
            \sigma(\delta_3 \Theta_L)^2 (1 - \sigma(\delta_3 \Theta_L))^2 
            \tfrac{\beta_{0}^2}{\beta^2}
            \alpha_0^2 
            \tfrac{1}{2}\|\mathbf{M}X_{T}^k -Y\|_2^2, \\
         \end{aligned}
      \end{equation}
      where \textcircled{1} is from definition.

      \paragraph{Case 2: $t < T$.} 
      We derive the upper bound of above term by Cauchy-Schwarz inequality:
      \begin{equation*}
         \begin{aligned}
            & [(Z-X_{T}^k)^\top \mathbf{M}^\top(\mathbf{M}X_{T}^k -Y)]_{\text{Part 2}} \\
            = & -\tfrac{\eta}{\beta^2} 
               ( \mathbf{M}{X_{T}^{k}} -Y)^\top 
               \mathbf{M} 
               \Big(
               \mathsmaller{\sum}_{t=1}^{T-1} 
               \big(
                  \mathsmaller{\prod}_{j=T}^{\qy{t}+1} \mathbf{I} - \tfrac{1}{\beta} \mathcal{D}(P_{j}^{k}) \mathbf{M}^\top \mathbf{M} 
               \big) \\
               & \qquad \qquad  \mathcal{D} \big(\mathbf{M}^\top(\mathbf{M} X_{\qy{t}-1}^{k} - Y) \big)
               \mathcal{D}\big(P_{\qy{t}}^{k} \odot (1-P_{\qy{t}}^{k}/2) \big) 
               {G_{L-1,\qy{t}}^{k}}^\top G_{L-1, T}^{k}\\
               & \qquad \qquad 
               \mathcal{D}\big([2 \sigma(v^k_{2,i}) (1 - \sigma(v^k_{2,i}))]^\top\big)
               \mathcal{D}\big( \mathbf{M}^\top(\mathbf{M} X_{T-1}^{k} - Y) \big)
               \Big) \mathbf{M}^\top(\mathbf{M}X_{T}^k -Y), \\
            \stackrel{\text{\textcircled{1}}}{\leq} & \tfrac{\eta}{\beta^2}
               \Big\| \mathsmaller{\sum}_{t=1}^{T-1} 
               \big(
                  \mathsmaller{\prod}_{j=T}^{\qy{t}+1} \mathbf{I} - \tfrac{1}{\beta} \mathcal{D}(P_{j}^{k}) \mathbf{M}^\top \mathbf{M} 
               \big) \\
               & \qquad  \mathcal{D} \big(\mathbf{M}^\top(\mathbf{M} X_{\qy{t}-1}^{k} - Y) \big)
               \mathcal{D}\big(P_{\qy{t}}^{k} \odot (1-P_{\qy{t}}^{k}/2) \big) 
               {G_{L-1,\qy{t}}^{k}}^\top G_{L-1, T}^{k}\\
               & \qquad 
               \mathcal{D}\big([2 \sigma(v^k_{2,i}) (1 - \sigma(v^k_{2,i}))]^\top\big)
               \mathcal{D}\big( \mathbf{M}^\top(\mathbf{M} X_{T-1}^{k} - Y) \big) \Big\|_2
               \|\mathbf{M}\mathbf{M}^\top\|_2
               \|\mathbf{M}X_{T}^k -Y\|_2^2, \\
            \leq & \tfrac{\eta}{\beta^2}
               \mathsmaller{\sum}_{t=1}^{T-1} \Big\| 
               \big(
                  \mathsmaller{\prod}_{j=T}^{\qy{t}+1} \mathbf{I} - \tfrac{1}{\beta} \mathcal{D}(P_{j}^{k}) \mathbf{M}^\top \mathbf{M} 
               \big) \Big\|_2 \\
               & \qquad  
               \| \mathcal{D}\big(P_{\qy{t}}^{k} \odot (1-P_{\qy{t}}^{k}/2) \big) \|_2
               \|{G_{L-1,\qy{t}}^{k}}\|_2 \|G_{L-1, T}^{k}\|_2
               \|\mathcal{D}\big([2 \sigma(v^k_{2,i}) (1 - \sigma(v^k_{2,i}))]^\top\big) \|_2\\
               & \qquad 
               \|\mathcal{D} \big(\mathbf{M}^\top(\mathbf{M} X_{\qy{t}-1}^{k} - Y) \big)\|_2
               \|\mathcal{D}\big( \mathbf{M}^\top(\mathbf{M} X_{T-1}^{k} - Y) \big) \|_2
               \|\mathbf{M}\mathbf{M}^\top\|_2
               \|\mathbf{M}X_{T}^k -Y\|_2^2, \\
            \stackrel{\text{\textcircled{2}}}{\leq} & \tfrac{\eta}{\beta}
               \mathsmaller{\sum}_{t=1}^{T-1} 
               \| \mathcal{D}\big(P_{\qy{t}}^{k} \odot (1-P_{\qy{t}}^{k}/2) \big) \|_2
               \|{G_{L-1,\qy{t}}^{k}}\|_2 \|G_{L-1, T}^{k}\|_2
               \|\mathcal{D}\big([2 \sigma(v^k_{2,i}) (1 - \sigma(v^k_{2,i}))]^\top\big) \|_2\\
               & \qquad 
               \|\mathcal{D} \big(\mathbf{M}^\top(\mathbf{M} X_{\qy{t}-1}^{k} - Y) \big)\|_2
               \|\mathcal{D}\big( \mathbf{M}^\top(\mathbf{M} X_{T-1}^{k} - Y) \big) \|_2
               \|\mathbf{M}X_{T}^k -Y\|_2^2, \\
            \stackrel{\text{\textcircled{3}}}{\leq} & \tfrac{\eta}{4\beta}
               \mathsmaller{\sum}_{t=1}^{T-1} 
               \|{G_{L-1,\qy{t}}^{k}}\|_2 \|G_{L-1, T}^{k}\|_2 
               \|\mathcal{D} \big(\mathbf{M}^\top(\mathbf{M} X_{\qy{t}-1}^{k} - Y) \big)\|_2
               \|\mathcal{D}\big( \mathbf{M}^\top(\mathbf{M} X_{T-1}^{k} - Y) \big) \|_2 \\
            & \qquad \qquad \|\mathbf{M}X_{T}^k -Y\|_2^2, \\
            \leq & \tfrac{\eta}{4\beta}
               (\beta (\| X_0 \|_2 + \tfrac{2T}{\beta} \|\mathbf{M}^\top Y \|_2) + \|\mathbf{M}^\top Y \|_2) \|G_{L-1, T}^{k}\|_2\\
               & \quad \mathsmaller{\sum}_{t=1}^{T-1} 
               \|{G_{L-1,\qy{t}}^{k}}\|_2 
               (\beta (\| X_0 \|_2 + \tfrac{2t}{\beta} \|\mathbf{M}^\top Y \|_2) + \|\mathbf{M}^\top Y \|_2)
               \|\mathbf{M}X_{T}^k -Y\|_2^2, \\
            \leq & \tfrac{\eta}{4\beta}
               (\beta (\| X_0 \|_2 + \tfrac{2T-2}{\beta} \|\mathbf{M}^\top Y \|_2) + \|\mathbf{M}^\top Y \|_2) 
               \big((1+\beta)\| X_0 \|_2 + \big(2T-1 + \tfrac{2T-2}{\beta}\big) \|\mathbf{M}^\top Y \|_2 \big) \\
               & \quad \mathsmaller{\prod}_{s=1}^{L-1} \bar{\lambda}_s\mathsmaller{\sum}_{t=1}^{T-1} 
               \big((1+\beta)\| X_0 \|_2 + \big(2t-1 + \tfrac{2t-2}{\beta}\big) \|\mathbf{M}^\top Y \|_2 \big) \\
               & \quad \mathsmaller{\prod}_{s=1}^{L-1} \bar{\lambda}_s
               (\beta (\| X_0 \|_2 + \tfrac{2t-2}{\beta} \|\mathbf{M}^\top Y \|_2) + \|\mathbf{M}^\top Y \|_2)
               \|\mathbf{M}X_{T}^k -Y\|_2^2,
         \end{aligned}
      \end{equation*}
      where \textcircled{1} is due to Cauchy-Schwarz inequality. It is worth noting that \text{\textcircled{1}} is non-trivial since $\mathbf{B}^{k}_{T-1}$ is non-necessarily to be positive definite. \textcircled{2} is due to upper bound of NN's output in \Cref{lemma:p_bound1}. \textcircled{3} is based on the Sigmoid function is bounded.

      Further, due to the definition of the quantities, we calculate:
      \begin{equation} \label{eq:bounding_target2_splited_ub_t}
         \begin{aligned}
            & [(Z-X_{T}^k)^\top \mathbf{M}^\top(\mathbf{M}X_{T}^k -Y)]_{\text{Part 2}} \\
            \leq & \tfrac{\eta \beta }{4} \\
               & \big(
               \underbrace{(1+\beta)  \| X_0 \|_2^2 + \big( (4T-3)(1+\tfrac{1}{\beta} ) + 1\big) \| X_0 \|_2 \|\mathbf{M}^\top Y \|_2 
               + \tfrac{(2T-1)(\beta(2T-1)+(2T-2))}{\beta^2}  \|\mathbf{M}^\top Y \|_2^2}_{\Lambda_{T}}
               \big) \\
               & \mathsmaller{\sum}_{t=1}^{T-1} \\
               & 
               \big(
               \underbrace{(1+\beta)  \| X_0 \|_2^2 + \big( (4t-3)(1+\tfrac{1}{\beta} ) + 1\big) \| X_0 \|_2 \|\mathbf{M}^\top Y \|_2 
               + \tfrac{(2T-1)(\beta(2T-1)+(2T-2))}{\beta^2}  \|\mathbf{M}^\top Y \|_2^2}_{\Lambda_{t}}
               \big) \\
               & \Theta_{L-1}^2 \|\mathbf{M}X_{T}^k -Y\|_2^2, \\
            = & \tfrac{\eta \beta }{2}
               \Theta_{L-1}^2
               \Lambda_{T}
               \mathsmaller{\sum}_{t=1}^{T-1} \Lambda_{t} 
               \tfrac{1}{2} \|\mathbf{M}X_{T}^k -Y\|_2^2.
         \end{aligned}
      \end{equation}

      Combining the two parts in \Cref{eq:bounding_target2_splited_lb_T} and \Cref{eq:bounding_target2_splited_ub_t} yields:
      \begin{equation*}
         \begin{aligned}
            & (Z-X_{T}^k)^\top \mathbf{M}^\top(\mathbf{M}X_{T}^k -Y) \\
            \leq &
               \Big(
                  \tfrac{\eta \beta }{2}
                  \Theta_{L-1}^2
                  \Lambda_{T}
                  \mathsmaller{\sum}_{t=1}^{T-1} \Lambda_{t}   
                  -\eta 8 
                  \sigma(\delta_3 \Theta_L)^2 (1 - \sigma(\delta_3 \Theta_L))^2 
                  \tfrac{\beta_{0}^2}{\beta^2}
                  \alpha_0^2 
               \Big)
               \tfrac{1}{2} \|\mathbf{M}X_{T}^k -Y\|_2^2.
         \end{aligned}
      \end{equation*}
   \end{proof}
   
   Using quantities from \Cref{eq:quantities}, substituting the upper bounds in \Cref{lemma:bounding_target1}, \Cref{lemma:bounding_target2_splited1_wponly}, and \Cref{lemma:bounding_target2_splited2_wponly} into \Cref{eq:bound_linear_conv_framework}, we calculate:
   \begin{equation*}
      \begin{aligned}
         & F([W]^{k+1}) \\
         = &  F([W]^{k}) + \tfrac{1}{2}\|\mathbf{M}X_{T}^{k+1}-\mathbf{M}X_{T}^k\|_2^2  +  (\mathbf{M}X_{T}^{k+1}-\mathbf{M}X_{T}^k)^\top(\mathbf{M} X_{T}^k-Y), \\
         \leq & F([W]^{k}) 
         + \tfrac{\beta^2 \eta^2 }{16} (\qy{\delta_1}^T)^2
         \Big(S_{\Lambda,T}\Big)^2 
         \Big(\Theta_{L}^2 \mathsmaller{\sum}_{\ell=1}^{L} \bar{\lambda}_{\ell}^{-2} \Big)^2 
         \tfrac{1}{2}\| \mathbf{M} X_{T}^{k}  -Y \|_2 \\
         & + \tfrac{\beta \eta}{2} (\Lambda_{T} + \qy{\delta_2}) 
         \Theta_{L}^2 S_{\bar{\lambda},L}  S_{\Lambda,T} \tfrac{1}{2}\| \mathbf{M} X_{T}^{k}  -Y \|_2^2 \\
         & + \Big(-\eta 8 
         \sigma(\delta_3 \Theta_L)^2 (1 - \sigma(\delta_3 \Theta_L))^2 
         \tfrac{\beta_{0}^2}{\beta^2}
         \alpha_0^2 + \tfrac{\eta \beta }{2}
         \Theta_{L-1}^2
         \Lambda_{T}
         \mathsmaller{\sum}_{t=1}^{T-1} \Lambda_{t} \Big)
         \tfrac{1}{2} \|\mathbf{M}X_{T}^k -Y\|_2^2, 
         \\
         \stackrel{\text{\textcircled{1}}}{=} & F([W]^{k}) 
         + \tfrac{\beta^2 \eta^2 }{16} (\qy{\delta_1}^T)^2
         \Big(S_{\Lambda,T}\Big)^2 
         \Big(\Theta_{L}^2 \mathsmaller{\sum}_{\ell=1}^{L} \bar{\lambda}_{\ell}^{-2} \Big)^2 
         F([W]^{k}) \\
         & + \tfrac{\beta \eta}{2} (\Lambda_{T} + \qy{\delta_2}) 
         \Theta_{L}^2 S_{\bar{\lambda},L}  S_{\Lambda,T} 
         F([W]^{k}) \\
         & + \Big(-\eta 8 
         \sigma(\delta_3 \Theta_L)^2 (1 - \sigma(\delta_3 \Theta_L))^2 
         \tfrac{\beta_{0}^2}{\beta^2}
         \alpha_0^2 + \tfrac{\eta \beta }{2}
         \Theta_{L-1}^2
         \Lambda_{T}
         \mathsmaller{\sum}_{t=1}^{T-1} \Lambda_{t} \Big)
         F([W]^{k}), \\
         = & 
         \bigg(1 + \tfrac{\eta^2 \beta^2  }{16} (\qy{\delta_1}^T)^2
         \Big(S_{\Lambda,T}\Big)^2 
         \Big(\Theta_{L}^2 \mathsmaller{\sum}_{\ell=1}^{L} \bar{\lambda}_{\ell}^{-2} \Big)^2 
         + \tfrac{\eta \beta }{2} (\Lambda_{T} + \qy{\delta_2}) 
         S_{\Lambda,T} 
         \Theta_{L}^2 S_{\bar{\lambda},L}  
          \\
         & \quad + \tfrac{\eta \beta }{2}
         \Theta_{L-1}^2
         \Lambda_{T}
         \mathsmaller{\sum}_{t=1}^{T-1} \Lambda_{t}
         -\eta 8 
         \sigma(\delta_3 \Theta_L)^2 (1 - \sigma(\delta_3 \Theta_L))^2 
         \tfrac{\beta_{0}^2}{\beta^2}
         \alpha_0^2 \bigg)
         F([W]^{k}), \\
         \stackrel{\text{\textcircled{2}}}{\leq} & 
         \Big(1 
         + \eta \beta  (\Lambda_{T} + \qy{\delta_2}) 
         S_{\Lambda,T} 
         \Theta_{L}^2 S_{\bar{\lambda},L}
         + \tfrac{\eta \beta }{2}
         \Theta_{L-1}^2
         \Lambda_{T}
         \mathsmaller{\sum}_{t=1}^{T-1} \Lambda_{t} 
         -\eta 8 
         \sigma(\delta_3 \Theta_L)^2 (1 - \sigma(\delta_3 \Theta_L))^2 
         \tfrac{\beta_{0}^2}{\beta^2}
         \alpha_0^2 \Big) \\
         & F([W]^{k}), \\
         = & 
         \Big(1 -\eta \big(8 
         \sigma(\delta_3 \Theta_L)^2 (1 - \sigma(\delta_3 \Theta_L))^2 
         \tfrac{\beta_{0}^2}{\beta^2}
         \alpha_0^2 
         - \beta  (\Lambda_{T} + \qy{\delta_2}) 
         S_{\Lambda,T} 
         \Theta_{L}^2 S_{\bar{\lambda},L} 
         - \tfrac{\beta }{2}
         \Theta_{L-1}^2
         \Lambda_{T}
         \mathsmaller{\sum}_{t=1}^{T-1} \Lambda_{t}
         \big) \Big)\\
         & F([W]^{k}), \\
         \stackrel{\text{\textcircled{3}}}{\leq} & 
         \big(1 - \eta\underbrace{4 
         \sigma(\delta_3 \Theta_L)^2 (1 - \sigma(\delta_3 \Theta_L))^2 
         \tfrac{\beta_{0}^2}{\beta^2} \alpha_0^2}_{4 \eta \tfrac{\beta_{0}^2}{\beta^2} \delta_4}  \big)
         F([W]^{k}),
      \end{aligned}
   \end{equation*}
   where \textcircled{1} is due to the definition of objective.
   \textcircled{2} is due to upper bound of learning rate in \Cref{eq:learning_rate_upper_bound1} and $\qy{\delta_1}^T = \qy{\delta_2} + \mathsmaller{\sum}_{j=1}^{T}\Lambda_{j}$ in definition. \textcircled{3} is due to the lower bound of the least eigenvalue $\alpha_0$ in \Cref{eq:lb3_singular_value}.

   Due to learning rate's upper bound in \Cref{eq:learning_rate_upper_bound2}, we know $0< 1 - \eta 4 \eta \tfrac{\beta_{0}^2}{\beta^2} \delta_4 <1$, which yields the following linear rate by nature:
   \begin{equation*}
      F([W]^{k}) \leq (1-\eta 4 \eta \tfrac{\beta_{0}^2}{\beta^2} \delta_4 )^{k}   F([W]^{0}).
   \end{equation*}
\end{proof}

\section{Details for Initialization} \label{sec:init_proof}

\subsection{Preliminary}

To begin with, we define the following quantities:
\begin{equation*}
    \begin{aligned}
        \delta_5  & = \sigma \Big(\big(2T-1 + \tfrac{2T-2}{\beta}\big) \|\mathbf{M}^\top Y \|_2 \Theta_L \Big)^{-2}  \Big(1 - \sigma \big(\big(2T-1 + \tfrac{2T-2}{\beta}\big) \|\mathbf{M}^\top Y \|_2 \Theta_L \big)\Big)^{-2}, \\
        \delta_6 & = \sigma_{\min}\big([\mathsmaller{\sum}_{t=1}^{T-1} 
        (\mathbf{I} - \tfrac{1}{\beta} \mathbf{M}^\top \mathbf{M})^{T-t} \mathbf{M}^\top Y | \mathbf{M}^\top(\mathbf{M}(\mathsmaller{\sum}_{t=1}^{T-1} 
        (\mathbf{I} - \tfrac{1}{\beta} \mathbf{M}^\top \mathbf{M})^{T-t} \mathbf{M}^\top Y) - Y)]\big), \\
        \delta_7 &= \sigma_{\min}(\mathsmaller{\sum}_{t=1}^{T-1} 
         (\mathbf{I} - \tfrac{1}{\beta} \mathbf{M}^\top \mathbf{M})^{T-t}).
    \end{aligned}
\end{equation*}

\paragraph{Analysis for the numerical stability of $\delta_5$.} $\delta_5$ is a function with $\Lambda_{t}$, which is also enlarged w.r.t. $e^{L}$. In general, it is possible to push $\sigma(1 - \sigma(\big(2T-1 + \tfrac{2T-2}{\beta}\big) \|\mathbf{M}^\top Y \|_2 \Theta_L))$ to zero and let RHS of above inequality to be $\infty$ when $e^{L} \to \infty$. As presented in the lemma, we claim that the required $e$ is not necessarily to be $\infty$. Thus, $\delta_5$ can be regarded as a $\mathcal{O}(e^{L-1}) \ll \infty$ constant. In the following proofs, we demonstrate that it holds since $e$ is finite.

We calculate the following exact formulations of the quantities defined in \Cref{theorem:linear_convergence}:
\begin{equation} \label{eq:exact_lambda_T}
   \begin{aligned}
      \Lambda_T = &
      (1+\beta) \| X_0 \|_2^2 + \big( (4T-3)(1+\tfrac{1}{\beta} ) + 1\big) \| X_0 \|_2 \|\mathbf{M}^\top Y \|_2 \\
      & + \tfrac{(2T-1)(\beta(2T-1)+(2T-2))}{\beta^2}  \|\mathbf{M}^\top Y \|_2^2, \\
      =& \tfrac{4(\beta+1)}{\beta^2} \|\mathbf{M}^\top Y\|_2^2 \qy{T^2} 
      + \Big( \tfrac{4(1+\beta)}{\beta} \|X_0\|_2 \|\mathbf{M}^\top Y\|_2 - \tfrac{4\beta+6}{\beta^2} \|\mathbf{M}^\top Y \|_2^2 \Big) \qy{T} \\
      & + (1+\beta) \|X_0\|_2^2 - (2 + \tfrac{3}{\beta}) \|X_0\|_2 \|\mathbf{M}^\top Y\|_2 + \tfrac{\beta + 2}{\beta^2} \|\mathbf{M}^\top Y \|_2^2, \\
      \stackrel{\text{\textcircled{1}}}{=} & \tfrac{4(\beta+1)}{\beta^2} \|\mathbf{M}^\top Y\|_2^2 \qy{T^2} 
      - \tfrac{4\beta+6}{\beta^2} \|\mathbf{M}^\top Y \|_2^2 \qy{T} 
      + \tfrac{\beta + 2}{\beta^2} \|\mathbf{M}^\top Y \|_2^2, \\
   \end{aligned}
\end{equation}
where \textcircled{1} is due to $X_0 = 0$ 
and 
\begin{equation} \label{eq:exact_lambda_all}
    \begin{aligned}
      \mathsmaller{\sum}_{i=1}^{T}\Lambda_i = & \mathsmaller{\sum}_{i=1}^{T} (1+\beta)  \| X_0 \|_2^2 + \big( (4i-3)(1+\tfrac{1}{\beta} ) + 1\big) \| X_0 \|_2 \|\mathbf{M}^\top Y \|_2\\
      & + \tfrac{(2i-1)(\beta(2i-1)+(2i-2))}{\beta^2}  \|\mathbf{M}^\top Y \|_2^2 \\
        = & \tfrac{4(\beta+1)}{3\beta^2} \|\mathbf{M}^\top Y\|_2^2  \qy{T^3 }
        + \Big(\tfrac{2(1+\beta)}{\beta} \|X_0\|_2 \|\mathbf{M}^\top Y\|_2  -\tfrac{1}{\beta^2} \|\mathbf{M}^\top Y\|_2^2 \Big) \qy{T^2} \\
        & + \Big((1+\beta) \|X_0\|_2^2 
        - \tfrac{1}{\beta} \|X_0\|_2 \|\mathbf{M}^\top Y\|_2
        - \tfrac{\beta + 1}{3\beta^2} \|\mathbf{M}^\top Y\|_2^2 \Big) \qy{T}, \\
        \stackrel{\text{\textcircled{1}}}{=} & 
        \tfrac{4(\beta+1)}{3\beta^2} \|\mathbf{M}^\top Y\|_2^2  \qy{T^3 }
        -\tfrac{1}{\beta^2} \|\mathbf{M}^\top Y\|_2^2 \qy{T^2}
        - \tfrac{\beta + 1}{3\beta^2} \|\mathbf{M}^\top Y\|_2^2 \qy{T},
    \end{aligned}
\end{equation}
where \textcircled{1} is due to $X_0 = 0$.

Then, we analyze the expansion of $\sigma_{\min}(G^0_{L-1,T})$ w.r.t. $[W]_L = e[W]_L$. 
Due to the one line form equation of L2O model in \Cref{eq:gd_oneline_x0}, we have $\sigma_{\min}(G^0_{L-1,T})$ is calculated by:
\begin{equation*} 
   \begin{aligned}
      \sigma_{\min}(G^0_{L-1,T})
      = \sigma_{\min}\big(
         \relu( \relu([X_{T-1}^{0}, \mathbf{M}^\top(\mathbf{M}X_{T-1}^{0} - Y)]{W_1^{0}}^\top ) \cdots  {W_{L-1}^{0}}^\top)
         \big),
   \end{aligned}
\end{equation*}
where due to \Cref{eq:gd_oneline_x0}, $X_{T-1}^{0}$ is given by:
\begin{equation}\label{eq:init_lb1_eq0} 
   \begin{aligned}
      X_{T-1}^{0} 
      = & \mathsmaller{\prod}_{t=T-1}^{1} (\mathbf{I} - \tfrac{1}{\beta} \mathcal{D}(P_t^0) \mathbf{M}^\top \mathbf{M}) X_0 + \tfrac{1}{\beta} \mathsmaller{\sum}_{t=1}^{T-1} \mathsmaller{\prod}_{s=T-1}^{t+1} (\mathbf{I} - \tfrac{1}{\beta} \mathcal{D}(P_s^0) \mathbf{M}^\top \mathbf{M}) \mathcal{D}(P_t^0) \mathbf{M}^\top Y, \\
      \stackrel{\text{\textcircled{1}}}{=} & (\mathbf{I} - \tfrac{1}{\beta}  \mathbf{M}^\top \mathbf{M})^{T-1} X_0 + \tfrac{1}{\beta} \mathsmaller{\sum}_{t=1}^{T-1} 
      (\mathbf{I} - \tfrac{1}{\beta} \mathbf{M}^\top \mathbf{M})^{T-t} \mathbf{M}^\top Y, \\
      \stackrel{\text{\textcircled{2}}}{=} & 
      \tfrac{1}{\beta} \mathsmaller{\sum}_{t=1}^{T-1} 
      (\mathbf{I} - \tfrac{1}{\beta} \mathbf{M}^\top \mathbf{M})^{T-t} \mathbf{M}^\top Y,
   \end{aligned}
\end{equation}
where \textcircled{1} is due to $P_t = \sigma(\mathbf{0}) = \mathbf{I}$ since $W_L=0$. The result shows that ${X}_{T-1}^{0}$ is unrelated to $[W]_L$ with $W_L=0$. \textcircled{2} is due to $X_0 = 0$.

Further, for $t \in [T]$, denote the angle between $X_{t-1}^{0}$ and $\mathbf{M}^\top(\mathbf{M}X_{t-1}^{0} - Y)$ as $\theta_{t-1}$, we have $\sin(\theta_{t-1}) \in (0,1)$, setting $[W]_L = e[W]_L$, we calculate $\sigma_{\min}(\tilde{G}^0_{L-1,T})$ by:
\begin{equation} \label{eq:singular_min_e}
   \begin{aligned}
      \sigma_{\min}(\tilde{G}^{0}_{L-1,T})
      = &  
      \sigma_{\min}\big(
         \relu( \relu([X_{T-1}^{0}, \mathbf{M}^\top(\mathbf{M}X_{T-1}^{0} - Y)]{eW_1^{0}}^\top ) \cdots  {eW_{L-1}^{0}}^\top)
         \big), \\
      \geq & 
      \sigma_{\min}\big([X_{T-1}^{0} | \mathbf{M}^\top(\mathbf{M}X_{T-1}^{0} - Y)]\big)
      \mathsmaller{\prod}_{\ell=1}^{L-1} \sigma_{\min}(eW_{\ell}^{0}), \\
      \geq & 
      \tfrac{\|X_{T-1}^{0}\|_2 \|\mathbf{M}^\top(\mathbf{M}X_{T-1}^{0} - Y) \|_2 \sin(\theta_{T-1})}{\|X_{T-1}^{0}\|_2 + |\mathbf{M}^\top(\mathbf{M}X_{T-1}^{0} - Y) \|_2}
      \mathsmaller{\prod}_{\ell=1}^{L-1} \sigma_{\min}(eW_{\ell}^{0}), \\
      = & 
      \tfrac{\sin(\theta_{T-1})}{\tfrac{1}{\|X_{T-1}^{0}\|_2} + \tfrac{1}{\|\mathbf{M}^\top(\mathbf{M}X_{T-1}^{0} - Y) \|_2}}
      \mathsmaller{\prod}_{\ell=1}^{L-1} \sigma_{\min}(eW_{\ell}^{0}), \\
      \geq & 
      \sin(\theta_{T-1})
      \mathsmaller{\prod}_{\ell=1}^{L-1} \sigma_{\min}(W_{\ell}^{0}) 
      \Theta_L
      \|X_{T-1}^{0}\|_2. 
   \end{aligned}
\end{equation}

Based on the definition of $X_{T-1}^{0}$ in \Cref{eq:init_lb1_eq0}, we calculate following bound:
\begin{equation} \label{eq:singular_min_e_for_init124}
   \begin{aligned}
      \sigma_{\min}(\tilde{G}^{0}_{L-1,T})
      \geq & 
      \tfrac{\sin(\theta_{T-1})}{\beta} 
      \|\mathsmaller{\sum}_{t=1}^{T-1} 
      (\mathbf{I} - \tfrac{1}{\beta} \mathbf{M}^\top \mathbf{M})^{T-t} \mathbf{M}^\top Y \|_2
      \mathsmaller{\prod}_{\ell=1}^{L-1} \sigma_{\min}(eW_{\ell}^{0}), \\
      \geq & 
      \tfrac{\sin(\theta_{T-1})}{\beta} 
      \underbrace{\sigma_{\min}(\mathsmaller{\sum}_{t=1}^{T-1} 
      (\mathbf{I} - \tfrac{1}{\beta} \mathbf{M}^\top \mathbf{M})^{T-t})}_{\delta_7} \| \mathbf{M}^\top Y \|_2
      e^{L-1}
      \mathsmaller{\prod}_{\ell=1}^{L-1} \sigma_{\min}(W_{\ell}^{0}),
   \end{aligned}
\end{equation}
where $X_{T-1}^{0}$ is a constant related to problem definition.

Substituting \Cref{eq:init_lb1_eq0}, we calculate a tighter lower bound of $\|X_{T-1}^{0}\|_2$ by:
\begin{equation} \label{eq:lb_x_T_1}
   \begin{aligned}
      \|X_{T-1}^{0}\|_2  
      = & \Big\| \tfrac{1}{\beta} \mathsmaller{\sum}_{t=1}^{T-1} \mathsmaller{\prod}_{s=T-1}^{t+1} (\mathbf{I} - \tfrac{1}{\beta} \mathcal{D}(P_s^0) \mathbf{M}^\top \mathbf{M}) \mathcal{D}(P_t^0) \mathbf{M}^\top Y \Big\|_2, \\
      \geq & \tfrac{1}{\beta} \| \mathbf{M}^\top Y \|_2  \sigma_{\min}\Big( \mathsmaller{\sum}_{t=1}^{T-1} \mathsmaller{\prod}_{s=T-1}^{t+1} (\mathbf{I} - \tfrac{1}{\beta} \mathcal{D}(P_s^0) \mathbf{M}^\top \mathbf{M}) \mathcal{D}(P_t^0) \Big), \\
      \stackrel{\text{\textcircled{1}}}{\geq} & \tfrac{1}{\beta} \| \mathbf{M}^\top Y \|_2   \mathsmaller{\sum}_{t=1}^{T-1} \sigma_{\min}\Big(\mathsmaller{\prod}_{s=T-1}^{t+1} (\mathbf{I} - \tfrac{1}{\beta} \mathcal{D}(P_s^0) \mathbf{M}^\top \mathbf{M})\Big)  \sigma_{\min}(\mathcal{D}(P_t^0) ), \\
      \geq & \tfrac{1}{\beta} \| \mathbf{M}^\top Y \|_2   \mathsmaller{\sum}_{t=1}^{T-1} \Big(\mathsmaller{\prod}_{s=T-1}^{t+1} \sigma_{\min} \big(\mathbf{I} - \tfrac{1}{\beta} \mathcal{D}(P_s^0) \mathbf{M}^\top \mathbf{M}\big) \Big)  \sigma_{\min}(\mathcal{D}(P_t^0) ), 
   \end{aligned}
\end{equation}
where \textcircled{1} is due to all matrices in the summation are positive semi-definite by definition.

We calculate lower bound for $\sigma_{\min} \big(\mathbf{I} - \tfrac{1}{\beta} \mathcal{D}(P_s^0) \mathbf{M}^\top \mathbf{M}\big)$ by:
\begin{equation} \label{eq:init_lb1_eq_1_I_Ps_lb} 
   \begin{aligned}
      \sigma_{\min} \big(\mathbf{I} - \tfrac{1}{\beta} \mathcal{D}(P_s^0) \mathbf{M}^\top \mathbf{M}\big) 
      \geq & 1 - \tfrac{1}{\beta} \sigma_{\max} \big(\mathcal{D}(2\sigma (eW_{L}^{0} \tilde{G}^{0}_{L-1,s})) \mathbf{M}^\top \mathbf{M}\big) \\
      \geq & 1 - 2 \underbrace{\sigma(\delta_3 \Theta_L) (1 - \sigma(\delta_3 \Theta_L))}_{\delta_4} \sigma_{\max}(eW_{L}^{0} \tilde{G}^{0}_{L-1,s}) , 
   \end{aligned}
\end{equation}
It is easy to verify that the above equation equal to $1$ when $e \to +\infty$ and it decreases with $e$. Also, a large $e$ ensures the RHS of above inequality to be positive.

Similarly, we calculate lower bound for $\sigma_{\min}(P_{t}^0)$ by:
\begin{equation} \label{eq:init_lb1_eq_1_P_t_lb} 
   \begin{aligned}
      \sigma_{\min}(\mathcal{D}(P_t^0)) 
      \stackrel{\text{\textcircled{1}}}{=} & \min\big(
      2\sigma( eW_{L}^{0} \tilde{G}^{0}_{L-1,t} )
      \big), \\\stackrel{\text{\textcircled{2}}}{=} & \min\big(
      \tfrac{\partial 2\sigma}{\partial v_4}
       (eW_{L}^{0} \tilde{G}^{0}_{L-1,t} )
      \big), \\
      \stackrel{\text{\textcircled{3}}}{\geq} & 2 \delta_4 \sigma_{\min}\big(
         eW_{L}^{0} \tilde{G}^{0}_{L-1,t} 
         \big),\\
      \stackrel{\text{\textcircled{4}}}{\geq} & 
      2 \delta_4 e \|W_{L}^{0}\|_2 \sigma_{\min}( \tilde{G}^{0}_{L-1,t} ), \\
      \geq & 
      2 \Theta_L \delta_4  \mathsmaller{\prod}_{\ell=1}^{L}\|W_{\ell}^{0}\|_2 \sigma_{\min}\big([X_{t-1}^{0} | \mathbf{M}^\top(\mathbf{M}X_{t-1}^{0} - Y)]\big), \\
      \stackrel{\text{\textcircled{5}}}{\geq} & 
      2 \Theta_L \delta_4  \mathsmaller{\prod}_{\ell=1}^{L}\|W_{\ell}^{0}\|_2
      \sin(\theta_{T-1})\|X_{t-1}^{0}\|_2
   \end{aligned}
\end{equation}
where \textcircled{1} means we apply the expansion here. \textcircled{2} is due to Mean Value Theorem and $v_4$ denotes a inner point between $0$ and $eW_{L}^{0} \tilde{G}^{0}_{L-1,T}$. \textcircled{3} is due to \Cref{lemma:lb_gradient} and \Cref{lemma:lb_gradient_input_vk3}. \textcircled{4} is due to $W_{L}^{0}$ is a vector in definition. \textcircled{5} is similar to the workflow in \Cref{eq:singular_min_e}.

Substituting \Cref{eq:init_lb1_eq_1_I_Ps_lb} and \Cref{eq:init_lb1_eq_1_P_t_lb} back into \Cref{eq:lb_x_T_1} yields:
\begin{equation*}
   \begin{aligned}
      \|X_{t-1}^{0}\|_2  
      \geq & \tfrac{1}{\beta} \| \mathbf{M}^\top Y \|_2   \mathsmaller{\sum}_{s=1}^{t-1} 
      2 \Theta_L \delta_4 \mathsmaller{\prod}_{\ell=1}^{L}\|W_{\ell}^{0}\|_2 \sigma_{\min}\big([X_{s-1}^{0} | \mathbf{M}^\top(\mathbf{M}X_{s-1}^{0} - Y)]\big), \\
      \geq & \tfrac{2}{\beta}  \| \mathbf{M}^\top Y \|_2
      \Theta_L  
      \mathsmaller{\sum}_{s=1}^{t-1} 
      \delta_4 \mathsmaller{\prod}_{\ell=1}^{L}\|W_{\ell}^{0}\|_2\sin(\theta_{s-1}) \|X_{s-1}^0\|_2, \\
   \end{aligned}
\end{equation*}

Similarly, we can get the following lower bound of $\|X_{t-1}^0\|_2$:
\begin{equation*}
   \|X_{t-1}^{0}\|_2 
   \geq \tfrac{2}{\beta}  \| \mathbf{M}^\top Y \|_2
   \Theta_L  
   \mathsmaller{\sum}_{s=1}^{t-1} 
   \delta_4 \mathsmaller{\prod}_{\ell=1}^{L}\|W_{\ell}^{0}\|_2\sin(\theta_{t-1}) \|X_{s-1}^0\|_2, 
\end{equation*}

Based on the above results, we calculate the $\mathbf{\Omega}$  of $\|X_{T-1}^{0}\|_2$ as in terms of $T$ and $\Theta_L$ as:

\begin{equation*}
   \|X_{T-1}^{0}\|_2  
   \geq  \mathbf{\Omega} (\underbrace{\Theta_L \mathsmaller{\sum}_{t=1}^{T-1} \Theta_L \mathsmaller{\sum}_{s=1}^{t-1} \Theta_L \mathsmaller{\sum}_{j=1}^{s-1} \dots \mathsmaller{\sum}_{j=1}^{2}}_{\text{T-2 terms}}) 
   = \mathbf{\Omega} (\Theta_L^{T-2}).
\end{equation*}

Substituting back into \Cref{eq:singular_min_e} yields:
\begin{equation}  \label{eq:singular_min_e_for_init3}
   \sigma_{\min}(\tilde{G}^{0}_{L-1,T})= \mathbf{\Omega} (e^{L-1} e^{(T-2)(L-1)} ) = \mathbf{\Omega} (e^{(T-1)(L-1)})   .
\end{equation}


\subsection{Proof of \Cref{lemma:init_lb4}}
\begin{proof}
   Making up the lower bounding relationship with \Cref{eq:singular_min_e_for_init124} and \Cref{eq:init_lb2_eq1} yields:
   \begin{equation*}
      \begin{aligned}
         e^{L-1}  
         \|\mathbf{M}^\top Y\|_2\delta_7
         \mathsmaller{\prod}_{\ell=1}^{L-1} \sigma_{\min}(W_{\ell}^{0}) 
         \geq   &
         8(1+\beta)(\| X_0 \|_2 + \tfrac{2T-2}{\beta} \|\mathbf{M}^\top Y \|_2 ) , \\
         = &
         \tfrac{8(1+\beta)}{\beta} (2T-2)\|\mathbf{M}^\top Y \|_2,
      \end{aligned}
   \end{equation*}
   which yields:
   \begin{equation*}
      \begin{aligned}
         e \geq  
         &\sqrt[L-1]{
            \tfrac{8(1+\beta)}{\beta} \delta_7^{-1}
            \sigma_{\min}(W_{\ell}^{0})^{-1}(2T-2)
           }.
      \end{aligned}
   \end{equation*}
\end{proof}

\subsection{Proof of \Cref{lemma:init_lb2}}
We apply a similar workflow to prove \Cref{lemma:init_lb2}.
\begin{proof}
   With $X_0 = 0$, we find the upper bound of the RHS of \Cref{eq:lb2_singular_value} by substituting the quantity $\delta_5$:
   \begin{equation} \label{eq:init_lb2_eq1} 
      \begin{aligned}
         & \tfrac{(1+\beta)\beta^2\sqrt{\beta}}{2\beta_{0}^2} \delta_5   
         \big(\sqrt{\beta}\| X_{0} \|_2 + (2T+1) \|  Y \|_2 \big) 
         \zeta_2 S_{\Lambda,T} 
         \Theta_{L-1} 
         \Big(\mathsmaller{\sum}_{\ell=1}^{L} \tfrac{\Theta_{L} }{ \bar{\lambda}_{\ell}^2} \Big) \\
         \stackrel{\text{\textcircled{1}}}{=} &
         \tfrac{(1+\beta)\beta^2\sqrt{\beta}}{2\beta_{0}^2} \delta_5   
         \big(\sqrt{\beta}\| X_{0} \|_2 + (2T+1) \|  Y \|_2 \big) 
         \zeta_2
         \\
         & 
         \Big(
            \tfrac{4(\beta+1)}{3\beta^2} \|\mathbf{M}^\top Y\|_2^2  \qy{T^3 }
         -\tfrac{1}{\beta^2} \|\mathbf{M}^\top Y\|_2^2 \qy{T^2}
         - \tfrac{\beta + 1}{3\beta^2} \|\mathbf{M}^\top Y\|_2^2 \qy{T}
         \Big) 
         \Theta_{L-1} 
         \Big(\mathsmaller{\sum}_{\ell=1}^{L} \tfrac{\Theta_{L} }{ \bar{\lambda}_{\ell}^2} \Big), \\
         \stackrel{\text{\textcircled{2}}}{=} &
         \tfrac{(1+\beta)\beta\sqrt{\beta}}{2\beta_{0}^2} \delta_5   
         \|Y\|_2 \|\mathbf{M}^\top Y\|_2
         (2T-2) (2T+1)
         \\
         & 
         \Big(
            \tfrac{4(\beta+1)}{3\beta^2} \|\mathbf{M}^\top Y\|_2^2  \qy{T^3 }
         -\tfrac{1}{\beta^2} \|\mathbf{M}^\top Y\|_2^2 \qy{T^2}
         - \tfrac{\beta + 1}{3\beta^2} \|\mathbf{M}^\top Y\|_2^2 \qy{T}
         \Big) 
         \Theta_{L-1} 
         \Big(\mathsmaller{\sum}_{\ell=1}^{L} \tfrac{\Theta_{L} }{ \bar{\lambda}_{\ell}^2} \Big), \\
         \stackrel{\text{\textcircled{3}}}{\leq} &
         \tfrac{(1+\beta)\sqrt{\beta}}{6\beta_{0}^2 \beta} \delta_5   
         \|Y\|_2 \|\mathbf{M}^\top Y\|_2^3 \\
         & \Big(
         16(\beta+1)  \qy{T^5 }
         - (8\beta+20) \qy{T^4 }
         - 6(2\beta+1) \qy{T^3 }
         + 2(\beta+4) \qy{T^2 }
         + 2(\beta+1) \qy{T }
         \Big) 
         L \Theta_{L-1}^2 ,\\
      \end{aligned}
   \end{equation}
   where \textcircled{1} is due to \Cref{eq:exact_lambda_all} and definition of quantity $\delta_{1}^{T-1}$ in \Cref{theorem:linear_convergence}. \textcircled{2} is due to $X_0 = 0$. \textcircled{3} is due to $\bar{\lambda}_{L} = 1$ and $\bar{\lambda}_{\ell} > 1, \ell \in [L-1]$.

   Making up the lower bounding relationship with \Cref{eq:singular_min_e_for_init124} and \Cref{eq:init_lb2_eq1} yields:

   \begin{equation} \label{eq:a0_lb_2}
      \begin{aligned}
         & \Big(e^{L-1}  
         \|\mathbf{M}^\top Y\|_2 \delta_7
         \mathsmaller{\prod}_{\ell=1}^{L-1} \sigma_{\min}(W_{\ell}^{0})\Big)^3 \\
         \geq &  
         e^{2L-2} 
         \tfrac{(1+\beta)\sqrt{\beta}}{6\beta_{0}^2 \beta} \delta_5   
         \|Y\|_2 \|\mathbf{M}^\top Y\|_2^3
         L \mathsmaller{\prod}_{\ell=1}^{L-1} (\|W_\ell^0\|_2+1)^2
         \\
         & 
         \Big(
         16(\beta+1)  \qy{T^5 }
         - (8\beta+20) \qy{T^4 }
         - 6(2\beta+1) \qy{T^3 }
         + 2(\beta+4) \qy{T^2 }
         + 2(\beta+1) \qy{T }
         \Big) 
          ,
      \end{aligned}
   \end{equation}
   which yields:
   \begin{equation*}
      \begin{aligned}
         e \geq  
         &\sqrt[L-1]{
            C_{2, \delta_5}
            \Big(
            16(\beta+1)  \qy{T^5 }
            - (8\beta+20) \qy{T^4 }
            - 6(2\beta+1) \qy{T^3 }
            + 2(\beta+4) \qy{T^2 }
            + 2(\beta+1) \qy{T }
            \Big) 
           }.
      \end{aligned}
   \end{equation*}
   where $C_{2, \delta_5}$ denotes the $\tfrac{(1+\beta)\sqrt{\beta}}{6\beta_{0}^2 \beta \delta_7^3} \delta_5 \| Y \|_2   
   L \mathsmaller{\prod}_{\ell=1}^{L-1} (\|W_\ell^0\|_2+1)^2  \mathsmaller{\prod}_{\ell=1}^{L-1} \sigma_{\min}(W_{\ell}^{0})^{-3}$ term. 
   

   Similarly, the finite RHS of above inequality ensures $\delta_5 \ll \infty$.
\end{proof}

\subsection{Proof of \Cref{lemma:init_lb3}}
\begin{proof}
   Using quantities from \Cref{eq:quantities}, with $X_0 = 0$, we find the upper bound of the RHS of \Cref{eq:lb3_singular_value} by substituting the quantity $\delta_5$:
   \begin{equation} \label{eq:init_lb3_eq1} 
      \begin{aligned}
         & 
         \tfrac{\beta^3}{4\beta_{0}^2} \qy{\delta_5} 
         \Big( 
            - \tfrac{1}{2} \Theta_{L-1}^2\Lambda_{T}\big(\mathsmaller{\sum}_{t=1}^{T-1} \Lambda_{t}\big)
         + \Theta_{L}^2 
         S_{\bar{\lambda},L}
         (\Lambda_{T} + \delta_2) 
         S_{\Lambda,T}
         \Big) \\
         \stackrel{\text{\textcircled{1}}}{=} &
         \tfrac{\beta^3}{4\beta_{0}^2} \qy{\delta_5}   
         \bigg(
            - \tfrac{1}{2} \Theta_{L-1}^2
            \Big(
               \tfrac{4(\beta+1)}{\beta^2} \|\mathbf{M}^\top Y\|_2^2 \qy{T^2} 
               - \tfrac{4\beta+6}{\beta^2} \|\mathbf{M}^\top Y \|_2^2 \qy{T} 
               + \tfrac{\beta + 2}{\beta^2} \|\mathbf{M}^\top Y \|_2^2  
            \Big) \\
            & \qquad \qquad \Big(
               \tfrac{4(\beta+1)}{3\beta^2} \|\mathbf{M}^\top Y\|_2^2  \qy{(T-1)^3 }
               -\tfrac{1}{\beta^2} \|\mathbf{M}^\top Y\|_2^2 \qy{(T-1)^2}
               - \tfrac{\beta + 1}{3\beta^2} \|\mathbf{M}^\top Y\|_2^2 \qy{(T-1)}   
            \Big) \\
            & \qquad \quad + 
            \Theta_{L}^2 
            S_{\bar{\lambda},L}
            \bigg(
               \Big(
               \tfrac{4(\beta+1)}{\beta^2} \|\mathbf{M}^\top Y\|_2^2 \qy{T^2} 
               - \tfrac{4\beta+6}{\beta^2} \|\mathbf{M}^\top Y \|_2^2 \qy{T} 
               + \tfrac{\beta + 2}{\beta^2} \|\mathbf{M}^\top Y \|_2^2  
            \Big) \\
            &  \qquad  \qquad \qquad \qquad\qquad \qquad  +
            \mathsmaller{\sum}_{s=1}^{T-1} \Big(\mathsmaller{\prod}_{j=s+1}^{T} \big(1 + \tfrac{1+\beta}{2\beta} (2j-1) \Theta_{L} \|\mathbf{M}^\top Y \|_2 \big)\Big) \\
            & \qquad  \qquad  \qquad  \qquad \qquad  \qquad \qquad 
            \Big(
               \tfrac{4(\beta+1)}{\beta^2} \|\mathbf{M}^\top Y\|_2^2 \qy{s^2} 
               - \tfrac{4\beta+6}{\beta^2} \|\mathbf{M}^\top Y \|_2^2 \qy{s} 
               + \tfrac{\beta + 2}{\beta^2} \|\mathbf{M}^\top Y \|_2^2  
            \Big)
            \bigg)\\
            &  \qquad  \qquad  \qquad \qquad \qquad \quad 
            \Big( 
               \tfrac{4(\beta+1)}{3\beta^2} \|\mathbf{M}^\top Y\|_2^2  \qy{T^3 }
               -\tfrac{1}{\beta^2} \|\mathbf{M}^\top Y\|_2^2 \qy{T^2}
               - \tfrac{\beta + 1}{3\beta^2} \|\mathbf{M}^\top Y\|_2^2 \qy{T}   
            \Big)
         \bigg), \\
         \leq & 
         \mathcal{O}(e^{2L-2}T^5  + e^{2L-4}T^5  + e^{2L-4} T^6 \mathsmaller{\sum}_{s=1}^{T-1} s^2 \mathsmaller{\prod}_{j=s+1}^{T} j e^{L-1}), \\
         = & 
         \mathcal{O}(e^{TL-T+2L-4} T^{3T+6}).
      \end{aligned}
   \end{equation}
   where \textcircled{1} is due to \Cref{eq:exact_lambda_all} and definition of quantity $\delta_{1}^{T-1}$ in \Cref{theorem:linear_convergence}. \textcircled{2} is due to $X_0 = 0$. \textcircled{3} is due to $\bar{\lambda}_{L} = 1$ and $\bar{\lambda}_{\ell} > 1, \ell \in [L-1]$.

   Making up the lower bounding relationship with \Cref{eq:singular_min_e_for_init3} and \Cref{eq:init_lb2_eq1} yields:
   \begin{equation*}
      \begin{aligned}
         (\mathbf{\Omega} (e^{(T-1)(L-1)}))^2
         \geq \mathcal{O}(e^{TL-T+2L-4} T^{3T+6}) ,
      \end{aligned}
   \end{equation*}
   which yields:
   \begin{equation*}
      e =  \mathbf{\Omega}(T^{\tfrac{3T+6}{TL-T-4L+6}}).
   \end{equation*}
\end{proof}

\subsection{Proof of \Cref{lemma:init_lb1}}
\begin{proof}
   Using quantities from \Cref{eq:quantities}, with $X_0 = 0$, we find the upper bound of the RHS of \Cref{eq:lb1_singular_value} by substituting the quantity $\delta_5$:
   \begin{equation} \label{eq:init_lb1_eq1} 
      \begin{aligned}
         & \max_{\ell \in [L]}\tfrac{\Theta_{L}}{C_\ell \bar{\lambda}_{\ell}} \tfrac{\beta^2 \sqrt{\beta}}{8  \beta_{0}^2} \\
         &\qquad  \underbrace{\sigma \big((2T-1 + \tfrac{2T-2}{\beta}) \|\mathbf{M}^\top Y \|_2 \Theta_L \big)^{-2}  \big(1 - \sigma ((2T-1 + \tfrac{2T-2}{\beta}) \|\mathbf{M}^\top Y \|_2 \Theta_L )\big)^{-2}}_{\delta_5}\\
         & \qquad S_{\Lambda,T}  (2T+1) \|  Y \|_2, \\
         \stackrel{\text{\textcircled{1}}}{\leq} &
         \tfrac{\beta^2 \sqrt{\beta}}{8  \beta_{0}^2}  \delta_5
         S_{\Lambda,T}  (2T+1) \|  Y \|_2 
         \mathsmaller{\prod}_{\ell=1}^{L-1} (\|W_\ell^0\|_2+1), \\
         \stackrel{\text{\textcircled{2}}}{=} &
         \tfrac{\beta^2 \sqrt{\beta}}{8  \beta_{0}^2}  \delta_5
         \big(\tfrac{4(\beta+1)}{3\beta^2} \|\mathbf{M}^\top Y\|_2^2  \qy{T^3 }
         -\tfrac{1}{\beta^2} \|\mathbf{M}^\top Y\|_2^2 \qy{T^2}
         - \tfrac{\beta + 1}{3\beta^2} \|\mathbf{M}^\top Y\|_2^2 \qy{T} \big) \\
         & (2T+1) \|  Y \|_2 
         \mathsmaller{\prod}_{\ell=1}^{L-1} (\|W_\ell^0\|_2+1), \\
         = &
         \tfrac{\beta^2 \sqrt{\beta}}{8  \beta_{0}^2}  \delta_5
         \| Y \|_2 
         \|\mathbf{M}^\top Y\|_2^2 
         \Big(
         \tfrac{8(\beta+1)}{3\beta^2}  \qy{T^4 }
         +\big(\tfrac{4(\beta+1)}{3\beta^2} - \tfrac{2}{\beta^2}\big)\qy{T^3}
         - \big( \tfrac{1}{\beta^2} + 2\tfrac{\beta + 1}{3\beta^2} \big) \qy{T^2} 
         - \tfrac{\beta + 1}{3\beta^2} \qy{T} 
         \Big)\\
         &
         \mathsmaller{\prod}_{\ell=1}^{L-1} (\|W_\ell^0\|_2+1),
      \end{aligned}
   \end{equation}
   where \textcircled{1} is due to $\bar{\lambda}_{\ell} > 1, \ell \in [L-1]$ and $\bar{\lambda}_{L} = 1$. 
   \textcircled{2} is due to \Cref{eq:exact_lambda_all}.

   We analyze the two sides of the above inequality when $[W]_L = e[W]_L$ to demonstrate a sufficient lower bound of $e$ to ensure \Cref{eq:init_lb1_eq1} holds. 

   If $[W]_L = e[W]_L$, since $e \geq 1$, \Cref{eq:init_lb1_eq1} is upper-bounded by:
   \begin{equation} \label{eq:a0_lb_1}
      \begin{aligned}
         &\tfrac{\beta^2 \sqrt{\beta}}{8  \beta_{0}^2}  \delta_5
         \| Y \|_2 
         \|\mathbf{M}^\top Y\|_2^2
         \Big(
         \tfrac{8(\beta+1)}{3\beta^2}  \qy{T^4 }
         +\big(\tfrac{4(\beta+1)}{3\beta^2} - \tfrac{2}{\beta^2}\big)\qy{T^3}
         - \big( \tfrac{1}{\beta^2} + 2\tfrac{\beta + 1}{3\beta^2} \big) \qy{T^2} 
         - \tfrac{\beta + 1}{3\beta^2} \qy{T} 
         \Big)\\
         & 
         \mathsmaller{\prod}_{\ell=1}^{L-1} (e\|W_\ell^0\|_2+e)\\
         = &e^{L-1} 
         \tfrac{\beta^2 \sqrt{\beta}}{8  \beta_{0}^2}  \delta_5
         \| Y \|_2 
         \|\mathbf{M}^\top Y\|_2^2 \\
         & \bigg(
         \tfrac{8(\beta+1)}{3\beta^2}  \qy{T^4 }
         +\Big(\tfrac{4(\beta+1)}{3\beta^2} - \tfrac{2}{\beta^2}\Big)\qy{T^3}
         - \Big( \tfrac{1}{\beta^2} + 2\tfrac{\beta + 1}{3\beta^2} \Big) \qy{T^2} 
         - \tfrac{\beta + 1}{3\beta^2} \qy{T} 
         \bigg)
         \mathsmaller{\prod}_{\ell=1}^{L-1} (\|W_\ell^0\|_2+1).
      \end{aligned}
   \end{equation}

   If RHS  (lower bound) of \Cref{eq:singular_min_e_for_init124} greater than the RHS (upper bound) of above result, lower bound condition for minimal singular value in \Cref{eq:init_lb1_eq1} sufficiently holds, which yields:
   \begin{equation*}
      \begin{aligned}
         & \Big(e^{L-1}  
         \|\mathbf{M}^\top Y\|_2
         \delta_7
         \mathsmaller{\prod}_{\ell=1}^{L-1} \sigma_{\min}(W_{\ell}^{0})\Big)^2 \\
         \geq &  e^{L-1} 
         \tfrac{\beta^2 \sqrt{\beta}}{8  \beta_{0}^2 \delta_6}  \delta_5
         \| Y \|_2 
         \|\mathbf{M}^\top Y\|_2^2
         \bigg(
         \tfrac{8(\beta+1)}{3\beta^2}  \qy{T^4 }
         +\Big(\tfrac{4(\beta+1)}{3\beta^2} - \tfrac{2}{\beta^2}\Big)\qy{T^3}
         - \Big( \tfrac{1}{\beta^2} + 2\tfrac{\beta + 1}{3\beta^2} \Big) \qy{T^2} 
         - \tfrac{\beta + 1}{3\beta^2} \qy{T} 
         \bigg) \\
         & \mathsmaller{\prod}_{\ell=1}^{L-1} (\|W_\ell^0\|_2+1),
      \end{aligned}
   \end{equation*}
   which yields:
   \begin{equation*}
      \begin{aligned}
         e \geq  
         \sqrt[L-1]{
         C_{1, \delta_5}
         \bigg(
         \tfrac{8(\beta+1)}{3}  \qy{T^4 }
         +\Big(\tfrac{4(\beta+1)}{3} - 2\Big)\qy{T^3}
         - \Big( 1 + 2\tfrac{\beta + 1}{3} \Big) \qy{T^2} 
         - \tfrac{\beta + 1}{3} \qy{T} 
         \bigg)
         },
      \end{aligned}
   \end{equation*}
   where $C_{1, \delta_5}$ denotes the $\tfrac{\sqrt{\beta}}{8  \beta_{0}^2 \delta_6 \delta_7^2}  \delta_5
   \| Y \|_2 
   \mathsmaller{\prod}_{\ell=1}^{L-1} (\|W_\ell^0\|_2+1) \mathsmaller{\prod}_{\ell=1}^{L-1} \sigma_{\min}(W_{\ell}^{0})^{-2}$ term, which is a ``constant'' w.r.t. $\delta_5$.


   In the end, it is trivial to evaluate that the RHS of above $\delta_5$ is finite with such $e$.
\end{proof}


\section{Additional Experimental Results} \label{sec:exp_appd}
In this section, we present detailed experimental settings and corresponding results. We define problems at three distinct scales, as described in \Cref{sec:scale_appd}. The smaller scale is utilized for ablation studies (\Cref{sec:exp_abla}), whereas the larger scales are adopted for training experiments (\Cref{sec:exp_train} and \Cref{sec:exp_train_appd}) and inference experiments (\Cref{sec:exp_infer}).

\subsection{Configurations for Different Experiments} \label{sec:scale_appd}
Details of the three experimental configurations are presented in \Cref{tab:small_exp_config}.
\textbf{Scale 1} involves a DNN trained with input $X \in \mathbb{R}^{32 \times 32}$ and output $Y \in \mathbb{R}^{32 \times 25}$, featuring an $(L-1)$-th layer dimension of $1024$.
\textbf{Scale 2} utilizes input $X \in \mathbb{R}^{10 \times 512}$ and output $Y \in \mathbb{R}^{10 \times 400}$, with the $(L-1)$-th layer dimension established at $5120$.
\textbf{Scale 3} employs input $X \in \mathbb{R}^{2048 \times 512}$ and output $Y \in \mathbb{R}^{2048 \times 400}$. This configuration is designed as an under-parameterized system, with an $(L-1)$-th layer dimension of $5120$, specifically to evaluate the robustness of our proposed L2O framework. The third model, although targeting the optimization problem with the same dimension, has a different number of training samples $N$. We design the scale to align with the training configurations of the baseline model LISTA-CPSS~\cite{chen2018theoretical}. Moreover, due to the GPU memory limitation, we set a thin NN, whose convergence is not guaranteed by our proposed theorem. The related experimental result is used to further demonstrate our proposed framework in \Cref{sec:train_gain}.
\begin{table}[hbp]
   \centering
   \caption{Configurations with Different Scales}
   \begin{tabular}{cccccc}
   \hline
   \textbf{Index} & \textbf{$d$} & \textbf{$b$} & \textbf{Dimension of $L-1$ Layer's Output} & \textbf{Training Samples}  \\
   \hline
   1 & 32 & 25 & 1024 & 32  \\ 
   2 &512 & 400 & 5120 & 10 \\
   3 &512 & 400 & 20 & 2048 \\
   \hline
   \end{tabular}
   \label{tab:small_exp_config}
\end{table}

\subsection{Additional Training Performance Comparisons verses L2O Baselines} \label{sec:exp_train_appd}
For these experiments, the \textbf{Scale 3} configuration is utilized. Both baseline state-of-the-art (SOTA) methods and our proposed L2O framework are trained for $2000$ epochs using a learning rate of $0.001$. However, the inherent model construction and training scheme of a prominent SOTA method, LISTA-CPSS~\cite{chen2018theoretical}, diverges considerably from the requirements of our problem. Direct application of its original settings to our scenario results in over-fitting and poor training convergence, indicating a lack of robustness for this specific application. The following discussion elaborates on these incompatibilities and the modifications undertaken.

The original LISTA-CPSS framework possesses two key characteristics pertinent to this discussion. First, regarding its model construction, LISTA-CPSS addresses inverse problems by formulating a learnable Least Absolute Shrinkage and Selection Operator (LASSO) problem, wherein it learns a scalar coefficient for the $L_1$ regularization term~\cite{chen2018theoretical}. However, our objective in \Cref{eq:obj_f} is quadratic. 
Second, its training protocol is supervised, utilizing an $L_2$ loss against pre-generated optimal solutions, and employs a layer-wise training scheme. In this scheme, one layer is progressively added to the set of trainable parameters per training iteration, and these parameters are updated using four back-propagation (BP) steps~\cite{chen2018theoretical}. To adapt LISTA-CPSS for our purposes, we modify both its model architecture and original training scheme to enable unsupervised optimization of our loss function (defined in~\Cref{eq:loss_F}) and to better align with our established training configuration.

First, to demonstrate the challenges of applying LISTA-CPSS's original training paradigm to unsupervised quadratic objectives, we evaluate a minimally adapted version. This version is trained unsupervisedly by defining the loss as the objective function value from the final optimization step. Given our quadratic loss in \Cref{eq:loss_F}, any model components in LISTA-CPSS specifically designed for non-quadratic terms are not directly applicable. Moreover, a critical aspect of the publicly available LISTA-CPSS implementation is its initialization of the neural network (NN) with a fixed matrix $\mathbf{M}$. This initialization inherently restricts the trained model's utility to problems featuring this identical, predetermined $\mathbf{M}$.

We train this minimally adapted LISTA-CPSS variant for $50$ epochs (corresponding to $20000$ BPs due to its layer-wise updates) using the Adam optimizer\footnote{Our preliminary experiments indicates that SGD fails to converge with LISTA-CPSS's original layer-wise training scheme.} on a dataset of $2048$ randomly generated samples. The loss function defined in \Cref{eq:loss_F} is evaluated at an optimization step of $T=100$. 
The experimental results, depicted in \Cref{fig:lista_fixedM_train_inference}, reveal that this configuration leads to severe over-fitting on the training samples. Specifically, \Cref{fig:lista_fixedM_train} illustrates the convergence of the objective function (at $T=100$) as a function of the training iteration $k$. Concurrently, \Cref{fig:lista_fixedM_inferece} displays the mean objective value across $100$ optimization steps during inference. 
These results indicate that while LISTA-CPSS achieves rapid convergence on the training data (which used a fixed $\mathbf{M}$), its performance degrades catastrophically (i.e., fails to generalize) when evaluated with a different matrix, $\mathbf{M}'$, during inference.
\begin{figure}[htp]
   \centering
   \begin{subfigure}{0.49\textwidth}
       \includegraphics[width=0.99\linewidth]{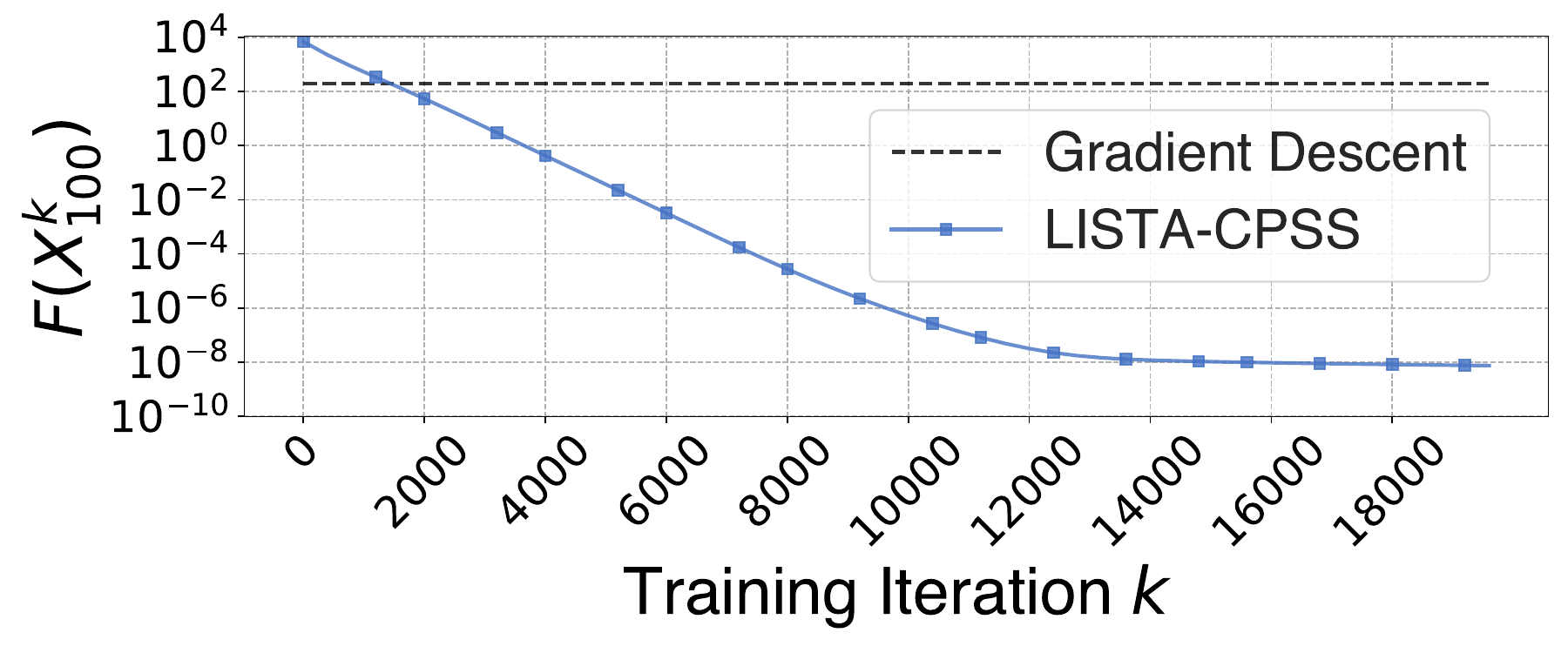}
       \vspace{-3mm}
       \caption{Loss with Training Iteration $k$}
       \label{fig:lista_fixedM_train}
   \end{subfigure}
   \hfill
   \begin{subfigure}{0.49\textwidth}
       \includegraphics[width=0.99\linewidth]{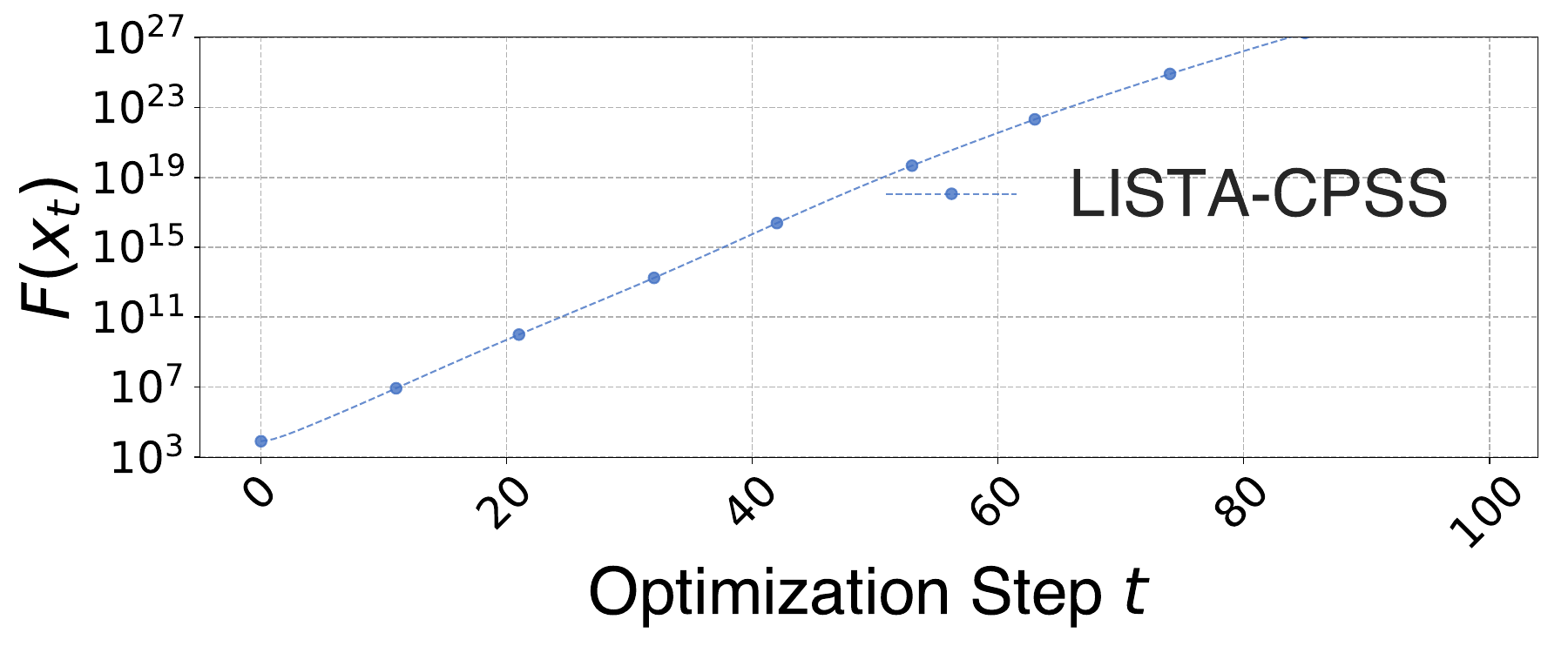}
       \vspace{-3mm}
       \caption{Objective Trajectory on Inference}
       \label{fig:lista_fixedM_inferece}
   \end{subfigure}
   \caption{Training Loss and Inference Trajectory of LISTA-CPSS~\cite{chen2018theoretical} with Fixed $\mathbf{M}$}
   \label{fig:lista_fixedM_train_inference}
\end{figure}

Informed by the above observation, a more robust approach is achieved through the random initialization of LISTA-CPSS.
Specifically, weights are sampled from a standard Gaussian distribution and subsequently scaled by a factor of $\tfrac{1}{d \cdot b}$ to mitigate potential numerical overflow in cumulative products. The LISTA-CPSS model is then trained using this initialization strategy.

For our proposed L2O framework, the expansion coefficient $e$ is set to $100$. As detailed in \textbf{Scale 3} in \Cref{tab:small_exp_config}, we implement an under-parameterized system wherein the dimension of the $(L-1)$-th layer is configured to $20$. This implementation intentionally deviates from the theoretical requirements stipulated by our proposed theorems, which necessitate that the dimension of the $(L-1)$-th layer must be larger than the input dimension. This particular experiment is conducted to demonstrate the robustness of the proposed L2O framework, especially under such conditions that depart from our established theoretical framework.

The training losses of LISTA-CPSS and our proposed L2O framework are depicted in~\Cref{fig:lista_multipleM_train}, with the performance of non-learnable gradient descent (indicated by a horizontal line in the figure) serving as a baseline.
Under scenarios with varied $\mathbf{M}$ configurations, LISTA-CPSS exhibits markedly slower convergence compared to both our proposed L2O framework and the gradient descent baseline.
Moreover, the fast convergence observed for our L2O framework underscores the robustness and efficacy of its proposed initialization strategy, particularly when applied to under-parameterized models.
\begin{figure}[htp]
   \centering
   \includegraphics[width=0.6\linewidth]{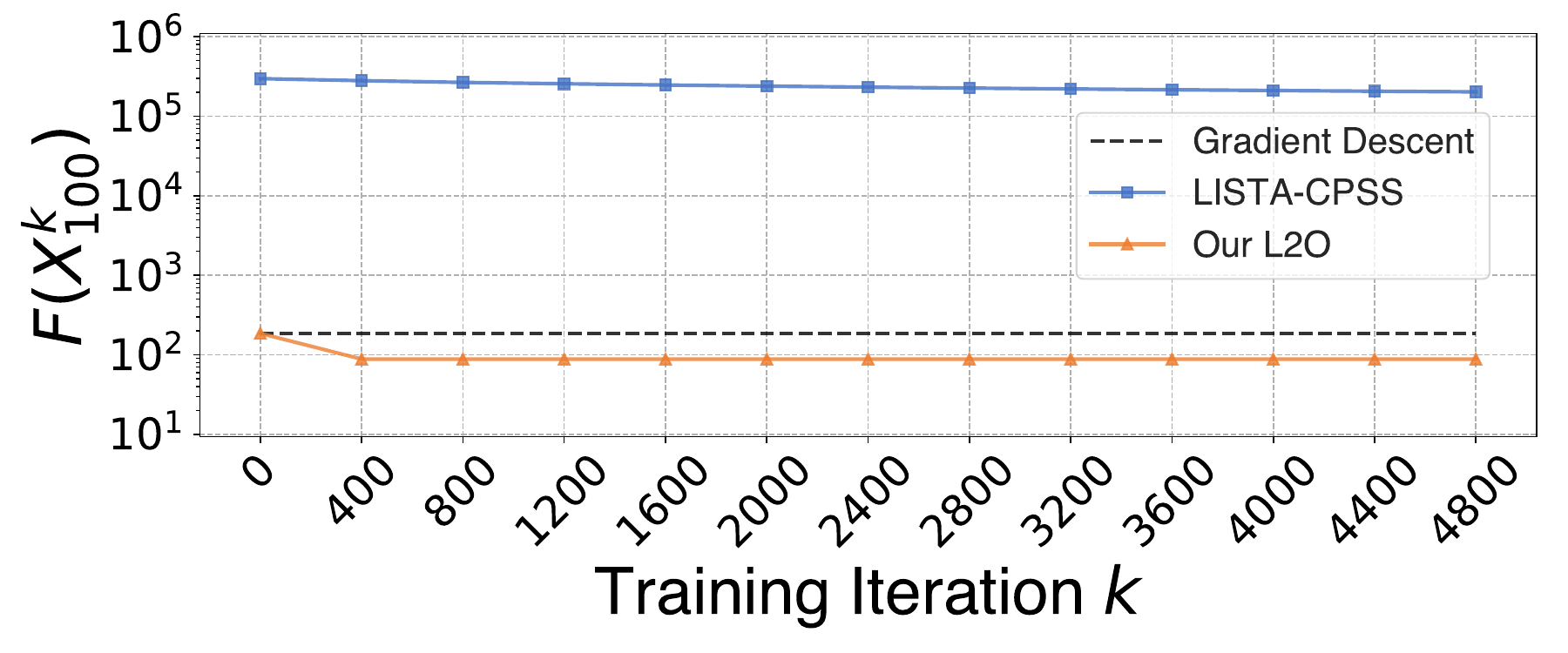}
   \vspace{-3mm}
   \caption{Training Losses with Varied $\mathbf{M}$}
   \label{fig:lista_multipleM_train}
\end{figure}

\subsection{Real-World Training Performance Comparisons} \label{sec:real_exp_train_appd}
To empirically validate our proposed theorem, we perform an additional experiment comparing the training convergence of our L2O construction against standard Gradient Descent (GD). Utilizing a compact Convolutional Neural Network (CNN) on the MNIST dataset, our method achieved significantly faster convergence, thereby corroborating our theoretical findings.

We employ the \textbf{Scale 3} configuration (an under-parameterized setting from \Cref{tab:small_exp_config}). The CNN architecture (\Cref{tab:cnn_exp_config}) comprises two convolutional layers, two max-pooling layers, ReLU activation functions, and a final linear layer. The optimization objective is the total cross-entropy loss over 200 randomly selected MNIST samples. The learning rates for training our L2O model and the CNN were set to $10^{-6}$ and $10^{-2}$, respectively.
\begin{table}[hbp]
   \centering
   \caption{Architecture of a Small CNN Model with MNIST Dataset}
   \begin{tabular}{ccccccc}
   \hline
   \textbf{Layer} & \textbf{\shortstack{Input \\ Channel}} & \textbf{\shortstack{Output \\ Channel}} & \textbf{\shortstack{Kernel \\ Size}} & \textbf{\shortstack{Input \\ Size}} & \textbf{\shortstack{Output \\ Size}} \\
   \hline
   Convolution  &1 & 2 & 3 & 28 $\times$ 28 & 28 $\times$ 28  \\ 
   Max Pooling &2 & 2 & 2 & 28 $\times$ 28 & 14 $\times$ 14 \\ 
   ReLU &2 & 2 &N/A & 14 $\times$ 14 & 14 $\times$ 14 \\
   Convolution &2 & 3 & 3 & 14 $\times$ 14 & 14 $\times$ 14 \\
   Max Pooling &3 & 3 & 2 & 14 $\times$ 14 & 7 $\times$ 7 \\ 
   ReLU &3 & 3 &N/A & 7 $\times$ 7 & 7 $\times$ 7 \\
   Linear &147 & 10 & N/A & 1 & 1 \\
   \hline
   \end{tabular}
   \label{tab:cnn_exp_config}
\end{table}

To validate our framework, we conducted a comparative analysis of the CNN training loss on the MNIST dataset, contrasting our proposed L2O method with Gradient Descent (GD). The results are depicted in \Cref{fig:gd_vs_our_cnn_mnist}, which plots the training loss over 100 iterations. We evaluate two versions of our L2O optimizer, pre-trained for 100 and 200 epochs, respectively. In both scenarios, our L2O framework yields a substantially lower loss than the GD baseline, which corroborates the effectiveness of our approach for training DNN models.

Additionally, \Cref{tab:cnn_mnist_final_loss} provides a quantitative comparison of the iteration cost for both methods. The proposed L2O framework converges to a more optimal (lower) loss value than GD in substantially fewer iterations, confirming its superior efficiency in training the CNN model.
\begin{figure}[htp]
   \centering
   \begin{subfigure}{0.4\textwidth}
      \centering
      \includegraphics[width=1\linewidth]{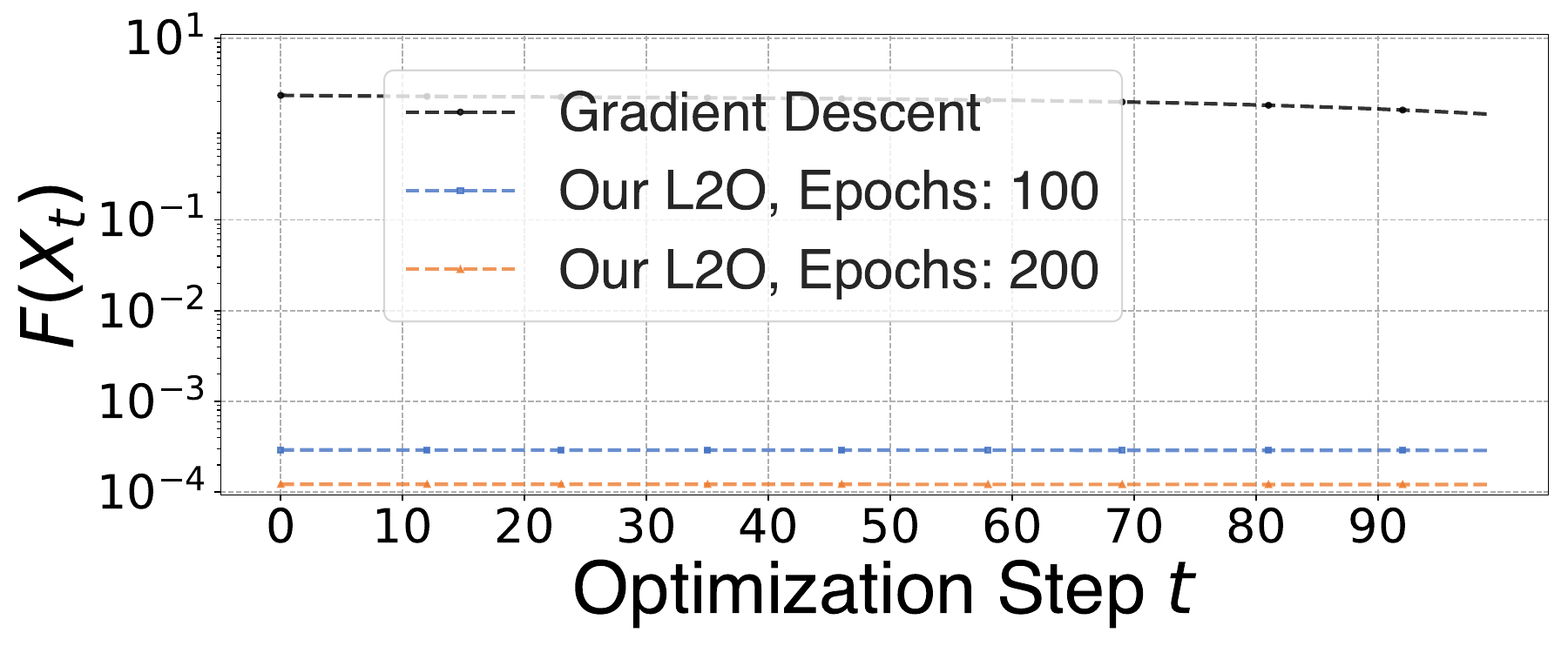}
      \vspace{-6mm}
      \caption{Training Losses}
      \label{fig:gd_vs_our_cnn_mnist}
   \end{subfigure}
   \hfill
   \begin{subfigure}{0.59\textwidth}
         \centering
         \begin{tabular}{cc}
         \hline
             Method &  Loss Value with Iterations \\ \hline
             GD &  10,000 Iterations: 2.92e-04 \\ 
             Our L2O, Epoch: 100  & 100 Iterations: 2.91e-04 \\ 
             Our L2O, Epoch: 200 & 100 Iterations: 1.22e-04 \\ \hline
         \end{tabular}
         \caption{Final Loss Values of CNN on MNIST Dataset}
         \label{tab:cnn_mnist_final_loss}
   \end{subfigure}
   \caption{Performance of Training CNN on MNIST Dataset}
   \label{fig:training_loss_cnn}
\end{figure}

\subsection{Inference Experiment} \label{sec:exp_infer}
Beyond analyzing training outcomes, we extend our evaluation to the robustness of the proposed L2O framework by assessing its performance in inference-stage optimization. This involves comparing the convergence characteristics of L2O against the Adam optimizer~\cite{diederik2014adam} and standard gradient descent (GD). It should be noted that while our theorems provide convergence guarantees for the training phase, such guarantees do not explicitly extend to this inference optimization context. For this empirical investigation, both our L2O framework and the Adam optimizer are executed across a range of hyperparameter settings for $3000$ iterations (longer than $100$ iterations in training), and their respective objective function trajectories are plotted as a function of the iteration count.

Adam utilizes momentum to accelerate gradient descent. 
In addition to the learning rate $\eta$, Adam employs two crucial hyperparameters, $\beta_1$ and $\beta_2$, which control the exponential moving averages of past gradients and their squared magnitudes, respectively. For the Adam optimizer in our experiments, we set the learning rate $\eta = \tfrac{1}{\beta}$ ($\beta$-smoothness of objective) and explored hyper-parameters $\beta_1 \in \{0.1, 0.3, \dots, 0.9\}$ and $\beta_2 \in \{0.95, 0.955, \dots, 1.0\}$.

Regarding our proposed L2O framework and consistent with the initialization strategy detailed in~\Cref{sec:init_strategy}, we selected a large expansion coefficient $e=100$ to enhance training stability. The L2O model is then trained with learning rates $\eta$ chosen from the set $\{10^{-3}, 10^{-4}, \dots, 10^{-7}\}$.

As illustrated in~\Cref{fig:our_inference}, we present the objective trajectory over $3000$ optimization steps, where each point is a mean value of 30 randomly generated problems' objectives. While the objective function initially exhibits rapid decay, the Adam optimizer fails to maintain this convergence, ultimately settling at sub-optimal values and not converging on average. In contrast, our proposed framework demonstrates superior performance compared to the Gradient Descent (GD) algorithm and exhibits robustness across various learning rates.
\begin{figure}[htp]
   \centering
   \includegraphics[width=0.65\linewidth]{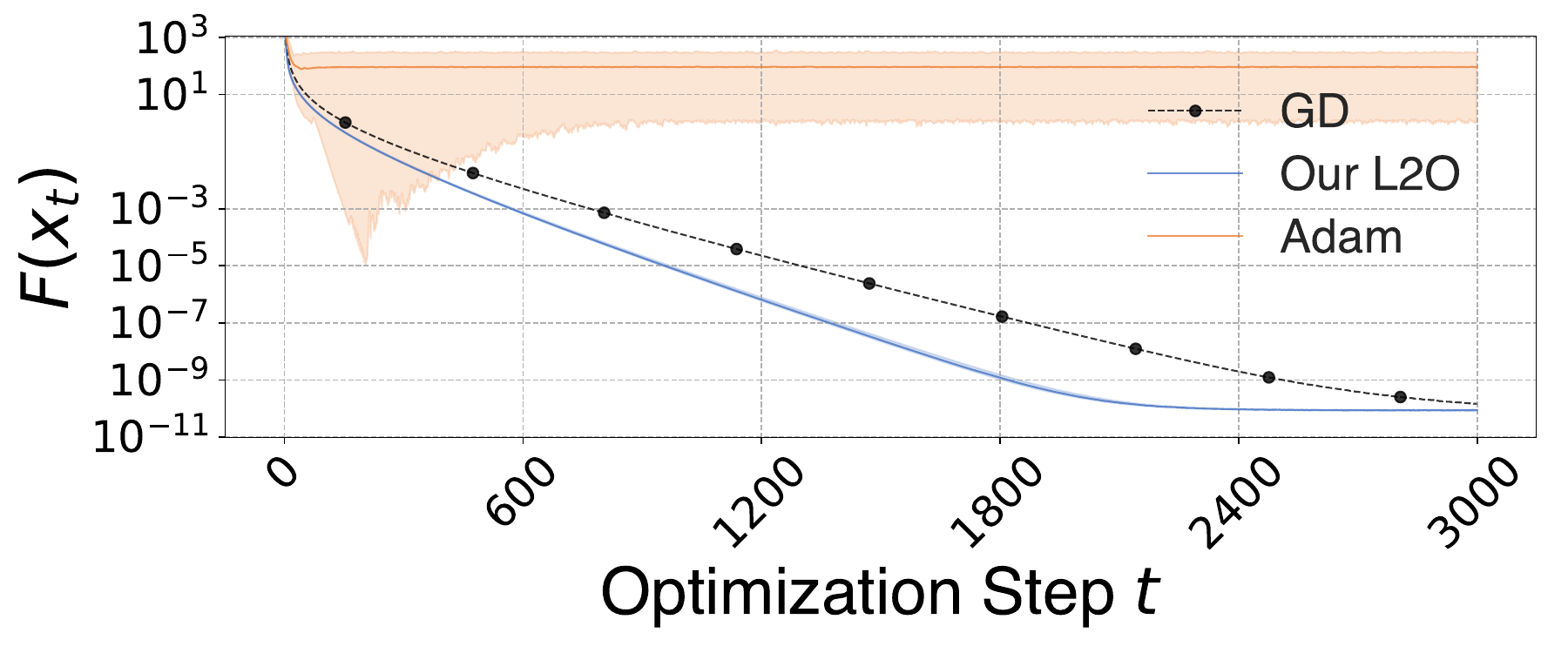}
   \vspace{-3mm}
   \caption{Inference Trajectory of Our Proposed L2O}
   \label{fig:our_inference}
\end{figure}

\subsection{Corollary in Ablation Studies}
\begin{corollary}[LR's upper bound w.r.t. $e$] \label{coro:lr_ub_e}
   \begin{equation*}
       \begin{aligned}
           \! \eta = &\mathcal{O}(e^{3-L}T^{-6}) \cap \mathcal{O}(e^{1-L}T^{-4}) \cap \mathcal{O}(e^{\frac{4}{3}(1-L)}T^{-\frac{10}{3}}) 
           \cap \mathcal{O}(e^{-TL-2L+T+4}T^{-3T-6}) \cap \mathcal{O}(T^{-2}).
       \end{aligned}
   \end{equation*}
\end{corollary}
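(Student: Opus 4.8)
The plan is to derive \Cref{coro:lr_ub_e} by simply tracing through the five scaling constraints already established in \Cref{theorem:linear_convergence} and the four initialization lemmas (\Cref{lemma:init_lb4}, \Cref{lemma:init_lb3}, \Cref{lemma:init_lb1}, \Cref{lemma:init_lb2}), substituting the relation $\bar{\lambda}_{\max} = \Theta(e)$ into each. The key observation is that when the expansion coefficient $e$ is applied to the first $L-1$ layers (with $\bar{\lambda}_L = 1$ fixed), every quantity in \Cref{eq:quantities} that was previously expressed in terms of $\bar{\lambda}_{\max}$ in \Cref{sec:quantity_explain_appd} becomes a power of $e$. In particular $\Theta_L = \Theta(e^{L-1})$, $S_{\bar{\lambda},L} = \Theta(L e^{-2})$ (dominated by the $\ell \le L-1$ terms scaling as $e^{-2}$, with the $\ell = L$ term contributing $O(1)$), $\delta_2 = \Theta(T e^{(L-1)T})$, $\delta_4 = \Theta(\exp(-T e^{L-1}))$, and $S_{\Lambda,T} = \Theta(T^3)$ (independent of $e$).

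First I would handle \Cref{eq:learning_rate_upper_bound1}. Substituting $\delta_2 = \Theta(T e^{(L-1)T})$, $\Lambda_T = \Theta(T^2)$, $S_{\Lambda,T} = \Theta(T^3)$, and $\Theta_L S_{\Lambda,T} S_{\bar{\lambda},L} = \Theta(e^{L-1} T^3 L e^{-2}) = \Theta(e^{L-3} T^3 L)$ into $\eta < \tfrac{8}{\beta}(\delta_2 + \Lambda_T)(\delta_2 + \Theta_L S_{\Lambda,T} S_{\bar{\lambda},L})^{-1} S_{\Lambda,T}^{-2}$, one gets a ratio of the form $(T e^{(L-1)T} + T^2)/((T e^{(L-1)T} + e^{L-3} T^3 L) T^6)$. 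Because the exponential-in-$L,T$ terms $e^{(L-1)T}$ cancel in numerator and denominator at leading order, this reduces to $\mathcal{O}(T^{-6})$ — but I must also retain the sub-leading branch where the $e^{L-3} T^3 L$ term dominates, which contributes $\mathcal{O}(e^{3-L} T^{-6})$; taking both branches together and keeping the stricter one yields the factor $\mathcal{O}(e^{3-L} T^{-6}) \cap \mathcal{O}(T^{-2})$ appearing in the corollary (the $T^{-2}$ piece tracks the small-$\delta_2$ regime). Next, for \Cref{eq:learning_rate_upper_bound2}, $\eta < \tfrac14 \tfrac{\beta^2}{\beta_0^2}\delta_4^{-2}\alpha_0^{-2}$, I substitute the four lower bounds on $\alpha_0^2$ from \Cref{eq:lbs_singular_value} one at a time. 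Each lower bound on $\alpha_0$, once $e$-scaled via the initialization lemmas, becomes a polynomial-times-power-of-$e$ expression: \Cref{lemma:init_lb1} gives $\alpha_0 = \Theta(e^{L-1}T^2)$ hence contributes $\mathcal{O}(e^{-(L-1)}T^{-4})$ after the $\delta_4^{-2}$ and exponential cancellations from \Cref{eq:singular_min_e_for_init3}; \Cref{lemma:init_lb2} gives the $\mathcal{O}(e^{\frac43(1-L)}T^{-\frac{10}{3}})$ branch (the $\tfrac13$ power arising from the cubic $\alpha_0^3$ bound in \Cref{eq:lb2_singular_value}); \Cref{lemma:init_lb3} gives the $\mathcal{O}(e^{-TL-2L+T+4}T^{-3T-6})$ branch directly from \Cref{eq:init_lb3_eq1}; and \Cref{lemma:init_lb4} contributes a bound subsumed by the others.

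The \emph{main obstacle} I anticipate is bookkeeping the cancellation of the exponential-in-$LT$ factors cleanly. In \Cref{eq:learning_rate_upper_bound2} the term $\delta_4^{-2} = \Theta(\exp(2T e^{L-1}))$ is doubly exponential in $e^{L-1}$, and it must be cancelled by the corresponding doubly-exponential growth of $\sigma_{\min}(G^0_{L-1,T})^2$ that is guaranteed precisely by the $\alpha_0$-lower-bound inequalities \Cref{eq:lbs_singular_value} — this is exactly the design logic behind those conditions. So the cancellation is not automatic from crude $\Theta(\cdot)$ scaling; it holds only \emph{when the hypotheses of the initialization lemmas are met}, and I need to be careful to invoke each lemma's conclusion ($e = \Omega(\cdot)$) rather than treat $\alpha_0$ as free. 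Concretely, whenever \Cref{eq:lbi_singular_value} (for $i\in\{1,2,3,4\}$) is \emph{active}, the exponential part of $\delta_4^{-2}\alpha_0^{-2}$ is exactly $O(1)$ and only the polynomial-in-$T$ and residual-power-of-$e$ remainder survives, giving the stated branch. The final step is then purely mechanical: intersect the five resulting $\mathcal{O}$-bounds (one from \Cref{eq:learning_rate_upper_bound1}, four from \Cref{eq:learning_rate_upper_bound2} paired with the four cases of \Cref{eq:lbs_singular_value}) to obtain
\[
\eta = \mathcal{O}(e^{3-L}T^{-6}) \cap \mathcal{O}(e^{1-L}T^{-4}) \cap \mathcal{O}(e^{\frac{4}{3}(1-L)}T^{-\frac{10}{3}}) \cap \mathcal{O}(e^{-TL-2L+T+4}T^{-3T-6}) \cap \mathcal{O}(T^{-2}),
\]
which is the claim. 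I would close by remarking that the $e^{c(1-L)}$ prefactors for $c \in \{1,\tfrac43,3\}$ all decay in $e$, confirming the inverse relationship between the viable learning rate and the expansion coefficient that the ablation studies test.
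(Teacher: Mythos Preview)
Your high-level plan --- one branch from \Cref{eq:learning_rate_upper_bound1} and four branches from \Cref{eq:learning_rate_upper_bound2} paired with the four $\alpha_0$ lower bounds in \Cref{eq:lbs_singular_value} --- is exactly what the paper does, and you correctly recover the $\mathcal{O}(e^{1-L}T^{-4})$, $\mathcal{O}(e^{\frac43(1-L)}T^{-\frac{10}{3}})$, and $\mathcal{O}(e^{-TL-2L+T+4}T^{-3T-6})$ branches. However, two pieces of the argument are wrong.

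\textbf{The $\mathcal{O}(T^{-2})$ bound does not come from \Cref{eq:learning_rate_upper_bound1}.} Your attempt to extract it there via ``the small-$\delta_2$ regime'' fails: in that regime the numerator is $\Theta(T^2)$ and the denominator is at least $\Theta(e^{L-3}T^3)\cdot T^6$, so you never get above $T^{-6}$ scaling. In the paper, $\mathcal{O}(T^{-2})$ comes from \Cref{eq:learning_rate_upper_bound2} combined with \Cref{eq:lb4_singular_value}: that condition reads $\alpha_0 \ge 8(1+\beta)\zeta_2 = \Theta(T)$, so $\alpha_0^{-2} = \mathcal{O}(T^{-2})$; the remaining $\delta_4^{-2}$ is absorbed into the constant $\delta_5$ (argued bounded in the paper's ``numerical stability of $\delta_5$'' paragraph). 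So your statement that ``\Cref{lemma:init_lb4} contributes a bound subsumed by the others'' is precisely backwards --- that is the branch yielding $\mathcal{O}(T^{-2})$.

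\textbf{The cancellation mechanism for $\delta_4^{-2}$ is algebraic, not a matching of growth rates.} You worry that $\delta_4^{-2}$ must be cancelled by ``doubly-exponential growth of $\sigma_{\min}(G^0_{L-1,T})^2$'', but $\alpha_0$ grows only polynomially in $e$. The actual cancellation is that the right-hand sides of \Cref{eq:lb1_singular_value}, \Cref{eq:lb2_singular_value}, \Cref{eq:lb3_singular_value} each \emph{contain} a factor $\delta_4^{-2}$; when the corresponding bound is tight, $\alpha_0^{-2} \le (\text{RHS})^{-1}$ carries a factor $\delta_4^{2}$ that cancels the $\delta_4^{-2}$ in \Cref{eq:learning_rate_upper_bound2} exactly, leaving only the polynomial-in-$(T,e)$ residue. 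By contrast \Cref{eq:lb4_singular_value} has no $\delta_4^{-2}$, which is why that branch retains $\delta_5$ as a leftover constant. A minor related slip: you write $S_{\bar{\lambda},L}=\Theta(Le^{-2})$ while also noting the $\ell=L$ term contributes $O(1)$; since $\bar{\lambda}_L=1$, the latter dominates for large $e$, so your scaling there is inconsistent.
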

\begin{proof}
   From \Cref{eq:learning_rate_upper_bound1}, we calculate:
   \begin{equation*}
      \begin{aligned}
         & \eta \\
         < &\tfrac{8}{\beta} (\delta_2 + \Lambda_T)\bigg(\delta_2 + S_{\Lambda,T}\bigg)^{-1} 
         S_{\Lambda,T}^{-2}  \Theta_L^{-1}  S_{\bar{\lambda},L}^{-1}, \\
         < & \bigg(
            \mathsmaller{\sum}_{s=1}^{T-1} \Big(\mathsmaller{\prod}_{j=s+1}^{T} \big(1 + \tfrac{1+\beta}{2\beta} (2j-1) \Theta_{L} \|\mathbf{M}^\top Y \|_2 \big)\Big) \\
            & 
            \Big(
               \tfrac{4(\beta+1)}{\beta^2} \|\mathbf{M}^\top Y\|_2^2 \qy{s^2} 
               - \tfrac{4\beta+6}{\beta^2} \|\mathbf{M}^\top Y \|_2^2 \qy{s} 
               + \tfrac{\beta + 2}{\beta^2} \|\mathbf{M}^\top Y \|_2^2  
            \Big)\\
         & \quad + 
            \Big(
               \tfrac{4(\beta+1)}{\beta^2} \|\mathbf{M}^\top Y\|_2^2 \qy{T^2} 
               - \tfrac{4\beta+6}{\beta^2} \|\mathbf{M}^\top Y \|_2^2 \qy{T} 
               + \tfrac{\beta + 2}{\beta^2} \|\mathbf{M}^\top Y \|_2^2  
            \Big)
         \bigg) \\
         & 
         \bigg(
            \mathsmaller{\sum}_{s=1}^{T-1} \Big(\mathsmaller{\prod}_{j=s+1}^{T} \big(1 + \tfrac{1+\beta}{2\beta} (2j-1) \Theta_{L} \|\mathbf{M}^\top Y \|_2 \big)\Big) \\
            &
            \Big(
               \tfrac{4(\beta+1)}{\beta^2} \|\mathbf{M}^\top Y\|_2^2 \qy{s^2} 
               - \tfrac{4\beta+6}{\beta^2} \|\mathbf{M}^\top Y \|_2^2 \qy{s} 
               + \tfrac{\beta + 2}{\beta^2} \|\mathbf{M}^\top Y \|_2^2  
            \Big)\\
         & \quad + 
            \Big( 
               \tfrac{4(\beta+1)}{3\beta^2} \|\mathbf{M}^\top Y\|_2^2  \qy{T^3 }
               -\tfrac{1}{\beta^2} \|\mathbf{M}^\top Y\|_2^2 \qy{T^2}
               - \tfrac{\beta + 1}{3\beta^2} \|\mathbf{M}^\top Y\|_2^2 \qy{T}   
            \Big)
         \bigg)^{-1} \\
         & \Big( 
            \tfrac{4(\beta+1)}{3\beta^2} \|\mathbf{M}^\top Y\|_2^2  \qy{T^3 }
            -\tfrac{1}{\beta^2} \|\mathbf{M}^\top Y\|_2^2 \qy{T^2}
            - \tfrac{\beta + 1}{3\beta^2} \|\mathbf{M}^\top Y\|_2^2 \qy{T}
         \Big)^{-2} \Big(e^{L-1} \mathsmaller{\prod}_{\ell=1}^{L-1} \bar{\lambda}_\ell\Big)^{-1} S_{\bar{\lambda},L}^{-1}, \\
         = & \mathcal{O}(e^{3-L}T^{-6}).
      \end{aligned}
   \end{equation*}

   From \Cref{eq:learning_rate_upper_bound2}, due to the four lower bounds in \Cref{eq:lbs_singular_value}, we calculate following four upper bounds:
   \begin{equation*}
      \begin{aligned}
         & \eta \\
         < & \tfrac{1}{4} \tfrac{\beta^2}{\beta_{0}^2} \delta_4^{-2} \alpha_0^{-2}, \\
         \stackrel{\text{\ref{eq:a0_lb_1}}}{<} & \tfrac{1}{4} \tfrac{\beta^2}{\beta_{0}^2} 
         \delta_5
         \Bigg(e^{L-1}
         \tfrac{\beta^2 \sqrt{\beta}}{8  \beta_{0}^2}  \delta_5
         \| Y \|_2 
         \|\mathbf{M}^\top Y\|_2^2 \\
         & \qquad \bigg(
         \tfrac{8(\beta+1)}{3\beta^2}  \qy{T^4 }
         +\Big(\tfrac{4(\beta+1)}{3\beta^2} - \tfrac{2}{\beta^2}\Big)\qy{T^3}
         - \Big( \tfrac{1}{\beta^2} + 2\tfrac{\beta + 1}{3\beta^2} \Big) \qy{T^2} 
         - \tfrac{\beta + 1}{3\beta^2} \qy{T} 
         \bigg) 
         \mathsmaller{\prod}_{\ell=1}^{L-1} (\|W_\ell^0\|_2+1)
         \Bigg)^{-1}, \\
         = & \mathcal{O}(e^{1-L}T^{-4}).
      \end{aligned}
   \end{equation*}

   \begin{equation*}
      \begin{aligned}
         & \eta \\
         < & \tfrac{1}{4} \tfrac{\beta^2}{\beta_{0}^2} \delta_4^{-2} \alpha_0^{-2}, \\
         \stackrel{\text{\Cref{eq:a0_lb_2}}}{<} & \tfrac{1}{4} \tfrac{\beta^2}{\beta_{0}^2} \delta_5
         \Bigg(
            e^{2L-2} 
            \tfrac{(1+\beta)\sqrt{\beta}}{6\beta_{0}^2 \beta} \delta_5   
            \|Y\|_2 \|\mathbf{M}^\top Y\|_2^3
            \\
            & \qquad \quad 
            \Big(
            16(\beta+1)  \qy{T^5 }
            - (8\beta+20) \qy{T^4 }
            - 6(2\beta+1) \qy{T^3 }
            + 2(\beta+4) \qy{T^2 }
            + 2(\beta+1) \qy{T }
            \Big) \\
            & \qquad \quad 
            L \mathsmaller{\prod}_{\ell=1}^{L-1} (\|W_\ell^0\|_2+1)^2
            \Bigg)^{-\tfrac{2}{3}} \\
         = & \mathcal{O}(e^{\tfrac{4}{3}(1-L)}T^{-\tfrac{10}{3}}).
      \end{aligned}
   \end{equation*}

   \begin{equation*}
      \eta 
      < \tfrac{1}{4} \tfrac{\beta^2}{\beta_{0}^2} \delta_4^{-2} \alpha_0^{-2}
      \stackrel{\text{\Cref{eq:init_lb3_eq1}}}{<} \tfrac{1}{4} \tfrac{\beta^2}{\beta_{0}^2} \delta_5 
      \mathcal{O}((e^{TL-T+2L-4} T^{3T+6})^{-1})
      = \mathcal{O}(e^{-TL-2L+T+4}T^{-3T-6}).
   \end{equation*}

   \begin{equation*}
      \begin{aligned}
         \eta 
         < \tfrac{1}{4} \tfrac{\beta^2}{\beta_{0}^2} \delta_4^{-2} \alpha_0^{-2}
         \stackrel{\text{\Cref{eq:lb4_singular_value}}}{<} \tfrac{1}{4} \tfrac{\beta^2}{\beta_{0}^2} \delta_5 
         \bigg(8(1+\beta)(\| X_0 \|_2 + \tfrac{2T-2}{\beta} \|\mathbf{M}^\top Y \|_2 )
            \bigg)^{-2}
         = \mathcal{O}(T^{-2}).
      \end{aligned}
   \end{equation*}
\end{proof}

\subsection{Additional Ablation Studies for Learning Rates} \label{sec:ablation_lr}
We present two additional ablation studies with $e$ of $25$ and $100$. Both use the configuration 1 in~\Cref{tab:small_exp_config}. The results are in \Cref{fig:train_obj_last_ablation_3225_diff_e}, which shows a deterministic relationship between LR and expansion coefficient. For $e=25$ in \Cref{fig:train_obj_last_ablation_3225_e25}, the $10^{-7}$ LR is too small and leads to worse optimality. The large LRs, i.e., $10^{-3}, 10^{-4}$, cause unstable convergence. Similarly, for $e=100$ in \Cref{fig:train_obj_last_ablation_3225_e100}, a proper LR is $10^{-4}$.
\begin{figure}[htp]
   \centering
   \begin{subfigure}{0.49\linewidth}
   \includegraphics[width=0.99\linewidth]{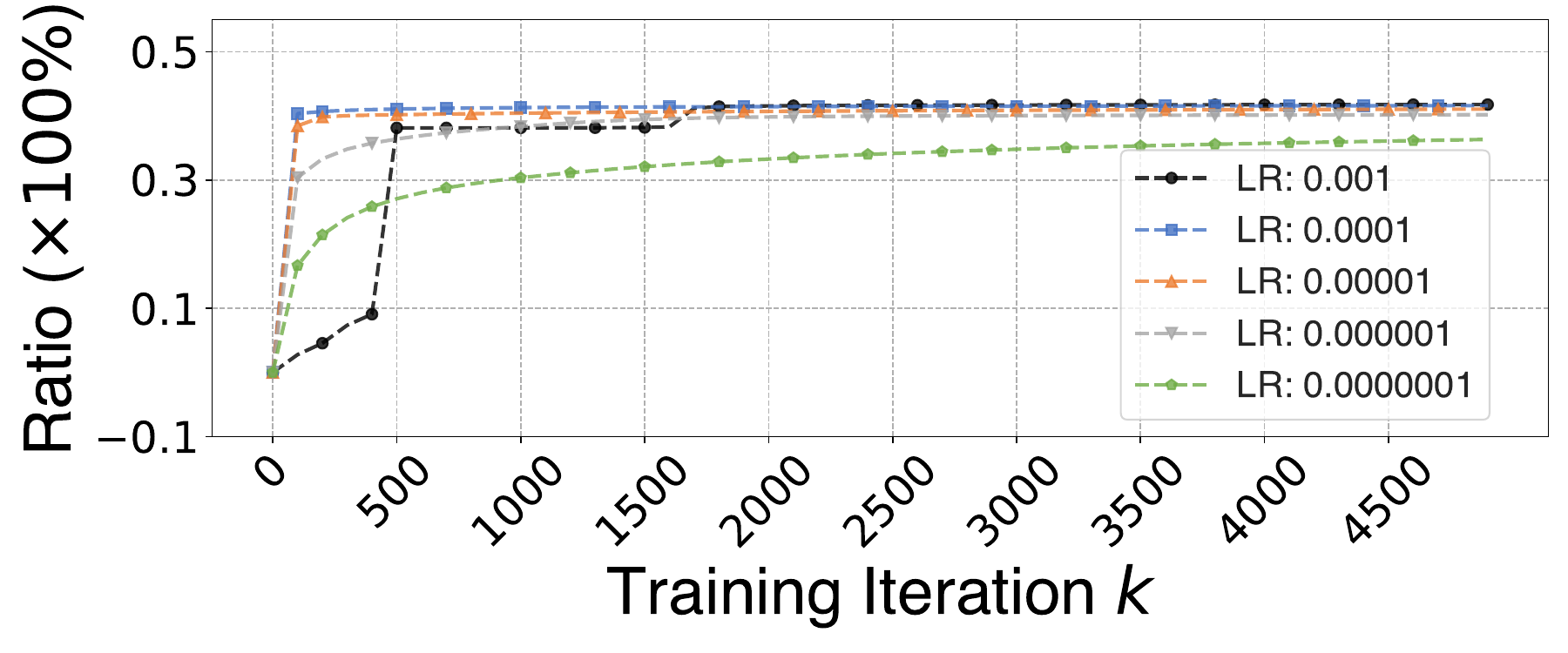}
   \caption{$e=25$}
   \label{fig:train_obj_last_ablation_3225_e25}
   \end{subfigure}
   \hfill
   \begin{subfigure}{0.49\linewidth}
   \includegraphics[width=0.99\linewidth]{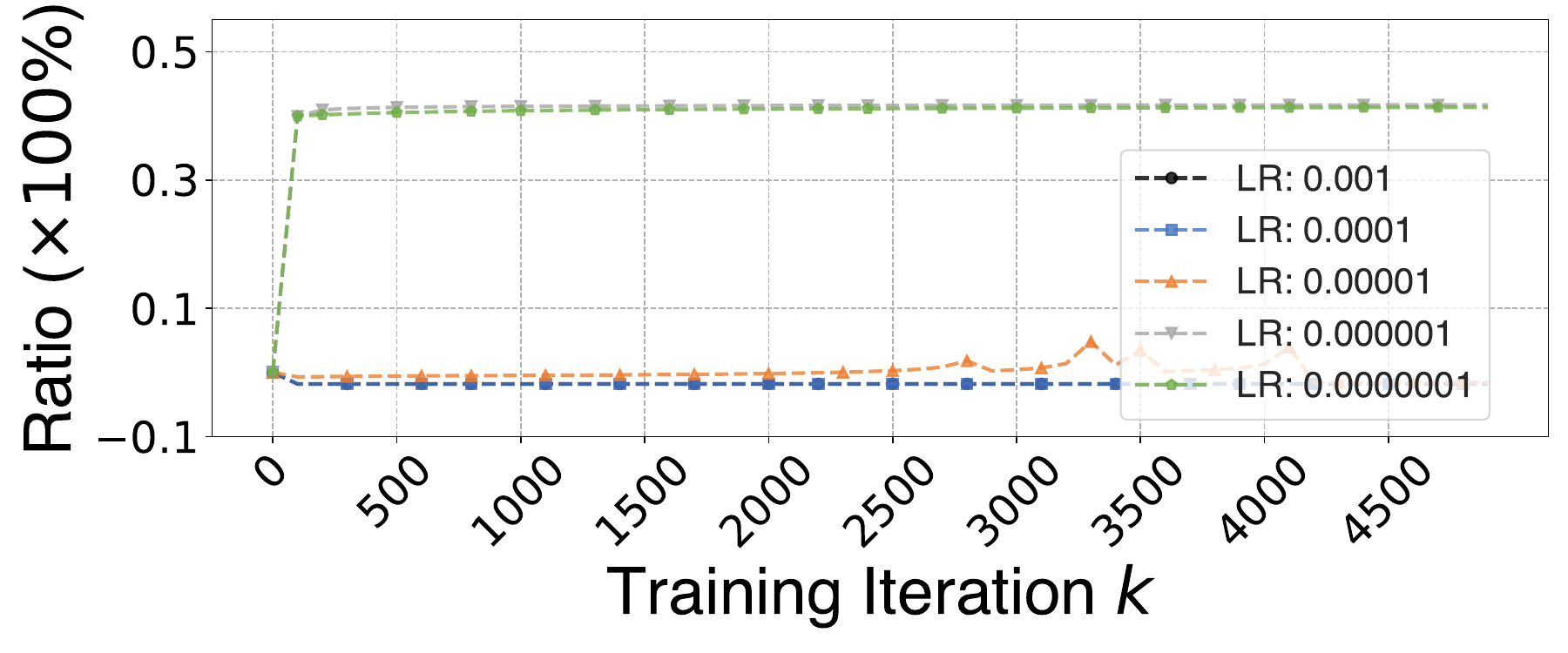}
   \caption{$e=100$}
   \label{fig:train_obj_last_ablation_3225_e100}
   \end{subfigure}
   \caption{Additional Ablation Studies of Learning Rate with Different $e$.}
   \label{fig:train_obj_last_ablation_3225_diff_e}
\end{figure}

\subsection{Additional Ablation Studies for Expansion Coefficient $e$ in Initialization}
We present two additional ablation studies for $e$ with learning rates of $0.001$ and $0.00001$. Both use the configuration 1 in~\Cref{tab:small_exp_config}. The results are in \Cref{fig:train_obj_last_ablation_3225_diff_lr}. For a large LR, a large $e$ may cause poor convergence due to \Cref{theorem:linear_convergence}. From \Cref{fig:train_obj_last_ablation_3225_lr0.001}, $e=25$ is a proper setting for best convergence with $\eta=0.001$. Similarly, for $\eta=0.00001$, $e=5$ is enough.
\begin{figure}[htp]
   \centering
   \begin{subfigure}{0.49\linewidth}
   \includegraphics[width=0.99\linewidth]{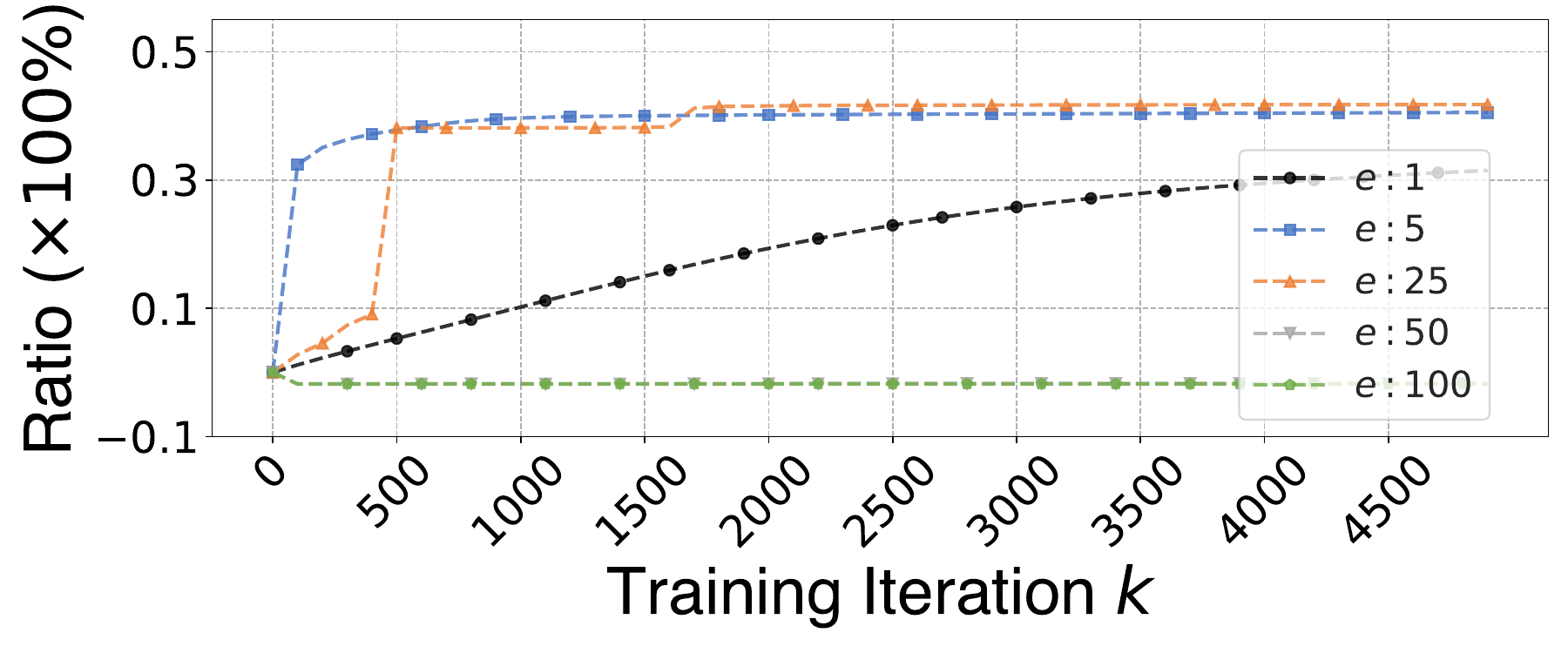}
   \caption{$\eta=0.001$}
   \label{fig:train_obj_last_ablation_3225_lr0.001}
   \end{subfigure}
   \hfill
   \begin{subfigure}{0.49\linewidth}
   \includegraphics[width=0.99\linewidth]{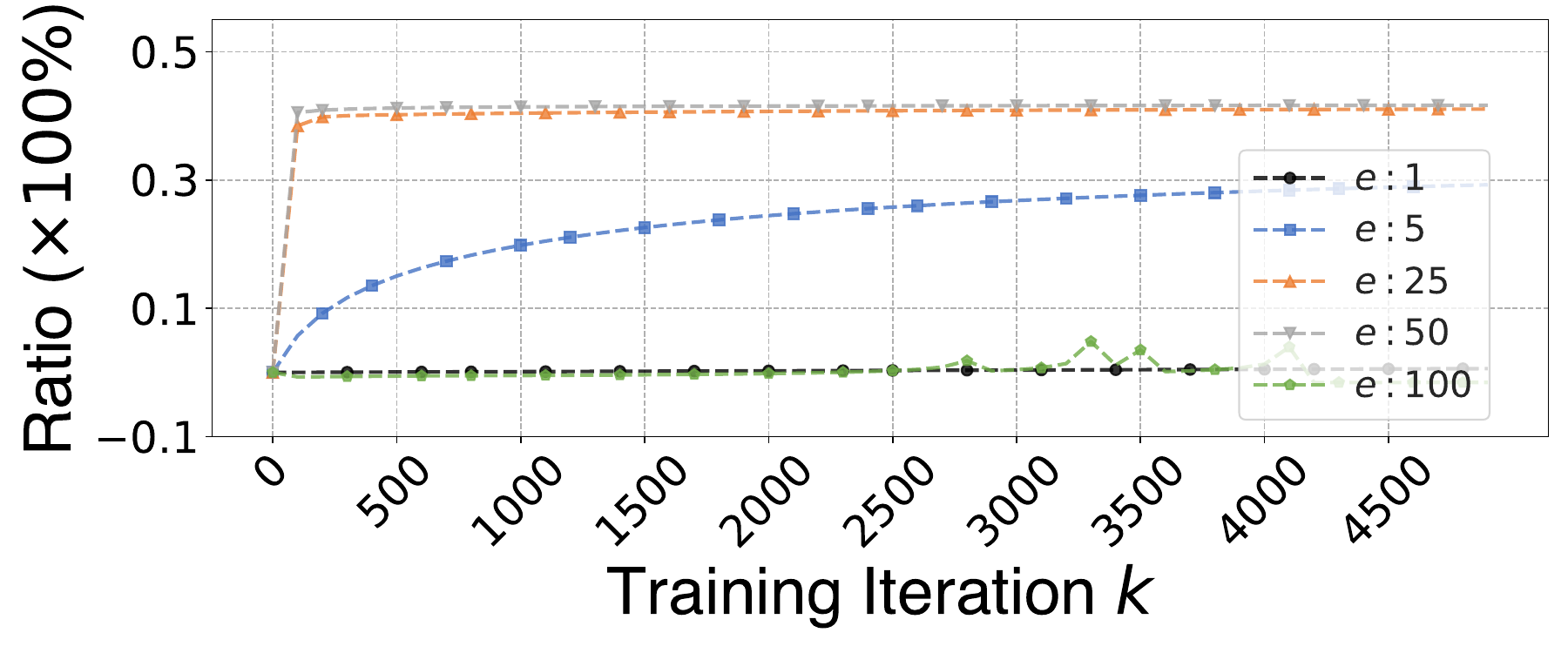}
   \caption{$\eta=0.00001$}
   \label{fig:train_obj_last_ablation_3225_lr0.00001}
   \end{subfigure}
   \caption{Additional Ablation Studies of $e$ with Different Learning Rates.}
   \label{fig:train_obj_last_ablation_3225_diff_lr}
\end{figure}


\section{Discussion} \label{sec:disc}

\paragraph{Scope of Theoretical Guarantees.} 
Our theoretical analysis establishes convergence guarantees and demonstrates superior convergence rates specifically for \emph{over-parameterized} Math-L2O systems compared to baseline optimization algorithms. 
While we acknowledge the empirical effectiveness of certain \emph{under-parameterized} Math-L2O systems \cite{liu2023towards, song2024towards}, providing theoretical convergence proofs for them remains challenging due to the inherent non-convexity of the underlying neural network training. 
Alternative theoretical approaches, such as convex dualization \cite{kim2024convex, kim2024exploring, pilancicomplexity}, have been explored. 
However, these methods typically necessitate the inclusion of regularization terms within the loss function, which may deviate from the original optimization objective we aim to solve.

\paragraph{Generalization to Other Objective Functions.}
The central thesis of \Cref{sec:train_gain} is that learning can enhance algorithmic convergence. To substantiate this claim, we first require a convergence guarantee for the neural network training process—a well-known complex problem. We leverage Neural Tangent Kernel (NTK) theory, which typically analyzes convergence under an $L_2$-norm objective~\cite{jacot2018neural}. Despite generalizations of NTK to other loss functions~\cite{chizat2020implicit,yu2025divergence}, we retain the $L_2$-norm for two reasons: (1) it permits the derivation of an explicit convergence rate, rather than a surrogate one~\cite{yu2025divergence}, and (2) it aids in demonstrating a deterministic initialization strategy, which has practical implications for model and training design.

\paragraph{Choice of Base Algorithm.} 
Our framework utilizes Gradient Descent (GD) as the core algorithm primarily because it admits a direct analytical formulation relating the initial point $X_0$ to the iterate $X_T$. 
This tractability is crucial for our analysis. 
In contrast, accelerated variants like Nesterov Accelerated Gradient Descent (NAG) \cite{beck2009fast} generally lack such closed-form expressions for $X_T$. 
This absence significantly complicates the derivation of the output bounds required to analyze the L2O system's dynamics and to prove convergence guarantees. 
Consequently, rigorously extending our current theoretical framework to momentum-based methods, despite attempts using inductive approaches, remains an open challenge.

We contend that a convergence proof for NAG can be constructed. Our central strategy involves bounding the L2O model's output to satisfy the convergence conditions of the backbone algorithm. This is analogous to our use of the $\beta$-smoothness property to derive the step size in \Cref{eq:math_l2o} and is a methodology applicable to any provably convergent algorithm. 
To this end, we aim to bound $X_T$ relative to $X_0$. The proof proceeds as follows: First, NAG is formulated as a linear dynamical system where a transition matrix maps $X_t$ to $X_{t+1}$. Second, we constrain the neural network outputs (i.e., momentum terms and step sizes) to ensure the transition matrix remains bounded over $T$ steps. Finally, by applying the Cauchy-Schwarz and Triangle inequalities to this stable system, a formal bound on $X_T$ is derived.


\section{Impact Statement} \label{sec:impact}
This paper presents work whose goal is to advance the field of Learning Theory and its combination with optimization. There are many potential societal consequences of our work, none of which we feel must be specifically highlighted here.


\newpage
\section*{NeurIPS Paper Checklist}

\begin{enumerate}

\item {\bf Claims}
    \item[] Question: Do the main claims made in the abstract and introduction accurately reflect the paper's contributions and scope?
    \item[] Answer: \answerYes{} 
    \item[] Justification: See \Cref{sec:intro}.
    \item[] Guidelines:
    \begin{itemize}
        \item The answer NA means that the abstract and introduction do not include the claims made in the paper.
        \item The abstract and/or introduction should clearly state the claims made, including the contributions made in the paper and important assumptions and limitations. A No or NA answer to this question will not be perceived well by the reviewers. 
        \item The claims made should match theoretical and experimental results, and reflect how much the results can be expected to generalize to other settings. 
        \item It is fine to include aspirational goals as motivation as long as it is clear that these goals are not attained by the paper. 
    \end{itemize}

\item {\bf Limitations}
    \item[] Question: Does the paper discuss the limitations of the work performed by the authors?
    \item[] Answer: \answerYes{} 
    \item[] Justification: See \Cref{sec:disc}.
    \item[] Guidelines:
    \begin{itemize}
        \item The answer NA means that the paper has no limitation while the answer No means that the paper has limitations, but those are not discussed in the paper. 
        \item The authors are encouraged to create a separate "Limitations" section in their paper.
        \item The paper should point out any strong assumptions and how robust the results are to violations of these assumptions (e.g., independence assumptions, noiseless settings, model well-specification, asymptotic approximations only holding locally). The authors should reflect on how these assumptions might be violated in practice and what the implications would be.
        \item The authors should reflect on the scope of the claims made, e.g., if the approach was only tested on a few datasets or with a few runs. In general, empirical results often depend on implicit assumptions, which should be articulated.
        \item The authors should reflect on the factors that influence the performance of the approach. For example, a facial recognition algorithm may perform poorly when image resolution is low or images are taken in low lighting. Or a speech-to-text system might not be used reliably to provide closed captions for online lectures because it fails to handle technical jargon.
        \item The authors should discuss the computational efficiency of the proposed algorithms and how they scale with dataset size.
        \item If applicable, the authors should discuss possible limitations of their approach to address problems of privacy and fairness.
        \item While the authors might fear that complete honesty about limitations might be used by reviewers as grounds for rejection, a worse outcome might be that reviewers discover limitations that aren't acknowledged in the paper. The authors should use their best judgment and recognize that individual actions in favor of transparency play an important role in developing norms that preserve the integrity of the community. Reviewers will be specifically instructed to not penalize honesty concerning limitations.
    \end{itemize}

\item {\bf Theory assumptions and proofs}
    \item[] Question: For each theoretical result, does the paper provide the full set of assumptions and a complete (and correct) proof?
    \item[] Answer: \answerYes{} 
    \item[] Justification: See \Cref{sec:appendix}.
    \item[] Guidelines:
    \begin{itemize}
        \item The answer NA means that the paper does not include theoretical results. 
        \item All the theorems, formulas, and proofs in the paper should be numbered and cross-referenced.
        \item All assumptions should be clearly stated or referenced in the statement of any theorems.
        \item The proofs can either appear in the main paper or the supplemental material, but if they appear in the supplemental material, the authors are encouraged to provide a short proof sketch to provide intuition. 
        \item Inversely, any informal proof provided in the core of the paper should be complemented by formal proofs provided in appendix or supplemental material.
        \item Theorems and Lemmas that the proof relies upon should be properly referenced. 
    \end{itemize}

    \item {\bf Experimental result reproducibility}
    \item[] Question: Does the paper fully disclose all the information needed to reproduce the main experimental results of the paper to the extent that it affects the main claims and/or conclusions of the paper (regardless of whether the code and data are provided or not)?
    \item[] Answer: \answerYes{} 
    \item[] Justification: See \Cref{sec:exp,sec:appendix}.
    \item[] Guidelines:
    \begin{itemize}
        \item The answer NA means that the paper does not include experiments. 
        \item If the paper includes experiments, a No answer to this question will not be perceived well by the reviewers: Making the paper reproducible is important, regardless of whether the code and data are provided or not.
        \item If the contribution is a dataset and/or model, the authors should describe the steps taken to make their results reproducible or verifiable. 
        \item Depending on the contribution, reproducibility can be accomplished in various ways. For example, if the contribution is a novel architecture, describing the architecture fully might suffice, or if the contribution is a specific model and empirical evaluation, it may be necessary to either make it possible for others to replicate the model with the same dataset, or provide access to the model. In general. releasing code and data is often one good way to accomplish this, but reproducibility can also be provided via detailed instructions for how to replicate the results, access to a hosted model (e.g., in the case of a large language model), releasing of a model checkpoint, or other means that are appropriate to the research performed.
        \item While NeurIPS does not require releasing code, the conference does require all submissions to provide some reasonable avenue for reproducibility, which may depend on the nature of the contribution. For example
        \begin{enumerate}
            \item If the contribution is primarily a new algorithm, the paper should make it clear how to reproduce that algorithm.
            \item If the contribution is primarily a new model architecture, the paper should describe the architecture clearly and fully.
            \item If the contribution is a new model (e.g., a large language model), then there should either be a way to access this model for reproducing the results or a way to reproduce the model (e.g., with an open-source dataset or instructions for how to construct the dataset).
            \item We recognize that reproducibility may be tricky in some cases, in which case authors are welcome to describe the particular way they provide for reproducibility. In the case of closed-source models, it may be that access to the model is limited in some way (e.g., to registered users), but it should be possible for other researchers to have some path to reproducing or verifying the results.
        \end{enumerate}
    \end{itemize}

\item {\bf Open access to data and code}
    \item[] Question: Does the paper provide open access to the data and code, with sufficient instructions to faithfully reproduce the main experimental results, as described in supplemental material?
    \item[] Answer: \answerNo{} 
    \item[] Justification: If accepted, we will open source the codes.
    \item[] Guidelines:
    \begin{itemize}
        \item The answer NA means that paper does not include experiments requiring code.
        \item Please see the NeurIPS code and data submission guidelines (\url{https://nips.cc/public/guides/CodeSubmissionPolicy}) for more details.
        \item While we encourage the release of code and data, we understand that this might not be possible, so “No” is an acceptable answer. Papers cannot be rejected simply for not including code, unless this is central to the contribution (e.g., for a new open-source benchmark).
        \item The instructions should contain the exact command and environment needed to run to reproduce the results. See the NeurIPS code and data submission guidelines (\url{https://nips.cc/public/guides/CodeSubmissionPolicy}) for more details.
        \item The authors should provide instructions on data access and preparation, including how to access the raw data, preprocessed data, intermediate data, and generated data, etc.
        \item The authors should provide scripts to reproduce all experimental results for the new proposed method and baselines. If only a subset of experiments are reproducible, they should state which ones are omitted from the script and why.
        \item At submission time, to preserve anonymity, the authors should release anonymized versions (if applicable).
        \item Providing as much information as possible in supplemental material (appended to the paper) is recommended, but including URLs to data and code is permitted.
    \end{itemize}

\item {\bf Experimental setting/details}
    \item[] Question: Does the paper specify all the training and test details (e.g., data splits, hyperparameters, how they were chosen, type of optimizer, etc.) necessary to understand the results?
    \item[] Answer: \answerYes{} 
    \item[] Justification: See \Cref{sec:exp,sec:appendix}.
    \item[] Guidelines:
    \begin{itemize}
        \item The answer NA means that the paper does not include experiments.
        \item The experimental setting should be presented in the core of the paper to a level of detail that is necessary to appreciate the results and make sense of them.
        \item The full details can be provided either with the code, in appendix, or as supplemental material.
    \end{itemize}

\item {\bf Experiment statistical significance}
    \item[] Question: Does the paper report error bars suitably and correctly defined or other appropriate information about the statistical significance of the experiments?
    \item[] Answer: \answerYes{} 
    \item[] Justification: See \Cref{sec:exp,sec:appendix}.
    \item[] Guidelines:
    \begin{itemize}
        \item The answer NA means that the paper does not include experiments.
        \item The authors should answer "Yes" if the results are accompanied by error bars, confidence intervals, or statistical significance tests, at least for the experiments that support the main claims of the paper.
        \item The factors of variability that the error bars are capturing should be clearly stated (for example, train/test split, initialization, random drawing of some parameter, or overall run with given experimental conditions).
        \item The method for calculating the error bars should be explained (closed form formula, call to a library function, bootstrap, etc.)
        \item The assumptions made should be given (e.g., Normally distributed errors).
        \item It should be clear whether the error bar is the standard deviation or the standard error of the mean.
        \item It is OK to report 1-sigma error bars, but one should state it. The authors should preferably report a 2-sigma error bar than state that they have a 96\% CI, if the hypothesis of Normality of errors is not verified.
        \item For asymmetric distributions, the authors should be careful not to show in tables or figures symmetric error bars that would yield results that are out of range (e.g. negative error rates).
        \item If error bars are reported in tables or plots, The authors should explain in the text how they were calculated and reference the corresponding figures or tables in the text.
    \end{itemize}

\item {\bf Experiments compute resources}
    \item[] Question: For each experiment, does the paper provide sufficient information on the computer resources (type of compute workers, memory, time of execution) needed to reproduce the experiments?
    \item[] Answer: \answerYes{} 
    \item[] Justification: See \Cref{sec:exp}.
    \item[] Guidelines:
    \begin{itemize}
        \item The answer NA means that the paper does not include experiments.
        \item The paper should indicate the type of compute workers CPU or GPU, internal cluster, or cloud provider, including relevant memory and storage.
        \item The paper should provide the amount of compute required for each of the individual experimental runs as well as estimate the total compute. 
        \item The paper should disclose whether the full research project required more compute than the experiments reported in the paper (e.g., preliminary or failed experiments that didn't make it into the paper). 
    \end{itemize}
    
\item {\bf Code of ethics}
    \item[] Question: Does the research conducted in the paper conform, in every respect, with the NeurIPS Code of Ethics \url{https://neurips.cc/public/EthicsGuidelines}?
    \item[] Answer: \answerYes{} 
    \item[] Justification: See \Cref{sec:impact}.
    \item[] Guidelines:
    \begin{itemize}
        \item The answer NA means that the authors have not reviewed the NeurIPS Code of Ethics.
        \item If the authors answer No, they should explain the special circumstances that require a deviation from the Code of Ethics.
        \item The authors should make sure to preserve anonymity (e.g., if there is a special consideration due to laws or regulations in their jurisdiction).
    \end{itemize}

\item {\bf Broader impacts}
    \item[] Question: Does the paper discuss both potential positive societal impacts and negative societal impacts of the work performed?
    \item[] Answer: \answerYes{} 
    \item[] Justification: See \Cref{sec:impact}.
    \item[] Guidelines:
    \begin{itemize}
        \item The answer NA means that there is no societal impact of the work performed.
        \item If the authors answer NA or No, they should explain why their work has no societal impact or why the paper does not address societal impact.
        \item Examples of negative societal impacts include potential malicious or unintended uses (e.g., disinformation, generating fake profiles, surveillance), fairness considerations (e.g., deployment of technologies that could make decisions that unfairly impact specific groups), privacy considerations, and security considerations.
        \item The conference expects that many papers will be foundational research and not tied to particular applications, let alone deployments. However, if there is a direct path to any negative applications, the authors should point it out. For example, it is legitimate to point out that an improvement in the quality of generative models could be used to generate deepfakes for disinformation. On the other hand, it is not needed to point out that a generic algorithm for optimizing neural networks could enable people to train models that generate Deepfakes faster.
        \item The authors should consider possible harms that could arise when the technology is being used as intended and functioning correctly, harms that could arise when the technology is being used as intended but gives incorrect results, and harms following from (intentional or unintentional) misuse of the technology.
        \item If there are negative societal impacts, the authors could also discuss possible mitigation strategies (e.g., gated release of models, providing defenses in addition to attacks, mechanisms for monitoring misuse, mechanisms to monitor how a system learns from feedback over time, improving the efficiency and accessibility of ML).
    \end{itemize}
    
\item {\bf Safeguards}
    \item[] Question: Does the paper describe safeguards that have been put in place for responsible release of data or models that have a high risk for misuse (e.g., pretrained language models, image generators, or scraped datasets)?
    \item[] Answer: \answerNA{} 
    \item[] Justification: See \Cref{sec:impact}.
    \item[] Guidelines:
    \begin{itemize}
        \item The answer NA means that the paper poses no such risks.
        \item Released models that have a high risk for misuse or dual-use should be released with necessary safeguards to allow for controlled use of the model, for example by requiring that users adhere to usage guidelines or restrictions to access the model or implementing safety filters. 
        \item Datasets that have been scraped from the Internet could pose safety risks. The authors should describe how they avoided releasing unsafe images.
        \item We recognize that providing effective safeguards is challenging, and many papers do not require this, but we encourage authors to take this into account and make a best faith effort.
    \end{itemize}

\item {\bf Licenses for existing assets}
    \item[] Question: Are the creators or original owners of assets (e.g., code, data, models), used in the paper, properly credited and are the license and terms of use explicitly mentioned and properly respected?
    \item[] Answer: \answerNA{} 
    \item[] Justification: \answerNA{}
    \item[] Guidelines:
    \begin{itemize}
        \item The answer NA means that the paper does not use existing assets.
        \item The authors should cite the original paper that produced the code package or dataset.
        \item The authors should state which version of the asset is used and, if possible, include a URL.
        \item The name of the license (e.g., CC-BY 4.0) should be included for each asset.
        \item For scraped data from a particular source (e.g., website), the copyright and terms of service of that source should be provided.
        \item If assets are released, the license, copyright information, and terms of use in the package should be provided. For popular datasets, \url{paperswithcode.com/datasets} has curated licenses for some datasets. Their licensing guide can help determine the license of a dataset.
        \item For existing datasets that are re-packaged, both the original license and the license of the derived asset (if it has changed) should be provided.
        \item If this information is not available online, the authors are encouraged to reach out to the asset's creators.
    \end{itemize}

\item {\bf New assets}
    \item[] Question: Are new assets introduced in the paper well documented and is the documentation provided alongside the assets?
    \item[] Answer: \answerNA{} 
    \item[] Justification: \answerNA{}
    \item[] Guidelines:
    \begin{itemize}
        \item The answer NA means that the paper does not release new assets.
        \item Researchers should communicate the details of the dataset/code/model as part of their submissions via structured templates. This includes details about training, license, limitations, etc. 
        \item The paper should discuss whether and how consent was obtained from people whose asset is used.
        \item At submission time, remember to anonymize your assets (if applicable). You can either create an anonymized URL or include an anonymized zip file.
    \end{itemize}

\item {\bf Crowdsourcing and research with human subjects}
    \item[] Question: For crowdsourcing experiments and research with human subjects, does the paper include the full text of instructions given to participants and screenshots, if applicable, as well as details about compensation (if any)? 
    \item[] Answer: \answerNA{} 
    \item[] Justification: \answerNA{}
    \item[] Guidelines:
    \begin{itemize}
        \item The answer NA means that the paper does not involve crowdsourcing nor research with human subjects.
        \item Including this information in the supplemental material is fine, but if the main contribution of the paper involves human subjects, then as much detail as possible should be included in the main paper. 
        \item According to the NeurIPS Code of Ethics, workers involved in data collection, curation, or other labor should be paid at least the minimum wage in the country of the data collector. 
    \end{itemize}

\item {\bf Institutional review board (IRB) approvals or equivalent for research with human subjects}
    \item[] Question: Does the paper describe potential risks incurred by study participants, whether such risks were disclosed to the subjects, and whether Institutional Review Board (IRB) approvals (or an equivalent approval/review based on the requirements of your country or institution) were obtained?
    \item[] Answer: \answerNA{} 
    \item[] Justification: \answerNA{}
    \item[] Guidelines:
    \begin{itemize}
        \item The answer NA means that the paper does not involve crowdsourcing nor research with human subjects.
        \item Depending on the country in which research is conducted, IRB approval (or equivalent) may be required for any human subjects research. If you obtained IRB approval, you should clearly state this in the paper. 
        \item We recognize that the procedures for this may vary significantly between institutions and locations, and we expect authors to adhere to the NeurIPS Code of Ethics and the guidelines for their institution. 
        \item For initial submissions, do not include any information that would break anonymity (if applicable), such as the institution conducting the review.
    \end{itemize}

\item {\bf Declaration of LLM usage}
    \item[] Question: Does the paper describe the usage of LLMs if it is an important, original, or non-standard component of the core methods in this research? Note that if the LLM is used only for writing, editing, or formatting purposes and does not impact the core methodology, scientific rigorousness, or originality of the research, declaration is not required.
    \item[] Answer: \answerNA{} 
    \item[] Justification: \answerNA{}
    \item[] Guidelines:
    \begin{itemize}
        \item The answer NA means that the core method development in this research does not involve LLMs as any important, original, or non-standard components.
        \item Please refer to our LLM policy (\url{https://neurips.cc/Conferences/2025/LLM}) for what should or should not be described.
    \end{itemize}

\end{enumerate}


\newpage

\end{document}